\definecolor{theoremcolor}{rgb}{0.94, 0.97, 1.0}
  \definecolor{mydarkblue}{rgb}{0,0.08,0.45}
\newcommand{\Otv}{{\operatorname{TV}}}
\newcommand{\Orof}{{\Omega}_{\gamma\text{ROF}}}
\newcommand\DP[2]{\langle #1,#2 \rangle}
\DeclareMathOperator*{\argmax}{arg\,max}
\DeclareMathOperator*{\argmin}{arg\,min}
\newcommand{\insertion}[1]{#1}
\begin{document}

\title{Sparse Continuous Distributions and Fenchel-Young Losses}

\author{%
  \name Andr\'e F.~T.~Martins \email andre.t.martins@tecnico.ulisboa.pt \\
  \addr Instituto de Telecomunica\c{c}\~oes, Instituto Superior T\'ecnico\\
  Lisbon ELLIS Unit (LUMLIS) \&
  Unbabel,
  Lisbon, Portugal
  \AND
  \name Marcos Treviso \email marcos.treviso@tecnico.ulisboa.pt \\
  \addr Instituto de Telecomunica\c{c}\~oes, Instituto Superior T\'ecnico,
  Lisbon, Portugal
  \AND
  \name Ant\'onio Farinhas \email antonio.farinhas@tecnico.ulisboa.pt \\
  \addr Instituto de Telecomunica\c{c}\~oes, Instituto Superior T\'ecnico,
  Lisbon, Portugal
  \AND
  \name Pedro M.~Q.~Aguiar \email aguiar@isr.ist.utl.pt \\
  \addr Instituto de Sistemas e Rob\'otica, Instituto Superior T\'ecnico\\
  Lisbon ELLIS Unit (LUMLIS),
  Lisbon, Portugal
  \AND
  \name M\'ario A.~T.~Figueiredo \email mario.figueiredo@tecnico.ulisboa.pt \\
  \addr Instituto de Telecomunica\c{c}\~oes, Instituto Superior T\'ecnico\\
  Lisbon ELLIS Unit (LUMLIS),
  Lisbon, Portugal
   \AND
  \name Mathieu Blondel \email mblondel@google.com \\
  \addr Google Research,
  Paris, France
  \AND
  \name Vlad Niculae \email v.niculae@uva.nl \\
  \addr Language Technology Lab, University of Amsterdam, The Netherlands
}

\editor{Sebastian Nowozin}

\maketitle

\begin{abstract}%
Exponential families are widely used in machine learning, including many
distributions in continuous and discrete domains (\textit{e.g.}, Gaussian,
Dirichlet, Poisson, and categorical distributions via the softmax
transformation). Distributions in each of these families have fixed support. In
contrast, for finite domains, %
recent work on sparse alternatives to softmax (\textit{e.g.}, sparsemax, $\alpha$-entmax, and fusedmax), %
has led to distributions with %
varying support. %

This paper develops sparse alternatives to continuous
distributions, based on
several technical contributions: 
First, we define
$\Omega$-regularized prediction maps and Fenchel-Young losses for arbitrary
domains (possibly countably infinite or continuous). For linearly parametrized families, we
show that minimization of Fenchel-Young losses is equivalent to moment matching
of the statistics, generalizing a fundamental property of exponential families.
When $\Omega$ is a Tsallis negentropy with parameter $\alpha$, we obtain ``deformed exponential families,'' which include $\alpha$-entmax and sparsemax ($\alpha=2$) as particular cases. %
For quadratic energy functions, %
the resulting densities are $\beta$-Gaussians, an instance of elliptical
distributions that contain as particular cases the Gaussian, biweight,
triweight, and Epanechnikov densities, and for which we derive closed-form expressions for the variance, Tsallis entropy, and Fenchel-Young loss.
When $\Omega$ is a total variation or Sobolev regularizer, we obtain a continuous version of the fusedmax.
Finally, we introduce continuous-domain attention mechanisms, deriving efficient
gradient backpropagation algorithms for $\alpha \in \{1,\sfrac{4}{3},
\sfrac{3}{2}, 2\}$.  Using these algorithms, we demonstrate our sparse continuous
distributions for attention-based audio classification and visual question
answering, showing that they allow attending to time intervals and compact
regions. %
\end{abstract}

\begin{keywords}
Sparse continuous distributions, Fenchel-Young losses, deformed exponential
families, attention mechanisms.
\end{keywords}

\section{Introduction}\label{sec:intro}

Exponential families \citep{brown1986fundamentals,barndorff2014information}
are ubiquitous in statistics and machine learning.
They include many common
distributions, both in continuous (Gaussian, exponential, Dirichlet, ...) and
discrete (Poisson, Bernoulli, categorical, ...) domains.  They enjoy many useful
properties, such as the existence of conjugate priors (crucial in Bayesian
inference) and the classical Pitman-Koopman-Darmois theorem
\citep{pitman1936sufficient,darmois1935lois,koopman1936distributions}, which
states that, among families with {\bf fixed support} (independent of the
parameters), exponential families are the only having sufficient statistics of
fixed dimension for any number of i.i.d.\ samples.

There have been several efforts to further generalize exponential families.
\citet{grunwald_2004} introduced \textbf{generalized exponential families} as
maximum entropy distributions for generalized entropy functions.
Based upon these results, \citet{frongillo_2014} studied these distributions
from a convex duality perspective.
\citet{amari_2012} studied deformed exponential families, including their
entropy and canonical divergence.

More recently, there has been work with a focus on distributions with {\bf
varying and sparse support} over a finite domain. Examples include
\textit{sparsemax} \citep{Martins2016ICML}, \textit{entmax}
\citep{peters2019sparse,correia2019adaptively}, and \textit{fusedmax}
\citep{fusedmax}. They have been used for sparse differentiable dynamic programming
\citep{mensch2018differentiable} and for improving the interpretability of attention
mechanisms in neural networks \citep{bahdanau2014neural}.

A common task when it comes to probability distributions is to fit their
parameters to observed data. Unfortunately, unlike for exponential families,
maximum likelihood for generalized exponential families does
not always lead to a convex objective with respect to the parameters.
Proper scoring rules, which can be seen as primal-space Bregman divergences,
have been widely studied \citep{gneiting_2007,reid_composite_binary,vernet_2016}.
Typically, proper scoring rules are composed with a link function.
However, when the link function is non-invertible, which is the case with sparse
distributions, the resulting composite loss function can be non-convex
\citep{blondel2020learning}.
Based on convex duality arguments,
\citet{blondel2020learning} introduced \textbf{Fenchel-Young losses},
which can be seen as mixed-space Bregman divergences \citep[Theorem
1.1]{amari2016information}. Unlike with proper scoring rules, the link function,
called \textbf{regularized prediction map}, is not explicitly composed with the
loss but instead kept implicit. This leads to convex loss functions, even for
distributions with sparse support.

\paragraph{This paper.}

We extend sparse probability distributions and Fenchel-Young losses
to \textbf{infinite domains} (\textbf{continuous} or \textbf{countably
infinite}). Similarly to (and
generalizing) the free energy variational principle \citep{dayan1995helmholtz},
a convex regularizer $\Omega$, which can be regarded as a generalized
negentropy, induces a mapping from energy functions to probability
densities.
When $\Omega$ is a Tsallis negentropy \citep{Tsallis1988}, the resulting
densities are \textbf{deformed exponential families}. These families have been
studied in statistical physics and machine learning
\citep{naudts2009q,sears2010generalized,ding2010t} with most focus %
given
to heavy-tailed distributions. Our paper focuses instead on light and
zero-tailed distributions, which can be regarded as continuous counterparts of
sparsemax and entmax transformations.  We use this construction to obtain new
density families, called $\alpha$\textbf{-sparse families}, with sparse and varying
support,
including the \emph{truncated parabola/paraboloid} distributions and the wider
family of {\boldmath $\beta$}\textbf{-Gaussian} distributions (see
Figures~\ref{fig:misfit} and \ref{fig:beta_gaussians}).  In addition, we also
provide a continuous counterpart for the \insertion{discrete smoothing} fusedmax transformation
\citep{fusedmax} by designing a $\Omega$ that depends on the density derivative,
via Rudin-Osher-Fatemi and Sobolev regularization \citep{rof1992}.

We use our theoretical results above in two ways. First, we extend neural
attention mechanisms \citep{bahdanau2014neural} to continuous domains, making
them able to attend to continuous data streams and to domains that are
inherently continuous, such as visual scenes. Unlike traditional attention mechanisms,
ours are suitable for selecting compact regions, such as 1D-segments or
2D-ellipses, and we illustrate this fact on audio classification and visual
question answering tasks.  Second, we demonstrate the usefulness of
continuous-domain Fenchel-Young losses in a simple heteroscedastic regression
problem modeled with bounded noise \citep{d2013bounded}.

 To encourage reproducibility and further experimentation by the research community,
we release an %
easy-to-use Python package alongside our paper:
\url{https://github.com/deep-spin/sparse_continuous_distributions/}.

\paragraph{Previous papers.}

This paper builds upon two previously published papers: a journal paper \citep{blondel2020learning} and a shorter conference paper \citep{martins2020sparse}. The former introduced and analyzed Fenchel-Young losses for finite and combinatorial domains, with a focus on structured prediction, without considering non-finite probability spaces.
The latter focused on regularized prediction maps with Tsallis regularizers and sparse and continuous attention mechanisms,  but without considering Fenchel-Young losses.
This paper provides a comprehensive study of regularized prediction maps and Fenchel-Young losses for arbitrary measure spaces, including continuous and countably infinite domains, being a natural companion for \citet{blondel2020learning}. Besides a much more
thorough treatment of previously covered topics, this paper contributes entirely
new sections, including \S\ref{sec:fy_losses} on Fenchel-Young losses for arbitrary measure spaces and parametrized families, \S\ref{sec:elliptical} on elliptical distributions and $\beta$-Gaussians, and \S\ref{sec:fusedmax} on a continuous generalization of fusedmax.
We also provide additional properties of Tsallis regularized families in \S\ref{sec:tsallis} (Propositions~\ref{prop:entropy_normalizing} and \ref{prop:gradient_hessian_fy}) and more examples of sparse families in \S\ref{sec:sparsemax}, such as the sparse Poisson and the truncated Gaussian.
We derive closed form expressions for Fenchel-Young losses with several continuous densities (including $\beta$-Gaussians, in Proposition~\ref{prop:beta_gaussian_fy}) and demonstrate how to use our framework to fit continuous densities on data by Fenchel-Young loss minimization, not covered in the previous two papers.

\paragraph{Notation.}
Let $(S, \mathcal{A}, \nu)$ be a measure space,
where $S$ is a set, $\mathcal{A}$ is a $\sigma$-algebra, and $\nu$ is a measure.
We denote by $\mathcal{M}_+^1(S)$ the set of $\nu$-absolutely continuous probability measures. From the Radon-Nikodym theorem \citep[\S31]{halmos2013measure}, each element of $\mathcal{M}_+^1(S)$ is identified (up to equivalence within measure zero) with a probability density function $p: S \rightarrow \mathbb{R}_+$, with $\int_S p(t)\, d\nu(t) = 1$.
For convenience, we often drop $d\nu(t)$ from the integral. %
We denote the measure of $A\in  \mathcal{A}$ as
$|A| = \nu(A) = \int_{A} 1$, and
the support of a density $p \in \mathcal{M}_+^1(S)$ as $\mathrm{supp}(p) = \{t \in S \mid p(t) > 0\}$. %
Given $\phi:S \rightarrow \mathbb{R}^m$, %
we write expectations
as
$\mathbb{E}_p[\phi(t)] := \int_S p(t) \, \phi(t)$.  %
Finally, we define $[a]_+ := \max\{a, 0\}$.

\insertion{
Throughout the paper, we use the following definition of ``sparse densities'',\footnote{This should not be confused with sparsity-inducing distributions \citep{FigueiredoNIPS2001,TippingJMLR2001}.} %
which generalizes the notion of sparse vectors, recovered when $S$ is finite and $\nu$ is the counting measure. The concept is illustrated in Figure~\ref{fig:sparse_densities}. %
\begin{definition}[Sparse density.] %
\label{def:sparse_function}
Let $(S, \mathcal{A}, \nu)$ be a measure space. A density $p:S \rightarrow \mathbb{R}$ is called {\it sparse} if $\nu(S \setminus \mathrm{supp}(p)) > 0$. It is called {\it dense} otherwise.
\end{definition}

\begin{figure*}[t]
\centering
\includegraphics[width=.45\textwidth]{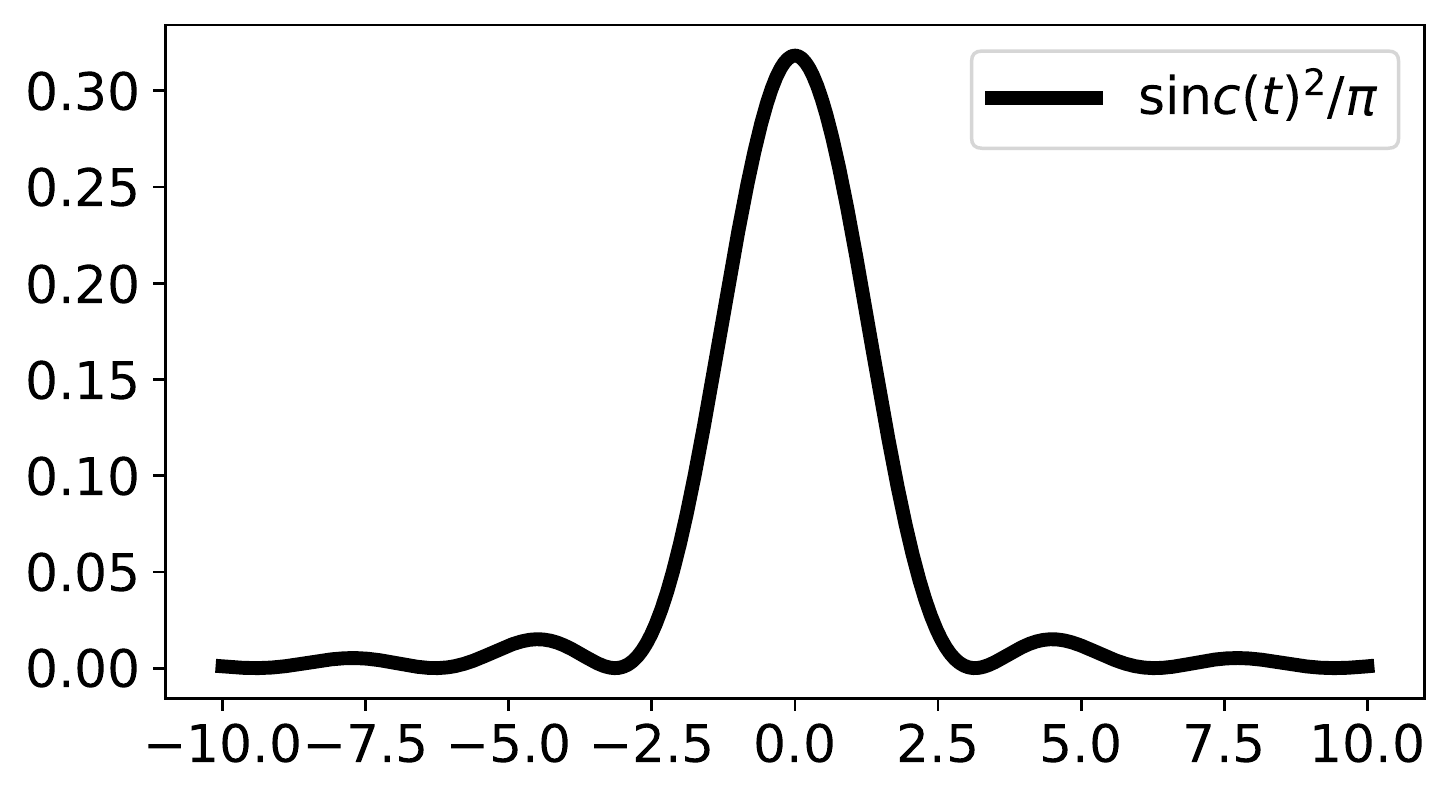}\quad
\includegraphics[width=.45\textwidth]{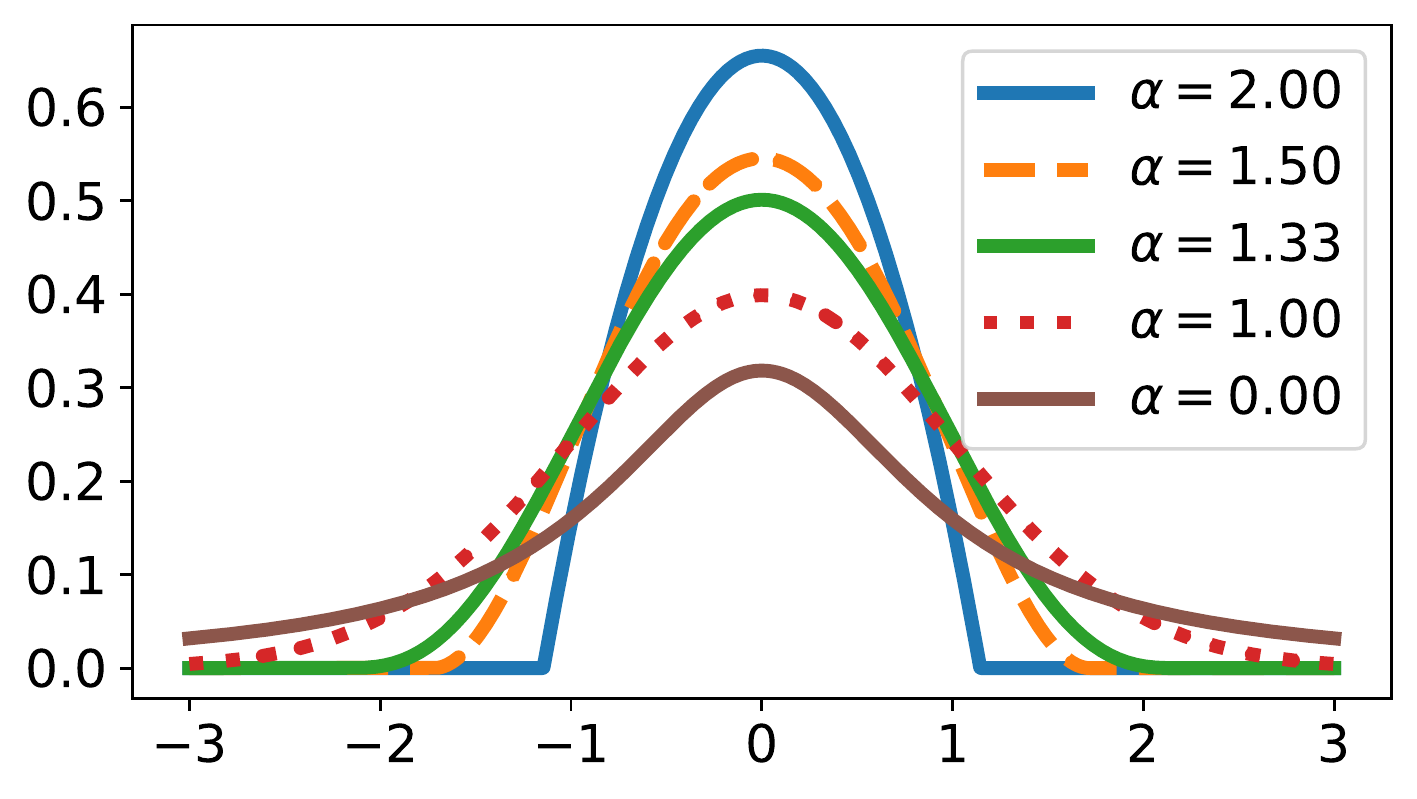}%
\caption{{\bf Non-sparse and sparse densities for $S=\mathbb{R}$.} Left: The density $p(t) = \sin^2(t)/(\pi t^2)$ is non-sparse, since it has only a countable number of zeros and therefore the set $\mathbb{R} \setminus \mathrm{supp}(p)$ has null measure. Right: Univariate $\beta$-Gaussians $\mathcal{N}_\beta(t, 0, \sigma^2)$ for several values of $\alpha=2-\beta$ (see \S\ref{sec:elliptical} for details). We used $\sigma^2=1$ except for $\alpha=0$, for which $\sigma^2=(2\pi)^{-1}$ (Cauchy distribution). $\alpha=1$ corresponds to a Gaussian, $\alpha<1$ to heavy-tail distributions ($t$-Student), and $\alpha>1$ to zero-tail distributions, recovering scaled versions of the biweight ($\alpha=\tfrac{3}{2}$), triweight ($\alpha=\tfrac{4}{3}$), and Epanechnikov kernels ($\alpha=2$, same as truncated parabola) used in density estimation. For $\alpha>1$, the case of focus in our paper, all these densities are sparse. \label{fig:sparse_densities}
}
\end{figure*}
}

\paragraph{Table of contents.} The rest of the paper is organized as follows. Figure~\ref{fig:diagram} helps navigating through the different sections.

\begin{itemize}%

\item[\S\ref{sec:rpm}] Regularized Prediction Maps

\item[\S\ref{sec:fy_losses}] Continuous Fenchel-Young Losses

\item[\S\ref{sec:tsallis}]
Tsallis Regularizers and Deformed Exponential Families

\item[\S\ref{sec:sparsemax}] Infinite Sparsemax

\item[\S\ref{sec:elliptical}]
Elliptical Distributions and $\beta$-Gaussians

\item[\S\ref{sec:fusedmax}] Continuous Fusedmax

\item[\S\ref{sec:attention}] Continuous Attention Mechanisms

\item[\S\ref{sec:experiments}] Experiments

\item[\S\ref{sec:related}] Related work.

\end{itemize}

\begin{figure}[t]
\centering
\includegraphics[width=1\columnwidth]{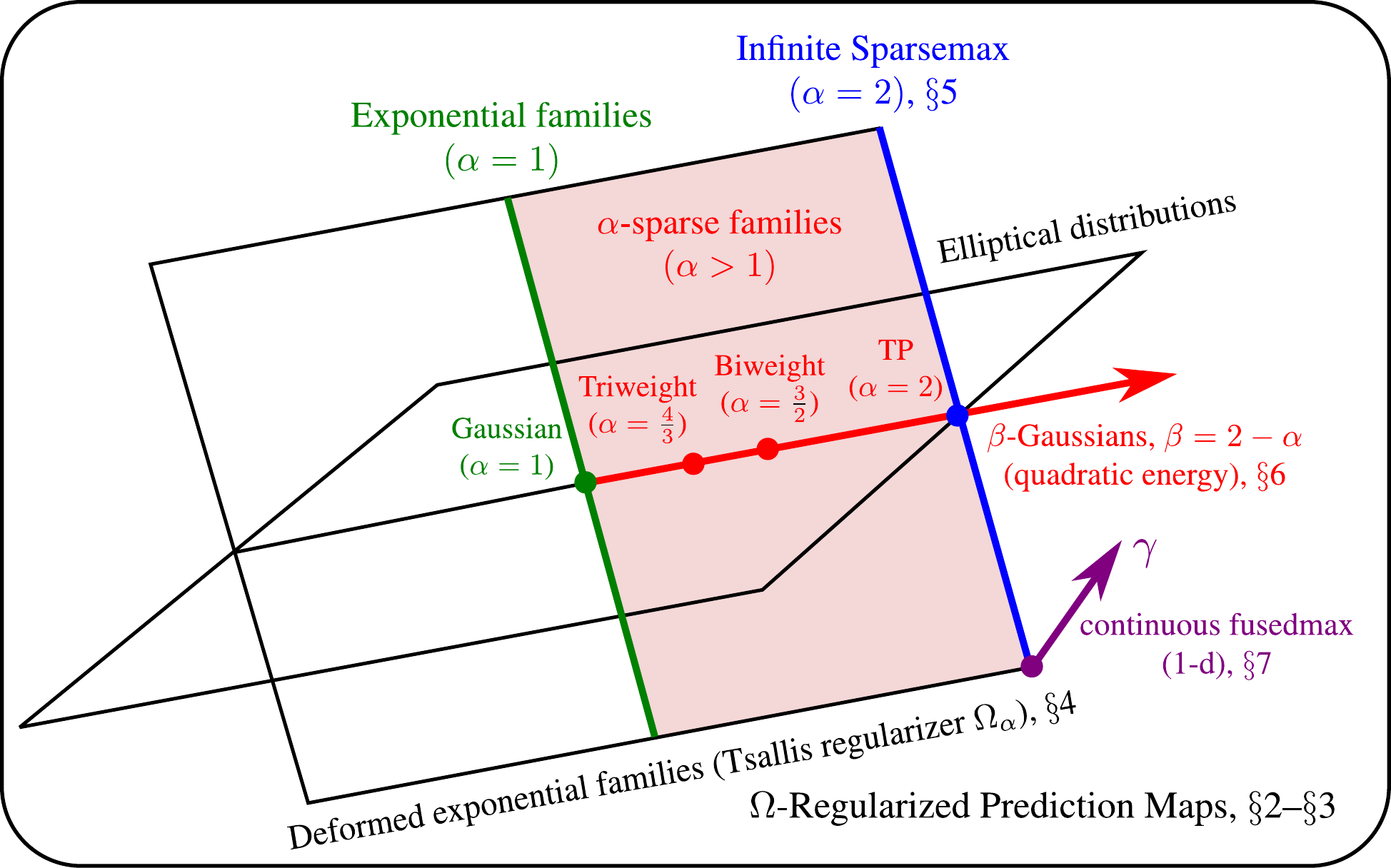}
\caption{\textbf{Diagram representing $\Omega$-regularized prediction maps (\S\ref{sec:rpm}--\S\ref{sec:fy_losses}) and some of its particular cases covered in this paper.}
$\beta$-Gaussian distributions (\S\ref{sec:elliptical}) lie at the intersection of elliptical distributions and deformed exponential families, corresponding to quadratic energies, and they include the Gaussian and Truncated Paraboloid (TP) distributions as particular cases. Exponential families and (infinite) sparsemax distributions (\S\ref{sec:sparsemax}) are a particular case of deformed exponential families (\S\ref{sec:tsallis}) for $\alpha\in \{1,2\}$; examples of such distributions for different energy functions are given in Table~\ref{tab:distributions}. Fusedmax distributions (\S\ref{sec:fusedmax}) extend 1-d sparsemax by incorporating total variation or Sobolev regularizers.}
\label{fig:diagram}
\end{figure}

\section{Regularized Prediction Maps}\label{sec:rpm}

The crux of this paper is the notion of $\Omega$-regularized prediction maps, which have been introduced by \citet{blondel2020learning} for finite domains $S$, and which we generalize here to arbitrary measure spaces. We will show in the sequel that these maps generalize the free energy variational principle \citep{dayan1995helmholtz}.

\insertion{\subsection{Warm-up: Finite domains}}

Let us start with the finite case, $S = [K] = \{1, \ldots, K\}$. We consider the following problem: given a vector of real numbers $f \in \mathbb{R}^{K}$, convert them into a probability vector $p \in \triangle^{K}$, where $\triangle^{K} := \{p \in \mathbb{R}^{K} \mid p \ge 0, \,\, p^\top 1 = 1\}$ denotes the probability simplex. For example, $f$ could be a vector of label scores (or ``logits'') computed by a neural network classifier, and $p$ the corresponding label probabilities.
The idea behind regularized prediction maps is to smooth the argmax operator with a convex regularizer $\Omega : \triangle^{|S|} \rightarrow \mathbb{R}$ which encourages uniform distributions:
\begin{equation}\label{eq:rpm_finite}
\hat{p}_\Omega[f] = \argmax_{p \in \triangle^{|S|}} p^\top f - \Omega(p).
\end{equation}
This operator can be regarded as the gradient map of the smoothed max function
$\max_\Omega(f) := \max_{p  \in \triangle^{|S|}} p^\top f - \Omega(p)$
\citep{nesterov_smooth,beck_2012,fusedmax}.
Without any regularization ($\Omega \equiv 0$), we obtain the argmax transformation, where the maximizer $p^\star$ in \eqref{eq:rpm_finite} becomes a one-hot vector.
Non-trivial choices of $\Omega$ recover well-known transformations such as \textbf{softmax} \citep{bridle1990probabilistic}, and recently proposed ones, including \textbf{sparsemax} \citep{Martins2016ICML}, \textbf{fusedmax} \citep{fusedmax}, and \textbf{entmax} \citep{peters2019sparse}.
We will cover these transformations in this paper and we will show in the
subsequent sections how they can be extended to arbitrary infinite measure spaces (countably infinite or continuous).

\insertion{\subsection{Extension to infinite domains}}

\insertion{In several practical applications, the domain $S$ is not finite: For
example, it can be a continuous space, such as $\mathbb{R}$ or $\mathbb{R}^N$, or a countably infinite set such as $\mathbb{N}$. To accommodate this in an unified manner,} we need to consider the space of probability densities $\mathcal{M}_+^1(S)$ instead of the probability simplex $\triangle^{|S|}$.
Our definition below extends regularized prediction maps to \textit{arbitrary} measure spaces $S$. Instead of a finite vector $f \in \mathbb{R}^K$, we assume now a \textbf{scoring function}  $f: S\rightarrow \mathbb{R}$.
\begin{definition}[$\Omega$-\textbf{regularized prediction map}.]
\label{def:regularized_prediction}
Let  $\Omega: \mathcal{M}_+^1(S) \rightarrow \mathbb{R}$ be a lower semi-continuous (l.s.c.),  proper, and strictly convex function.
The $\Omega$-{regularized prediction map} $\hat{p}_{\Omega}: \mathcal{F} \rightarrow \mathcal{M}_+^1(S)$ is defined as
\begin{equation}\label{eq:reg_prediction}
\hat{p}_{\Omega}[f]
= \argmax_{p \in \mathcal{M}_+^1(S)} \mathbb{E}_{p}[f(t)] - \Omega(p)
= \argmax_{p \in \mathcal{M}_+^1(S)} \int_S p(t) \, f(t)\, d\nu(t) - \Omega(p),
\end{equation}
where $\mathcal{F}$ is the set of functions for which the maximizer above exists and is unique.
\end{definition}

\begin{figure*}[t]
\centering
\includegraphics[width=0.4\textwidth]{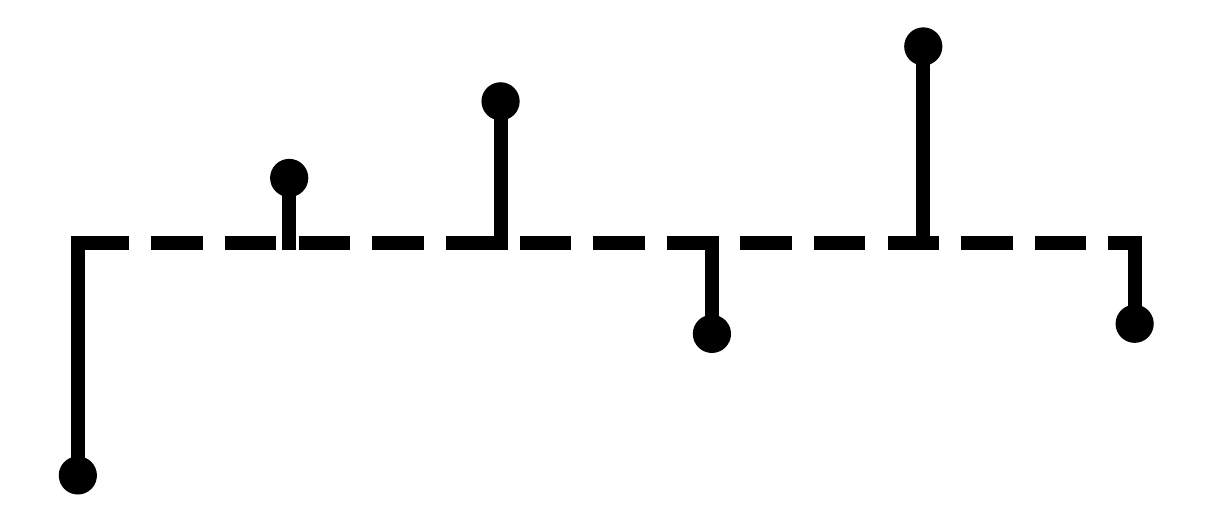}
\,\,
\includegraphics[width=0.4\textwidth]{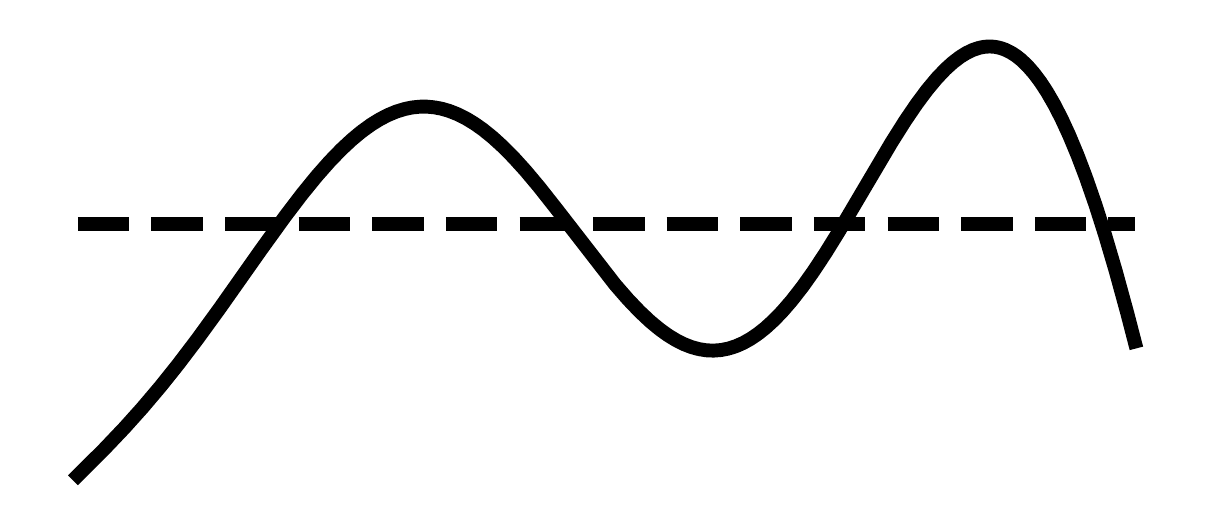}\\
\includegraphics[width=0.4\textwidth]{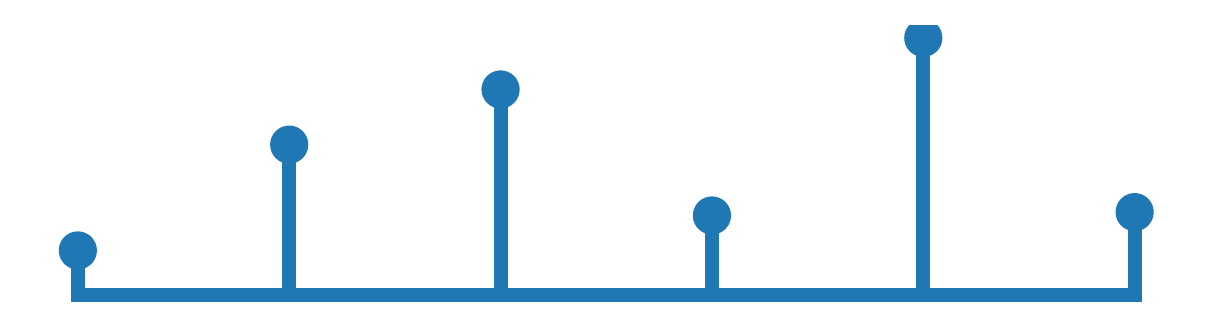}
\,\,
\includegraphics[width=0.4\textwidth]{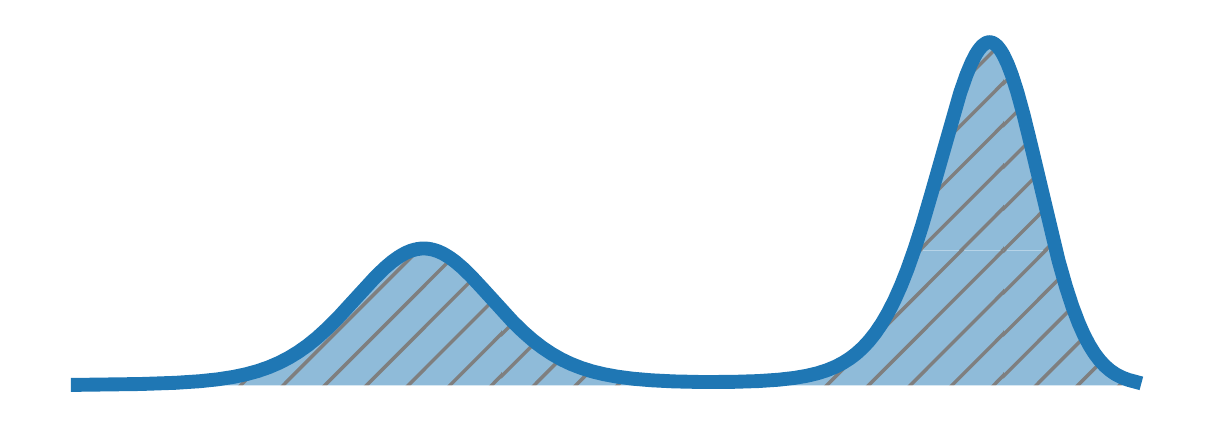}
\caption{\textbf{Discrete and continuous $\Omega$-regularized prediction maps}. For each case, we show the scoring function $f$ (top) and corresponding distribution $\hat{p}_\Omega[f]$ (bottom) when $\Omega$ is the Shannon-Boltzmann-Gibbs entropy. Left: finite $S$. Right: $S=\mathbb{R}$.\label{fig:rpm_example}}
\end{figure*}

Figure~\ref{fig:rpm_example} provides an illustration.

\paragraph{Properties.}
$\Omega$-regularized prediction maps enjoy several important properties; see \citet[Proposition~1]{blondel2020learning} for the finite $S$ case. For example, %
they are
insensitive to the addition of constants, both to the regularizer $\Omega$ and to the function $f$. That is, $\hat{p}_{\Omega} \equiv \hat{p}_{\Omega+c}$ for any $c\in \mathbb{R}$ and  $\hat{p}_{\Omega}[f] = \hat{p}_{\Omega}[g]$ if $g(t) = f(t) + c$. The former follows immediately from \eqref{eq:reg_prediction}, and the latter follows from the fact that $\mathbb{E}_{p}[f(t) + c] = \mathbb{E}_{p}[f(t)] + c$.
For the continuous case $S=\mathbb{R}^N$ and if the regularizer $\Omega$ is separable ({\it i.e.} if it can be written as $\Omega(p) = \int_S \psi(p(t))$ for some function $\psi: \mathbb{R}_+ \rightarrow \mathbb{R}$) -- which is always the case in this paper -- we also have the following equivariance property: if $\tilde{f}(t) := f(At + b)$ for a matrix $A$ with determinant $\pm 1$, then $\hat{p}_\Omega[\tilde{f}](t) = \hat{p}_\Omega[f](At + b)$. This includes equivariance with respect to translations and orthogonal transformations as particular cases. See Appendix~\ref{sec:proof_equivariance_fy} for a proof.

\paragraph{Low temperature limit.}
It is often convenient to consider a ``temperature parameter'' $\tau>0$, absorbed into $\Omega$ via $\Omega := \tau \tilde{\Omega}$. If $f$ has a unique global maximizer $t^\star$, the low-temperature limit yields ${ \lim_{\tau\rightarrow 0}}\, \hat{p}_{\tau \tilde{\Omega}}[f] = \delta_{t^\star}$,
a Dirac delta distribution at the maximizer of $f$.
For finite $S$, this is simply the {\it argmax} transformation. %
More interesting examples of regularization functionals are shown in the next sections.

\insertion{\subsection{Examples}\label{sec:rpm_examples}}

\paragraph{Shannon-Boltzmann-Gibbs entropy.}

If we interpret $-f(t)$ as an energy function and choose as regularizer the Shannon-Boltzmann-Gibbs negentropy%
\footnote{This includes as particular cases the {Shannon negentropy} when $\nu$ is the counting measure for discrete $S$, and the {differential negentropy} when $\nu$ is the Lebesgue measure for continuous $S$. Shannon-Boltzmann-Gibbs negentropies are however more general and they can work with arbitrary measures.} %
$\Omega(p) = \int_S p(t)\log p(t) d\nu(t)$, we recover the well-known {\bf free energy variational principle} \citep{dayan1995helmholtz}.
\insertion{In that case, the quantity $-U_\Omega(p; f) := -\mathbb{E}_p[f(t)] + \Omega(p)$ corresponds to the {\bf Helmholtz free energy}, and $\hat{p}_\Omega$ is its minimizer \citep{hinton1993autoencoders}.}
With this choice, the solution of the optimization problem \eqref{eq:reg_prediction} is a  Boltzmann-Gibbs distribution (\citealt{cover2012elements}; see Appendix~\ref{sec:diff_ent_exp_family} for a proof):
\begin{equation}\label{eq:boltzmann}
\hat{p}_\Omega[f](t) = \frac{\exp(f(t))}{\int_S \exp(f(t'))d\nu(t')} = \exp \bigl(f(t) - A(f)\bigr),
\end{equation}
where
$A(f) := \log \int_S \exp(f(t))$
is the log-partition function.
Some particular cases are:
\begin{itemize}
\item If $S$ is finite and $\nu$ is the counting measure, the integral in
    \eqref{eq:boltzmann} is a summation and we can write $f$ as a vector $[f_1,
    \ldots, f_{|S|}] \in \mathbb{R}^{|S|}$. In this case, the
    $\Omega$-regularized prediction map is the {\bf softmax transformation},
\begin{equation}
\hat{p}_\Omega[f] = \mathrm{softmax}(f) = \tfrac{\exp(f)}{\sum_{k=1}^{|S|} \exp(f_k)} \in \triangle^{|S|}.
\end{equation}
The vector $\hat{p}_\Omega[f]$ parameterizes a {\bf categorical} distribution in this case.

\item If $S=\mathbb{N}$, $\nu(A)$ the counting measure, and $f(t) =
t\log \lambda - \log(t!)$ for
$\lambda > 0$, we obtain a {\bf Poisson} distribution, $\mathrm{Pr}\{t = k\} =
\frac{\hat{p}_\Omega[f](k)}{k!} = \frac{\lambda^k \exp(-\lambda)}{k!}$, with
$\Omega(\hat p_\Omega[f]) = -\lambda(1-\log\lambda) - \exp(-\lambda) \sum_{t=0}^\infty \frac{\lambda^t\log(t!)}{t!}$.%
\footnote{It is also possible to obtain a Poisson distribution by letting $\nu(A) = \sum_{t \in A} \frac{1}{t!}$ for $A \subseteq \mathbb{N}$, $f(t) = t\log \lambda$ for $\lambda > 0$, and $\Omega(p) = \sum_{k=0}^\infty \frac{p(k) \log p(k)}{k!}$. The formulation above, however, is more convenient for sparse generalizations, as we shall see.} %

\item If $S=\mathbb{R}^N$, $\nu$ is the Lebesgue measure, and %
$f(t) = -\frac{1}{2} (t-\mu)^\top\Sigma^{-1}(t-\mu)$ for $\mu \in \mathbb{R}^N$ and $\Sigma \succ 0$,
we obtain a {\bf multivariate Gaussian}, $\hat{p}_{\Omega}[f](t) =
\mathcal{N}(t; \mu, \Sigma) = (2\pi)^{-\sfrac{N}{2}} |\Sigma|^{-\sfrac{1}{2}} \exp(-\frac{1}{2}(t-\mu)^\top\Sigma^{-1}(t-\mu))$,
with differential negentropy $\Omega(\hat p_\Omega[f]) = -\frac{1}{2}\log \det (2\pi e \Sigma)$.
This becomes a {\bf univariate Gaussian} $\mathcal{N}(t; \mu, \sigma^2)$ if
$N=1$.

\item For $S=\mathbb{R}$ and defining $f(t) = -|t-\mu|/b$ for  $\mu \in
    \mathbb{R}$ and $b>0$ we get a {\bf Laplace} density,
    $\hat{p}_{\Omega}[f](t) = \tfrac{1}{2b}\exp\left(-|t-\mu|/b\right)$, with
    differential negentropy $\Omega(\hat p_\Omega[f]) = -\log(2b e)$.
\end{itemize}

These distributions are summarized in Table~\ref{tab:distributions} (rows with $\alpha=1)$.

\paragraph{Sparsity-inducing regularizers.}
Other regularizers $\Omega$ have been considered for the finite case. Choosing the Gini entropy $\Omega(p) = \tfrac{1}{2}\|p\|_2^2 - \frac{1}{2}$ (equivalent to $\ell_2$-regularization) leads to the \textbf{sparsemax} transformation \citep{Martins2016ICML}, and Tsallis entropy regularizers lead to \textbf{entmax} \citep{peters2019sparse}, covered later in this paper.
These regularizers are able to promote sparse probability mass functions. However,
their development has been limited
 so far to finite domains. In this paper, we generalize sparsemax and entmax to continuous domains (\S\ref{sec:tsallis}--\S\ref{sec:sparsemax}).
For entmax, we draw a new connection with elliptical distributions \citep{fang} when $f(t)$ is a quadratic scoring function
(\S\ref{sec:elliptical}). In this case, the $\Omega$-regularized prediction map leads to a generalization of multivariate Gaussian distributions called $\beta$-Gaussians, which can have bounded support and relate to some well-known density estimation kernels \citep{epanechnikov1969non,silverman1986density}. %

\begin{table*}
\caption{\label{tab:distributions}Distributions induced by
    $\Omega_\alpha$-regularized prediction maps for several scoring functions $f$ for
    finite, countably infinite, and continuous domains. We show the cases
    $\alpha\in \{1,2\}$ \insertion{($\alpha=1$ corresponds to the Shannon-Boltzmann-Gibbs regularizer, covered in \S\ref{sec:rpm_examples}, whereas $\alpha=2$ corresponds to the Gini regularizer, covered in \S\ref{sec:sparsemax}).} We denote by $\|t-\mu\|^2_{\Sigma^{-1}} \!:=\!
    (t\!-\!\mu)^\top \Sigma^{-1} (t\!-\!\mu)$ the squared Mahalanobis distance
between $t$ and $\mu$.
The sparse Poisson and truncated paraboloid
are new distributions presented in a unified manner with this framework.}
\begin{center}
\scriptsize
\begin{tabular}{llllll}
\toprule
Name & $S$ & $f(t)$ & $\alpha$ &
$\hat{p}_{\Omega_\alpha}[f]$ & $\Omega_\alpha(\hat{p}_{\Omega_\alpha}[f])$ \\
\midrule
Categorical (softmax) &
\multirow{2}{*}{$[K]$} & \multirow{2}{*}{$f_t$} &
$1$ &
$\tfrac{\exp(f)}{\sum_{t=1}^{K} \exp(f_t)}$ & $\sum_{t=1}^K p_t \log p_t$ \\
Sparsemax &
&&
$2$ &
$[f_t - \tau]_+$ & $\frac{1}{2}\left(\sum_{t=1}^K p_t^2 - 1\right)$ \\
\midrule
Poisson &
\multirow{2}{*}{$\mathbb{N}$} & \multirow{2}{*}{$t \log \mu + \log(1/t!)$} &
$1$ &
$\mu^t \exp(-\mu) / t!$ &
$-\mu(1-\log \mu) - \exp(-\mu) \sum_{t=0}^{\infty} \frac{\mu^t \log(t!)}{t!}$ \\
Sparse Poisson &
&&
$2$ &
$[f(t) - \tau]_+$ &
$\frac{1}{2}\left(\sum_{t=0}^\infty [f(t) - \tau]_+^2 - 1\right)$
\\
\midrule
Gaussian &
\multirow{2}{*}{$\mathbb{R}$} & \multirow{2}{*}{$-\frac{(t - \mu)^2}{2\sigma^2}$} &
$1$ &
$\mathcal{N}(t; \mu, \sigma^2)$ & $-\sfrac{1}{2}\log (2\pi e \sigma^2)$ \\
Truncated Parabola &
&&
$2$ &
$\left[f(t) - \tau \right]_+$ & $-\frac{1}{2} +
\frac{1}{5}\left(\frac{3}{2\sigma}\right)^{2/3}$ \\
\midrule
Laplace &
\multirow{2}{*}{$\mathbb{R}$} & \multirow{2}{*}{$-\frac{|t - \mu|}{b}$} &
$1$ &
$\frac{1}{2b}\exp\left(-\frac{|t - \mu|}{b}\right)$ & $-\log(2b e)$ \\
Triangular &
&&
$2$ &
$\left[f(t) - \tau\right]_+$ & $-\frac{1}{2} +  \frac{1}{3\sqrt{b}}$ \\
\midrule
Multivariate Gaussian &
\multirow{2}{*}{$\mathbb{R}^N$} & \multirow{2}{*}{
$-\frac{1}{2}\|t-\mu\|^2_{\Sigma^{-1}}$
} &
$1$ &
$\mathcal{N}(t; \mu, \Sigma)$ & $-\sfrac{1}{2}\log \det (2\pi e \Sigma)$ \\
Truncated Paraboloid &
&&
$2$ &
$[f(t) - \tau]_+$ &
$-\tfrac{1}{2} + \tfrac{2}{N+4}\left( \frac{\Gamma\left(\tfrac{N}{2}+2\right)
}{(2\pi)^{\frac{N}{2}} |\Sigma|^{\frac{1}{2}}} \right)^{\tfrac{2}{2+N}}$ \\
\bottomrule
\end{tabular}
\end{center}
\end{table*}%

\begin{table*}
\caption{\label{tab:linear_param}Linear parametrization $f_\theta(t) = \theta^\top \phi(t)$ for the
scoring function $f(t)$ of common distributions.
We further require $\mu > 0$ for the (sparse) Poisson distribution.
For the Laplace and
Triangular distributions, we assume the location $\mu$ known (fixed).
As is standard, for Gaussians, $f(t)$ is only linear in $\theta$ up to a
constant. In \S\ref{sec:elliptical}, we use the quadratic form directly.
}
\begin{center}
\small
\begin{tabular}{lllll}
\toprule
Distribution & $S$ & $f(t)$ & $\theta$ & $\phi(t)$ \\
\midrule
Categorical &
\multirow{2}{*}{$[K]$} &
\multirow{2}{*}{$f_t$} &
\multirow{2}{*}{$[f_1,\dots,f_K]$} &
\multirow{2}{*}{$e_t$} \\
Sparsemax & & & & \\
\midrule
Poisson, Sparse Poisson & $\mathbb{N}$ & $t \log \mu + \log(1/t!)$
                        & $[\log \mu, 1]$ & $[t, \log(1/t!)]$ \\
\midrule
Gaussian &
$\mathbb{R}$ &
\multirow{3}{*}{$-\frac{(t-\mu)^2}{2 \sigma^2}$} &
\multirow{3}{*}{$[\frac{\mu}{\sigma^2}, -\frac{1}{2\sigma^2}]$} &
\multirow{3}{*}{$[t, t^2]$} \\
Truncated Parabola & $\mathbb{R}$ & & & \\
Sparse Integer Gaussian & $\mathbb{Z}$ & & & \\
\midrule
Laplace &
\multirow{2}{*}{$\mathbb{R}$} &
\multirow{2}{*}{$- \frac{|t-\mu|}{b}$} &
\multirow{2}{*}{$[-\frac{1}{b}]$} &
\multirow{2}{*}{$[|t-\mu|]$} \\
Triangular & & & & \\
\midrule
Multivariate Gaussian &
\multirow{2}{*}{$\mathbb{R}^N$} &
\multirow{2}{*}{$-\frac{1}{2} \|t - \mu\|^2_{\Sigma^{-1}}$} &
\multirow{2}{*}{$[\Sigma^{-1}\mu, -\frac{1}{2} \mathrm{vec}(\Sigma^{-1})]$} &
\multirow{2}{*}{$[t, \mathrm{vec}(t t^\top)]$} \\
Truncated Paraboloid & & & & \\
\bottomrule
\end{tabular}
\end{center}
\end{table*}%

\paragraph{Total variation regularizer.}
Also for the finite case,
\citet{fusedmax} proposed \textbf{fusedmax}, which corresponds to the regularizer
$\Omega(p) = \tfrac{1}{2}\|p\|_2^2 + \sum_{k=1}^{|S|-1} |p_{k+1} - p_k|$, inspired by the fused lasso \citep{tibshirani2005sparsity}.
Besides sparsity, this regularizer encourages the same probability value in contiguous elements. We generalize fusedmax to continuous domains in \S\ref{sec:fusedmax}, by replacing the finite difference $|p_{k+1} - p_k|$ by the \textbf{derivative} $|p'(t)|$, leading to Rudin-Osher-Fatemi and Sobolev regularizers.

\paragraph{Linearly parametrized families of scoring functions.}

Definition~\ref{def:regularized_prediction} is fully general concerning the class of functions $\mathcal{F}$ from which $f$ can be chosen. In practice, it is often useful to consider  finite-dimensional parametrized function classes.
The simplest way to do this is via linear functions $f_{\theta}(t) = \theta^\top
\phi(t)$, %
where $\phi(t) \in \mathbb{R}^M$ is a vector of \textbf{statistics} and $\theta \in \Theta \subseteq \mathbb{R}^M$ is a vector of \textbf{canonical parameters}.%
\footnote{More generally, we can write $f_\theta(t) = \theta^\top\phi(t) + c(\theta) + d(t)$ where $c$ and $d$ are functions. However, these extra terms can also be handled by absorbing $c(\theta)$ into the normalization constant or %
$d(t)$ into the base measure.} %
A family of the form \eqref{eq:boltzmann} parametrized by $\theta \in \Theta$ is called an \textbf{exponential family} \citep{barndorff2014information}.
All the examples above (the categorical distribution with the softmax transformation, the Poisson with parameter $\lambda$, the Gaussian with parameters $\mu$ and $\Sigma$, and the Laplace with fixed $\mu$ and parameter $b$) are instances of exponential families.
Exponential families
have many appealing properties, such as the existence of conjugate priors and sufficient statistics, and a dually flat geometric structure \citep{amari2016information}. %
A key property of exponential families is that {\bf the support is constant within the same family and dictated by the base measure $\nu$}: this follows immediately from the positiveness of the $\exp$ function in \eqref{eq:boltzmann}.
In \S\ref{sec:tsallis}, we describe a more general set of families -- \textbf{deformed exponential families} -- that relax this property.

\section{Continuous Fenchel-Young Losses}\label{sec:fy_losses}

We saw in \S\ref{sec:rpm} how to construct distributions $\hat p_\Omega[f]$ from
a scoring function $f(t)$ via the $\Omega$-regularized prediction map
\eqref{eq:reg_prediction}.
In practice, the scoring function will often be a parametrized function,
denoted $f_\theta(t)$.
\insertion{In this section, we will address the reverse problem: given a true
data distribution $p$ (or samples thereof), find an estimate $\theta$ such that $\hat{p}_\Omega[f_\theta] \approx p$. Many statistical tasks can be formulated in terms of finding a good empirical approximation to $p$, and loss functions are a flexible way of quantifying how good these approximations are.}
For finite $S$,
\citet{blondel2020learning} introduced the notion of Fenchel-Young loss. Here,
we extend that notion to arbitrary domains.

\insertion{\subsection{Definition}}

The construction hinges on the notion of Fenchel dual, denoted $\Omega^*$, of an l.s.c. proper convex function $\Omega\!:\!\mathcal{M}_+^1(S) \rightarrow \mathbb{R}$ \citep{Bauschke_Combettes2011}:%
\footnote{Fenchel duality is taken in the (potentially infinite-dimensional) set $\mathcal{F} \subseteq \mathbb{R}^S$, which endowed with the inner product $\langle f, g\rangle = \int_S f(t) g(t) d\nu(t)$ forms a Hilbert space \citep{Bauschke_Combettes2011}.}
\begin{equation*}\label{eq:Omega_star2}
\Omega^*(f) := \max_{p\in \mathcal{M}_+^1(S)} \mathbb{E}_{p}[f(t)] - \Omega({p})
  =  \mathbb{E}_{\hat{p}_\Omega[f]}[f(t)] - \Omega(\hat{p}_\Omega[f]),
\end{equation*}
where, for the equality, we used the fact that
$\hat{p}_\Omega[f]$ is the solution of \eqref{eq:reg_prediction}.
We can now define the Fenchel-Young loss for arbitrary domains.
\begin{definition}[Fenchel-Young loss.]\label{def:fyloss}
Given an l.s.c., proper, strictly convex function $\Omega:\mathcal{M}_+^1(S) \rightarrow \mathbb{R}$, the Fenchel-Young loss $L_{\Omega} : \mathcal{F} \times \mathcal{M}_+^1(S) \rightarrow \mathbb{R}$ is defined as
\begin{equation}
L_{\Omega}(f; p) := \Omega^*(f) + \Omega(p) - \mathbb{E}_{p}[f(t)].
\end{equation}
For convenience, we also define the cross-$\Omega$ loss $L^\times_{\Omega} : \mathcal{F} \times \mathcal{M}_+^1(S) \rightarrow \mathbb{R}$ as follows:
\begin{equation}
L^\times_{\Omega}(f; p) := \Omega^*(f) - \mathbb{E}_{p}[f(t)].
\end{equation}
\end{definition}
Note that, when $\Omega(p)$ is finite, the Fenchel-Young loss $L_\Omega$ differs
from $L^\times_\Omega$ only by a term which is constant w.r.t.\ $f$. An interesting example is when $p=\delta_t$ is a Dirac delta, in which case we obtain $L^\times_{\Omega}(f; \delta_t) = \Omega^*(f) - \mathbb{E}_{\delta_t}[f(t)] = \Omega^*(f) - f(t)$.

The name ``Fenchel-Young loss'' stems from the Fenchel-Young inequality \citep[Proposition 3.3.4]{borwein2010convex}, which immediately implies the following property:
\begin{proposition}[Non-negativity and condition for zero loss]
\label{prop:fy_properties1}
With $\Omega$ as in Definition~\ref{def:fyloss}, we have
(i) $L_{\Omega}(f;p) \geq 0$,  and  (ii) $L_{\Omega}(f;p) = 0 \Leftrightarrow p =
\hat{p}_{\Omega}[f]$ almost everywhere. %
\end{proposition}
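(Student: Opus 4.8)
The plan is to obtain both claims directly from the variational definition of the conjugate $\Omega^*$, which in this setting \emph{is} the Fenchel-Young inequality. For part (i), I would note that $\Omega^*(f) = \max_{q\in\mathcal{M}_+^1(S)} \mathbb{E}_q[f(t)] - \Omega(q)$ is a maximum over all densities, so the specific $p\in\mathcal{M}_+^1(S)$ appearing in the loss is an admissible competitor and therefore $\Omega^*(f) \geq \mathbb{E}_p[f(t)] - \Omega(p)$. Rearranging gives $L_\Omega(f;p) = \Omega^*(f) + \Omega(p) - \mathbb{E}_p[f(t)] \geq 0$ immediately, using nothing about $\Omega$ beyond its appearance as the objective of \eqref{eq:reg_prediction}.

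For part (ii), I would rewrite the equality $L_\Omega(f;p)=0$ as $\mathbb{E}_p[f(t)] - \Omega(p) = \Omega^*(f)$, i.e.\ the statement that $p$ attains the maximum defining $\Omega^*(f)$. To conclude $p = \hat{p}_\Omega[f]$ I would invoke strict convexity of $\Omega$: the functional $q\mapsto \mathbb{E}_q[f(t)] - \Omega(q)$ is strictly concave on the convex set $\mathcal{M}_+^1(S)$, so whenever a maximizer exists it is unique. Since $f\in\mathcal{F}$, existence and uniqueness are guaranteed and, by Definition~\ref{def:regularized_prediction}, the unique maximizer is exactly $\hat{p}_\Omega[f]$; hence any $p$ attaining the maximum must agree with it. The converse direction is a one-line check: substituting $p = \hat{p}_\Omega[f]$ and using the second equality in the definition of $\Omega^*$, namely $\Omega^*(f) = \mathbb{E}_{\hat{p}_\Omega[f]}[f(t)] - \Omega(\hat{p}_\Omega[f])$, makes all three terms of $L_\Omega$ cancel.

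The hard part will be the uniqueness and ``almost everywhere'' step in (ii) rather than the inequality in (i), which is purely formal. In finite dimensions strict convexity yields a unique maximizer outright; here, in the infinite-dimensional measure-space setting, I would rely on strict convexity of $\Omega$ together with the existence-and-uniqueness hypothesis baked into the domain $\mathcal{F}$, and on the standing convention that elements of $\mathcal{M}_+^1(S)$ are identified up to $\nu$-null sets. With that convention, equality of the maximizer and $p$ in $\mathcal{M}_+^1(S)$ is precisely equality $\nu$-almost everywhere, which is the form stated in the proposition. Everything else reduces to the Fenchel-Young inequality $\Omega(p) + \Omega^*(f) \geq \langle f, p\rangle$ together with its tightness condition.
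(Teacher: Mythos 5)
Your proposal is correct and follows essentially the same route as the paper's proof, which simply invokes the Fenchel--Young inequality in the Hilbert space $\mathcal{F}$ for part (i) and the characterization of equality as $(f,p)$ forming a dual pair, \textit{i.e.} $p = \hat{p}_\Omega[f]$, for part (ii). You merely unpack these two steps explicitly --- deriving the inequality from the variational definition of $\Omega^*$ and obtaining uniqueness of the maximizer from strict convexity of $\Omega$ together with the standing identification of densities up to $\nu$-null sets --- which is a faithful, somewhat more self-contained rendering of the same argument.
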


\insertion{In fact, we can interpret the Fenchel-Young loss as the {\bf regret associated to the generalized Helmholtz free energy $-U_\Omega(p; f) := -\mathbb{E}_p[f(t)] + \Omega(p)$}: indeed, we have $\Omega^*(f) = \max_{p' \in \mathcal{M}_+^1(S)} U_\Omega(p'; f) = U_\Omega(\hat{p}_\Omega[f]; f)$, and therefore $L_\Omega(f; p) = -U_\Omega(p; f) + U_\Omega(\hat{p}_\Omega[f]; f)$.}

\insertion{Fenchel-Young losses are also tightly connected to Bregman divergences \citep{bregman1967relaxation}, as shown by \citet[Theorem 1.1]{amari2016information} and \citet[\S3.2]{blondel2020learning}.}
In particular, when $\Omega$ is the Shannon-Boltzmann-Gibbs negentropy, the Fenchel-Young loss $L_\Omega$ equals the
\textbf{Kullback-Leibler divergence} between $p$ and $\hat{p}_\Omega[f]$, and $L^\times_\Omega$ becomes the \textbf{cross-entropy loss}.
This is commonly used as an objective to minimize in estimation problems,
for example when $\bar p := \frac{1}{L}\sum_{\ell=1}^L \delta_{t_\ell}$ is the empirical data distribution associated to a sample $\{t_1, \ldots, t_L\}$, and the goal is to obtain an estimate $f$ so that $\hat{p}_\Omega[f]$ approximates  $\bar p$. In that case, the minimization of the cross-entropy loss corresponds to \textbf{maximum likelihood estimation}.%
\footnote{Note that, for finite $S$, $\Omega(\delta_t) = 0$ and therefore $L_\Omega^\times(f, \delta_t) = L_\Omega(f, \delta_t)$. This, however, does not happen in general -- for $S=\mathbb{R}$, the differential negentropy explodes for Dirac distributions, $\Omega(\delta_t) = +\infty$.} %

\insertion{\subsection{Properties}}

Proposition~\ref{prop:fy_properties1} shows that \textbf{Fenchel-Young losses generalize a key property of the Kullback-Leibler divergence and the cross-entropy loss}, since the loss minimizers are attained when $\hat{p}_\Omega[f] = p$. Indeed, one target use of Fenchel-Young losses is to obtain an estimate $f$, given some empirical data distribution $\bar p$, by minimizing $L_{\Omega}(f; \bar{p})$.
To make this practical, we need to assume a parametric family $\{f_\theta \mid \theta \in \Theta\} \subseteq \mathcal{F}$, where $\theta$ is a vector of parameters and $\Theta \subseteq \mathbb{R}^M$ is a convex set.
The goal of estimation is to find $\hat{\theta}$ which minimizes
$L_\Omega(f_\theta; \bar p)$.
The next proposition, proved in Appendix~\ref{sec:proof_fy_properties}, sheds light on this problem.

\begin{proposition}[Stationary points of Fenchel-Young losses]
\label{prop:fy_properties2}
Assume that $f_\theta(t)$ is differentiable with respect to $\theta \in \Theta$ for any $t \in S$. Then,
the following expression holds for the gradient of $L_{\Omega}(f_\theta; p)$ with respect to $\theta$:
\begin{equation}\label{eq:expected_gradient_matching}
\nabla_{\theta} L_{\Omega}(f_\theta; p) = \mathbb{E}_{\hat{p}_{\Omega}[f_\theta]}[\nabla_\theta f_\theta(t)] - \mathbb{E}_{p}[\nabla_\theta f_\theta(t)].
\end{equation}
Therefore, $\hat{\theta} \in \Theta$ is a stationary point of $L_\Omega(f_\theta; p)$ iff it satisfies the equation
\begin{equation}\label{eq:expected_gradient_matching_02}
\mathbb{E}_{\hat{p}_{\Omega}[f_{\hat{\theta}}]}[\nabla_\theta f_\theta(t)] = \mathbb{E}_{p}[\nabla_\theta f_\theta(t)].
\end{equation}
\end{proposition}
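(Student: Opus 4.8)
The plan is to differentiate the three terms of $L_{\Omega}(f_\theta; p) = \Omega^*(f_\theta) + \Omega(p) - \mathbb{E}_{p}[f_\theta(t)]$ separately with respect to $\theta$. The middle term $\Omega(p)$ does not depend on $\theta$ and contributes nothing. For the last term, under the differentiability hypothesis on $f_\theta$ together with a dominated-convergence argument that legitimizes a Leibniz rule, I would interchange differentiation and integration to obtain $\nabla_\theta \mathbb{E}_{p}[f_\theta(t)] = \mathbb{E}_{p}[\nabla_\theta f_\theta(t)]$. The substance of the proof is therefore the first term, $\nabla_\theta \Omega^*(f_\theta)$.

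For that term I would invoke an envelope (Danskin-type) argument. Since $\Omega^*(f_\theta) = \max_{p' \in \mathcal{M}_+^1(S)} \mathbb{E}_{p'}[f_\theta(t)] - \Omega(p')$ and, by Definition~\ref{def:regularized_prediction}, the maximizer is the \emph{unique} density $\hat{p}_{\Omega}[f_\theta]$, the envelope principle says that when differentiating the optimal value in $\theta$ the contribution coming from the variation of the maximizer vanishes (the first-order optimality conditions hold there), so one may differentiate as if $\hat{p}_{\Omega}[f_\theta]$ were held fixed:
\[
\nabla_\theta \Omega^*(f_\theta) = \nabla_\theta \bigl( \mathbb{E}_{\hat{p}_{\Omega}[f_\theta]}[f_\theta(t)] - \Omega(\hat{p}_{\Omega}[f_\theta]) \bigr)\Big|_{\hat{p}_{\Omega}[f_\theta]\text{ fixed}} = \mathbb{E}_{\hat{p}_{\Omega}[f_\theta]}[\nabla_\theta f_\theta(t)].
\]
Equivalently, and perhaps more transparently, one can appeal to the standard fact from convex analysis that the gradient of a Fenchel conjugate equals the argmax, i.e.\ $\nabla \Omega^*(f) = \hat{p}_{\Omega}[f]$ as an element of the Hilbert space $\mathcal{F}$; applying the chain rule to the composition $\theta \mapsto f_\theta \mapsto \Omega^*(f_\theta)$ then yields the same expression, since $\langle \hat{p}_{\Omega}[f_\theta], \partial_{\theta_j} f_\theta \rangle = \mathbb{E}_{\hat{p}_{\Omega}[f_\theta]}[\partial_{\theta_j} f_\theta(t)]$.

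Combining the two non-constant terms gives exactly \eqref{eq:expected_gradient_matching}, and the stationary-point characterization \eqref{eq:expected_gradient_matching_02} follows immediately by setting this gradient to zero and rearranging.

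The main obstacle I anticipate is making the envelope / conjugate-gradient step rigorous in the infinite-dimensional setting: establishing that $\Omega^*$ is differentiable on $\mathcal{F}$ with gradient $\hat{p}_{\Omega}[f]$, which relies on the strict convexity and lower semicontinuity of $\Omega$ to guarantee a unique maximizer, and on enough smoothness of $\theta \mapsto f_\theta$ to push the derivative through both the conjugation and the integral. In finite dimensions these are classical (Danskin's theorem, and differentiability of the conjugate of a strictly convex function); the work here is to verify that the analogous hypotheses hold over the measure space $(S,\mathcal{A},\nu)$, or, as a safer route, to specialize to the linearly parametrized families $f_\theta(t) = \theta^\top \phi(t)$ used throughout the paper, where $\nabla_\theta f_\theta(t) = \phi(t)$, the chain rule collapses to $\nabla_\theta \Omega^*(f_\theta) = \mathbb{E}_{\hat{p}_{\Omega}[f_\theta]}[\phi(t)]$, and the condition \eqref{eq:expected_gradient_matching_02} becomes the moment-matching identity $\mathbb{E}_{\hat{p}_{\Omega}[f_{\hat\theta}]}[\phi(t)] = \mathbb{E}_{p}[\phi(t)]$.
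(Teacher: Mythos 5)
Your proposal is correct and takes essentially the same route as the paper: the paper's proof is just the chain rule $\nabla_\theta L_\Omega(f_\theta;p) = \int_S \frac{\partial L_\Omega}{\partial f_\theta(t)}\,\nabla_\theta f_\theta(t)\,d\nu(t)$ combined with the conjugate-gradient identity $\frac{\partial L_\Omega}{\partial f_\theta(t)} = [\nabla\Omega^*(f_\theta) - p](t) = \hat{p}_\Omega[f_\theta](t) - p(t)$, which is exactly your second (``more transparent'') formulation, the envelope argument being the standard justification of that identity. The paper is in fact terser than you are and does not dwell on the infinite-dimensional regularity issues you flag.
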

Eqs.~\eqref{eq:expected_gradient_matching}--\eqref{eq:expected_gradient_matching_02} resemble the familiar gradient expressions used to estimate energy-based models with maximum likelihood \citep{lecun2006tutorial}. Indeed, these expressions are recovered when $\Omega$ is the Shannon-Boltzmann-Gibbs entropy, in which case the distribution ${\hat{p}_{\Omega}[f_\theta]}$ is a Gibbs distribution, as seen in \S\ref{sec:rpm}.
Therefore, Fenchel-Young losses offer a more general objective function to fit densities in energy-based models which can serve as an alternative to maximum likelihood.

\paragraph{Convexity, moment matching, and sufficient statistics.}
If the parametric family is linear, $f_{\theta}(t) = \theta^\top \phi(t)$, then the gradient of $f$ with respect to $\theta$ becomes simply $\nabla_\theta f_\theta(t) = \phi(t)$, and we obtain the following stronger properties, also proved in Appendix~\ref{sec:proof_fy_properties}:
\begin{proposition}[Properties when $f_\theta(t)$ is linear in $\theta$]
\label{prop:fy_properties3}
If $f_{\theta}(t) = \theta^\top \phi(t)$, then the following holds:
\begin{enumerate}
    \item $\nabla_{\theta} L_{\Omega}(f_\theta; p) = \mathbb{E}_{\hat{p}_{\Omega}[f_\theta]}[\phi(t)] - \mathbb{E}_{p}[\phi(t)]$.
    \item $L_{\Omega}(f_\theta; p)$ is convex w.r.t.\ $\theta$.
    \item $\hat{\theta} \in \arg\min_{\theta} L_{\Omega}(f_\theta; p) \Leftrightarrow \mathbb{E}_{\hat{p}_{\Omega}[f_{\hat{\theta}}]}[\phi(t)] = v$,
    where $v = \mathbb{E}_{p}[\phi(t)]$.
\end{enumerate}
\end{proposition}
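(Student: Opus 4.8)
The plan is to obtain all three parts from the machinery already established, treating Part~1 as an immediate corollary of Proposition~\ref{prop:fy_properties2}, Part~2 (convexity) as the genuine technical content, and Part~3 as a consequence of the two. First I would handle Part~1 by specializing Proposition~\ref{prop:fy_properties2}: since $f_\theta(t) = \theta^\top \phi(t)$ is affine in $\theta$, its gradient is $\nabla_\theta f_\theta(t) = \phi(t)$, independent of $\theta$, and substituting this into \eqref{eq:expected_gradient_matching} yields the stated expression directly, with no further computation.

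For Part~2, the key observation is that $\Omega^*$, as defined, is a pointwise supremum over $p \in \mathcal{M}_+^1(S)$ of the functionals $f \mapsto \mathbb{E}_{p}[f(t)] - \Omega(p)$, each of which is affine in $f$; hence $\Omega^*$ is convex in $f$ on the Hilbert space $\mathcal{F}$. The map $\theta \mapsto f_\theta = \theta^\top \phi$ is linear, so the composition $\theta \mapsto \Omega^*(f_\theta)$ is convex in $\theta$ (a convex function precomposed with a linear map). The remaining terms of $L_{\Omega}(f_\theta; p) = \Omega^*(f_\theta) + \Omega(p) - \mathbb{E}_{p}[f_\theta(t)]$ are harmless: $\Omega(p)$ is constant in $\theta$, while $-\mathbb{E}_{p}[f_\theta(t)] = -\theta^\top \mathbb{E}_{p}[\phi(t)] = -\theta^\top v$ is linear in $\theta$. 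A sum of a convex, a constant, and a linear function is convex, which establishes Part~2.

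For Part~3, I would combine Parts~1 and~2. By Part~2 the objective $\theta \mapsto L_{\Omega}(f_\theta; p)$ is convex, and by Proposition~\ref{prop:fy_properties2} it is differentiable, so its global minimizers coincide exactly with its stationary points. By Part~1 the gradient vanishes precisely when $\mathbb{E}_{\hat{p}_{\Omega}[f_{\hat{\theta}}]}[\phi(t)] = \mathbb{E}_{p}[\phi(t)] = v$, which is the claimed moment-matching characterization. The subtle direction here is the $\Leftarrow$ implication---that a stationary point is automatically a \emph{global} minimizer---which fails for general nonconvex objectives and therefore genuinely relies on Part~2; this is where I expect the main work to sit. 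Concretely, the care needed is in justifying convexity of $\Omega^*$ in the (possibly infinite-dimensional) setting, i.e.\ that the supremum-of-affine-functionals argument is valid on $\mathcal{F}$, and in using Proposition~\ref{prop:fy_properties2} to supply the differentiability of $\theta \mapsto \Omega^*(f_\theta)$ that lets us equate stationary points with zero-gradient points.
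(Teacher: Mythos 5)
Your proposal is correct and follows essentially the same route as the paper: Part~1 by specializing the gradient formula of Proposition~\ref{prop:fy_properties2} with $\nabla_\theta f_\theta(t)=\phi(t)$, Part~2 by convexity of $\Omega^*$ (the paper simply cites that a Fenchel conjugate is convex, while you spell out the supremum-of-affine-functionals argument) composed with the linear map $\theta \mapsto f_\theta$, and Part~3 by combining the two so that stationarity coincides with global optimality. No gaps.
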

The third point in Proposition~\ref{prop:fy_properties3} is particularly significant: If $p=\bar{p}$ is an empirical data distribution based on a sample $\{t_1, \ldots, t_L\}$, then $v = \frac{1}{L}\sum_{\ell=1}^L \phi(t_\ell)$ is the empirical mean of the statistics -- the statement shows that estimating $\theta$ only depends on
$\bar{p}$ through $v$, which \textbf{generalizes the concept of sufficient statistics from exponential families.} The result shows that fitting a density from a linearly parametrized family
to an empirical distribution $\bar{p}$ by matching the expected statistics is optimal
in the Fenchel-Young loss sense, generalizing the well-known result from
exponential families that maximum likelihood estimation is equivalent to
\textbf{moment matching} of the sufficient statistics.

Figure~\ref{fig:misfit} illustrates the result of fitting $\beta$-Gaussian distributions (to be  introduced in \S\ref{sec:sparsemax}) to samples drawn from each of the %
distributions by minimizing the corresponding Fenchel-Young losses, confirming adequate fitting.

\begin{figure*}[t]\centering\includegraphics[width=.9\textwidth]{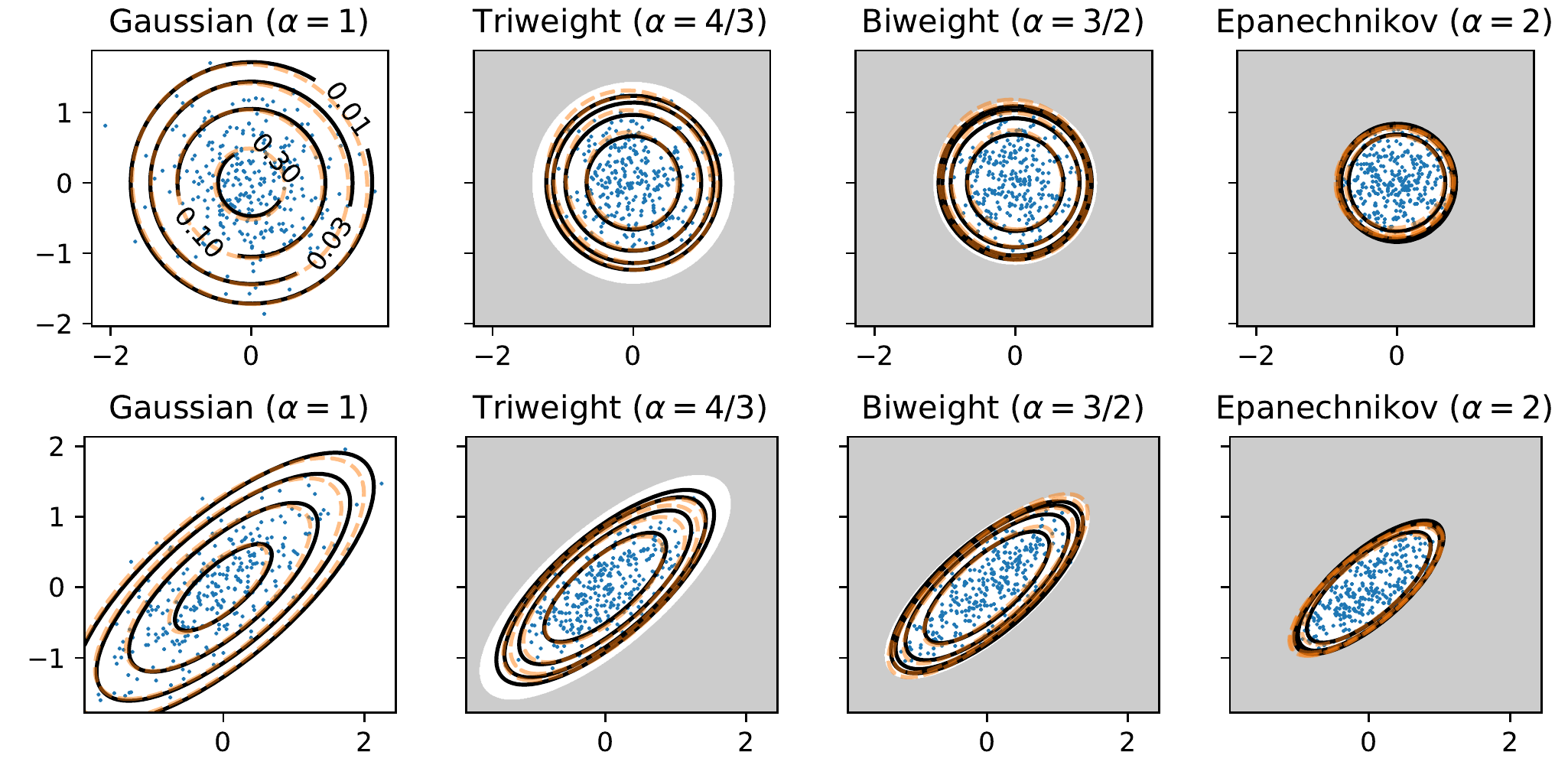}
\caption{\label{fig:misfit}Density and samples of two-dimensional $\beta$-Gaussian random
    variables, for the isotropic (top) and anisotropic cases (bottom). These distributions are obtained by applying (Tsallis) $\Omega_\alpha$-regularized prediction maps to quadratic scoring functions, with $\alpha = 2-\beta$. Shown are the original density (solid lines) and the density fit to
samples by moment matching (dashed lines). All contour lines are at the same
absolute levels, and the complement of the support is
shaded when appropriate.}
\end{figure*}

\insertion{\subsection{Examples}\label{sec:fy_examples}}

\insertion{We next provide some familiar examples, which will be generalized in the upcoming sections.}

\paragraph{Examples 1 and 2: Squared and absolute losses.}

Let $\Omega$ be the Shannon-Boltzmann-Gibbs negentropy.
Let $p=\delta_t$, \textit{i.e.}, the distribution contains a single sample $t$.
For the Gaussian distribution, by identification with \eqref{eq:boltzmann},
we get $\Omega^*(f) = A(f) = \log(\sigma \sqrt{2 \pi})$ and therefore
\begin{equation}
L^\times_\Omega(f; p) = \frac{(t - \mu)^2}{2\sigma^2}
+ \log(\sigma \sqrt{2 \pi}).
\end{equation}
Similarly, for the Laplace distribution, we get
$\Omega^*(f) = A(f) = \log(2b)$ and therefore
\begin{equation}
L^\times_\Omega(f; p) = \frac{|t - \mu|}{b} + \log(2b).
\end{equation}

\paragraph{Example 3: KL divergence between two Gaussian distributions.}
When $\Omega$ is the Shannon-Boltzmann-Gibbs entropy, the Fenchel dual is the log-partition function \eqref{eq:boltzmann}, $\Omega^*(f) = A(f)$, and the Fenchel-Young loss recovers the  Kullback-Leibler divergence. For example, if $S=\mathbb{R}^N$, $p(t) = \mathcal{N}(t; \mu, \Sigma)$, and  $f(t) = -\frac{1}{2}(t-\mu_f)^\top \Sigma_f^{-1}(t-\mu_f)$, using the expression for the entropy in \S\ref{sec:rpm}, we obtain the well-known expression for the Kullback-Leibler divergence between Gaussians:
\begin{equation}\label{eq:kl_gaussians}
L_{\Omega}(f; p) = \frac{1}{2}(\mu-\mu_f)^\top\Sigma_f^{-1}(\mu-\mu_f) + \frac{1}{2}\left(\mathrm{Tr}(\Sigma_f^{-1}\Sigma) - N + \log \frac{|\Sigma_f|}{|\Sigma|} \right).
\end{equation}
In \S\ref{sec:elliptical}, we will generalize this result for a class of elliptical distributions called $\beta$-Gaussian distributions, for which we will derive a closed-form expression for the Fenchel-Young loss.

\section{Tsallis Regularizers and Deformed Exponential Families}\label{sec:tsallis}

We introduce in this section a broader set of regularizers $\Omega$ based on Tsallis entropies \citep{Tsallis1988}, which allow generalizing the examples in \S\ref{sec:fy_examples}.
Tsallis entropies are a generalization of Shannon-Boltzmann-Gibbs entropies which are suitable to model several phenomena present in natural, artificial and social complex systems \citep[{\textit{inter alia}}]{lutz2003anomalous,burlaga2005triangle,pickup2009generalized,adare2011measurement} under the umbrella of nonextensive statistical mechanics \citep{abe2001nonextensive}, a generalization of the Boltzmann-Gibbs theory.
We will see that using these regularizers in
Definition~\ref{def:regularized_prediction}  leads to ``deformed exponential
families,'' which may correspond to sparse density functions in the sense of
Definition~\ref{def:sparse_function}. This makes a bridge between the entmax
transformation, proposed for finite domains by
\citet{blondel2020learning} and \citet{peters2019sparse},
and new transformations which we will propose in \S\ref{sec:sparsemax} and \S\ref{sec:elliptical} for the non-finite case.

\subsection{Tsallis entropies}%
\label{sec:tsallis_definitions}

A central concept in Tsallis statistics is a generalization of the standard logarithm and
exponential functions, called \textbf{$\beta$-logarithm}, $\log_{\beta}:
\mathbb{R}_{\ge 0} \rightarrow \mathbb{R}$ (not to be confused with base-$\beta$
logarithm), and \textbf{$\beta$-exponential}, $\exp_{\beta}: \mathbb{R}
\rightarrow \mathbb{R}$, defined as follows:
\begin{equation}\label{eq:beta_log_exp}
    \log_{\beta}(u) := \left\{
    \begin{array}{ll}
        \frac{u^{1-\beta} - 1}{1-\beta}, & \beta \ne 1, \\
        \log u, & \beta = 1;
    \end{array}
    \right.
    \qquad
    \exp_{\beta}(u) := \left\{
    \begin{array}{ll}
     	[1 + (1-\beta)u]_+^{1/(1-\beta)}, & \beta \ne 1, \\
        \exp u, & \beta = 1.
    \end{array}
    \right.
\end{equation}
Note that ${\lim_{\beta \rightarrow 1}}\log_\beta(u) = \log u$, ${\lim_{\beta \rightarrow 1}}\exp_\beta(u) = \exp u$, and $\log_\beta(\exp_\beta(u)) = u$ for any $\beta$ and $u \in \mathbb{R}$.
Another important concept, which we will use in the sequel, is that of \textbf{$\beta$-escort distribution}  \citep{Tsallis1988}: this is the distribution $\tilde{p}^{\beta}$ obtained by applying the following ``sharpness'' operator:
\begin{equation}\label{eq:escort}
p(t) \mapsto \tilde{p}^{\beta}(t) := \frac{p(t)^{\beta}}{\|p\|_\beta^\beta}, \quad \text{where $\|p\|_\beta^\beta = \int_{S} p(t')^{\beta} d\nu(t')$}.
\end{equation}
Note that we have $\tilde{p}^{1}(t) = p(t)$. $\beta>1$ increases sharpness, whereas $\beta<1$ decreases it, producing more uniform distributions. $\beta=0$ results in a uniform distribution.

We thus have the following definition  \citep{havrda1967quantification,Tsallis1988}:%
\footnote{This entropy is normally defined up to a constant, often presented without the $\tfrac{1}{\alpha}$ factor. We use the same definition as \citet[\S 4.3]{blondel2020learning} for convenience.} %
\begin{definition}[Tsallis negentropies.]\label{def:tsallis}
For $\alpha \ge 0$,
the {$\alpha$-Tsallis negentropy} is:
\begin{equation}\label{eq:tsallis}
\Omega_{\alpha}(p) := \tfrac{1}{\alpha} \mathbb{E}_{p}[\log_{2-\alpha}(p(t))] =
\begin{cases}
\frac{1}{\alpha(\alpha-1)}\left( \int_S  p(t)^\alpha - 1 \right), &
\alpha \ne 1,\\
\int_S p(t) \log p(t), &
\alpha = 1.
\end{cases}
\end{equation}
\end{definition}
The family of Tsallis entropies is continuous in $\alpha$, \textit{i.e.}, $\lim_{\alpha \rightarrow
1}\Omega_{\alpha}(p) = \Omega_{1}(p)$, for any $p \in \mathcal{M}_+^1(S)$, with $\Omega_{1}(p)$ recovering Shannon's negentropy (see Appendix~\ref{sec:gini_ent_sparse_family} for a proof).
Another notable case is $\Omega_2(p) = \sfrac{1}{2}\int_S p(t)^2  - \sfrac{1}{2}$,
the negative of which has several names, {\it e.g.}, Gini-Simpson index \citep{Jost2006} or Rao's quadratic entropy \citep{Rao1982}.
We will come back to the $\alpha=2$ case in \S\ref{sec:sparsemax}.

\subsection{Tsallis regularization: deformed exponential families and $\alpha$-sparse families}\label{sec:deformed_sparse}

For $\alpha>0$, the Tsallis negentropy $\Omega_\alpha$ is strictly convex, hence it can be plugged as the regularizer in  Definition~\ref{def:regularized_prediction}.
The next proposition is a reformulation of a result due to \citet{naudts2009q}
in the statistical physics literature; we include a proof in
Appendix~\ref{sec:gini_ent_sparse_family}. This result provides an expression
for the $\Omega_{\alpha}$-regularized prediction map:

\begin{proposition}[Distribution and normalizing function expressions]
\label{prop:solution_rpm_tsallis}
For $\alpha > 0$ and $f\in\mathcal{F}$, the $\Omega_{\alpha}$-regularized
prediction map has the following form:
\begin{equation}\label{eq:entmax}
\hat{p}_{\Omega_\alpha}[f](t) = \exp_{2-\alpha}(f(t) - A_\alpha(f)), %
\end{equation}
where $\exp_\beta$ is defined in \eqref{eq:beta_log_exp}
and
$A_\alpha: \mathcal{F} \rightarrow \mathbb{R}$ is a normalizing function (we write $p(t) \equiv \hat{p}_{\Omega_\alpha}[f](t)$ for simplicity):
\begin{equation}\label{eq:A_alpha}
A_\alpha(f) = \frac{\frac{1}{1-\alpha} + \int_S p(t)^{2-\alpha} f(t)}{\int_S p(t)^{2-\alpha}} - \frac{1}{1-\alpha}.
\end{equation}
\end{proposition}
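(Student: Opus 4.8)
The plan is to treat \eqref{eq:reg_prediction} with $\Omega = \Omega_\alpha$ as a constrained concave maximization and read off the optimality conditions by calculus of variations. For $\alpha > 0$ the objective $J(p) := \int_S p(t) f(t) - \Omega_\alpha(p)$ is strictly concave in $p$ (since $\Omega_\alpha$ is strictly convex), so the maximizer---which exists and is unique because $f \in \mathcal{F}$---is pinned down by its stationarity conditions. First I would form the Lagrangian that enforces the normalization $\int_S p = 1$ through a scalar multiplier $\lambda$ and the pointwise constraint $p(t) \ge 0$ through a multiplier function $\mu(t) \ge 0$ with complementary slackness $\mu(t) p(t) = 0$.

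Next I would take the pointwise derivative of the Lagrangian with respect to $p(t)$ and set it to zero. Writing $\Omega_\alpha(p) = \frac{1}{\alpha(\alpha-1)}\left(\int_S p^\alpha - 1\right)$ for $\alpha \ne 1$, the derivative of the regularizer at $t$ is $\frac{1}{\alpha-1}p(t)^{\alpha-1}$, giving $f(t) - \frac{1}{\alpha-1}p(t)^{\alpha-1} - \lambda + \mu(t) = 0$. On the support, where $\mu(t)=0$, this rearranges to $p(t)^{\alpha-1} = 1 + (\alpha-1)(f(t) - A)$ after collecting constants into $A := \frac{1}{\alpha-1} + \lambda$; off the support, complementary slackness forces $p(t) = 0$ precisely where the right-hand side would go negative. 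Since $1 - \beta = \alpha - 1$ for $\beta = 2-\alpha$, both cases collapse into $p(t) = [1 + (\alpha-1)(f(t) - A)]_+^{1/(\alpha-1)} = \exp_{2-\alpha}(f(t) - A_\alpha(f))$, which is exactly \eqref{eq:entmax}.

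It then remains to fix $A_\alpha(f)$ via normalization. The clean step is to multiply the support identity $p(t)^{\alpha-1} = 1 + (\alpha-1)(f(t) - A_\alpha(f))$ by $p(t)^{2-\alpha}$: because $(2-\alpha) + (\alpha-1) = 1$, the left side becomes $p(t)$, and the resulting equality holds over all of $S$ (both sides vanish off the support). Integrating and imposing $\int_S p = 1$ yields $\int_S p^{2-\alpha} + (\alpha-1)\int_S p^{2-\alpha} f - (\alpha-1) A_\alpha(f) \int_S p^{2-\alpha} = 1$, which I would solve for $A_\alpha(f)$ and rewrite using $\frac{1}{\alpha-1} = -\frac{1}{1-\alpha}$ to recover \eqref{eq:A_alpha}.

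The main obstacle is making this rigorous in the infinite-dimensional setting: the formal Lagrangian manipulation must be upgraded to a genuine optimality statement on $\mathcal{M}_+^1(S)$. I would do this through the Hilbert-space Fenchel duality already invoked in the paper (the footnote in \S\ref{sec:fy_losses}), so that the threshold $[\cdot]_+$ arises from the normal cone of the non-negativity constraint rather than from informal slackness, and one must separately check that the candidate $p$ is admissible, namely non-negative and integrable---which for the case of focus $\alpha > 1$ is immediate since $1/(\alpha-1) > 0$ and $f \in \mathcal{F}$ guarantees the maximizer exists. Finally, the boundary case $\alpha = 1$ is handled by continuity, reducing \eqref{eq:entmax} to the Boltzmann-Gibbs form \eqref{eq:boltzmann} derived earlier.
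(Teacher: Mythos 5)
Your proof is correct, but the first (and main) part takes a genuinely different route from the paper. You derive the form of the maximizer constructively, via the KKT conditions of the constrained problem: a scalar multiplier for $\int_S p = 1$, a nonnegative multiplier function for $p \ge 0$, pointwise stationarity, and complementary slackness producing the $[\cdot]_+$. The paper instead proceeds by guess-and-verify: it posits $q = \exp_{2-\alpha}(f - A_\alpha(f))$ and shows, using the non-negativity of the Bregman divergence $B_{\Omega_\alpha}(p,q)$ together with the elementary bound $\mathbb{E}_p\bigl[[f - A_\alpha(f) + (\alpha-1)^{-1}]_+\bigr] \ge \mathbb{E}_p[f - A_\alpha(f) + (\alpha-1)^{-1}]$, that $\mathbb{E}_p[f] - \Omega_\alpha(p) \le (\alpha-1)\Omega_\alpha(q) + A_\alpha(f)$ for every $p \in \mathcal{M}_+^1(S)$, with equality iff $p = q$; that inequality chain is exactly the role your complementary-slackness step plays, but it sidesteps the infinite-dimensional Lagrangian machinery you correctly flag as the delicate point (and, as a bonus, it hands the paper the conjugate expression $\Omega_\alpha^*(f) = (\alpha-1)\Omega_\alpha(q) + A_\alpha(f)$ used later in Proposition~\ref{prop:entropy_normalizing}). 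Your approach buys a derivation rather than a verification --- you see where the threshold comes from --- at the cost of having to justify the variational calculus on $\mathcal{M}_+^1(S)$; one clean way to close that loop is to note that once the candidate is in hand, strict concavity means it suffices to verify a first-order global optimality inequality, which is precisely the paper's Bregman argument. Your derivation of \eqref{eq:A_alpha}, multiplying the on-support identity by $p^{2-\alpha}$ and integrating, is essentially identical to the paper's (which inverts \eqref{eq:entmax} to $A_\alpha(f) = f(t) - \log_{2-\alpha}(p(t))$ and integrates against $p^{2-\alpha}\,d\nu$). One small caveat: at $\alpha=1$ the formula \eqref{eq:A_alpha} is singular, so rather than appealing to continuity in $\alpha$ it is cleaner to treat that case directly via the KL-divergence argument of Appendix~\ref{sec:diff_ent_exp_family}, as the paper does.
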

It is interesting to contrast \eqref{eq:entmax} with Boltzmann-Gibbs
distributions \eqref{eq:boltzmann}, which are recovered as a limit case when $\alpha\rightarrow 1$. One key
aspect to note is that the $(2-\alpha)$-exponential, for $\alpha > 1$, can
return zero values. Therefore, {\bf the distribution
$\hat{p}_{\Omega_\alpha}[f]$ in \eqref{eq:entmax} might not have full support,
\textit{i.e.}, we may have $\mathrm{supp}(\hat{p}_{\Omega_\alpha}[f]) \subsetneq
S$.} In particular, it may be a {\bf sparse density function} in the sense of
Definition~\ref{def:sparse_function} (see Figure~\ref{fig:rpm_example}). This never happens with Boltzmann-Gibbs distributions, which always have full support.

\paragraph{Relation to sparsemax and entmax.}
\citet{blondel2020learning}  showed that, for finite $S$,  $\Omega_2$-regularized
prediction map ({\it i.e.}, picking $\alpha=2$) is the {\bf sparsemax} transformation, $\hat{p}_\Omega[f] = \mathrm{sparsemax}(f) =
\arg\min_{p \in \triangle^{|S|}} \|p - f\|^2_2$
(Euclidean projection of  $f\in\mathbb{R}^{|S|}$ onto the $|S|$-dimensional probability simplex $\triangle^{|S|}$).
Other values of $\alpha$ were studied by \citet{peters2019sparse}, under the name {\bf $\alpha$-entmax} transformation. For $\alpha>1$, these transformations have a propensity for returning sparse distributions, where several entries have zero probability.
Proposition~\ref{prop:solution_rpm_tsallis} shows that similar properties can be obtained when $S$ is non-finite (countably infinite or continuous).

\paragraph{Deformed exponential families.}
With a linear parametrization $f_\theta(t) = \theta^\top\phi(t)$,
distributions with the form \eqref{eq:entmax} are called \textit{deformed exponential families} \citep{naudts2009q,sears2010generalized},
also referred to as \textit{$t$-exponential families} \citep{ding2010t} and \textit{$q$-exponential families} \citep{matsuzoe2012geometry}.
The geometry of these families induced by Tsallis entropies was studied by \citet[\S 4.3]{amari2016information}.%
\footnote{Unfortunately, the literature is inconsistent in defining these
    coefficients. Our $\alpha$ matches that of \citet{blondel2020learning};
    Tsallis' $q$ in the context of deformed exponential families equals
    $2-\alpha$ (which we call $\beta$ in our paper). This family is also related
    to Amari's $\alpha$-divergences, but their $\alpha$ equals $2q - 1$.
    Inconsistent definitions have also been proposed for $q$-exponential
    families regarding how they are normalized; for example, the Tsallis maxent
    principle leads to a different definition. See
    Appendix~\ref{sec:tsallis_maxent} for details.
}
They include for example $t$-Student and other heavy tail distributions (heavy tails arise when $\alpha < 1$).
Unlike those prior works, in this paper we are interested in the sparse, light tail scenario ($\alpha>1$), not in heavy tails. For $\alpha>1$, we call these {\bf $\alpha$-sparse families.}
When $\alpha \rightarrow 1$, $\alpha$-sparse families become exponential families and they cease to be ``sparse'', in the sense that all distributions in the same family have the same support.
Another interesting particular case is that of $\alpha=2$, which we will see in detail in \S\ref{sec:sparsemax}. From Proposition~\ref{prop:solution_rpm_tsallis} and \eqref{eq:beta_log_exp}, we can see that a 2-sparse family takes the form (writing $p_\theta \equiv \hat{p}_{\Omega_\alpha}[f_\theta]$):
\begin{equation}\label{eq:sparse_family}
    p_\theta(t) = [\theta^\top \phi(t) - A_2(\theta) + 1]_+, \quad \text{with $A_2(\theta) := A_2(f_\theta) = 1+ \frac{-1 + {\textstyle \int_{\mathrm{supp}(p_{\theta})} \theta^\top \phi(t)}}{|\mathrm{supp}(p_{\theta})|}$}.
\end{equation}
This generalizes the result of \citet[Proposition~1]{Martins2016ICML}, who derived a similar expression for the finite case.

\paragraph{Gradient of $A_\alpha$.}
A relevant problem is that of characterizing the normalizing function
$A_\alpha(\theta) := A_\alpha(f_\theta)$. When $\alpha=1$, $A_1(\theta) =
{\lim_{\alpha\rightarrow 1}} A_\alpha(\theta) = \log \int_S \exp(\theta^\top
\phi(t))$ is the log-partition function (see \eqref{eq:boltzmann}), and  its
first and higher order derivatives are equal to the moments of the sufficient
statistics. The following proposition, stated in Theorem~5 of
\citet{amari2011geometry}, and proved in our
Appendix~\ref{sec:proof_gradient_A}, characterizes $A_\alpha(\theta)$ for
$\alpha \ne 1$ in terms of an expectation under the $\beta$-escort distribution,
defined in \eqref{eq:escort}, for $\beta=2-\alpha$.

\begin{proposition}[Gradient of normalizing function $A_\alpha$]
\label{prop:gradient_A}
Let $\beta=2-\alpha$ with $\alpha \in [0,2]$.
Let $\tilde{p}_\theta^{\beta}$ be the $\beta$-escort distribution %
\eqref{eq:escort}.
The normalizing function $A_\alpha: \Theta \rightarrow \mathbb{R}$ is convex and
its gradient coincides with the expectation under the $\beta$-escort
distribution
\begin{equation}\label{eq:derivative_of_partition}
\nabla_\theta A_\alpha(\theta) = \mathbb{E}_{\tilde{p}_\theta^{\beta}}[\phi(t)]
= \frac{\int_{S} p_\theta(t)^{\beta} \phi(t)}{\int_{S} p_\theta(t)^{\beta}}.
\end{equation}
\end{proposition}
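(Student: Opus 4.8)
The plan is to derive both assertions by differentiating the normalization constraint $\int_S p_\theta(t)\,d\nu(t) = 1$, which holds for every $\theta \in \Theta$ and is exactly the identity that implicitly defines $A_\alpha(\theta)$. The only analytic fact I need beyond this is the derivative rule for the deformed exponential: with $\beta = 2-\alpha$, differentiating \eqref{eq:beta_log_exp} gives $\tfrac{d}{du}\exp_\beta(u) = [\exp_\beta(u)]^{\beta}$ wherever $\exp_\beta(u) > 0$, while $\exp_\beta$ is continuous and equals zero outside that region. Since $p_\theta(t) = \exp_\beta(\theta^\top\phi(t) - A_\alpha(\theta))$, the chain rule then gives, $\nu$-almost everywhere,
\[
\nabla_\theta p_\theta(t) = p_\theta(t)^{\beta}\bigl(\phi(t) - \nabla_\theta A_\alpha(\theta)\bigr),
\]
with both sides vanishing off $\mathrm{supp}(p_\theta)$.

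For the gradient identity \eqref{eq:derivative_of_partition}, I would differentiate $\int_S p_\theta = 1$ in $\theta$ and interchange $\nabla_\theta$ with the integral. Substituting the display above yields $0 = \int_S p_\theta^{\beta}\bigl(\phi - \nabla_\theta A_\alpha\bigr) = \int_S p_\theta^{\beta}\phi - (\nabla_\theta A_\alpha)\int_S p_\theta^{\beta}$, and solving for $\nabla_\theta A_\alpha$ gives $\bigl(\int_S p_\theta^{\beta}\phi\bigr)\big/\bigl(\int_S p_\theta^{\beta}\bigr)$, which is precisely $\mathbb{E}_{\tilde p_\theta^{\beta}}[\phi(t)]$ by the definition \eqref{eq:escort} of the $\beta$-escort distribution.

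For convexity I would differentiate a second time. Writing $g(\theta) := \nabla_\theta A_\alpha(\theta)$ and $Z_\beta(\theta) := \int_S p_\theta^{\beta}$, the chain rule gives $\nabla_\theta p_\theta^{\beta} = \beta\,p_\theta^{2\beta-1}(\phi - g)$, and applying the quotient rule to $g = Z_\beta^{-1}\int_S p_\theta^{\beta}\phi$ makes the cross terms combine into a single centered second-moment expression,
\[
\nabla^2_\theta A_\alpha(\theta) = \frac{\beta}{Z_\beta(\theta)}\int_S p_\theta(t)^{2\beta-1}\bigl(\phi(t) - g(\theta)\bigr)\bigl(\phi(t) - g(\theta)\bigr)^{\top} d\nu(t).
\]
For any vector $v$ this gives $v^\top \nabla^2_\theta A_\alpha(\theta)\, v = \tfrac{\beta}{Z_\beta}\int_S p_\theta^{2\beta-1}\bigl(v^\top(\phi - g)\bigr)^2 \ge 0$, since $\beta = 2-\alpha \ge 0$ on the stated range $\alpha \in [0,2]$ and $p_\theta^{2\beta-1} \ge 0$; hence $A_\alpha$ is convex. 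At the endpoint $\alpha = 2$ (so $\beta = 0$) the Hessian vanishes and $A_\alpha$ is piecewise affine, consistent with \eqref{eq:sparse_family}.

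I expect the only real difficulty to be the measure-theoretic justification of the two differentiations under the integral sign. Because $\mathrm{supp}(p_\theta)$ moves with $\theta$, the naive Leibniz rule carries a moving-boundary term; this term vanishes because $p_\theta$ is continuous and equal to zero on the (null) support boundary, and a local integrable majorant then legitimizes the interchange by dominated convergence. The remaining care is ensuring the escort moments $\int_S p_\theta^{\beta}\|\phi\|$ and $\int_S p_\theta^{2\beta-1}\|\phi - g\|^2$ are finite; the latter needs mild integrability near the support boundary when $2\beta - 1 < 0$, i.e.\ $\alpha > \tfrac{3}{2}$. Granting these regularity conditions, both claims follow from the algebra above.
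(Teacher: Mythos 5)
Your proposal is correct and follows essentially the same route as the paper's proof in Appendix~\ref{sec:proof_gradient_A}: both compute $\nabla_\theta p_\theta(t) = p_\theta(t)^{2-\alpha}(\phi(t)-\nabla_\theta A_\alpha(\theta))$ and differentiate the normalization constraint $\int_S p_\theta = 1$ to isolate the gradient, and your Hessian $\frac{\beta}{Z_\beta}\int_S p_\theta^{2\beta-1}(\phi-g)(\phi-g)^\top$ is identical to the paper's $\frac{(2-\alpha)\int_S p_\theta^{3-2\alpha}(\phi-\nabla A_\alpha)(\phi-\nabla A_\alpha)^\top}{\int_S p_\theta^{2-\alpha}}$ since $2\beta-1 = 3-2\alpha$ (you reach it by the quotient rule on the gradient formula rather than by differentiating the constraint twice, but the two computations are trivially equivalent). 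Your closing remarks on interchanging differentiation with the integral over a moving support and on finiteness of the escort moments address regularity issues the paper passes over silently.
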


We use this result later in this section to derive the Hessian of Fenchel-Young losses and in \S\ref{sec:attention} to obtain the Jacobian of entmax attention mechanisms.

To close the loop, we present the following result, proved in
Appendix~\ref{sec:proof_entropy_normalizing}, which relates Tsallis negentropies
$\Omega_\alpha$, their convex conjugates  $\Omega^*_\alpha$, normalizing
functions $A_\alpha(\theta)$, and provides an expression for $\Omega_\alpha$-Fenchel-Young losses for linearly parametrized families:
\begin{proposition}[Key quantities in $\alpha$-sparse families]\label{prop:entropy_normalizing}

Let $p_\theta \equiv \hat{p}_{\Omega_\alpha}[f_\theta]$, with $f_\theta(t) = \theta^\top \phi(t)$,  and define as  $\mu(\theta) := \mathbb{E}_{p_\theta}[\phi(t)]$  the ``mean parameters'' associated with $\theta \in \Theta$. %
Then the Tsallis negentropy is given by
\begin{equation}\label{eq:entropy_normalizing}
\Omega_\alpha(p_\theta) = \frac{1}{\alpha}\left(\theta^\top \mu(\theta) - A_\alpha(\theta)\right),
\end{equation}
its convex conjugate is given by
\begin{equation}\label{eq:entropy_normalizing_conjugate}
\Omega^*_\alpha(f_\theta) = (\alpha-1)\Omega_\alpha(\hat{p}_{\Omega_\alpha}[f_\theta]) + A_\alpha(\theta) = \frac{1}{\alpha}\left((\alpha-1)\theta^\top \mu(\theta) + A_\alpha(\theta)\right),
\end{equation}
and the $\Omega_\alpha$-Fenchel-Young loss between $f_\theta$ and \emph{any} $p \in \mathcal{M}_+^1(S)$ is given by
\begin{equation}\label{eq:fy_linear_families}
L_{\Omega_\alpha}(f_\theta, p) = \Omega_\alpha(p) - \Omega_\alpha(\hat{p}_{\Omega_\alpha}[f_\theta]) - \theta^\top (v - \mu(\theta)),
\end{equation}
where $v := \mathbb{E}_p[\phi(t)]$ is the empirical expected statistics (see Proposition~\ref{prop:fy_properties3}).
\end{proposition}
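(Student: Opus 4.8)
The plan is to prove the first identity \eqref{eq:entropy_normalizing} as the workhorse and then obtain \eqref{eq:entropy_normalizing_conjugate} and \eqref{eq:fy_linear_families} by purely algebraic manipulation from the definitions of the convex conjugate and of the Fenchel-Young loss. The single computational fact driving everything is that, on the support of $p_\theta$, the $(2-\alpha)$-logarithm undoes the $(2-\alpha)$-exponential, so that $\log_{2-\alpha}(p_\theta(t)) = f_\theta(t) - A_\alpha(\theta)$ there; once this is in hand, the three claimed formulas differ only by how one regroups the two scalars $\theta^\top\mu(\theta)$ and $A_\alpha(\theta)$.

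For \eqref{eq:entropy_normalizing} I would start from Proposition~\ref{prop:solution_rpm_tsallis}, which gives $p_\theta(t) = \exp_{2-\alpha}(f_\theta(t) - A_\alpha(\theta))$. Writing $\beta = 2-\alpha$ (so that $1-\beta = \alpha-1$) and using $\log_\beta(\exp_\beta(u)) = u$, valid wherever the argument exceeds the truncation threshold $-1/(\alpha-1)$, that is, exactly on the set where $p_\theta(t) > 0$, I obtain $\log_{2-\alpha}(p_\theta(t)) = f_\theta(t) - A_\alpha(\theta)$ on $\mathrm{supp}(p_\theta)$. Substituting into the definition $\Omega_\alpha(p_\theta) = \tfrac{1}{\alpha}\mathbb{E}_{p_\theta}[\log_{2-\alpha}(p_\theta(t))]$ from Definition~\ref{def:tsallis}, and using that $\mathbb{E}_{p_\theta}$ integrates the integrand against $p_\theta$ (so only the support contributes), linearity of expectation, $\int_S p_\theta = 1$, and $\mathbb{E}_{p_\theta}[f_\theta(t)] = \theta^\top \mathbb{E}_{p_\theta}[\phi(t)] = \theta^\top \mu(\theta)$, yields $\Omega_\alpha(p_\theta) = \tfrac{1}{\alpha}\bigl(\theta^\top \mu(\theta) - A_\alpha(\theta)\bigr)$.

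From here the remaining two identities follow immediately. For \eqref{eq:entropy_normalizing_conjugate} I would invoke the attained-supremum form of the conjugate stated just before Definition~\ref{def:fyloss}, namely $\Omega^*_\alpha(f_\theta) = \mathbb{E}_{p_\theta}[f_\theta(t)] - \Omega_\alpha(p_\theta) = \theta^\top \mu(\theta) - \Omega_\alpha(p_\theta)$; substituting \eqref{eq:entropy_normalizing} and collecting the $\theta^\top\mu(\theta)$ and $A_\alpha(\theta)$ terms gives the form $\tfrac{1}{\alpha}\bigl((\alpha-1)\theta^\top\mu(\theta) + A_\alpha(\theta)\bigr)$, and rewriting $\theta^\top\mu(\theta) - A_\alpha(\theta) = \alpha\,\Omega_\alpha(p_\theta)$ recovers the equivalent form $(\alpha-1)\Omega_\alpha(p_\theta) + A_\alpha(\theta)$. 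For \eqref{eq:fy_linear_families} I would plug $\Omega^*_\alpha(f_\theta) = \theta^\top\mu(\theta) - \Omega_\alpha(p_\theta)$ and $\mathbb{E}_p[f_\theta(t)] = \theta^\top v$ into the defining identity $L_{\Omega_\alpha}(f_\theta;p) = \Omega^*_\alpha(f_\theta) + \Omega_\alpha(p) - \mathbb{E}_p[f_\theta(t)]$ of Definition~\ref{def:fyloss}; the $\theta^\top\mu(\theta)$ and $-\theta^\top v$ terms combine into $-\theta^\top(v-\mu(\theta))$, producing the claim.

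I expect the only delicate point to be the treatment of the support. The cancellation $\log_\beta(\exp_\beta(u)) = u$ holds only above the threshold, whereas off the support $\log_{2-\alpha}(p_\theta(t)) = \log_{2-\alpha}(0) = -1/(\alpha-1)$ is finite but differs from $f_\theta(t) - A_\alpha(\theta)$; this causes no harm because the integrand $p_\theta(t)\,\log_{2-\alpha}(p_\theta(t))$ is a finite quantity times $p_\theta(t) = 0$ on the complement of the support, so that region contributes nothing to $\mathbb{E}_{p_\theta}$. Making this vanishing explicit is the one step warranting care; an equivalent and perhaps cleaner route is to run the same computation through the algebraic form $\Omega_\alpha(p) = \tfrac{1}{\alpha(\alpha-1)}\bigl(\int_S p^\alpha - 1\bigr)$, using $p_\theta^\alpha = p_\theta\cdot p_\theta^{\alpha-1}$ with $p_\theta^{\alpha-1} = 1 + (\alpha-1)(f_\theta - A_\alpha(\theta))$ on the support, which avoids evaluating $\log_{2-\alpha}$ at $0$ altogether. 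Finally, the $\alpha=1$ case is recovered by the continuity of $\Omega_\alpha$ in $\alpha$ noted after Definition~\ref{def:tsallis}, where the statement reduces to the familiar exponential-family and Kullback-Leibler computation.
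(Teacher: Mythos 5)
Your proof is correct. The derivation of \eqref{eq:entropy_normalizing} is essentially identical to the paper's (Appendix~C.5): invert \eqref{eq:entmax} via $\log_{2-\alpha}(p_\theta(t)) = f_\theta(t) - A_\alpha(\theta)$ on the support and take the expectation; your explicit handling of the off-support region, where $\log_{2-\alpha}(0) = -1/(\alpha-1)$ is finite but the weight $p_\theta(t)$ vanishes, is a point the paper passes over silently, and making it explicit is a genuine (if small) improvement in rigor, since the blanket claim $\log_\beta(\exp_\beta(u))=u$ for all $u$ fails below the truncation threshold.

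Where you diverge is in how \eqref{eq:entropy_normalizing_conjugate} is obtained. The paper does not rederive it: it cites the identity $\Omega_\alpha^*(f) = (\alpha-1)\Omega_\alpha(q) + A_\alpha(f)$ established earlier in the proof of Proposition~\ref{prop:solution_rpm_tsallis}, which comes out of the Bregman-divergence argument used to verify the form of the maximizer. You instead use the attained-supremum form $\Omega_\alpha^*(f_\theta) = \mathbb{E}_{p_\theta}[f_\theta(t)] - \Omega_\alpha(p_\theta) = \theta^\top\mu(\theta) - \Omega_\alpha(p_\theta)$ (stated in the paper just before Definition~\ref{def:fyloss}) and then substitute \eqref{eq:entropy_normalizing} to eliminate $\theta^\top\mu(\theta)$. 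This is more self-contained and arguably cleaner, at the cost of implicitly relying on the maximum in the conjugate being attained at $\hat{p}_{\Omega_\alpha}[f_\theta]$ — which is exactly what the Bregman argument in the paper establishes, so the two routes are not independent so much as they enter the same fact at different points. The final step for \eqref{eq:fy_linear_families} is the same algebra in both versions. No gaps.
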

The expressions in Proposition~\ref{prop:entropy_normalizing} deserve some analysis.
First, note that, when $\alpha=1$, we recover the well-known duality relation between the Shannon-Boltzmann-Gibbs entropy and the log-partition function \citep{wainwright_2008}, in which case we get
from \eqref{eq:entropy_normalizing_conjugate} that $A_1(\theta) = \Omega_1^*(f_\theta)$.
Second, these expressions are \textbf{practically useful}: \eqref{eq:entropy_normalizing} offers a way of computing Tsallis negentropies for any $\alpha$, provided we have a procedure to compute the mean parameters $\mu(\theta)$ and to evaluate the normalizing function $A_\alpha(\theta)$ from $\theta$.
Finally, the expression \eqref{eq:fy_linear_families} for the Fenchel-Young loss puts in evidence its relation with Bregman divergences. This expression can be evaluated for any density $p$ (not necessarily in the family), only depending on that density through its Tsallis negentropy $\Omega_\alpha(p)$ and the expected statistics $v = \mathbb{E}_p[\phi(t)]$ (cf. Proposition~\ref{prop:fy_properties3}). In particular, it facilitates a procedure to assess how well a density from a deformed exponential family fits empirical observations.
We will use these results to obtain closed-form expressions for the Tsallis entropies and Fenchel-Young losses of several densities in \S\ref{sec:sparsemax} and \S\ref{sec:elliptical}.

\paragraph{Gradient and Hessian of Fenchel-Young losses.}
Finally, we show how to compute the first and second-order derivatives of Fenchel-Young losses.
The proof (in Appendix~\ref{sec:proof_gradient_hessian_fy}) invokes Propositions~\ref{prop:fy_properties3} and \ref{prop:gradient_A}.
To state this result, we need to define, for $\beta \ge 0$, the
\textbf{generalized $\beta$-covariance} associated to a density $p \in
\mathcal{M}_+^1$, and statistics $\phi:S\rightarrow \mathbb{R}^M$ and $\psi:S\rightarrow \mathbb{R}^N$:
\begin{equation}\label{eq:beta_covariance}
\mathrm{cov}_{p, \beta}[\phi(t), \psi(t)] \,\,=\,\, \|p\|_\beta^\beta \times \left( \mathbb{E}_{\tilde{p}_\beta}\big[\phi(t) \psi(t)^\top\big] - \mathbb{E}_{\tilde{p}_\beta}[\phi(t)]\,  \mathbb{E}_{\tilde{p}_\beta}[\psi(t)]^\top
\right),
\end{equation}
where $\tilde{p}_\beta$ is the $\beta$-escort distribution in \eqref{eq:escort}.
This can indeed be seen as a generalized covariance:
For $\beta=1$, we have the usual covariance; for $\beta=0$, we get a covariance taken w.r.t.\ a uniform density on the support of $p$,  scaled by $|\mathrm{supp}(p)|$.
\begin{proposition}[Gradient and Hessian of Fenchel-Young losses]
\label{prop:gradient_hessian_fy}
Let $p_\theta \equiv \hat{p}_{\Omega_\alpha}[f_\theta]$, with $f_\theta(t) = \theta^\top \phi(t)$,  $\mu(\theta) = \mathbb{E}_{p_\theta}[\phi(t)]$,
and $v = \mathbb{E}_{p}[\phi(t)]$.
The gradient and Hessian of $L_{\Omega_\alpha}(f_\theta, p)$ with respect to $\theta$ are given by:
\begin{equation}\label{eq:gradient_hessian_fy}
\nabla_\theta L_{\Omega_\alpha}(f_\theta, p) = \mu(\theta) - v, \qquad
\nabla\nabla_\theta L_{\Omega_\alpha}(f_\theta, p) = \mathrm{cov}_{p, 2-\alpha}[\phi(t), \phi(t)].
\end{equation}
\end{proposition}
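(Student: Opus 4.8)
The plan is to get the gradient essentially for free from an earlier result and then obtain the Hessian by differentiating the mean map once more, the only genuinely new ingredient being the pointwise $\theta$-derivative of the density $p_\theta$.

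For the gradient, since $f_\theta(t) = \theta^\top\phi(t)$ is linear in $\theta$, Proposition~\ref{prop:fy_properties3}(1) applies verbatim and gives $\nabla_\theta L_{\Omega_\alpha}(f_\theta, p) = \mathbb{E}_{p_\theta}[\phi(t)] - \mathbb{E}_{p}[\phi(t)] = \mu(\theta) - v$. I would also record the structural observation that $p$ enters the loss only through the affine term $-\theta^\top v$ with $v$ constant in $\theta$; consequently the Hessian equals the Jacobian of the mean map, $\nabla\nabla_\theta L_{\Omega_\alpha}(f_\theta, p) = \nabla_\theta\mu(\theta) = \nabla_\theta\!\int_S p_\theta(t)\,\phi(t)\,d\nu(t)$, which is independent of the target density and is exactly the generalized covariance evaluated at the model distribution $p_\theta$.

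To compute $\nabla_\theta\mu(\theta)$, I would first differentiate $p_\theta$ pointwise. Setting $\beta := 2-\alpha$ and using the closed form $p_\theta(t) = \exp_\beta(\theta^\top\phi(t) - A_\alpha(\theta))$ from Proposition~\ref{prop:solution_rpm_tsallis}, a one-line chain-rule check gives the clean identity $\tfrac{d}{du}\exp_\beta(u) = \exp_\beta(u)^\beta$, whence $\nabla_\theta p_\theta(t) = p_\theta(t)^{\beta}\bigl(\phi(t) - \nabla_\theta A_\alpha(\theta)\bigr)$ on $\mathrm{supp}(p_\theta)$ (and $0$ off the support). Substituting into $\nabla_\theta\mu(\theta) = \int_S \phi(t)\,(\nabla_\theta p_\theta(t))^\top$ and splitting yields $\int_S p_\theta^{\beta}\phi\phi^\top - \bigl(\int_S p_\theta^{\beta}\phi\bigr)(\nabla_\theta A_\alpha(\theta))^\top$. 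Proposition~\ref{prop:gradient_A} now identifies $\nabla_\theta A_\alpha(\theta) = \mathbb{E}_{\tilde p_\theta^\beta}[\phi]$ and $\int_S p_\theta^\beta\phi = \|p_\theta\|_\beta^\beta\,\mathbb{E}_{\tilde p_\theta^\beta}[\phi]$, so both terms share the factor $\|p_\theta\|_\beta^\beta$ and collapse precisely into the generalized $\beta$-covariance of \eqref{eq:beta_covariance}, giving $\mathrm{cov}_{p_\theta, 2-\alpha}[\phi(t),\phi(t)]$.

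The hard part will be justifying the interchange of $\nabla_\theta$ with $\int_S$, because $\mathrm{supp}(p_\theta)$ itself depends on $\theta$. For $\alpha<2$ (i.e.\ $\beta>0$) this is benign: the integrand $p_\theta(t)^{\beta}\phi(t)(\phi(t)-\nabla_\theta A_\alpha(\theta))^\top$ is continuous and tends to $0$ as $p_\theta(t)\to 0^+$, so the moving boundary contributes nothing and a dominated-convergence argument closes the exchange under a mild integrability assumption on $\phi$ with respect to the escort measure. The boundary case $\alpha=2$ ($\beta=0$), where $p_\theta^\beta$ equals $1$ on the support and drops discontinuously to $0$ outside, needs a separate argument; there the escort is the uniform density on $\mathrm{supp}(p_\theta)$ and the formula reduces to the known sparsemax Jacobian, matching the $\beta\to 0^+$ limit of the general expression.
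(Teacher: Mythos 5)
Your proposal is correct and follows essentially the same route as the paper's proof: the gradient is read off from Proposition~\ref{prop:fy_properties3}, and the Hessian is obtained by differentiating $\mu(\theta)$ under the integral via $\nabla_\theta p_\theta(t) = p_\theta(t)^{2-\alpha}\bigl(\phi(t) - \nabla_\theta A_\alpha(\theta)\bigr)$ and then invoking Proposition~\ref{prop:gradient_A} to collapse the two terms into the generalized $\beta$-covariance. The only difference is that you additionally flag and sketch the justification for exchanging $\nabla_\theta$ with $\int_S$ in the presence of a $\theta$-dependent support (and correctly note that the covariance is taken under the model density $p_\theta$, not the target $p$), points the paper's proof passes over silently.
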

Note that the Hessian expression \eqref{eq:gradient_hessian_fy} involves a generalized self-covariance,  which is still a covariance matrix scaled by a positive constant, hence it is positive semi-definite. This confirms the convexity of Fenchel-Young losses on linearly parametrized families stated in Proposition~\ref{prop:fy_properties3}.

\begin{figure*}[t]
\centering
\includegraphics[width=.99\textwidth]{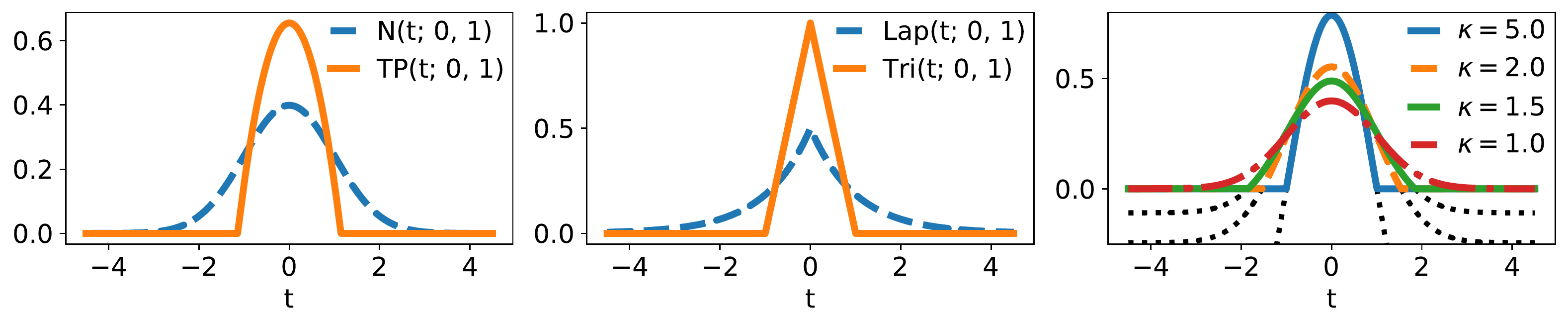}\quad
\caption{\label{fig:sparse_distributions}{\bf Some location-scale distributions
        generated by the $\Omega_\alpha$-regularized prediction map for
$\alpha\in \{1,2\}$,
$\mu=0$ and $\sigma^2=1$.}
Left: Gaussian and truncated parabola. Middle:
Laplace and triangular (bottom).
Right:
Truncated Gaussians with
$\kappa\in\{1, 1.5, 2, 5\}$.}
\end{figure*}

\section{Infinite Sparsemax}\label{sec:sparsemax}

In this section, we focus on deformed exponential families with $\alpha=2$, \textit{i.e.}, 2-sparse families.
For the same choices of $f(t)$ as in \S\ref{sec:rpm}, we will obtain sparse counterparts of the softmax, Poisson, Gaussian, and Laplace distributions, which we list in Table~\ref{tab:distributions}.

For finite $S$, the choice $\alpha=2$ corresponds to the sparsemax transfomation proposed by \citet{Martins2016ICML},
which has appealing  theoretical and computational properties.
In the general case, as seen in \eqref{eq:sparse_family},
plugging $\alpha=2$ in \eqref{eq:entmax} leads to the $\Omega_2$-regularized
prediction map,
\begin{equation}\label{eq:sparsemax}
\hat{p}_{\Omega_2}[f](t) = [f(t) - \tau]_+,
\qquad \text{where $\tau = A_2(f) - 1$,}
\end{equation}
\textit{i.e.}, $\hat{p}_{\Omega_2}[f]$ is obtained from $f$ by subtracting a constant (which may be negative) and truncating, where that constant $\tau$ must be such that $ \int_S [f(t) - \tau]_+ = 1$.

If $S$ is continuous and $\nu$ the Lebesgue measure, we call
$\Omega_2$-regularized prediction map the {\bf continuous sparsemax} transformation, and for countably infinite $S$ we call it the  {\bf discrete infinite sparsemax}.
Examples follow, %
where some correspond to  novel distributions. %

\paragraph{Truncated parabola.}
If $f(t) = -\frac{(t-\mu)^2}{2\sigma^2}$, we obtain the continuous sparsemax counterpart of a Gaussian, which we dub a ``truncated parabola'':
\begin{equation}\label{eq:truncated_parabola}
\hat{p}_{\Omega_2}[f](t)
= \left[ - \tfrac{(t-\mu)^2}{2\sigma^2} - \tau \right]_+
\eqqcolon \mathrm{TP}(t; \mu, \sigma^2),
\end{equation}
where $\tau = -\tfrac{1}{2}\bigl(3/(2\sigma)\bigr)^{2/3}$,
$\mathrm{supp}(\hat{p}_{\Omega_2}[f]) = [\mu - \tfrac{3}{-4\, \tau}, \mu + \tfrac{3}{-4\, \tau} ]$ and %
$\Omega_2(\hat{p}_{\Omega_2}[f]) = -\tfrac{1}{2} - \tfrac{2\tau}{5}$  (see Appendix~\ref{sec:proof_truncated_parabola}).
This function, depicted in Figure~\ref{fig:sparse_distributions} (left), is
widely used in density estimation \citep{silverman1986density}. For $\mu=0$ and $\sigma=\sqrt{2/3}$, it is known as the ``Epanechnikov kernel'' \citep{epanechnikov1969non}.

\paragraph{Truncated paraboloid.}
The previous example can be generalized to $S=\mathbb{R}^N$, with $f(t)=-\frac{1}{2}(t-\mu)^\top\Sigma^{-1}(t-\mu)$, where $\Sigma \succ 0$, leading to a ``multivariate truncated paraboloid,'' the sparsemax counterpart of the multivariate Gaussian,
\begin{equation}\label{eq:truncated_paraboloid}
\hat{p}_{\Omega_2}[f](t) = \bigl[-\tau - \tfrac{1}{2}(t-\mu)\Sigma^{-1}(t-\mu)\bigr]_+,
\quad \text{where $\tau = -\Bigl(\Gamma\bigl(\tfrac{N}{2} + 2\bigr) / \sqrt{\mathrm{det}(2\pi \Sigma)}\Bigr)^{\frac{2}{2+N}}$},
\end{equation}
and where $\Gamma(z) = \int_0^\infty x^{z-1}\exp(-x) dx$ is the Gamma function, which extends the factorial function to the continuous domain, $\Gamma(n) = (n-1)!$ for $n \in \mathbb{N}$.
The expression above, derived in Appendix~\ref{sec:proof_truncated_paraboloid}, reduces to \eqref{eq:truncated_parabola} for $N=1$.
Notice that (unlike in the Gaussian case) a diagonal covariance matrix $\Sigma$ does not lead to a product of independent truncated parabolas. This distribution is an instance of an elliptical distribution and will be discussed further in \S\ref{sec:elliptical}. %

\paragraph{Triangular.}
Setting $f(t) = -|t-\mu|/b$, with $b>0$,  yields the triangular distribution
\begin{equation}\label{eq:triangular}
\hat{p}_{\Omega_2}[f](t) = \left[ -\tau - \tfrac{|t-\mu|}{b}\right]_+ \eqqcolon \mathrm{Tri}(t; \mu,b),
\end{equation}
where $\tau\! =\! -1/\sqrt{b}$,
$\mathrm{supp}(\hat{p}_{\Omega_2}[f]) = [\mu - \sqrt{b}, \mu + \sqrt{b}]$, and $\Omega_2(\hat{p}_{\Omega_2}[f]) = -\frac{1}{2} +  \frac{1}{3\sqrt{b}}$  (see Appendix~\ref{sec:proof_triangular}).
Figure~\ref{fig:sparse_distributions} (middle) depicts this distribution alongside Laplace.

\paragraph{Truncated Gaussian.} For $f(t) = \kappa \, \mathcal{N}(t; \mu,
\sigma^2)$ (a scaled Gaussian), with $\kappa \ge 1$, we obtain a truncated
Gaussian distribution (Figure~\ref{fig:sparse_distributions}, right),
\begin{equation}
\hat{p}_{\Omega_2}[f](t) = \left[-\tau + \kappa \, \mathcal{N}(t; \mu, \sigma^2)\right]_+,
\end{equation}
where $\tau = \kappa\,  \mathcal{N}(a; 0, \sigma^2)$ and $a$ is the solution of the equation $\frac{1}{\kappa} + \frac{2\, a}{\sqrt{2\pi}\sigma}\exp\left(-\frac{a^2}{2\sigma^2}\right) = \mathrm{erf}\left(\frac{a}{\sqrt{2}\sigma}\right)$.

\paragraph{Location-scale families.} %
More generally, let
$f_{\mu, \sigma}(t) := -\frac{1}{\sigma}g'(|t-\mu|/\sigma)$ for a location  $\mu \in \mathbb{R}$ and a scale  $\sigma > 0$, where
$g:\mathbb{R}_+ \rightarrow \mathbb{R}$ is convex and continuously differentiable.
Then, we have
\begin{equation}
\hat{p}_{\Omega_2}[f](t) = \left[-\tau - \tfrac{1}{\sigma}g'(|t-\mu|/\sigma)\right]_+,
\end{equation}
where $\tau = -g'(a)/\sigma$
and $a$ is the solution of the equation $ag'(a) - g(a) + g(0) = \frac{1}{2}$ (a sufficient condition for such solution to exist is $g$ being strongly convex; see Appendix~\ref{sec:proof_location_scale} for a proof).
The support of this distribution is
$\mathrm{supp}(\hat{p}_{\Omega_2}[f_{\mu, \sigma}]) = [(-a+\mu)/\sigma, (a+\mu)/\sigma]$.
This example subsumes the truncated parabola ($g(t) = t^3/6$), the triangular distribution ($g(t) = t^2/2$), and the truncated Gaussian ($g(t) = -\frac{\kappa}{2} \mathrm{erf}\left(\sfrac{t}{\sqrt{2}}\right)$).

\begin{figure}[t]
\includegraphics[width=\textwidth]{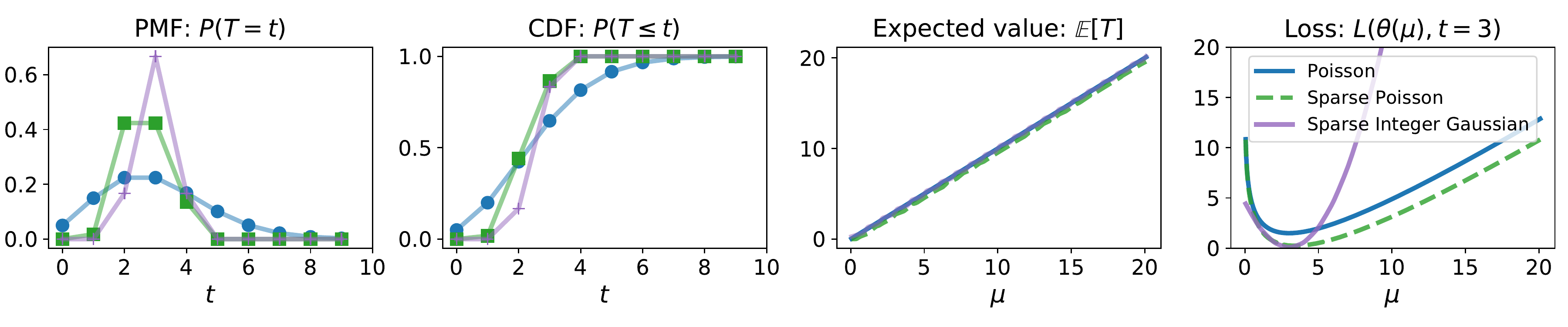}
\caption{\textbf{Sparse integer distributions}. The first two plots show
    the probability mass function (PMF) and the cumulative distribution function (CDF)
    of the distributions at $\mu=3$. Note that the lines between markers are
    shown for visual aid: these distributions do not assign probability
mass to non-integer values. The third plot shows the mean value of the
distributions when varying $\mu$. The last plot shows the Fenchel-Young loss
when the target label is fixed to $t=3$.}
\label{fig:sparse_integer_dist}
\end{figure}

\paragraph{Sparse integer distributions.}

A popular distribution for natural integers is the Poisson distribution, $p(t) = \mu^t
e^{-\mu} / t!$, where $\mu > 0$ is the mean parameter and $t \in \mathbb{N}$.
It is well-known that the Poisson distribution can be written in exponential
family form by setting $S=\mathbb{N}$, $\nu(A) = \sum_{t \in A} \frac{1}{t!}$
for $A \subseteq \mathbb{N}$, and $f(t) = t\log \mu$ in \eqref{eq:boltzmann}.
Alternatively, we can absorb the measure in $f(t)$ by setting $f(t) = t \log \mu
+ \log(1/t!)$ and letting $\nu(A)$ be the counting measure.
Choosing the latter formulation, we obtain a sparse counterpart of the Poisson
distribution
\begin{equation}
\label{eq:sparse_poisson}
\hat{p}_{\Omega_2}[f](t)
= \left[t \log \mu + \log(1/t!) -\tau\right]_+
\eqqcolon \mathrm{SparsePoisson}(t; \mu).
\end{equation}
\insertion{This corresponds to setting $\theta = [\log \mu, 1]$ and
$\phi(t) = [t, \log(1/t!)]$, see Table \ref{tab:linear_param}.
By Proposition \ref{prop:fy_properties3}, the associated Fenchel-Young loss is
convex in $\theta$.}
Since the exponential is monotonically
increasing, the loss is also convex in $\mu$.
A benefit of sparsity is that we can easily create new
distributions as long as the choice of $f(t)$ guarantees that $\tau$ exists.
For instance, inspired by the univariate Gaussian, we can choose
$f(t) = -\frac{1}{2}(t - \mu)^2$. By setting
$S=\mathbb{Z}$, we obtain a sparse integer-restricted counterpart of the
Gaussian distribution
\begin{equation}
\label{eq:sparse_integer_gaussian}
\hat{p}_{\Omega_2}[f](t)
= \left[-\frac{1}{2}(t - \mu)^2 -\tau\right]_+
\eqqcolon \mathrm{SparseIntegerGaussian}(t; \mu).
\end{equation}
These distributions are illustrated in Figure \ref{fig:sparse_integer_dist}.
They share in common that they achieve their mode near $\mu$.
Since they all have finite support, we compute $\tau$ by applying sparsemax
\citep{Martins2016ICML} at a window around the mode.

\paragraph{Exponential and sparse families.}
Whereas Poisson, Gaussian, and Laplace distributions (the latter with a fixed location) form exponential families, as seen in \S\ref{sec:rpm}, likewise
the sparse Poisson, truncated paraboloid, and triangular distributions above form 2-sparse families,
with the same statistics $\phi(t)$ and canonical parameters $\theta$.
For example, the Gaussian and truncated paraboloid cases both correspond to
the statistics $\phi(t) = [t, \mathrm{vec}(tt^\top)]$ and  canonical parameters $\theta = [\Sigma^{-1}\mu, \mathrm{vec}(-\frac{1}{2}\Sigma^{-1})]$, as shown in Table~\ref{tab:distributions}.
We will next see how these two distributions are both a particular case of $\beta$-Gaussians. %

\section{Elliptical Distributions and $\beta$-Gaussians}\label{sec:elliptical}

In this section, we extend the truncated parabola distribution to arbitrary
dimensions and $\alpha$.
This results in a family of tractable multivariate \emph{elliptical distributions}, which for $\alpha>1$ have bounded
support. We show that this family, which we call \textbf{$\beta$-Gaussians},
are a multivariate generalization of $q$-Gaussians
\citep[\S 4.1]{naudts2009q}, and correspond to a
naturally-rescaled variant of the Pearson Type-II distribution.
Throughout this section, we will always assume $\beta = 2-\alpha$.

\subsection{Definition and properties}

Our construction relies on the standard concept of spherical and elliptical
distributions, studied by \citet{cambanis1981theory}, \citet{owen1983class}, \citet{fang}, \textit{inter alia}, which we define and characterize next.
The next definition corresponds to \citet[Definition 2.1 and 2.2]{fang}. We denote by $\mathcal{O}(N)$
the orthogonal group, {\it i.e.}, the set of matrices $U \in \mathbb{R}^{N \times N}$ satisfying $U^\top U=UU^\top = \mathrm{Id}$ (called orthogonal matrices).
\begin{definition}[Spherical and elliptical distributions.]
Let $z$ be a $N$-dimensional random vector.
We say that $z$ has a \emph{spherically-contoured} (or simply spherical) distribution
if, for any $U \in \mathcal{O}(N)$, $Uz$ and $z$ are identically distributed.
We say that $t$ has an \emph{elliptically-contoured} (or elliptical)
distribution if $t = Az + \mu$ for a spherical random variable $z$,
non-singular\,{\footnotemark} matrix $A \in \mathbb{R}^{N \times N}$, and vector $\mu \in \mathbb{R}^N$.
\end{definition}
\footnotetext{
This definition can be relaxed to singular $A$, in which case $\operatorname{supp}(p) \subset
\operatorname{im}(A)$,
using the Lebesgue measure on $\operatorname{im}(A)$ instead of the one on $\mathbb{R}^n$
\citep[Theorem 2.4]{gelbrich1990formula}.
}
In other words, spherical distributions are rotationally symmetric around the
origin, and ellipticals are affine transformations thereof.
Elliptical families parametrized by $A$ and $\mu$ can be regarded as multivariate generalizations of location-scale families.
An important example of a spherical distribution is the standard Gaussian
distribution $\mathcal{N}(z; 0, \mathrm{Id})$;
anisotropic multivariate Gaussians are elliptical.
The following result characterizes spherical and elliptical densities.%
\footnote{Since not all distributions have a density function, a more general
characterization of elliptical distributions exists, based on characteristic
functions \citep[Theorem~2.1]{fang}. Since $\beta$-Gaussians have densities,
this characterization is not necessary in this section, so we omit it.}
\begin{proposition}[Characterization of spherical and elliptical densities]
Let $z$ be a spherical random variable. If $z$ has a density $p(z)$, then the
density must be of the form $p(z) = g(\|z\|^2)$ for some $g : \mathbb{R}_+ \to
\mathbb{R}_+$. By extension, for elliptically-distributed $t=Az + \mu$ with non-singular $A$,
if $z$ has a density as above, then the density of $t$ is
\begin{equation}\label{eq:elliptical_density}
p(t) = |\tilde{\Sigma}|^{-\nicefrac{1}{2}} g\big((t - \mu)^\top \tilde{\Sigma}^{-1} (t - \mu)\big).
\end{equation}
with $\tilde{\Sigma} = AA^\top$.
\end{proposition}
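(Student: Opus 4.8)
The plan is to treat the two claims in turn: first showing that a spherical density must be radial, then obtaining the elliptical formula by a change of variables through the affine map $t = Az + \mu$.

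For the spherical part, the first step is to translate the distributional symmetry into a pointwise symmetry of $p$. Given $U \in \mathcal{O}(N)$, I would compute the density of $w = Uz$ by the standard change-of-variables formula; since $|\det U| = 1$ and $U^{-1} = U^\top$, the density of $w$ at a point equals $p(U^\top w)$. Equating this with $p$ (the two are identically distributed) yields the invariance $p(Uz) = p(z)$ for every $U \in \mathcal{O}(N)$. The second step exploits the transitivity of $\mathcal{O}(N)$ on spheres: for any $z_1, z_2$ with $\|z_1\| = \|z_2\|$, one can extend $z_1/\|z_1\|$ and $z_2/\|z_2\|$ to orthonormal bases and build a $U \in \mathcal{O}(N)$ with $U z_1 = z_2$; invariance then forces $p(z_1) = p(z_2)$. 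Hence $p$ is constant on each sphere $\{\|z\|^2 = s\}$, and defining $g(s) := p(z)$ for any representative $z$ with $\|z\|^2 = s$ gives $p(z) = g(\|z\|^2)$, with $g \ge 0$ since $p \ge 0$.

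For the elliptical part, I would push the density of $z = A^{-1}(t - \mu)$ forward through $t = Az + \mu$. The Jacobian contributes a factor $|\det A^{-1}| = |\det A|^{-1}$, and since $\det(AA^\top) = (\det A)^2$ we have $|\det A|^{-1} = |\tilde{\Sigma}|^{-\nicefrac{1}{2}}$ with $\tilde{\Sigma} = AA^\top$. The radial argument transforms as $\|A^{-1}(t - \mu)\|^2 = (t - \mu)^\top (A^{-1})^\top A^{-1} (t - \mu) = (t - \mu)^\top \tilde{\Sigma}^{-1}(t - \mu)$, using $(A^{-1})^\top A^{-1} = (A^\top)^{-1} A^{-1} = (AA^\top)^{-1}$. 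Substituting $p(z) = g(\|z\|^2)$ then produces exactly \eqref{eq:elliptical_density}.

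The only genuinely delicate point is that the invariance $p(Uz) = p(z)$ holds, for each fixed $U$, merely almost everywhere, because densities are equivalence classes modulo $\nu$-null sets. I would address this in line with the paper's convention (stated in the Notation paragraph) by reading the radial characterization up to a null set, and by pinning down a canonical representative of $p$: for instance, set $g(s) := p(\sqrt{s}\, e_1)$ along a fixed ray and verify $p(z) = g(\|z\|^2)$ for that chosen version. Everything else—the transitivity of $\mathcal{O}(N)$ on spheres and the change of variables—is routine.
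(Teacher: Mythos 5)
Your proof is correct and follows essentially the same route as the paper, which simply cites Fang's results that $\|z\|^2$ is a maximal invariant of $\mathcal{O}(N)$ and that invariant densities are functions of the maximal invariant, then applies the change-of-density formula for the elliptical case. You fill in the details of the cited facts (orthogonal invariance of $p$ plus transitivity of $\mathcal{O}(N)$ on spheres), and your acknowledgement that the pointwise invariance only holds almost everywhere per fixed $U$ is an honest flag of the one subtlety the citation sweeps under the rug.
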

\begin{proof}
From \citet[Example 1.2]{fang} we have that $\|z\|^2$ is a maximal invariant under the group
$\mathcal{O}(N)$.
Therefore, by \citet[Theorem 1.1]{fang}, $p(z)$ is invariant w.r.t.\ $\mathcal{O}(N)$ iff $p(z) =
g(\|z\|^2)$. The change of density formula yields the elliptical case.
\end{proof}
This symmetry property allows us to characterize sphericals and ellipticals using a
useful stochastic representation. The following appears as a corollary in \citet{fang}. %
\begin{proposition}[Reparametrization]
\label{prop:sampling_elliptical}
Let $\mathbb{S}^N \coloneqq \{ u \in \mathbb{R}^N: u^\top u = 1\}$ denote the $(N-1)$-dimensional unit sphere.
A spherical random variable $z$ may be written as $z = ru$,
where $u \sim \text{Uniform}(\mathbb{S}^N)$
and $r \in \mathbb{R}_+$ is a non-negative scalar random variable representing the radius.
For elliptical $t$ with parameters $(\mu, A)$, we have
$t = \mu + r \cdot Au$.
\end{proposition}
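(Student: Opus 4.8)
The plan is to establish the spherical case via a polar decomposition and then obtain the elliptical case by a one-line substitution. For $z \neq 0$ I would write the decomposition $z = ru$ with $r := \|z\|$ and $u := z/\|z\| \in \mathbb{S}^N$; on the (possibly atomic) event $\{z = 0\}$ I set $r = 0$ and draw $u$ uniformly and independently, so that $z = ru$ continues to hold trivially there. The task then reduces to two claims: that $u \sim \mathrm{Uniform}(\mathbb{S}^N)$, and that $u$ is independent of $r$ (the latter being the real content that makes the representation a useful reparametrization).

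Both claims follow from spherical symmetry. For any $U \in \mathcal{O}(N)$, orthogonality gives $\|Uz\| = \|z\| = r$, so $r$ is invariant, while $Uz/\|Uz\| = Uu$. Since $z$ is spherical, $Uz \overset{d}{=} z$, and applying the (measurable) polar map to both sides yields
\begin{equation}
(r, Uu) \overset{d}{=} (r, u)
\end{equation}
for every $U \in \mathcal{O}(N)$. In particular, the conditional law of $u$ given $r$ is invariant under the action of $\mathcal{O}(N)$ on $\mathbb{S}^N$, and marginalizing shows $u$ is itself $\mathcal{O}(N)$-invariant.

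To finish the spherical case I would invoke the uniqueness of the invariant measure: since $\mathcal{O}(N)$ is compact and acts transitively on $\mathbb{S}^N$, the unique $\mathcal{O}(N)$-invariant Borel probability measure on the sphere is the uniform (normalized Haar) measure. Hence the conditional law of $u$ given $r$ equals the uniform distribution for almost every value of $r$; because this conditional law does not depend on $r$, we conclude simultaneously that $u \sim \mathrm{Uniform}(\mathbb{S}^N)$ and that $u$ and $r$ are independent, which is exactly the representation $z = ru$ claimed. (Alternatively, one may read this off directly from the maximal-invariant machinery of \citet{fang} already used in the preceding proposition.)

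For the elliptical case there is nothing further to prove: by definition $t = Az + \mu$ for a spherical $z$, so substituting $z = ru$ gives $t = \mu + r \cdot Au$, as stated. I expect the only genuine subtlety to be the step from the rotation-invariance of the conditional law of $u$ to the assertion that this conditional law is constant in $r$ (hence independence) — this is precisely where the uniqueness of the rotation-invariant probability measure on $\mathbb{S}^N$ is used, and all remaining steps are routine.
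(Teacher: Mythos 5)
Your argument is correct, but it is worth noting that the paper does not actually prove this proposition: it is stated with a pointer to \citet{fang}, where the stochastic representation $z = ru$ (with $u$ uniform on the sphere and independent of $r$) is derived via the characteristic-function characterization of spherical laws (a Schoenberg-type argument: the characteristic function depends only on $\|t\|$, and one identifies it with a scale mixture of the uniform distribution on spheres). Your route is genuinely different and more direct: you work with the distribution itself, push the polar map through the defining invariance $Uz \overset{d}{=} z$ to get $(r, Uu) \overset{d}{=} (r, u)$, and then invoke uniqueness of the $\mathcal{O}(N)$-invariant probability measure on $\mathbb{S}^N$ to conclude that the conditional law of $u$ given $r$ is uniform, hence constant in $r$, which delivers uniformity and independence in one stroke. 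What your approach buys is self-containedness and transparency about where independence comes from (Haar uniqueness under a transitive compact group action); what the characteristic-function route buys is that it handles spherical distributions without densities uniformly and extends to the full characterization (the converse direction, that every such $ru$ is spherical). The one point you might tighten is the passage from ``for each fixed $U$, the conditional laws of $Uu$ and $u$ given $r$ agree a.e.'' to ``for a.e.\ $r$, the conditional law is invariant under \emph{all} $U$ simultaneously'': since $\mathcal{O}(N)$ is uncountable, one should restrict to a countable dense subgroup and use continuity of the action (or test against $\mathbb{E}[h(r)g(u)]$ for countably many $g$ spanning $C(\mathbb{S}^N)$). This is a standard measure-theoretic polish, not a gap, and your handling of the atom at $z=0$ and the reduction of the elliptical case to substitution are both fine.
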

As a consequence, \textbf{we may characterize the distribution of any spherical (and
thus any elliptical) in terms of the distribution of its radius $r$.}

\paragraph{Sampling.}
The stochastic representation in Proposition~\ref{prop:sampling_elliptical} can be seen as a generative story. It offers a simple procedure for sampling from any elliptical distribution in an efficient two-step
process: (1) draw a direction $u$ uniformly on the unit sphere $\mathbb{S}^N$, and (2) draw a radius $r>0$
according to the univariate radius density (which differs from case to case).
This algorithm is used for drawing $\beta$-Gaussian
distributions in Figure~\ref{fig:misfit}.

We may thus reduce generating multivariate elliptical random variates to
generating scalar variates with the distribution of the radius.
In the sequel, we introduce a particular family of elliptical distributions
dubbed ``$\beta$-Gaussians,'' we derive expressions for their essential quantities,
and characterize the distribution over the radius $r$, enabling sampling.
We show that $\beta$-Gaussians are instances of $\alpha$-sparse families (\S\ref{sec:deformed_sparse}) for $\beta = 2-\alpha$, and we derive closed-form expressions for their corresponding Fenchel-Young losses.

\subsection{$\beta$-Gaussians and $\alpha$-sparse families}

We proceed to the main result of this section, which shows
that the $\alpha$-sparse family induced by a quadratic
scoring function $f(t)$ is elliptical, related to the Pearson Type-II distribution,
and the distribution of its radius is related to the Beta distribution.
We start by defining $\beta$-Gaussian distributions.
This family generalizes the univariate $q$-Gaussians of \citet[\S 4.1]{naudts2009q},
for $q = \beta = 2-\alpha$. We use $\beta$ in this text for consistency.
\begin{definition}[$\beta$-Gaussian.]\label{def:beta_gaussian}
Let $\Sigma \succ 0$ and $\beta = 2-\alpha$.
The multivariate $\beta$-Gaussian distribution $\mathcal{N}_\beta(t; \mu, \Sigma)$ is the distribution
$\hat{p}_{\Omega_\alpha}[f]$ induced by the quadratic scoring function
\begin{equation}
f(t) = -\frac{1}{2} (t - \mu)^\top \Sigma^{-1} (t - \mu).
\end{equation}
From Proposition~\ref{prop:solution_rpm_tsallis}, the resulting density can be written as
\begin{equation}
\hat{p}_{\Omega_\alpha}[f](t) \,\,=\,\, \exp_{\beta}(f(t) - A_\alpha(f)) \,\,=\,\, [(\alpha-1)(-\tau + f(t))]_+^{\frac{1}{\alpha-1}},
\end{equation}
where $\tau = A_\alpha(f) - \frac{1}{\alpha-1}$ is a normalizing constant.

\end{definition}
Figure~\ref{fig:beta_gaussians}  shows examples of $\beta$-Gaussians in
1-d and 2-d; this includes
several kernels frequently used in density estimation \citep{silverman1986density}.
The next result, proved in  Appendix~\ref{sec:proof_beta_gauss}, shows that $\beta$-Gaussians are also elliptical distributions.

\begin{figure}[t]
\begin{center}
\includegraphics[width=0.439\columnwidth]{figures/beta_gaussian}
\includegraphics[width=0.55\columnwidth]{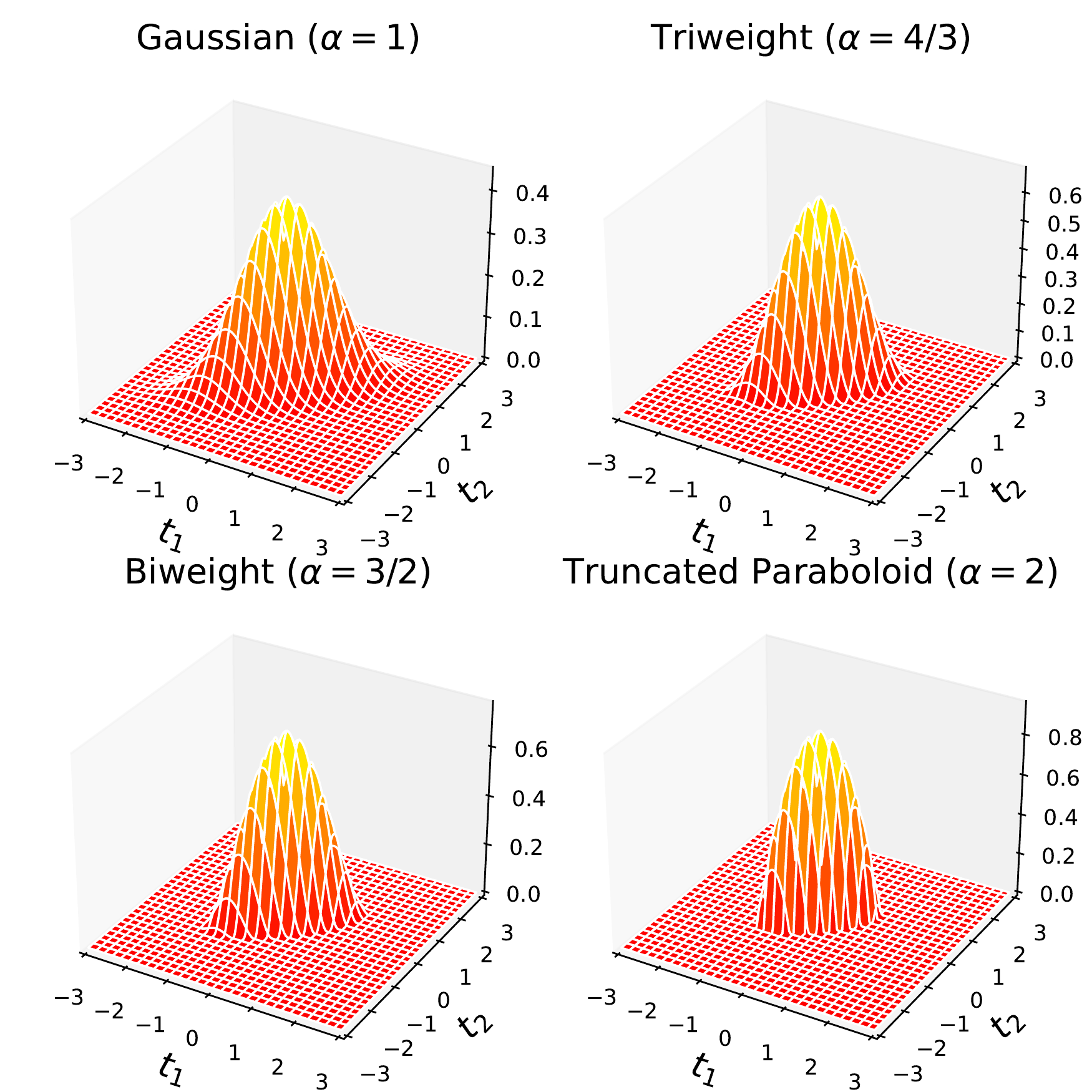}
\caption{$\beta$-Gaussians $\mathcal{N}_\beta$ for
several values of $\alpha=2-\beta$, in the univariate (left) and bivariate
(right) cases. In the univariate case, $\sigma^2=1$ except for $\alpha=0$,
where $\sigma^2=(2\pi)^{-1}$ (Cauchy distribution).
In the bivariate case, $\Sigma_{11} = .6$, $\Sigma_{22} = .48$, $\Sigma_{12}=\Sigma_{21}=.4$.  %
The case $\alpha=1$ corresponds
to a Gaussian, $\alpha<1$ to heavy-tail distributions ($t$-Student), and
$\alpha>1$ to zero-tail distributions, recovering scaled versions of the
biweight ($\alpha=\tfrac{3}{2}$), triweight ($\alpha=\tfrac{4}{3}$), and
Epanechnikov kernels ($\alpha=2$, same as truncated parabola) used in density
estimation. %
}
\label{fig:beta_gaussians}
\end{center}
\end{figure}

\begin{proposition}[$\beta$-Gaussians are elliptical]\label{prop:beta_gauss}
The multivariate $\beta$-Gaussians form a family of elliptical distributions
induced by the spherical base corresponding to $\mu=0, \Sigma=\mathrm{Id}$, \textit{i.e.},
the distribution $\hat{p}_{\Omega_\alpha}[f_0]$ induced by $f_0(z) = -\frac{1}{2} \|z\|^2$.
Moreover, $t \sim \mathcal{N}_\beta(t; \mu, \Sigma)$
admits the stochastic
representation $t = \mu + r \cdot Au$,
where $A = |\Sigma|^{-\frac{1}{2N+\frac{4}{\alpha-1}}} \Sigma^{\sfrac{1}{2}}$,
$u \sim \text{Uniform}(\mathbb{S}^N)$ and
$r$ is a random variable distributed as
\begin{equation}
\frac{r^2}{R^2} \sim \mathrm{Beta}\left(\frac{N}{2}, \frac{\alpha}{\alpha -
1}\right)\,,
\end{equation}
where $R$  is the radius of the supporting sphere of the standard $\beta$-Gaussian
$\mathcal{N}_{\beta}(z; 0, I)$, with value depending only on $N$ and $\alpha$,
\begin{equation}\label{eq:radius_beta_gaussian}
R = \left(
\frac{\Gamma(\nicefrac{N}{2}+\nicefrac{\alpha}{\alpha-1})}
{\Gamma(\nicefrac{\alpha}{\alpha-1}) \pi^{\nicefrac{N}{2}}}
\cdot
{\left(\frac{2}{\alpha-1}\right)}^{\nicefrac{1}{\alpha-1}}
\right)^{\frac{\alpha-1}{2+(\alpha-1)N}}
\end{equation}
Moreover,
defining $\tilde{\Sigma} = |\Sigma|^{-\frac{1}{N+\frac{2}{\alpha-1}}} \Sigma$,
the support of $\mathcal{N}_{\beta}(t; \mu, \Sigma)$ is the ellipsoid
\begin{equation}
\operatorname{supp}(\mathcal{N}_{\beta}(t; \mu, \Sigma)) = \{t: (t - \mu)^\top
\tilde{\Sigma}^{-1}(t-\mu) < R^2\}\,,
\end{equation}
and $R$ relates to the normalizing constant $\tau$ in Definition~\ref{def:beta_gaussian} by
\begin{equation}
\tau = -\frac{R^2}{2}|\Sigma|^{-\frac{1}{N+\frac{2}{\alpha-1}}}\,.
\end{equation}
\end{proposition}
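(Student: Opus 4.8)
The plan is to reduce the general case to the \emph{standard} $\beta$-Gaussian $p_0 \equiv \hat{p}_{\Omega_\alpha}[f_0]$ with $f_0(z) = -\tfrac12\|z\|^2$, for which I can compute the normalization and radius law explicitly, and then transport these to $\mathcal{N}_\beta(t;\mu,\Sigma)$ by an affine change of variables, tracking carefully how the normalizing constant interacts with the scaling.

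First I would posit the representation $t = \mu + Az$ with $z \sim p_0$ and $A = c^{1/2}\Sigma^{1/2}$ (the symmetric root), for a scalar $c>0$ to be determined, so that $\tilde\Sigma := AA^\top = c\Sigma$. Applying the change-of-density formula and using $\|A^{-1}(t-\mu)\|^2 = (t-\mu)^\top\tilde\Sigma^{-1}(t-\mu) = Q/c$ with $Q := (t-\mu)^\top\Sigma^{-1}(t-\mu)$, the transformed density becomes $|A|^{-1}[(\alpha-1)(-\tau_0 - \tfrac{1}{2c}Q)]_+^{1/(\alpha-1)}$, where $\tau_0$ is the standard normalizing constant. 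Factoring $1/c$ out of the truncation bracket (raised to the power $1/(\alpha-1)$) and comparing with the target density $[(\alpha-1)(-\tau - \tfrac12 Q)]_+^{1/(\alpha-1)}$ of Definition~\ref{def:beta_gaussian} yields two conditions: $\tau = c\,\tau_0$ from the argument of $[\cdot]_+$, and $|A|^{-1}c^{-1/(\alpha-1)} = 1$ from matching the prefactor. Since $|A| = c^{N/2}|\Sigma|^{1/2}$, the latter solves to $c = |\Sigma|^{-1/(N+2/(\alpha-1))}$, which is exactly the claimed $\tilde\Sigma = c\Sigma$, $A = |\Sigma|^{-1/(2N+4/(\alpha-1))}\Sigma^{1/2}$, and (once $R$ is known) $\tau = c\tau_0 = -\tfrac{R^2}{2}|\Sigma|^{-1/(N+2/(\alpha-1))}$. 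This simultaneously shows $\mathcal{N}_\beta$ is elliptical with spherical base $p_0$, and establishes the stochastic representation via Proposition~\ref{prop:sampling_elliptical} once the radius law of $z$ is known.

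It remains to analyze the standard case. The support of $p_0$ is the ball $\{\|z\|^2 < R^2\}$ with $R^2 = -2\tau_0$, and on it $p_0(z) = C(R^2-\|z\|^2)^{1/(\alpha-1)}$ with $C = ((\alpha-1)/2)^{1/(\alpha-1)}$. Passing to spherical coordinates and substituting $\rho = R\sqrt{s}$, the normalization $\int_{\|z\|<R}p_0 = 1$ collapses (the surface factor $2\pi^{N/2}/\Gamma(N/2)$ combining with $B(\tfrac{N}{2},\tfrac{\alpha}{\alpha-1})$ so that $\Gamma(\tfrac N2)$ cancels) to $C\,\pi^{N/2}R^{N+2/(\alpha-1)}\,\Gamma(\tfrac{\alpha}{\alpha-1})/\Gamma(\tfrac N2+\tfrac{\alpha}{\alpha-1}) = 1$; solving for $R$ and writing the exponent as $N+2/(\alpha-1) = (2+(\alpha-1)N)/(\alpha-1)$ gives precisely \eqref{eq:radius_beta_gaussian}. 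The same computation yields the radius law: the density of $r=\|z\|$ is $\tfrac{2\pi^{N/2}}{\Gamma(N/2)}\rho^{N-1}C(R^2-\rho^2)^{1/(\alpha-1)}$, and the change of variables $w = r^2/R^2$ produces a density proportional to $w^{N/2-1}(1-w)^{1/(\alpha-1)}$ whose constant, by the normalization just found, equals $1/B(\tfrac N2,\tfrac{\alpha}{\alpha-1})$; hence $r^2/R^2 \sim \mathrm{Beta}(\tfrac N2,\tfrac{\alpha}{\alpha-1})$. Finally, the support of $\mathcal{N}_\beta(t;\mu,\Sigma)$ is the image of $\{\|z\|^2<R^2\}$ under $z\mapsto\mu+Az$, namely the ellipsoid $\{(t-\mu)^\top\tilde\Sigma^{-1}(t-\mu)<R^2\}$.

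I expect the main obstacle to be the bookkeeping of the scaling $c$: one must resist identifying $\tilde\Sigma$ with $\Sigma$ and instead match \emph{both} the argument of the truncation and the determinant prefactor, since it is precisely the interplay of these two conditions with the exponent $1/(\alpha-1)$ that forces the non-trivial power of $|\Sigma|$. Everything else reduces, after passing to spherical coordinates, to a single Beta-function evaluation.
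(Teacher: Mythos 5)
Your proposal is correct and follows essentially the same route as the paper's proof: reduce to the standard spherical case $f_0(z)=-\tfrac12\|z\|^2$, determine the scaling of $A$ by matching both the argument of the truncation bracket and the determinant prefactor (which forces the power of $|\Sigma|$), and obtain $R$ and the $\mathrm{Beta}(\tfrac N2,\tfrac{\alpha}{\alpha-1})$ radius law from the normalization integral in spherical coordinates. The only cosmetic differences are that the paper absorbs the Jacobian determinant into the bracket and solves $\Sigma=|\tilde\Sigma|^{(\alpha-1)/2}\tilde\Sigma$ for $\tilde\Sigma$, and invokes Fang's radius-density theorem and the incomplete Beta CDF where you integrate directly.
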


It is worth noting from \eqref{eq:radius_beta_gaussian} that the radius $R$ does not depend on the $\beta$-Gaussian parameters $\mu$ or $\Sigma$, being  only a function of $N$ and $\alpha = 2-\beta$.
As $\alpha \to 1_+$, $R \rightarrow \infty$, and $z$ tends toward the Gaussian
distribution. For $\alpha=2$, we get the multivariate truncated paraboloid described in \S\ref{sec:sparsemax}.
The $\beta$-Gaussian family is related to the Pearson Type-II distribution \citep[Section 3.4]{fang}, in which the base radius variable $r^2$
is supported on $[0, 1]$ rather than $[0, R^2]$.
Our construction from the angle of regularized prediction maps therefore reveals a novel
connection and is particularly natural when learning the
support is part of the modeling task, as demonstrated in the experiments in \S\ref{sec:experiments}.

\subsection{Properties of $\beta$-Gaussians}

Now that we have shown that $\beta$-Gaussians are instances of both elliptical and $\alpha$-sparse family distributions, we state several important properties of these distributions, linking to the concepts introduced in the previous sections. We use several of these properties in our code implementations for \S\ref{sec:experiments}.

\begin{proposition}[Mean, variance and $\alpha$-negentropy.]\label{prop:mean_var_entropy_beta_gaussians}
Let $t \sim \mathcal{N}_\beta(t; \mu, \Sigma)$.
Then, $\mathbb{E}[t] = \mu$ and $\mathrm{Var}[t] = \frac{R^2}{N + \tfrac{2\alpha}{\alpha-1}} \tilde{\Sigma}$,
with $\tilde{\Sigma} = |\Sigma|^{-\frac{1}{N+\frac{2}{\alpha-1}}} \Sigma$, and
$R$ defined as in \eqref{eq:radius_beta_gaussian}.

Its Tsallis $\alpha$-negentropy is
\begin{equation}
\Omega_{\alpha}(p) = -\frac{1}{\alpha(\alpha-1)} + \frac{R^2 |\Sigma|^{-\frac{1}{N+\frac{2}{\alpha-1}}}}{2\alpha + N(\alpha-1)}.
\end{equation}
Therefore, $\mathrm{Var}[t]$, $\Omega_\alpha(p)$, and $\Sigma$ are related through the elegant formula
\begin{equation}
\mathrm{Var}[t] = \left(\frac{1}{\alpha} + (\alpha-1) \Omega_\alpha(p)\right) \Sigma,
\end{equation}
recovering $\mathrm{Var}[t] = \Sigma$ when $\alpha=1$ (Gaussian distribution).

\end{proposition}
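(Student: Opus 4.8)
The plan is to route everything through the stochastic representation $t = \mu + r\cdot Au$ furnished by Proposition~\ref{prop:beta_gauss}, which reduces the multivariate moments to scalar moments of the radius $r$ together with elementary moments of a uniform direction $u \sim \text{Uniform}(\mathbb{S}^N)$. First I would record two facts about $u$: by rotational symmetry $\mathbb{E}[u] = 0$, and since $\|u\|^2 = 1$ with all coordinates exchangeable, $\sum_i \mathbb{E}[u_i^2] = 1$ forces $\mathbb{E}[uu^\top] = \tfrac{1}{N}\mathrm{Id}$. The mean is then immediate: using independence of $r$ and $u$, $\mathbb{E}[t] = \mu + \mathbb{E}[r]\,A\,\mathbb{E}[u] = \mu$. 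For the variance I would write $\mathrm{Var}[t] = \mathbb{E}[r^2]\,A\,\mathbb{E}[uu^\top]\,A^\top = \tfrac{\mathbb{E}[r^2]}{N}AA^\top$, identify $AA^\top = \tilde\Sigma$ by squaring $A = |\Sigma|^{-\frac{1}{2N+\frac{4}{\alpha-1}}}\Sigma^{\sfrac{1}{2}}$ (the exponent halves to $-\tfrac{1}{N+\frac{2}{\alpha-1}}$), and evaluate $\mathbb{E}[r^2] = R^2\,\mathbb{E}[r^2/R^2] = R^2\tfrac{N/2}{N/2 + \alpha/(\alpha-1)}$ from the mean $a/(a+b)$ of a $\mathrm{Beta}(a,b)$ variable with $a = \tfrac{N}{2}$, $b = \tfrac{\alpha}{\alpha-1}$. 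Simplifying the scalar yields $\mathrm{Var}[t] = \tfrac{R^2}{N + 2\alpha/(\alpha-1)}\tilde\Sigma$, as claimed.

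For the Tsallis negentropy, rather than integrating $p^\alpha$ directly I would exploit a pointwise power identity. On the support, Definition~\ref{def:beta_gaussian} gives $p(t) = [(\alpha-1)(-\tau + f(t))]^{\frac{1}{\alpha-1}}$, hence $p(t)^{\alpha-1} = (\alpha-1)(-\tau + f(t))$ there. Multiplying by $p$, integrating, and using $\int_S p = 1$ collapses the $\alpha$-moment to first moments, $\int_S p^\alpha = (\alpha-1)\big(-\tau + \mathbb{E}_p[f(t)]\big)$. Now $\mathbb{E}_p[f(t)] = -\tfrac{1}{2}\mathrm{Tr}(\Sigma^{-1}\mathrm{Var}[t])$ can be read off from the variance just computed, while Proposition~\ref{prop:beta_gauss} supplies $\tau = -\tfrac{R^2}{2}|\Sigma|^{-\frac{1}{N+\frac{2}{\alpha-1}}}$. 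Substituting into $\Omega_\alpha(p) = \tfrac{1}{\alpha(\alpha-1)}\big(\int_S p^\alpha - 1\big)$ from Definition~\ref{def:tsallis} and factoring out the common $|\Sigma|^{-\frac{1}{N+\frac{2}{\alpha-1}}}$ produces the stated expression. (As an independent cross-check one could instead invoke \eqref{eq:entropy_normalizing}, $\Omega_\alpha(p_\theta) = \tfrac{1}{\alpha}(\theta^\top\mu(\theta) - A_\alpha(\theta))$, but the direct route is cleaner.)

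The final identity then follows by pure algebra: from the negentropy formula, $(\alpha-1)\Omega_\alpha(p) = -\tfrac{1}{\alpha} + \tfrac{(\alpha-1)R^2|\Sigma|^{-\frac{1}{N+\frac{2}{\alpha-1}}}}{2\alpha+N(\alpha-1)}$, so $\tfrac{1}{\alpha} + (\alpha-1)\Omega_\alpha(p)$ equals exactly the scalar multiplying $\Sigma$ in $\mathrm{Var}[t]$, once one rewrites $N + \tfrac{2\alpha}{\alpha-1} = \tfrac{N(\alpha-1)+2\alpha}{\alpha-1}$ so the determinant factors and the $(\alpha-1)$ match; the Gaussian case $\alpha=1$ collapses the bracket to $1$, recovering $\mathrm{Var}[t] = \Sigma$. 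I expect the main obstacle to be purely bookkeeping: keeping the several $|\Sigma|$-exponents consistent across $A$, $\tilde\Sigma$, $\tau$, and $R$, and correctly converting between $N + \tfrac{2\alpha}{\alpha-1}$ and $2\alpha + N(\alpha-1)$ so that all determinant factors cancel. The conceptual steps—reducing the moments to the Beta-distributed radius and collapsing $\int_S p^\alpha$ via the pointwise power identity—are straightforward; the real risk is arithmetic slips in the normalization constants.
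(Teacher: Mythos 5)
Your proposal is correct and follows essentially the same route as the paper: the paper obtains the variance from the Beta-radius moments combined with the standard covariance formula for elliptical distributions (Fang, Theorem~2.7), which is exactly what your computation $\mathrm{Var}[t]=\mathbb{E}[r^2]\,A\,\mathbb{E}[uu^\top]A^\top=\tfrac{\mathbb{E}[r^2]}{N}\tilde\Sigma$ reproduces from first principles. For the negentropy the paper simply invokes Proposition~\ref{prop:entropy_normalizing}, and your pointwise identity $p^{\alpha-1}=(\alpha-1)(-\tau+f(t))$ on the support is precisely the relation $\log_{2-\alpha}p=f-A_\alpha(f)$ underlying that proposition (since $\tau=A_\alpha(f)-\tfrac{1}{\alpha-1}$), so you are re-deriving it in place rather than citing it; all of your constants check out.
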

\begin{proof}
The variance result follows from the Beta distribution moments, combined with \citet[Theorem~2.7]{fang}. The negentropy follows from Proposition~\ref{prop:entropy_normalizing}.
\end{proof}

The variance expression allows us to further compute the
2-Wasserstein distance between two $\beta$-Gaussians
(\citealt{gelbrich1990formula}, see also \citealt[Remark 2.32]{cot}), as

\begin{equation}
W_2^2\left(%
\mathcal{N}_{\beta}(\cdot; \mu_1, \Sigma_1),
\mathcal{N}_{\beta}(\cdot; \mu_2, \Sigma_2)\right)
= \|\mu_1 - \mu_2\|^2 +
\frac{R^2}{N + \tfrac{2\alpha}{\alpha-1}} \mathfrak{B}^2(\tilde{\Sigma}_1,
\tilde{\Sigma}_2),
\end{equation}
where $\mathfrak{B}^2(A, B) \coloneqq \operatorname{Tr}\left(A + B
- 2\left(A^{\nicefrac{1}{2}}BA^{\nicefrac{1}{2}}\right)^{\nicefrac{1}{2}}\right)$
is the squared Bures distance, and $\tilde\Sigma_{\{1,2\}}$ is as in
Proposition~\ref{prop:mean_var_entropy_beta_gaussians}.
As $\alpha \rightarrow 1_+$, the coefficient goes to 1 recovering the
Fr\'{e}chet distance; as $\alpha \rightarrow \infty$, the coefficient goes
toward $\frac{1}{2+N}$, recovering the Wasserstein distance between uniform distributions
on ellipsoids.

Next, we provide a closed-form expression for the Fenchel-Young loss between a quadratic scoring function and a $\beta$-Gaussian. This expression generalizes the KL divergence between multivariate Gaussians (cf. \eqref{eq:kl_gaussians}), recovered as a limit case when $\beta=\alpha=1$. We also provide an expression for the cross-$\Omega$ loss (see Definition~\ref{def:fyloss}), which we will use in the heteroscedastic regression experiments in \S\ref{sec:experiments}.
The full derivation is included in Appendix~\ref{sec:proof_beta_gaussian_fy} and makes use of Proposition~\ref{prop:entropy_normalizing}.
\begin{proposition}[Fenchel-Young loss for $\beta$-Gaussians.]\label{prop:beta_gaussian_fy}
Let $\beta=2-\alpha$. The Fenchel-Young loss induced by $\Omega_\alpha$ associated with a quadratic score function $f(t) = -\frac{1}{2}(t-\mu_f)^\top \Sigma_f^{-1} (t-\mu_f)$ and a $\beta$-Gaussian distribution $p(t) = \mathcal{N}_\beta(t; \mu, \Sigma)$ is:
\begin{eqnarray}
L_{\Omega_\alpha}(f, p) &=& \frac{1}{2}(\mu-\mu_f)^\top \Sigma_f^{-1} (\mu-\mu_f) +
\frac{R^2}{2\alpha + N(\alpha-1)} \cdot \\
&& \cdot \left( |\Sigma|^{-\frac{1}{N + \frac{2}{\alpha-1}}} \left(1 + \frac{\alpha-1}{2} \mathrm{Tr}(\Sigma_f^{-1} \Sigma) \right) - |\Sigma_f|^{-\frac{1}{N + \frac{2}{\alpha-1}}} \left( 1 + \frac{N(\alpha-1)}{2}\right)\right).\nonumber
\end{eqnarray}
The corresponding cross-$\Omega$ loss is
\begin{eqnarray}
L_{\Omega_\alpha}^{\times}(f_\theta, p) &=& \frac{1}{2}(\mu-\mu_f)^\top \Sigma_f^{-1} (\mu-\mu_f) + \frac{1}{\alpha(\alpha-1)}
+
\frac{R^2}{2\alpha + N(\alpha-1)} \cdot \\
&& \cdot \left( |\Sigma|^{-\frac{1}{N + \frac{2}{\alpha-1}}} \left(\frac{\alpha-1}{2} \mathrm{Tr}(\Sigma_f^{-1} \Sigma) \right) - |\Sigma_f|^{-\frac{1}{N + \frac{2}{\alpha-1}}} \left( 1 + \frac{N(\alpha-1)}{2}\right)\right).
\end{eqnarray}
\end{proposition}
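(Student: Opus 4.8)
The plan is to specialize the general Fenchel–Young loss formula \eqref{eq:fy_linear_families} of Proposition~\ref{prop:entropy_normalizing} to the quadratic case, exploiting that both the target $p$ and the distribution $\hat{p}_{\Omega_\alpha}[f]$ induced by $f$ are $\beta$-Gaussians whose negentropies and moments are already given by Proposition~\ref{prop:mean_var_entropy_beta_gaussians}. Concretely, the score $f(t) = -\tfrac{1}{2}(t-\mu_f)^\top\Sigma_f^{-1}(t-\mu_f)$ is linear in the statistics $\phi(t) = [t, \mathrm{vec}(tt^\top)]$ with the canonical parameter $\theta$ of Table~\ref{tab:linear_param}, so $\theta^\top\phi(t) = f(t)$ and, by Definition~\ref{def:beta_gaussian}, $\hat{p}_{\Omega_\alpha}[f] = \mathcal{N}_\beta(\cdot;\mu_f,\Sigma_f)$. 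Using $\theta^\top v = \mathbb{E}_p[f(t)]$ and $\theta^\top\mu(\theta) = \mathbb{E}_{\hat{p}_{\Omega_\alpha}[f]}[f(t)]$, equation \eqref{eq:fy_linear_families} becomes
\[
L_{\Omega_\alpha}(f,p) = \Omega_\alpha(p) - \Omega_\alpha(\hat{p}_{\Omega_\alpha}[f]) - \bigl(\mathbb{E}_p[f(t)] - \mathbb{E}_{\hat{p}_{\Omega_\alpha}[f]}[f(t)]\bigr),
\]
so it remains to evaluate the two negentropies and the two quadratic expectations.

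The two negentropies come verbatim from Proposition~\ref{prop:mean_var_entropy_beta_gaussians}, applied with $(\mu,\Sigma)$ and $(\mu_f,\Sigma_f)$; since $R$ depends only on $N$ and $\alpha$, it is shared, and the $\pm\tfrac{1}{\alpha(\alpha-1)}$ constants cancel in the difference, leaving $\Omega_\alpha(p)-\Omega_\alpha(\hat{p}_{\Omega_\alpha}[f]) = \tfrac{R^2}{2\alpha+N(\alpha-1)}\bigl(|\Sigma|^{-\frac{1}{N+2/(\alpha-1)}} - |\Sigma_f|^{-\frac{1}{N+2/(\alpha-1)}}\bigr)$. For the expectations I would use the bias–variance split $t-\mu_f = (t-\mu)+(\mu-\mu_f)$ under $p$, which gives $\mathbb{E}_p[f(t)] = -\tfrac{1}{2}\mathrm{Tr}(\Sigma_f^{-1}\mathrm{Var}_p[t]) - \tfrac{1}{2}(\mu-\mu_f)^\top\Sigma_f^{-1}(\mu-\mu_f)$, whereas under the centered $\hat{p}_{\Omega_\alpha}[f]$ only the trace term survives. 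Substituting $\mathrm{Var}_p[t] = \tfrac{R^2}{N+2\alpha/(\alpha-1)}\tilde{\Sigma}$ and $\mathrm{Var}_{\hat{p}_{\Omega_\alpha}[f]}[t] = \tfrac{R^2}{N+2\alpha/(\alpha-1)}\tilde{\Sigma}_f$, together with $\mathrm{Tr}(\Sigma_f^{-1}\tilde{\Sigma}_f) = N|\Sigma_f|^{-\frac{1}{N+2/(\alpha-1)}}$, reduces everything to scalars and traces in $\Sigma,\Sigma_f$.

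The remaining work is algebraic consolidation. I would rewrite $N+\tfrac{2\alpha}{\alpha-1} = \tfrac{2\alpha+N(\alpha-1)}{\alpha-1}$ so the variance prefactor becomes $\tfrac{R^2(\alpha-1)}{2\alpha+N(\alpha-1)}$, then collect the coefficients of $|\Sigma|^{-\frac{1}{N+2/(\alpha-1)}}$ and $|\Sigma_f|^{-\frac{1}{N+2/(\alpha-1)}}$; the Mahalanobis term $\tfrac{1}{2}(\mu-\mu_f)^\top\Sigma_f^{-1}(\mu-\mu_f)$ passes through untouched. This collapses to the stated $L_{\Omega_\alpha}(f,p)$. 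The cross-$\Omega$ loss then requires no new computation: by Definition~\ref{def:fyloss}, $L^\times_{\Omega_\alpha}(f,p) = L_{\Omega_\alpha}(f,p) - \Omega_\alpha(p)$, and subtracting the negentropy expression of Proposition~\ref{prop:mean_var_entropy_beta_gaussians} exactly removes the additive $1$ inside the $|\Sigma|$-coefficient and restores the $+\tfrac{1}{\alpha(\alpha-1)}$ constant, yielding the claimed formula. I expect the only real obstacle to be bookkeeping—tracking the determinant powers $-\tfrac{1}{N+2/(\alpha-1)}$ entering through $\tilde{\Sigma}$, the two distinct normalizations in $\Sigma$ and $\Sigma_f$, and the constants—since once Propositions~\ref{prop:entropy_normalizing} and \ref{prop:mean_var_entropy_beta_gaussians} are invoked there is no conceptual difficulty.
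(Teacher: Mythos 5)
Your proposal is correct and follows essentially the same route as the paper's proof: both start from \eqref{eq:fy_linear_families} in Proposition~\ref{prop:entropy_normalizing}, pull the negentropies and variances from Proposition~\ref{prop:mean_var_entropy_beta_gaussians}, and obtain the cross-$\Omega$ loss by subtracting $\Omega_\alpha(p)$. The only (cosmetic) difference is that you evaluate $\theta^\top(v-\mu(\theta))$ as $\mathbb{E}_p[f(t)]-\mathbb{E}_{\hat{p}_{\Omega_\alpha}[f]}[f(t)]$ via a bias--variance split, while the paper writes out the mean parameters $[\mu,\mathrm{vec}(\mathrm{Var}(t)+\mu\mu^\top)]$ explicitly and contracts with $\mathrm{vec}(\Sigma_f^{-1})$; the resulting algebra is identical.
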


Much like the cross-entropy between 1-d Gaussians induces a hyperbolic geometry in
the $[\mu, \sigma]$ half-plane, the Fenchel-Young loss induces a similar
curvature, discussed briefly in \S\ref{sec:proof_beta_gaussian_fy}.
As maximum likelihood is not suitable for learning with distributions that may
assign zero probability to data, Fenchel-Young losses are of great value
for learning and modelling data with with $\beta$-Gaussian distributions,
as we demonstrate in the experiments.

\section{Continuous Fusedmax}\label{sec:fusedmax}

We now switch gears to a different usage of regularized prediction maps,
designed to induce smoothness, focusing for simplicity on $S=\mathbb{R}$.
The reader that is interested in the proceeding with applications of $\beta$-Gaussians may skip this section and jump straight to \S\ref{sec:attention}.

In discrete attention mechanisms, the regularizer $\Omega$ has been used to
encode further prior assumptions. In particular,
\citet{fusedmax} introduce \emph{fusedmax}, a variant of sparsemax that
encourages adjacent items in a sequence to get assigned the same probability:
\begin{equation}\label{eq:discrete_fusedmax}
\operatorname{fusedmax} : \mathbb{R}^n \rightarrow \triangle^n,\quad
\operatorname{fusedmax}(\tilde{f})\coloneqq
\argmin_{\tilde{p} \in \triangle^n} \frac{1}{2} \|\tilde{p} - \tilde{f} \|^2 +
\gamma \sum_{i=2}^n |\tilde{p}_i - \tilde{p}_{i-1}|\,,
\end{equation}
where we use $\tilde{\cdot}$ to denote vectors, which we may interpret as
discretized functions.
For example, if the sequence corresponds to English words, it makes sense to
cluster the probabilities of adjacent words, since they are more likely to form
meaningful phrases.
In this section, \textbf{we extend fusedmax to the continuous case},
highlight connections with total variation denoising, and provide closed-form
expressions for some common cases.
As we shall see, continuous generalizations involve penalizing the
\textbf{derivative} of $p$.

The regularizers used so far in this paper, \textit{e.g.},
$\Omega_2(p) = \sfrac{1}{2} (-1 + \int_S p(t)^2)$, are integral functionals that
only depend on $p(t)$.
We make use of functional $L_p$ norms, defined as
\[ \|f\|_p \coloneqq \left(\int_S |f(t)|^p \mathrm{d}t\right)^{\frac{1}{p}}\,, \]
over a space of functions for which the integral of interest is finite.
With this notation, $\Omega_2(p) = -1+\frac{1}{2}\|p\|_2^2$.
To induce smoothing, we must
additionally regularize $p'$. In this section, we derive two appropriate
regularizers using the $L_1$ and squared $L_2$ norms of $p'$.
The resulting problems are closely related to operators from signal processing.
We give expressions for the regularized prediction map obtained in some tractable cases.

\subsection{$L_1$ gradient penalty and total variation}
We first consider regularizing the \textbf{total variation} of $p$,
a strategy motivated by classic research in continuous signal denoising, commonly known as
the Rudin-Osher-Fatemi (ROF) denoising model \citep{rof1992}.
If $p$ is
differentiable and its derivative is Riemann integrable,\footnote{Importantly,
Definition~\ref{def:total_variation} lets us assess
total variation for non-differentiable functions. The
ROF model allows -- and in fact often yields -- non-differentiable solutions.}
the total variation of
$p$ takes the value $\Otv(p) = \int_S |p'(t)| = \|p'\|_1$.
Define
\begin{equation}
\Orof(p) \coloneqq \Omega_2(p) + \gamma\Otv(p)\,.
\end{equation}
The induced regularized prediction map is, up to a constant,
\begin{equation}
\hat{p}_{\Orof}[f] := \argmin_{p \in \mathcal{M}_+^1(S)}
\frac{1}{2} \int_S (p(t) - f(t))^2 + \gamma\Otv(p)\,.
\end{equation}
Without the $\mathcal{M}_+^1(S)$ constraint, this optimization problem would be
equivalent to the standard ROF signal denoising model; adding the constraint
ensures the solution is a smoothed density.
Since we are optimizing over a space of functions, general solutions are not
available for arbitrary $f$.
We first show that Euler's finite difference method, often used to
discretize calculus of variations problems, recovers exactly the discrete fusedmax
problem. Then, we derive exact solutions for a useful class of functions $f$.
\begin{proposition}[Discretized ROF yields fusedmax.]
Denote the $n$-dimensional $h$-simplex as
$\triangle_h^n = \{\tilde{p} \in \mathbb{R}^n : \tilde{p} \geq 0, \sum_i
\tilde{p}_i =
\nicefrac{1}{h}\}.$
Applying Euler's finite difference method with width $h$ on $\hat{p}_{\Orof}$
gives the discretized version of the ROF regularized prediction map
\begin{equation}
\hat{p}^{(h)}_{\Omega_{\gamma\text{ROF}}}[\tilde{f}^{(h)}] =
\argmin_{\tilde{p} \in \triangle_h^n} \frac{h}{2}\|\tilde{p} -
\tilde{f}^{(h)}\|^2 + \gamma\sum_{i=1}^n |\tilde{p}_i -
\tilde{p}_{i-1}|\,,
\end{equation}
where $\tilde{f}^{(h)} \in \mathbb{R}^n$ is a discretized (sampled) function.
\end{proposition}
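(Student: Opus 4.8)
The plan is to treat this as a direct calculus-of-variations discretization: introduce a uniform grid, replace the unknown density $p$ by its sampled values $\tilde{p}_i \approx p(t_i)$, and approximate each of the three ingredients of the continuous problem -- the quadratic fidelity term, the total-variation penalty, and the membership constraint in $\mathcal{M}_+^1(S)$ -- by its Euler finite-difference counterpart. The claim then reduces to checking that these three discretizations reproduce exactly the three pieces of the stated discrete problem (fidelity, fused penalty, and $h$-simplex constraint), so that the resulting program coincides with discrete fusedmax up to a scaling.

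First I would fix the grid $t_i = t_0 + i h$ for $i = 1, \dots, n$ on the relevant window and set $\tilde{f}^{(h)}_i := f(t_i)$ and $\tilde{p}_i := p(t_i)$. The fidelity term is discretized by a Riemann sum, $\frac{1}{2}\int_S (p(t) - f(t))^2 \approx \frac{h}{2}\sum_{i=1}^n (\tilde{p}_i - \tilde{f}^{(h)}_i)^2 = \frac{h}{2}\|\tilde{p} - \tilde{f}^{(h)}\|^2$, which produces the quadratic term with its $h/2$ prefactor. For the penalty, I would use the differentiable-case identity $\Otv(p) = \int_S |p'(t)| = \|p'\|_1$ stated just above the proposition (valid precisely when $p'$ is Riemann integrable, exactly the regime Euler's method addresses), approximate the derivative by the forward difference $p'(t_i) \approx (\tilde{p}_i - \tilde{p}_{i-1})/h$, and again apply a Riemann sum: $\gamma\int_S |p'(t)| \approx \gamma\sum_i h\,\bigl|(\tilde{p}_i - \tilde{p}_{i-1})/h\bigr| = \gamma\sum_i |\tilde{p}_i - \tilde{p}_{i-1}|$. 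The two factors of $h$ cancel, so the fused penalty appears with exactly the coefficient $\gamma$, unscaled.

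Next I would discretize the constraints defining $\mathcal{M}_+^1(S)$. Nonnegativity $p \ge 0$ becomes $\tilde{p}_i \ge 0$ pointwise, while the unit-mass constraint $\int_S p(t) = 1$ becomes, via the same Riemann sum, $h\sum_i \tilde{p}_i = 1$, i.e.\ $\sum_i \tilde{p}_i = 1/h$. Together these are precisely the defining conditions of the $h$-simplex $\triangle_h^n$. Assembling the discretized objective over $\tilde{p} \in \triangle_h^n$ then yields the displayed problem, establishing its equivalence with (discrete) fusedmax up to the harmless rescaling of the fidelity term by $h$.

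The main thing to get right -- rather than a genuine obstacle -- is the bookkeeping at the boundary and the consistent tracking of the $h$ factors. In particular, the fused sum requires a convention for $\tilde{p}_0$ (naturally $\tilde{p}_0 = 0$, since the density is taken to vanish outside the grid window), which fixes the index range and reconciles it with fusedmax's $\sum_{i=2}^n$ up to a boundary term $\gamma\tilde{p}_1$; one must also verify carefully that the derivative-plus-Riemann-sum discretization of the \Otv\ term cancels the $h$ while the fidelity term retains it, so that the \emph{relative} weighting -- and hence the effective regularization strength as $h \to 0$ -- comes out exactly as stated.
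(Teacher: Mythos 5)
Your discretization argument is correct and is exactly the computation the paper intends: the paper states this proposition without a written proof, treating it as an immediate consequence of Euler's method (Riemann sums for the fidelity term and the mass constraint, a forward difference inside the total-variation integral, with the two factors of $h$ cancelling in the penalty). Your attention to the $\tilde{p}_0=0$ boundary convention and the resulting reconciliation with the $\sum_{i=2}^n$ indexing of the discrete fusedmax in fact supplies a detail the paper glosses over.
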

In particular, with $h=1$, this yields the discrete fusedmax from
\eqref{eq:discrete_fusedmax}, and for other choices of $h>0$ the problem can be
transformed into fusedmax via scaling.

Using the discretized case as motivation, we now turn to exactly solving the
continuous case. For symmetric unimodal $f$, we obtain a
direct, intuitive expression for  $\hat{p}_{\Orof}[f]$.
\begin{proposition}[Form of ROF-smoothed solutions for unimodal scores.]\label{prop:fusedmax_unimodal_symmetric}
Let $f : \mathbb{R} \to \mathbb{R}$ be even and unimodal, \textit{i.e.},
$f(-t) = f(t)$, strictly increasing on $(-\infty, 0)$ and strictly decreasing on
$(0, \infty)$. We have
\begin{equation}\label{eq:rpm_rof}
\hat{p}_{\Orof}[f](t) = [f_a(t) - \tau]_+,
\quad \text{where} \quad f_a(t) \coloneqq
\begin{cases}
f(a), & t \in (-a, a), \\
f(t), & \text{otherwise}.
\end{cases}
\end{equation}
The support is $(-b, b)$ where $\tau=f(b)$ and $a, b$ can be found by solving
\begin{equation}
-a f(a) + \int_0^a f = \gamma\,,\qquad
-b f(b) + \int_0^b f = \frac{1}{2}+\gamma\,.
\end{equation}
\end{proposition}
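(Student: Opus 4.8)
The plan is to exploit strict convexity. The objective $J(p) = \tfrac12\int_S (p-f)^2 + \gamma\Otv(p)$ restricted to the convex set $\mathcal{M}_+^1(\mathbb{R})$ is strictly convex (the quadratic data term is strictly convex in $L^2$, while the TV term and the feasibility indicator are convex), lower semicontinuous, and coercive, so it admits a unique minimizer. Hence it suffices to \emph{exhibit} the candidate $p^\star = [f_a - \tau]_+$ and to \emph{certify} its optimality through the Karush--Kuhn--Tucker / Fenchel conditions; uniqueness then forces the minimizer to coincide with $p^\star$. I would not prove separately that the solution is even: this falls out of uniqueness once the even candidate is certified.

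First I would write the stationarity condition. Introducing a multiplier $\lambda\in\mathbb{R}$ for $\int_S p = 1$ and a multiplier $\eta\ge 0$ (with $\eta p = 0$) for $p\ge 0$, and using the standard characterization of the total-variation subdifferential --- $g\in\partial\Otv(p)$ iff $g=-w'$ for some absolutely continuous $w$ with $\|w\|_\infty\le 1$ and $w=\mathrm{sgn}(p')$ a.e.\ on $\{p'\neq 0\}$, coming from $\Otv(p)=\sup_{\|w\|_\infty\le1}\int_S p\,w'$ --- optimality reads
\begin{equation}
p(t) - f(t) + \tau - \gamma\, w'(t) - \eta(t) = 0, \qquad \eta\ge 0,\ \ \eta p = 0,\ \ \|w\|_\infty\le 1,
\end{equation}
where I set $\lambda = -\tau$ with $\tau = f(b)$. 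The proof then amounts to building an admissible certificate $w$ for $p^\star$.

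Next I would verify this region by region for $t\ge 0$ (the case $t<0$ following from the oddness of $w$ and evenness of $p^\star$). On the flank $a<t<b$ we have $p^\star=f-\tau$, $p^{\star\prime}=f'<0$, so $w\equiv -1$, $w'=0$, $\eta=0$, and the equation holds identically. On the flat top $0<t<a$ we have $p^\star=f(a)-\tau$ constant, so $p^{\star\prime}=0$ and $w$ is free apart from $\|w\|_\infty\le1$; the equation forces $w'=(f(a)-f)/\gamma\le 0$, and integrating from $w(a)=-1$ gives $w(0)=-1+\tfrac1\gamma\int_0^a (f-f(a))$. Imposing the admissibility/symmetry requirement $w(0)=0$ yields exactly $-a f(a)+\int_0^a f = \gamma$, the first displayed equation, and guarantees $w$ decreases monotonically from $0$ to $-1$ so that $\|w\|_\infty\le 1$. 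In the tail $t>b$ we take $w\equiv -1$ and read off $\eta = \tau - f(t)\ge 0$ (using $f(t)\le f(b)=\tau$), with $\eta p^\star = 0$. Continuity of $p^\star$ at $t=a$ and $t=b$ (both are kinks, not jumps, since $f$ is continuous and $f(b)=\tau$) and continuity of $w$ everywhere complete the certificate.

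Finally, the normalization $\int_S p^\star = 1$ becomes, after substituting the flat-top value and using the first equation, equivalent to $-bf(b)+\int_0^b f = \tfrac12+\gamma$, the second displayed equation, which pins down $b$ (and hence $\tau=f(b)$ and the support $(-b,b)$). For existence of $a,b$ I would observe that both equations have the form $\Phi(s)=\text{const}$ for the \emph{same} function $\Phi(s):=-sf(s)+\int_0^s f$, whose derivative $\Phi'(s)=-s f'(s)>0$ for $s>0$ makes it strictly increasing from $\Phi(0)=0$; hence $a=\Phi^{-1}(\gamma)$ and $b=\Phi^{-1}(\gamma+\tfrac12)$ exist (whenever these targets lie in the range of $\Phi$) and automatically satisfy $b>a$. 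As a sanity check, $\gamma\to 0$ forces $a\to 0$, recovering the unsmoothed truncated score $[f-\tau]_+$ of the continuous sparsemax. The main obstacle is the functional-analytic rigor: making the TV subdifferential characterization precise on $\mathbb{R}$ (working in $BV\cap L^2$ and controlling the boundary terms in the integration by parts), and checking that the glued certificate $w$ is globally admissible across the kinks at $\pm a,\pm b$ and on the unbounded tails.
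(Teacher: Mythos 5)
Your certificate is correct, and it takes a genuinely different route from the paper. The paper proceeds in two stages: first a decomposition lemma (Proposition~\ref{prop:fusedmax_decomposition}) showing that the constrained problem is solved by clipping the \emph{unconstrained} ROF solution, $\hat{p}_{\Orof}[f]=[\hat{u}[f]-\tau]_+$, proved via the dual machinery of \citet{overgaard2019taut}; and second an application of the taut-string characterization to the cumulative signal $F(x)=\int f$, from which the flat-top form of $\hat u[f]$ and the stationarity condition $af(a)-\int_0^a f+\gamma=0$ are read off geometrically. You instead write the KKT system for the full constrained problem at once and glue an explicit dual certificate $w$ region by region; your $w$ is, up to sign and a factor of $\gamma$, exactly the gap between the taut string and the tube center, so the two arguments are dual to one another. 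What your route buys: it is self-contained (no appeal to the taut-string theorem or to a separate decomposition lemma), and evenness of the solution comes for free from uniqueness rather than needing a perturbation argument as in the paper. What it costs: the burden shifts to functional-analytic hygiene — the characterization of $\partial\Otv$ on the unbounded line, the Moreau--Rockafellar sum rule for the three convex pieces, and existence of a minimizer (which you correctly sidestep by certifying the candidate directly and invoking strict convexity of the data term only for uniqueness). Your verification of the three regions, the matching of $w$ across the flat top (equivalently, $w(-a)=+1$, $w(a)=-1$, which forces $-af(a)+\int_0^a f=\gamma$), the nonnegativity of $\eta=\tau-f$ on the tails, and the derivation of the normalization equation for $b$ all check out; the monotonicity argument $\Phi'(s)=-sf'(s)>0$ for existence and ordering of $a<b$ is also sound and is in fact slightly more explicit than what the paper provides.
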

The proof, which we include in
Appendix~\ref{sec:proof_fusedmax_unimodal_symmetric}, invokes the \emph{taut
string} algorithm for solving the ROF optimization \citep{Grasmair2006,overgaard2019taut}.

\paragraph{Example: capped triangular and capped truncated parabola distributions.}
For the negative absolute value function $f(t) = -\nicefrac{|t|}{\sigma}$, we get
$a=\sqrt{2\sigma\gamma}$,
$b=\sqrt{\sigma(1+2\gamma)}$,
and $\tau=-\sqrt{\frac{1+2\gamma}{\sigma}}$.
(Figure~\ref{fig:sobolev}, left). For the parabola $f(t) = -\nicefrac{t^2}{2\sigma^2}$
we get
$a=\sqrt[3]{3\sigma^2\gamma}$,
$b=\sqrt[3]{3\sigma^2(1+2\gamma)/2}$, and
$\tau=-\frac{1}{2}\left(\frac{3}{2} \frac{1+2\gamma}{\sigma}\right)^\frac{2}{3}$
(Figure~\ref{fig:sobolev}, right).

\begin{figure}\centering
\includegraphics[width=.49\textwidth]{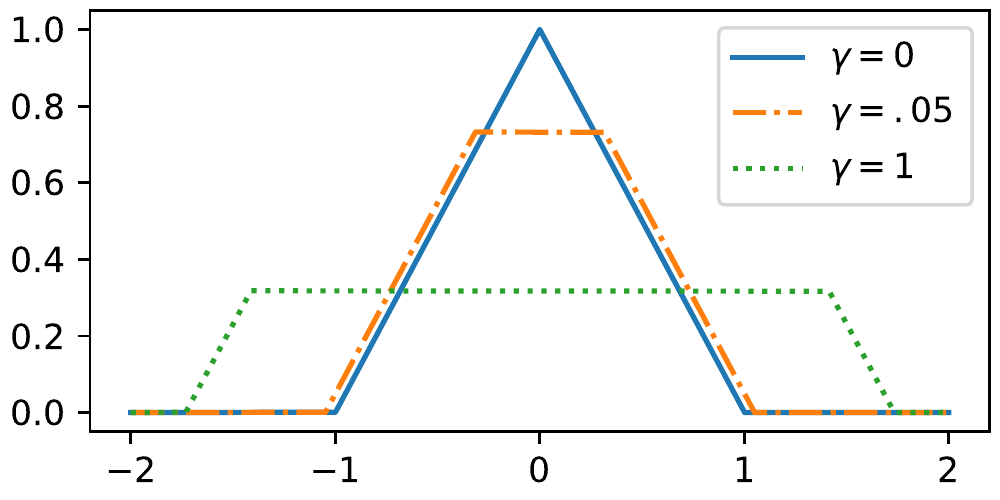}
\includegraphics[width=.49\textwidth]{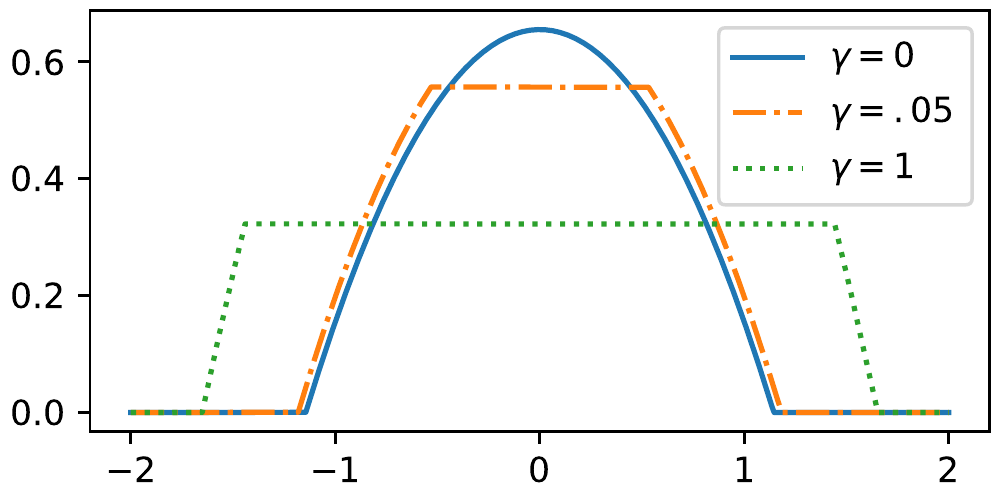}
\includegraphics[width=.49\textwidth]{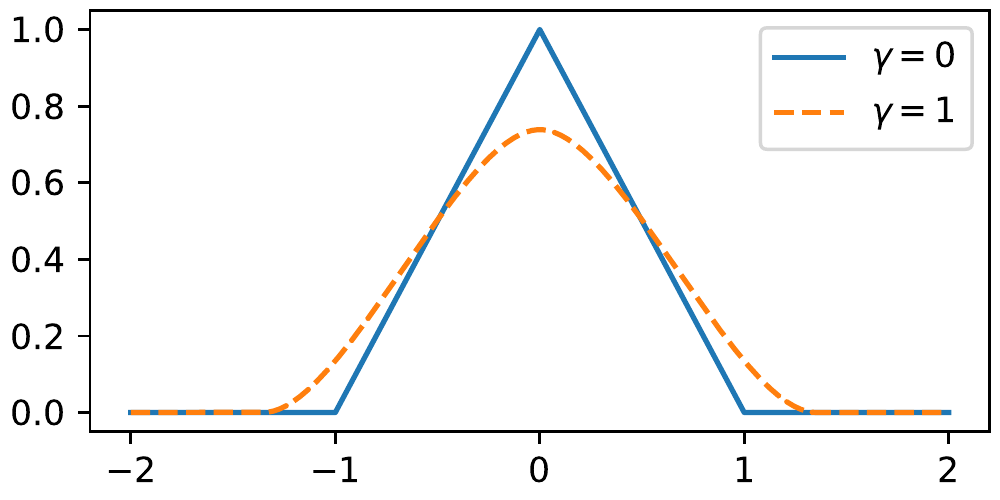}
\includegraphics[width=.49\textwidth]{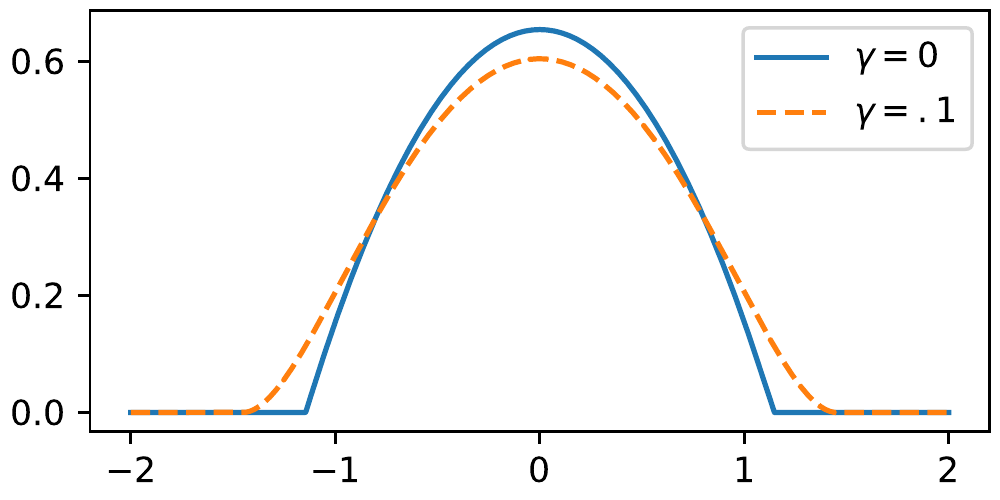}
\caption{\label{fig:sobolev}Distributions induced by regularization of the
derivative. Top: ROF regularization $\gamma\|p'\|_1$, bottom: squared
$L_2$ regularization $\gamma\|p'\|_2^2$.}
\end{figure}

\subsection{$L_2$ gradient penalty and smooth sparsemax}

In contrast to the previous section, we now consider a quadratic penalty on the
derivative,
\begin{equation}
\Omega_{2,2}(p) \coloneqq \frac{1}{2} \int_S |p(t)|^2 + \frac{\gamma}{2} \int_S (p'(t))^2 \,.
\end{equation}
The corresponding regularized prediction map is
\[
\hat{p}_{\Omega_{2,2}}[f] = \argmin_{p \in \mathcal{M}_+^1(S)}
\frac{1}{2}\int_S \left(p(t) - f(t)\right)^2 + \frac{\gamma}{2} \int_S
\left(p'(t)\right)^2\,.
\]
The quadratic regularization on the derivative of $p$ ensures the solution is
smooth. The following result shows how to derive the regularized prediction map.
\begin{proposition}[Form of $L_2$-smoothed solutions for unimodal scores.]
Assume $f$ is an even function, strictly decreasing and with continuous first
derivative on $(0, \infty)$.
The $\Omega_{2,2}$-regularized prediction map is a continuously differentiable function
\[
\hat{p}_{\Omega_{2,2}}[f](t) = \begin{cases}\bar{p}(t)
\coloneqq C \cosh\left(\frac{t}{\sqrt{\gamma}}\right) - \left(F(t) + F(-t)\right) - \tau,
& t \in [-b, b], \\
                    0, &      t \not\in [-b, b]\,,\end{cases}
\]
where
\[
F(t) \coloneqq \frac{\exp\left(\frac{t}{\sqrt{\gamma}}\right)}{2\sqrt{\gamma}}
\int f(t) \exp\left(-\frac{t}{\sqrt{\gamma}}\right) \mathrm{d}t\,,
\]
and $\tau, b,$ and $C$ are uniquely determined by continuity
at $b$ and the constraint $\int_\mathbb{S}p = 1$.
\end{proposition}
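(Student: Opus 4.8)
The plan is to recognize $\hat{p}_{\Omega_{2,2}}[f]$ as the solution of a convex free-boundary variational problem and to solve the associated Euler--Lagrange equation in closed form. First I would rewrite the prediction map in penalized--least-squares form: since $\mathbb{E}_p[f] - \Omega_{2,2}(p) = -\tfrac12\int_S (p-f)^2 - \tfrac{\gamma}{2}\int_S (p')^2 + \tfrac12\int_S f^2$ and the last term is constant in $p$, maximizing is the same as minimizing $J[p] := \tfrac12\int_S (p-f)^2 + \tfrac{\gamma}{2}\int_S (p')^2$ over the convex set of densities with finite $\Omega_{2,2}$ (those lying in the Sobolev space $H^1(\mathbb{R})$ with $p \ge 0$ and $\int_S p = 1$). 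The functional $J$ is strictly convex (the $L^2$ fidelity term already is), so by the direct method a minimizer exists and is unique; because $f$ is even and $J$ is invariant under $t \mapsto -t$, uniqueness forces the minimizer to be even, and I may restrict attention to even $p$ on the half-line $[0,\infty)$.

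Next I would derive the Euler--Lagrange equation on the interior of the support. Introducing a Lagrange multiplier $\mu$ for the normalization constraint and noting that the obstacle constraint $p \ge 0$ is inactive wherever $p > 0$, stationarity of $J[p] - \mu(\int_S p - 1)$ gives the linear ODE $\gamma p'' - p = -(f + \mu)$. Its homogeneous solutions are $\cosh(t/\sqrt{\gamma})$ and $\sinh(t/\sqrt{\gamma})$, and evenness discards the $\sinh$ term; the constant source $-\mu$ contributes the constant particular solution $\mu$, which I rename $-\tau$. For the $f$-dependent part I would use variation of parameters with the basis $e^{\pm t/\sqrt{\gamma}}$; a direct differentiation then confirms that $w(t) := -(F(t) + F(-t))$ satisfies $\gamma w'' - w = -f$, the point being that $F$ obeys $\sqrt{\gamma}\,F' = F + \tfrac12 f$, so that the $f'$-terms in $w''$ cancel precisely because $f$ is even. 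Summing these pieces reproduces exactly the claimed interior form $\bar{p}(t) = C\cosh(t/\sqrt{\gamma}) - (F(t)+F(-t)) - \tau$.

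It then remains to pin down $C$, $\tau$, and $b$. Since the energy controls $\int_S (p')^2$, minimizers lie in $H^1$ and are continuous, and obstacle-type regularity upgrades this to $p \in C^1$; as $p$ vanishes identically outside $[-b,b]$, the smooth-fit conditions $p(b) = 0$ and $p'(b) = 0$ must hold at the free boundary, while $\int_{-b}^{b} p = 1$ enforces normalization. These are three equations in the three unknowns $(C,\tau,b)$; the even ansatz already yields $p'(0) = 0$, consistent with $C^1$-smoothness at the center. Finally I would certify global optimality using convexity: the constructed $p$ satisfies the Karush--Kuhn--Tucker conditions, since stationarity holds on $(-b,b)$ while off the support the required inequality reduces to $f(t) \le \tau$, which follows from unimodality together with $\tau \ge f(b)$ (itself forced by $p''(b) \ge 0$); convexity then makes this candidate the unique global minimizer.

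The main obstacle I anticipate is the free-boundary analysis rather than the ODE computation: justifying rigorously that the solution is $C^1$ across $b$, and hence that the smooth-fit condition $p'(b)=0$ (not merely continuity of $p$) is the correct boundary condition, requires invoking regularity theory for this obstacle problem, and one must separately argue that the support is a single symmetric interval and that the resulting transcendental system for $(C,\tau,b)$ admits a unique solution. By contrast, the closed-form verification that $-(F(t)+F(-t))$ solves the ODE is a routine, if slightly delicate, calculation driven entirely by the evenness of $f$.
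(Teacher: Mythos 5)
Your proposal is correct and follows the same backbone as the paper's proof in Appendix~\ref{sec:sobolev_smooth_proof}: rewrite the prediction map as the penalized least-squares problem $\min \tfrac12\int(p-f)^2 + \tfrac{\gamma}{2}\int(p')^2$, use evenness and uniqueness to restrict to the half-line, derive the Euler--Lagrange ODE $p - \gamma p'' = f - \tau$ on the interior of the support, and solve it by variation of parameters with the basis $e^{\pm t/\sqrt{\gamma}}$, the evenness of $f$ collapsing the two particular-solution terms into $-(F(t)+F(-t))$ and killing the $\sinh$ mode. Where you diverge is in the treatment of the two constraints. The paper absorbs the non-negativity constraint by the squared-slack substitution $p = \tfrac12 z^2$, so that complementarity ($z(p-\gamma p''-f+\tau)=0$) falls out of the Euler--Lagrange equation for $z$; you instead invoke KKT complementary slackness directly and then close the argument by verifying the KKT conditions for the constructed candidate, using $\gamma p''(b^-)=\tau-f(b)\ge 0$ and the monotonicity of $f$ to get $f(t)\le\tau$ off the support. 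That sufficiency check is a genuine bonus: it simultaneously certifies global optimality and establishes that the support is the single interval $[-b,b]$, which the paper instead proves up front via a separate monotonicity lemma (the optimal $p$ is non-increasing on $(0,\infty)$ because the best non-decreasing $L^2$-approximant of a decreasing $f$ is constant). So the point you flag as an unresolved obstacle --- arguing the support is one symmetric interval --- is actually discharged by your own KKT argument once completed, and in that respect your route is slightly cleaner; the remaining loose ends in both proofs are the $C^1$ regularity across $b$ (asserted via smooth fit rather than derived from obstacle-problem regularity) and the solvability of the transcendental system for $(C,\tau,b)$, which the paper also leaves implicit.
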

The proof is given in Appendix~\ref{sec:sobolev_smooth_proof}.
Below, we demonstrate a few examples.
\paragraph{Smooth truncated parabola.}
Let $\beta=\gamma^{-\nicefrac{1}{2}}$.
For $f(t) = - \nicefrac{t^2}{2\sigma^2}$, computation yields
\[
\bar{p}(t) = C\cosh(\beta t) - \frac{t^2}{2\sigma^2} - \frac{1}{\beta^2
\sigma^2} - \tau\,.
\]
To solve for the unknown constants $\tau$ and $C$, we use the first and second
order conditions $\bar{p}(b)=0$ and $\bar{p}'(b)=0$, yielding, respectively,
\begin{equation}
\tau = C\cosh(\beta b) - \frac{b^2}{2\sigma^2} - \frac{1}{\beta^2 \sigma^2}
\qquad\text{and}\qquad
C = \frac{b}{\beta\sigma^2\sinh(\beta b)}\,.
\end{equation}
Finally, to find $b$ we use the condition $I(b) \coloneqq \int_{-b}^b \bar{p}(t) = 1$.
The integral has the closed-form expression
$I(b)
= \frac{2b}{\sigma^2\beta^2}
- \frac{2b^2 \coth(\beta b)}{\sigma^2\beta}
+ \frac{2b^3}{3\sigma^2}$,
and we can solve $I(b)=1$ using numerical root finding methods.
For example, the standard smooth parabola ($\sigma=1$, $\gamma=1$) yields the
equation $2b(1-b \coth(b) + b^2) = 1$, with root $b \approx 1.98$.
This density is illustrated in Figure~\ref{fig:sobolev}.%
\paragraph{Smooth triangular.}
For the triangular function $f(t) = -\nicefrac{|t|}{\sigma}$,
following the same steps as for the parabola, we obtain
\[
\bar{p}(t) = C\cosh(\beta t) - \frac{|t|}{\sigma} - \frac{e^{-\beta |t|}}{\beta
\sigma} - \tau\,,
\qquad \text{where} \qquad
\tau = C\cosh(\beta b) - \frac{b}{\sigma} - \frac{e^{-\beta b}}{\beta
\sigma}\,,
\]
and the integral equation to solve for $b$ is
\[
I(b) =
\frac{b^{2}}{\sigma} + \frac{2 b e^{- B \beta}}{\beta \sigma} - \frac{4 b
\cosh{\left(b \beta \right)}}{\beta \sigma \left(e^{b \beta} + 1\right)} -
\frac{2}{\beta^{2} \sigma} + \frac{2 e^{- b \beta}}{\beta^{2} \sigma} + \frac{4
\sinh{\left(b \beta \right)}}{\beta^{2} \sigma \left(e^{b \beta} + 1\right)}.
\]
The standard smooth triangular is illustrated in
Figure~\ref{fig:sobolev}.%

\section{Continuous Attention Mechanisms}\label{sec:attention}

We now use some of the results obtained in the previous sections to develop attention mechanisms on continuous spaces. We assume in this section $S = \mathbb{R}^N$.

Attention mechanisms have become a key component of  neural networks %
\citep{bahdanau2014neural,sukhbaatar2015end,vaswani2017attention}. They dynamically detect and extract relevant input features (such as words in a text or regions of an image). So far, attention has only been applied to discrete domains; we use our framework to generalize it to {\it continuous} spaces.

\paragraph{Discrete attention.}
Assume an input object split in $L=|S|$ pieces, \textit{e.g.}, a sequence with $L$ elements  or an image with $L$ regions.
A vanilla attention mechanism works as follows: each piece has as a $D$-dimensional representation (\textit{e.g.}, coming from an RNN or a CNN), yielding a matrix $V \in \mathbb{R}^{D\times L}$. %
These representations are compared against a query vector (\textit{e.g.}, by using an additive model, \citealt{bahdanau2014neural}), leading to a score vector $f = [f_1, \ldots, f_L] \in \mathbb{R}^L$.
Intuitively, the relevant pieces that need attention should be assigned high scores. Then, a transformation $\rho : \mathbb{R}^L \rightarrow \triangle^{L}$ (\textit{e.g.}, softmax or sparsemax) is applied to the score vector to produce a probability vector $p = \rho(f)$.
We may see this as an $\Omega$-regularized prediction map, as shown in \S\ref{sec:rpm}. The probability vector $p$ is then used to compute a weighted average of the input representations, via $c = Vp \in \mathbb{R}^D$. This context vector $c$ is finally used to produce the network's decision.

\subsection{The continuous case: scoring and value functions}
\label{subsec:continuous_attention}

The extension of $\Omega$-regularized prediction maps to arbitrary domains in Definition~\ref{def:regularized_prediction} opens the door for constructing {\bf continuous attention mechanisms}. The idea is simple: instead of splitting the input object into a finite set of pieces, we assume an underlying continuous domain: \textit{e.g.}, text or a speech signal may be represented as a function $V:S \rightarrow \mathbb{R}^{D}$ that maps points in the real line ($S \subseteq \mathbb{R}$, continuous time) onto a $D$-dimensional vector representation, representing how the signal evolves over time; images (visual scenes) may be regarded as a smooth function in 2D ($S \subseteq \mathbb{R}^2$), instead of being split into regions in a grid.

Instead of scores $[f_1, \ldots, f_L]$, we now have a {\bf scoring function} $f: S  \rightarrow
\mathbb{R}$, which we map to a probability density $p \in \mathcal{M}_+^1(S)$.
This density is used in tandem with the  value mapping $V: S \rightarrow \mathbb{R}^D$ to obtain a context vector $c = \mathbb{E}_{p} [V(t)]  \in \mathbb{R}^D$.
This is illustrated in Figure~\ref{fig:attention}.
Since $\mathcal{M}_+^1(S)$ may be infinite dimensional, we need to parametrize  $f$, $p$, and $V$ to be able to compute in a finite-dimensional parametric space.

\begin{figure}[t]
\begin{center}
\includegraphics[width=0.4\textwidth]{figures/score_vector}
\,\,
\includegraphics[width=0.4\textwidth]{figures/score_function}\\
\includegraphics[width=0.4\textwidth]{figures/prob_vector}
\,\,
\includegraphics[width=0.4\textwidth]{figures/prob_density}\\
\includegraphics[width=0.4\textwidth]{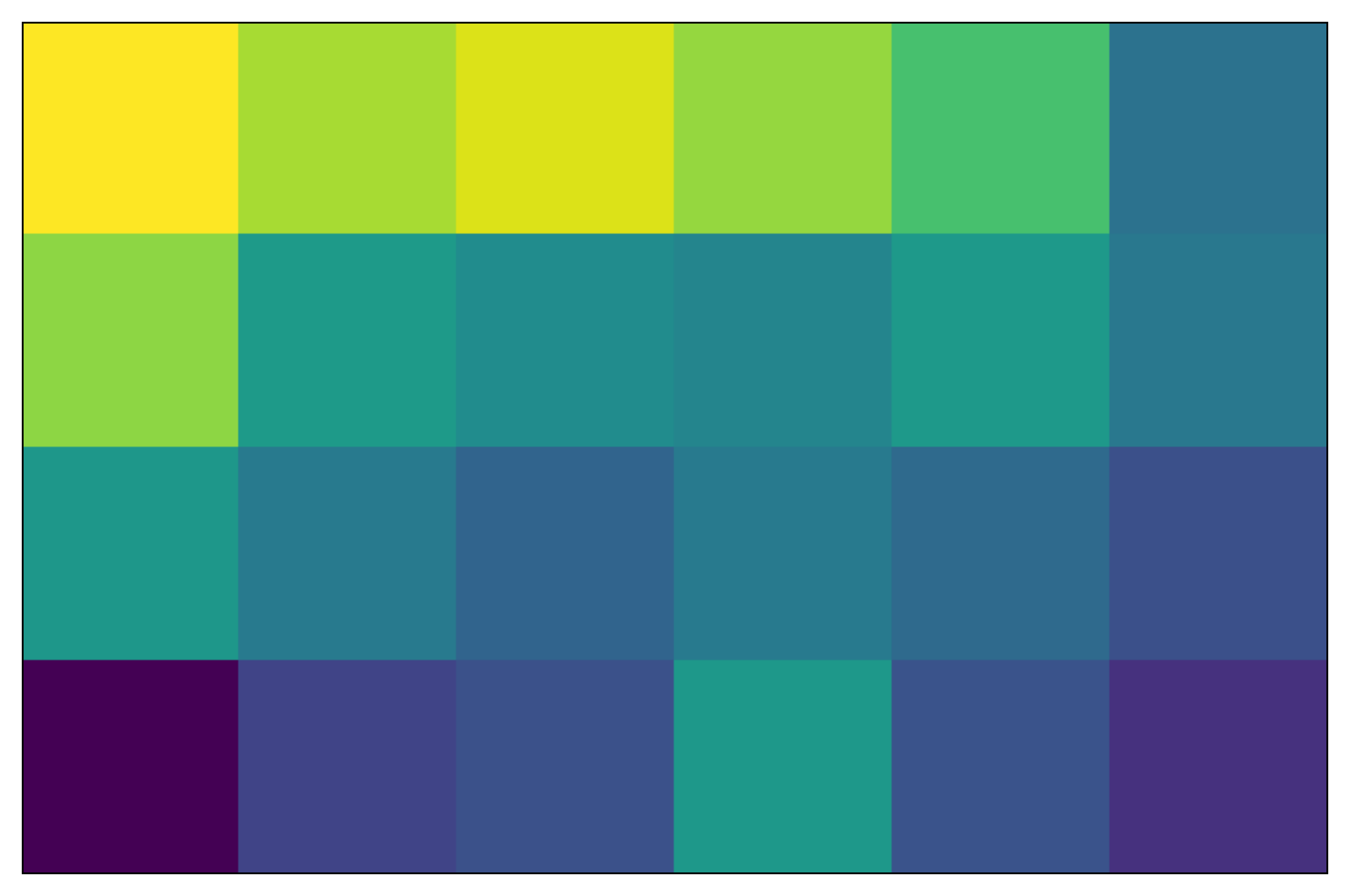}
\,\,
\includegraphics[width=0.4\textwidth]{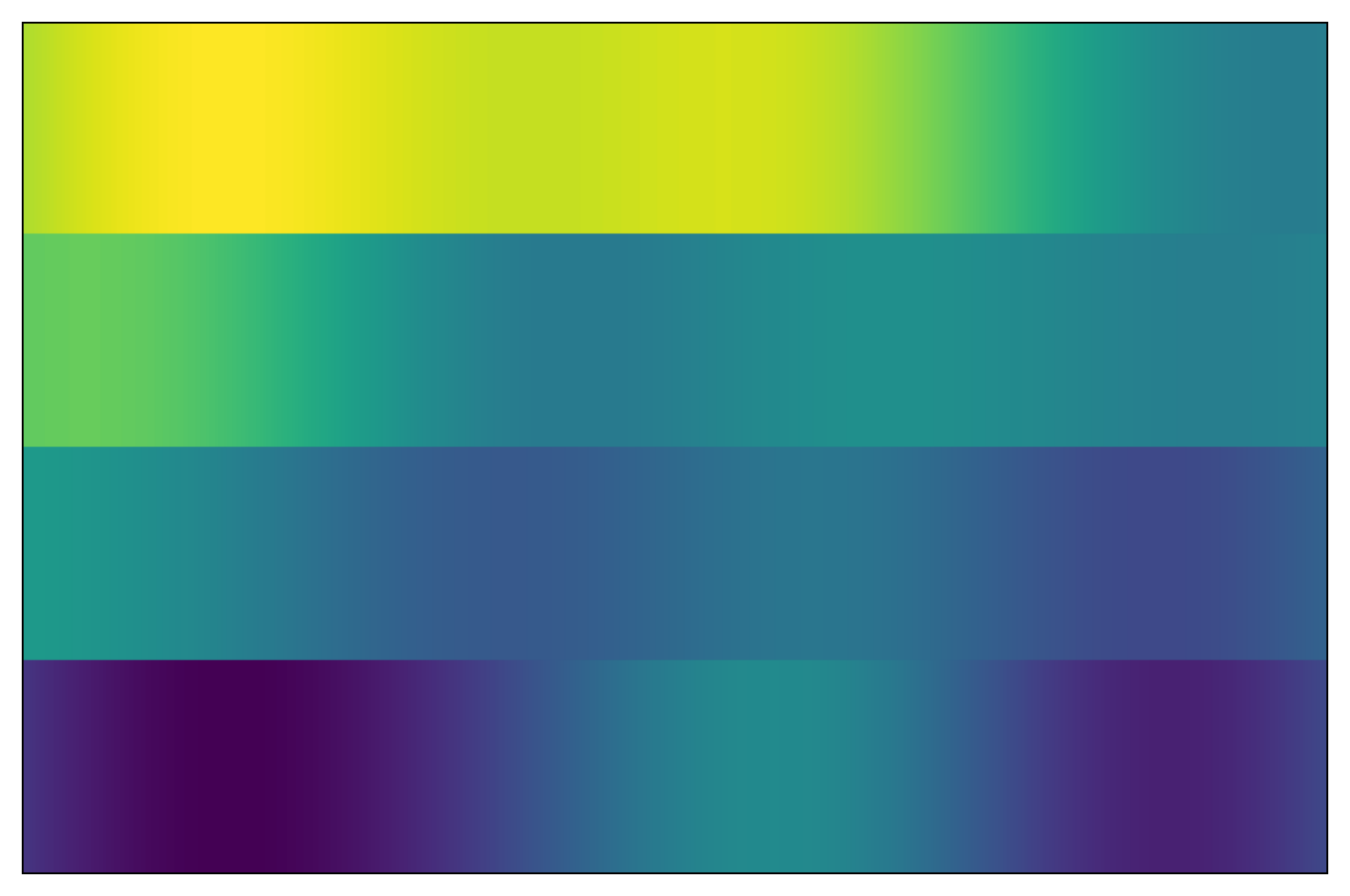}
\caption{\textbf{From discrete to continuous attention.} Left: discrete
attention maps a score vector into a probability mass function ({\it e.g.} via a
softmax transformation) and returns a weighted average of the columns of a value
matrix. Right: continuous attention (i) replaces the score vector by a scoring
function (shown in the top right; an arbitrary scoring function is used in this figure for illustration), (ii) uses a continuous $\Omega$-regularized prediction map to map it to a probability density (middle right), and (iii)
returns an expectation over a value function (the value function is shown in the bottom right). In both cases, the number of rows
in the matrix corresponds to the dimensionality $D$. \label{fig:attention}}
\end{center}
\end{figure}

\paragraph{Building attention mechanisms.}
We represent $f$ and $V$ using basis functions,
$\phi: S \rightarrow \mathbb{R}^M$ and $\psi: S \rightarrow \mathbb{R}^N$,
defining $f_\theta(t) = \theta^\top \phi(t)$
and $V_B(t) = B\psi(t)$,
where $\theta \in \mathbb{R}^M$ and $B \in \mathbb{R}^{D \times N}$. %
The scoring function $f_\theta$ is mapped into a probability density $p := \hat{p}_{\Omega}[f_\theta]$,
from which we compute the context vector as $c=\mathbb{E}_{p}[V_B(t)]$.
From the definition of $V_B(t)$, this is equivalent to writing $c = Br$,
where $r =\mathbb{E}_{p}[\psi(t)]$.
Summing up, we define general attention mechanisms as follows.
\begin{definition}[Attention mechanism.]\label{def:attention_mechanism}
Let $\langle S, \Omega, \phi, \psi \rangle$ be a tuple with  $\Omega: \mathcal{M}_+^1(S) \rightarrow \mathbb{R}$, $\phi:S\rightarrow \mathbb{R}^M$, and $\psi:S\rightarrow \mathbb{R}^N$.
An {attention mechanism} on $\langle S, \Omega, \phi, \psi \rangle$ is a mapping $\rho: \Theta \subseteq \mathbb{R}^M \rightarrow \mathbb{R}^N$, defined as:
\begin{equation}\label{eq:attention_expectation}
    \rho(\theta) = \mathbb{E}_{p}[\psi(t)],
\end{equation}
with $p = \hat{p}_\Omega[f_\theta]$ and $f_\theta(t) = \theta^\top \phi(t)$.
If $\Omega = \Omega_\alpha$, we  call this \textbf{entmax} attention, denoted as $\rho_\alpha$. The values $\alpha=1$ and $\alpha=2$ lead to \textbf{softmax} and \textbf{sparsemax} attention, respectively.
\end{definition}

\paragraph{Example: Finite attention.}
By plugging $S = \{1, ..., L\}$ and $\phi(k) = \psi(k) = e_k$ (Euclidean canonical basis) in our Definition~\ref{def:attention_mechanism}, we recover the discrete attention of  \citet{bahdanau2014neural}.
Still in the finite case, if $\phi(k)$ and $\psi(k)$ are key and value vectors and $\theta$ is a query vector, this recovers the key-value attention of \citet{vaswani2017attention}.

\paragraph{Example: Continuous attention with quadratic scoring function.}
On the other hand, for $S = \mathbb{R}^D$ and $\phi(t) = [t, \mathrm{vec}(tt^\top)]$ -- which leads to a quadratic scoring function $f_\theta(t) = \theta^\top \phi(t)$ -- we obtain new attention mechanisms (assessed experimentally for the 1-d and 2-d cases
in \S\ref{sec:experiments}): for $\alpha=1$, the underlying density $p$ is a \textbf{Gaussian}, and for $\alpha=2$, it is a \textbf{truncated paraboloid} (see Table~\ref{tab:distributions} and \S\ref{sec:sparsemax}). Intermediate cases encompass the \textbf{biweight} ($\alpha=1.5$) and \textbf{triweight} ($\alpha=4/3$) cases, part of the elliptical family described in \S\ref{sec:elliptical}.
In all these cases, we show (Appendix~\ref{sec:gaussian_basis}) that the expectation \eqref{eq:attention_expectation} is tractable (1-d) or simple to approximate numerically (2-d) if
$\psi$ are Gaussian RBFs, and we use this fact in \S\ref{sec:experiments}.
Algorithm~\ref{algo:forward_backward_gaussian} shows pseudo-code for the case $\alpha=1$.

\begin{algorithm}[t]
\small
\SetAlgoLined
\SetKwInput{KwInput}{Parameters}
\SetKwFunction{FRegression}{Regression}
\SetKwFunction{FForward}{Forward}
\SetKwFunction{FBackward}{Backward}
\def\algspace{.5\baselineskip}
\SetKwProg{Fn}{Function}{:}{}
\KwInput{Gaussian RBFs $\psi(t) = [\mathcal{N}(t; \mu_j, \Sigma_j)]_{j=1}^N$, basis functions $\phi(t) = [t, \mathrm{vec}(tt^\top)]$, value function $V_B(t) = B\psi(t)$ with $B \in \mathbb{R}^{D \times N}$, scoring function $f_\theta(t) = \theta^\top \phi(t)$ with $\theta \in \mathbb{R}^M$}
\vspace{\algspace}
\Fn{\FForward{$\theta := [\Sigma^{-1}\mu, -\frac{1}{2}\Sigma^{-1}]$}}{
    $r_j \leftarrow \mathbb{E}_{\hat{p}_\Omega[f_\theta]}[\psi_j(t)] = \mathcal{N}(\mu, \mu_j, \Sigma+\Sigma_j), \quad \forall j \in [N]$\hfill%
    \tcp*[h]{Eqs.\,\eqref{eq:attention_expectation}, \eqref{eq:continuous_softmax_forward_pass}}\\%\text{where $p = \hat{p}_\Omega[f_\theta]$}$ \\
    \KwRet{$c \leftarrow Br$ (context vector)}%
}
\vspace{\algspace}
\vspace{\algspace}
\Fn{\FBackward{$\frac{\partial \mathcal{L}}{\partial c}, \theta := [\Sigma^{-1}\mu, -\frac{1}{2}\Sigma^{-1}]$}}{
    \For{$j\gets1$ \KwTo $N$}{
        $\tilde{s} \leftarrow \mathcal{N}(\mu, \mu_j, \Sigma+\Sigma_j), \,\,
        \tilde{\Sigma} \leftarrow (\Sigma^{-1}+\Sigma_j^{-1})^{-1}, \,\,
        \tilde{\mu} \leftarrow \tilde{\Sigma}(\Sigma^{-1}\mu + \Sigma_j^{-1}\mu_j)$\\
        $\frac{\partial r_j}{\partial \theta} \leftarrow \mathrm{cov}_{\hat{p}_\Omega[f_\theta]}(\phi(t), \psi_j(t)) = [\tilde{s}(\tilde{\mu} - \mu); \tilde{s}(\tilde{\Sigma} + \tilde{\mu}\tilde{\mu}^\top - \Sigma - \mu\mu^\top)]$\hfill \tcp*[h]{\eqref{eq:jacob},~\eqref{eq:continuous_softmax_backward_pass_01}-\eqref{eq:continuous_softmax_backward_pass_02}}\\
    }
    \KwRet{$\frac{\partial \mathcal{L}}{\partial \theta} \leftarrow \left(\frac{\partial r}{\partial \theta}\right)^\top B^\top \frac{\partial \mathcal{L}}{\partial c}$}
}

\caption{Continuous softmax attention: $S=\mathbb{R}^D$, $\Omega=\Omega_1$,  Gaussian RBFs.\label{algo:forward_backward_gaussian}}
\end{algorithm}

\paragraph{Defining the value function $V_B(t)$.}
In many problems, the input is a discrete sequence of observations
(\textit{e.g.}, audio samples or text) or it was discretized (\textit{e.g.},
visual scenes), at locations $\{t_\ell\}_{\ell=1}^L$. To turn such an input into a continuous signal, we need to smooth and interpolate these observations. If we start with a  discrete encoder representing the input as a matrix $H \in \mathbb{R}^{D \times L}$, one way of obtaining a value mapping $V_B: S \rightarrow \mathbb{R}^D$ is by ``approximating'' $H$ with {\it multivariate ridge regression}. With  $V_B(t) = B \psi(t)$,
where $B \in \mathbb{R}^{D \times N}$,
and packing the basis vectors $\psi(t_\ell)$ as columns of matrix $F \in \mathbb{R}^{N\times L}$, we obtain: \begin{equation}\label{eq:B_regression}
    B^\star \,\,=\,\, \arg\min_B \|BF - H\|_F^2 + \lambda \|B\|_F^2 \,\,=\,\,
HF^\top (FF^\top + \lambda \mathrm{Id}_N)^{-1} \,\,=\,\, HG,
\end{equation}
where $\|\cdot\|_F$ is the Frobenius norm, and
the $L\times N$ matrix $G = F^\top (FF^\top + \lambda \mathrm{Id}_N)^{-1}$ depends only on the values of the basis functions at discrete time steps and can be obtained off-line for different input lenghts $L$.
The result is an expression for $V_B$ with $ND$ coefficients, cheaper than $H$ if $N \ll L$. %

\subsection{Gradient backpropagation with continuous attention}\label{sec:jacobian}

The next proposition, based on Proposition~\ref{prop:gradient_A} and proved in Appendix~\ref{sec:proof_jacobian_entmax}, allows backpropagating over continuous entmax attention mechanisms.
It uses the definition of {\it generalized $\beta$-covariance} presented in \eqref{eq:beta_covariance} and the proof is similar to that of Proposition~\ref{prop:gradient_hessian_fy}.
\begin{proposition}[Jacobian expression]
\label{prop:jacobian_entmax}
Let $p = \hat{p}_{\Omega_\alpha}[f_\theta]$ with $f_\theta(t) = \theta^\top \phi(t)$.
The Jacobian of the $\alpha$-entmax transformation $\rho_\alpha$ \eqref{eq:attention_expectation} is:
    \begin{equation}\label{eq:jacob}
        J_{\rho_\alpha}(\theta) = \frac{\partial \rho_\alpha(\theta)}{\partial \theta} = \mathrm{cov}_{p, 2-\alpha}(\phi(t), \psi(t)).
    \end{equation}
\end{proposition}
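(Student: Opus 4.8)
The plan is to differentiate the attention output $\rho_\alpha(\theta) = \int_S p_\theta(t)\,\psi(t)\,d\nu(t)$ under the integral sign and to recognize the result as a generalized covariance. First I would write the density in the explicit $(2-\alpha)$-exponential form furnished by Proposition~\ref{prop:solution_rpm_tsallis}. Setting $\beta := 2-\alpha$ (so that $1-\beta = \alpha-1$) and assuming $\alpha \ne 1$, equation~\eqref{eq:entmax} together with \eqref{eq:beta_log_exp} gives, on the interior of the support,
\[
p_\theta(t) = \bigl[1 + (\alpha-1)\bigl(\theta^\top\phi(t) - A_\alpha(\theta)\bigr)\bigr]^{\frac{1}{\alpha-1}}.
\]
The case $\alpha=1$ is handled separately: there $p_\theta$ is the Gibbs density \eqref{eq:boltzmann} and the claim reduces to the standard exponential-family moment computation, with the $\beta$-escort collapsing to $p_\theta$ itself.

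Next I would compute $\partial_\theta p_\theta(t)$ by the chain rule. Writing $u(t) := 1+(\alpha-1)(\theta^\top\phi(t)-A_\alpha(\theta))$, so that $p_\theta(t) = u(t)^{1/(\alpha-1)}$ and hence $u(t) = p_\theta(t)^{\alpha-1}$, differentiation together with the identity $u^{(2-\alpha)/(\alpha-1)} = p_\theta(t)^{2-\alpha}$ collapses the exponents neatly to
\[
\frac{\partial p_\theta(t)}{\partial \theta} = p_\theta(t)^{\beta}\bigl(\phi(t) - \nabla_\theta A_\alpha(\theta)\bigr).
\]
The gradient $\nabla_\theta A_\alpha(\theta)$ is supplied by Proposition~\ref{prop:gradient_A} as the $\beta$-escort expectation $\mathbb{E}_{\tilde p_\theta^\beta}[\phi(t)]$, cf.\ \eqref{eq:derivative_of_partition}.

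I would then substitute this into $J_{\rho_\alpha}(\theta) = \int_S \partial_\theta p_\theta(t)\,\psi(t)^\top\,d\nu(t)$, use $p_\theta(t)^\beta = \|p_\theta\|_\beta^\beta\,\tilde p_\theta^\beta(t)$ from the escort definition \eqref{eq:escort}, and split the integral into the two terms $\|p_\theta\|_\beta^\beta\,\mathbb{E}_{\tilde p_\theta^\beta}[\phi(t)\psi(t)^\top]$ and $\|p_\theta\|_\beta^\beta\,\mathbb{E}_{\tilde p_\theta^\beta}[\phi(t)]\,\mathbb{E}_{\tilde p_\theta^\beta}[\psi(t)]^\top$. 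Matching against the definition \eqref{eq:beta_covariance} identifies their difference exactly as $\mathrm{cov}_{p,2-\alpha}[\phi(t),\psi(t)]$, which is the claim. This mirrors the computation behind Proposition~\ref{prop:gradient_hessian_fy}, the only change being that $\psi$ replaces the second copy of $\phi$.

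The main obstacle is rigorously justifying the interchange of differentiation and integration, since the support of $p_\theta$ moves with $\theta$ and the integrand is only piecewise smooth: the $(2-\alpha)$-exponential has a kink at the support boundary. The key observation that rescues the calculation is that for $\alpha>1$ the density $p_\theta$ vanishes continuously at $\partial\,\mathrm{supp}(p_\theta)$, so the Leibniz boundary term generated by the moving support is zero and only the interior contribution survives. A dominated-convergence argument, under mild integrability of $\phi$ and $\psi$ against the $\beta$-escort measure, then legitimizes differentiating under the integral sign and completes the proof.
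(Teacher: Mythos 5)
Your proposal is correct and follows essentially the same route as the paper's proof: differentiate under the integral sign, use the chain rule through the deformed exponential to get $\nabla_\theta p_\theta(t) = p_\theta(t)^{2-\alpha}\bigl(\phi(t) - \nabla_\theta A_\alpha(\theta)\bigr)$, invoke Proposition~\ref{prop:gradient_A} for $\nabla_\theta A_\alpha(\theta)$, and match the result against the definition \eqref{eq:beta_covariance} of the generalized covariance. The only difference is that you are more careful than the paper about justifying the interchange of differentiation and integration when the support moves with $\theta$ (a worthwhile refinement, and your observation that the vanishing of $p_\theta$ at the support boundary kills the Leibniz boundary term is the right reason it works).
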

In the finite case, \eqref{eq:jacob} reduces to the expressions for the Jacobian of softmax and sparsemax derived by \citet{Martins2016ICML}:
\begin{equation}\label{eq:jacobians_discrete}
J_{\mathrm{softmax}}(f) = \mathrm{Diag}(p) - pp^\top, \qquad
J_{\mathrm{sparsemax}}(f) = \mathrm{Diag}(s) - ss^\top/(1^\top s),
\end{equation}
where $p=\mathrm{softmax}(f)$, and $s$ is a binary vector whose $\ell$\textsuperscript{th} entry is 1 iff $\ell \in \mathrm{supp}(\mathrm{sparsemax}(f))$. %

\paragraph{Example: Gaussian RBFs.}
As before, let $S = \mathbb{R}^D$, $\phi(t) = [t, \mathrm{vec}(tt^\top)]$, and $\psi_j(t) = \mathcal{N}(t; \mu_j, \Sigma_j)$.
For $\alpha=1$, we obtain closed-form expressions for the expectation \eqref{eq:attention_expectation} and the Jacobian \eqref{eq:jacob}, for any $D \in \mathbb{N}$:
$\hat{p}_\Omega[f_\theta]$ is a Gaussian, the expectation
\eqref{eq:attention_expectation} is the integral of a product of Gaussians, and the covariance \eqref{eq:jacob} involves first- and second-order Gaussian moments. Pseudo-code for the case $\alpha=1$ is shown as Algorithm~\ref{algo:forward_backward_gaussian}.
For $\alpha=2$, $\hat{p}_\Omega[f_\theta]$ is a truncated paraboloid.
In the 1-d case, both \eqref{eq:attention_expectation} and \eqref{eq:jacob} can be expressed in closed form in terms of the $\mathrm{erf}$ function.
The same holds more generally if $\alpha$ is of the form $\alpha = \frac{n+1}{n}$ for $n
\in \mathbb{N}$, which includes the biweight and triweight attention cases.%
\footnote{This is shown in Appendix~\ref{sec:gaussian_basis} by making use of closed-form expressions for $\int t^{n} \mathcal{N}(t; 0, 1) dt$ for $n \in \mathbb{N}$.} %
In the 2-d case, we can reduce the problem to 1-d integration by using the change of variables formula and working with polar coordinates. Appendix~\ref{sec:gaussian_basis} derives concrete expressions.

We use the facts above in the experimental section (\S\ref{sec:experiments}), where we experiment with $\beta$-Gaussian attention in audio classification and vision applications.

\section{Experiments}\label{sec:experiments}

We illustrate the usefulness of the theoretical results developed in the previous sections by running experiments with continuous attention mechanisms with several choices of $\beta$-Gaussian densities (\S\ref{sec:exp_continuous_attention}), and on heteroscedastic regression with continuous Fenchel-Young losses (\S\ref{sec:exp_fy_losses}).

\subsection{Continuous attention mechanisms}\label{sec:exp_continuous_attention}

We test our continuous attention mechanisms on two tasks: audio classification (1-d) and visual question answering (2-d).%
\footnote{All dataset statistics, architecture details, and hyperparameters are described in Appendix~\ref{sec:model_hyperparams}.}

\paragraph{1-d: Audio classification.} We use the UrbanSound8k dataset,\footnote{\url{https://urbansounddataset.weebly.com/}} whose inputs are short urban sound excerpts ($\leq 4s$) from 10 classes: \texttt{air conditioner}, \texttt{car horn}, \texttt{children playing}, \texttt{dog bark}, \texttt{drilling}, \texttt{engine idling}, \texttt{gun shot}, \texttt{jackhammer}, \texttt{siren}, and \texttt{street music}.
We use a 16kHz sampling rate for all audios.
We transform the input signal into a sequence of vectors using short-time Fourier transform with 400 points, a window size of 25ms, and a hop size of 10ms. After this transformation, we extract 80 Mel-frequency filter banks by applying equally-spaced triangular filters.
Our baseline is a model with a single convolutional 1-d layer followed by a discrete attention mechanism and an output layer.
For our continuous attention models, we normalize the input signal length $L$ into the unit interval $[0,1]$, and use $f(t)\! =\! \sfrac{-(t-\mu)^2}{2\sigma^2}$ as the score function. Continuous attention models obtain $p \in \triangle^L$ from discrete attention, compute $\mu = \mathbb{E}_{p}[\ell/L]$ and $\sigma^2 = \mathbb{E}_{p}[(\ell/L)^2] - \mu^2$, apply the continuous attention transformation, and sum the two context vectors (this model has the same number of parameters as the discrete attention baseline).

Since the dataset is officially split into 10 folds, we perform 10-fold cross-validation to evaluate our models.
Table~\ref{table:results_audio_classification}
shows accuracies for different values of $\alpha$ and the standard deviation across folds.
The models with continuous attention perform better than the baselines, suggesting that adding a continuous mechanism improves its discrete counterpart without increasing the number of parameters.
There is no clear winner among the different choices of $\alpha$, with all models performing similar. However, we notice that sparser choices ($\alpha > 1$) lead to more interpretable predictions, as shown in Figure~\ref{fig:examples_audio_classification}.

\begin{table}[t]
    \caption{Results on UrbanSound8k in terms of accuracy.
    For continuous attention, we used $128$ Gaussian RBFs
$\mathcal{N}(t, \tilde{\mu}, \tilde{\sigma}^2)$, with $\tilde{\mu}$ linearly spaced in $[0,1]$ and $\tilde{\sigma} \in \{.1, .5\}$.
    }
    \label{table:results_audio_classification}
    \vspace{-0.1cm}
    \begin{small}
    \begin{center}
    \begin{tabular}{lcccc}
        \toprule
        \sc Attention & $\alpha=1.0$ & $\alpha=4/3$ & $\alpha=1.5$ & $\alpha=2.0$ \\
        \midrule
        Discrete        & 0.5967 \textcolor{gray}{± 0.06}   & 0.5946 \textcolor{gray}{± 0.07}   & 0.6032 \textcolor{gray}{± 0.05}    & 0.5903 \textcolor{gray}{± 0.05} \\
        Continuous      & 0.6229 \textcolor{gray}{± 0.06}   & \bf 0.6280 \textcolor{gray}{± 0.06}   & 0.6171 \textcolor{gray}{± 0.05}    & 0.6247 \textcolor{gray}{± 0.06} \\
        \bottomrule
    \end{tabular}
    \end{center}
    \end{small}
\end{table}

\begin{figure*}[t]
\centering

\includegraphics[width=0.49\textwidth]{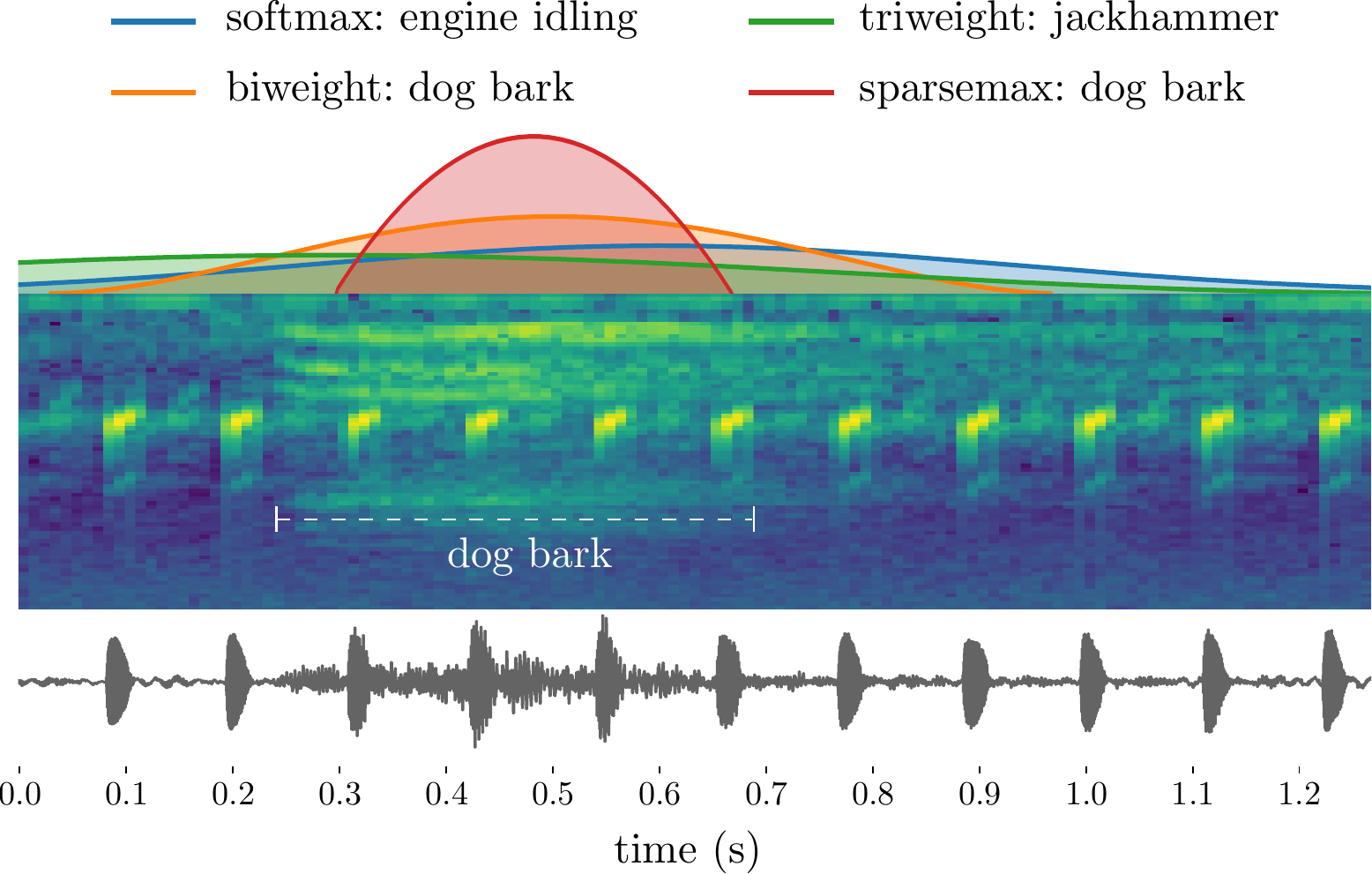}
\includegraphics[width=0.49\textwidth]{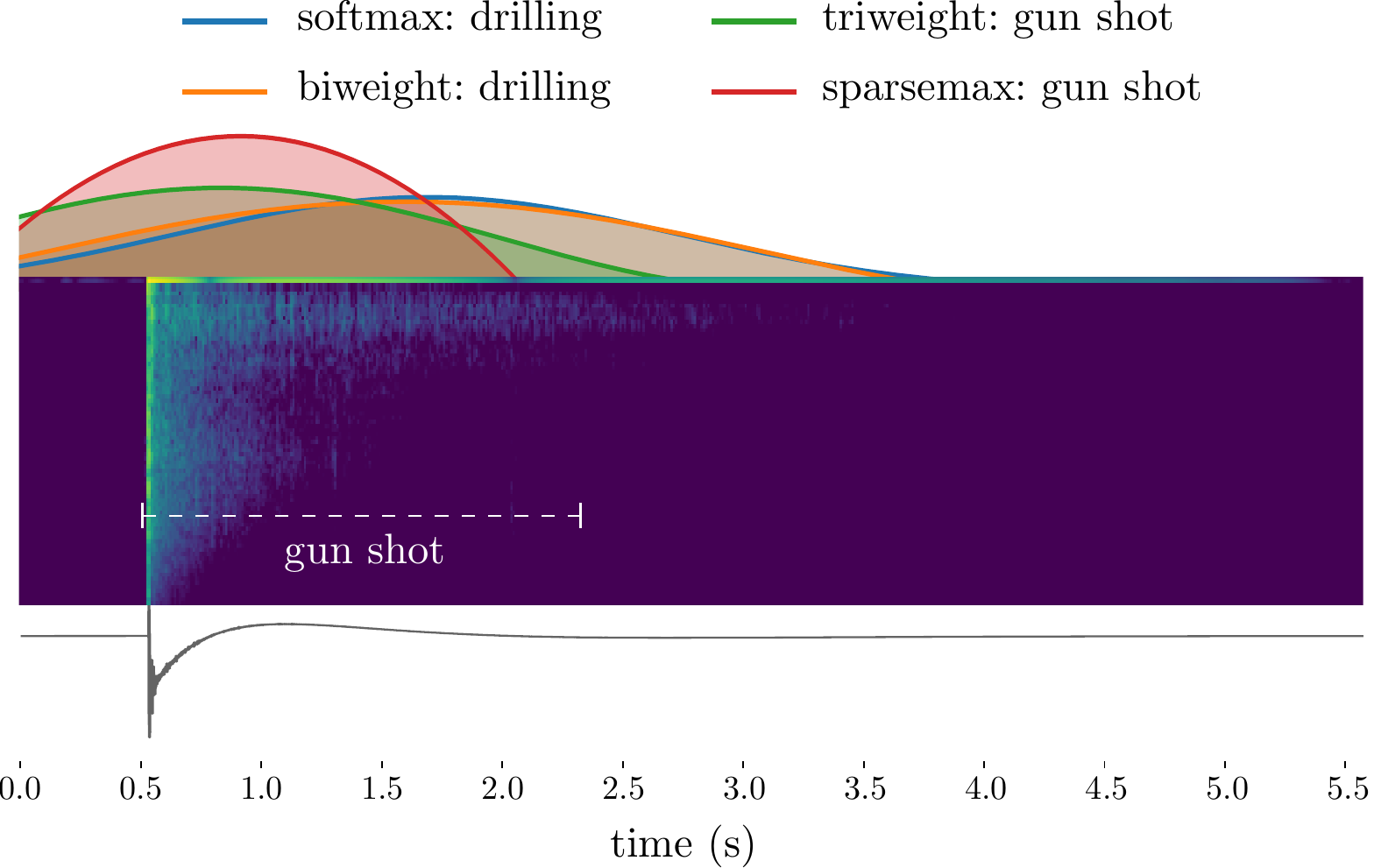}

\caption{\label{fig:examples_audio_classification}Attention densities and predictions made by models with different values of $\alpha$ on two examples from UrbanSound8k. The spectrogram and the waveform on the left represent an audio of a dog barking (around 0.2-0.7s) along with a constant background noise made by a buzzer (every $\sim$0.1s). On the right we have an example of a gun being fired (around  0.5-2.4s), showing a clear energy distinction with the silent background.}
\end{figure*}

\paragraph{2-d: Visual QA.}
We report experiments with 2-d continuous attention on visual question answering, using the VQA-v2 dataset \citep{Goyal2019} and a modular co-attention network as a baseline \citep{Yu2019}.
The discrete attention model attends over a  14$\times$14 grid.
For continuous attention, we normalize the image size into the unit square
$[0,1]^2$. We fit a 2-d Gaussian ($\alpha=1$) or truncated paraboloid
($\alpha=2$) as the attention density; both correspond  to
$f(t)=-\frac{1}{2}(t-\mu)^\top\Sigma^{-1}(t-\mu)$, with $\Sigma \succ 0$. We use
the mean and variance according to the discrete attention probabilities and
obtain $\mu$ and $\Sigma$ with moment matching (using the variance formula from
Proposition~\ref{prop:mean_var_entropy_beta_gaussians}). We use $N = 100 \ll
14^2$ Gaussian RBFs, with $\tilde{\mu}$ linearly spaced in  $[0,1]^2$ and
$\tilde{\Sigma}=0.001\cdot \mathrm{Id}$. Overall, the number of neural network parameters is the same as in discrete attention.

The results in Table~\ref{table:results_vqa} show similar accuracies for all attention models, with a slight advantage for continuous softmax. Figure~\ref{fig:examples_vqa} shows two examples (see Appendix~\ref{sec:model_hyperparams} for more examples and some failure cases): in both examples, the discrete attention is too scattered, possibly mistaking the lamp with a TV screen in the first example.  The continuous attention models focus on the right region and answer the questions correctly, with continuous sparsemax enclosing all the relevant information in its supporting ellipse.

\begin{figure*}[t]
\centering
\includegraphics[width=0.24\textwidth]{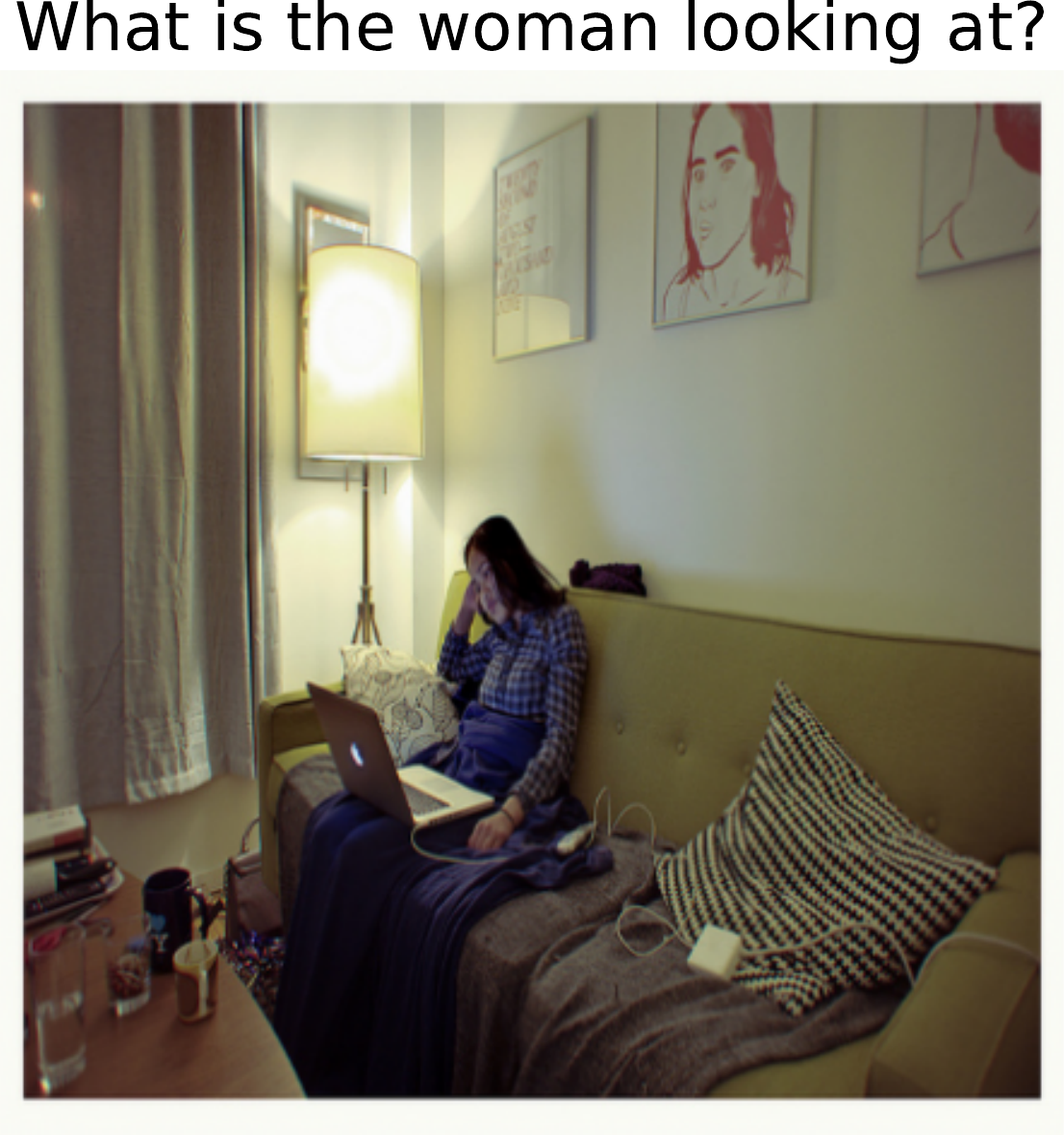}
\includegraphics[width=0.24\textwidth]{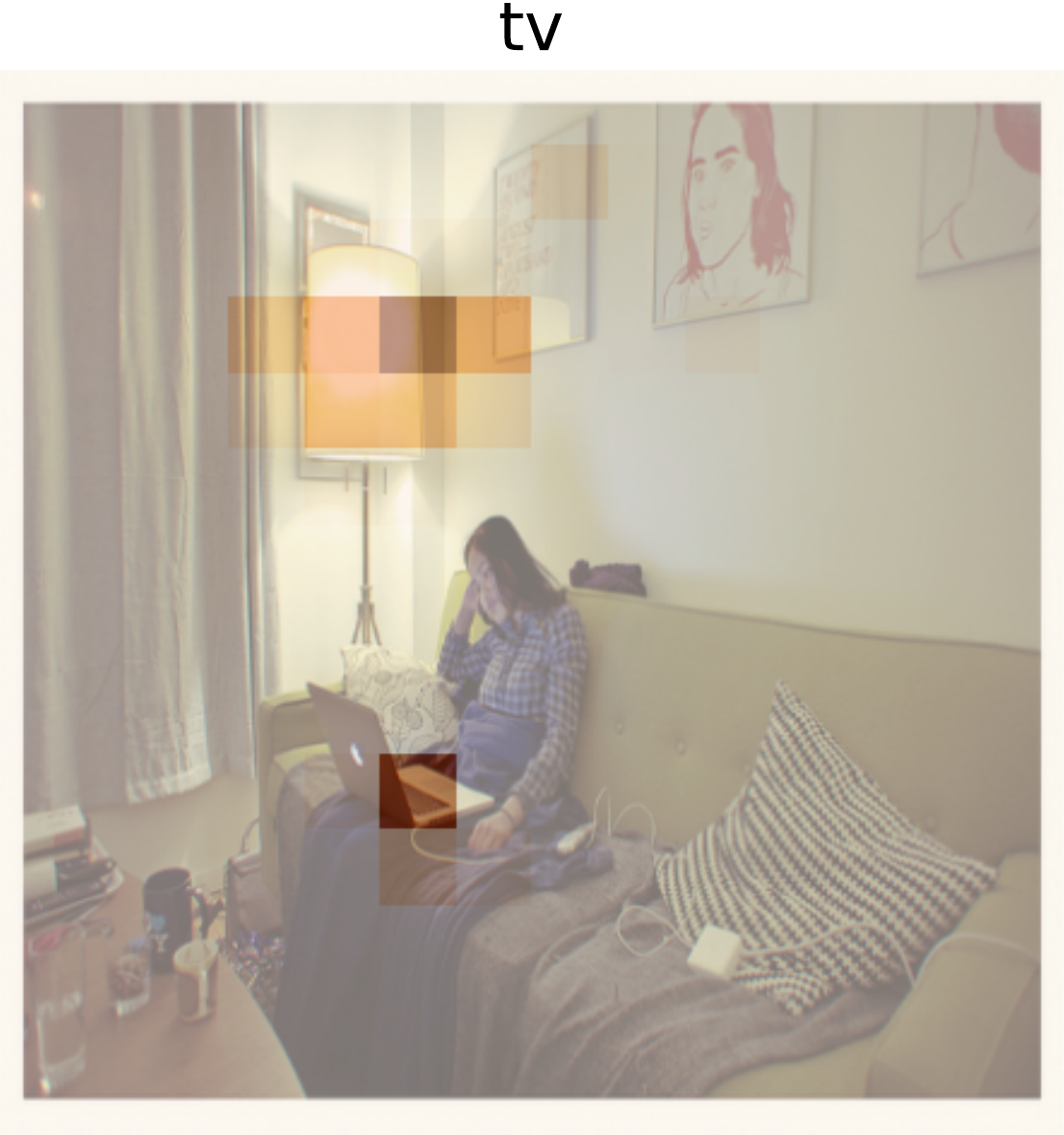}
\includegraphics[width=0.24\textwidth]{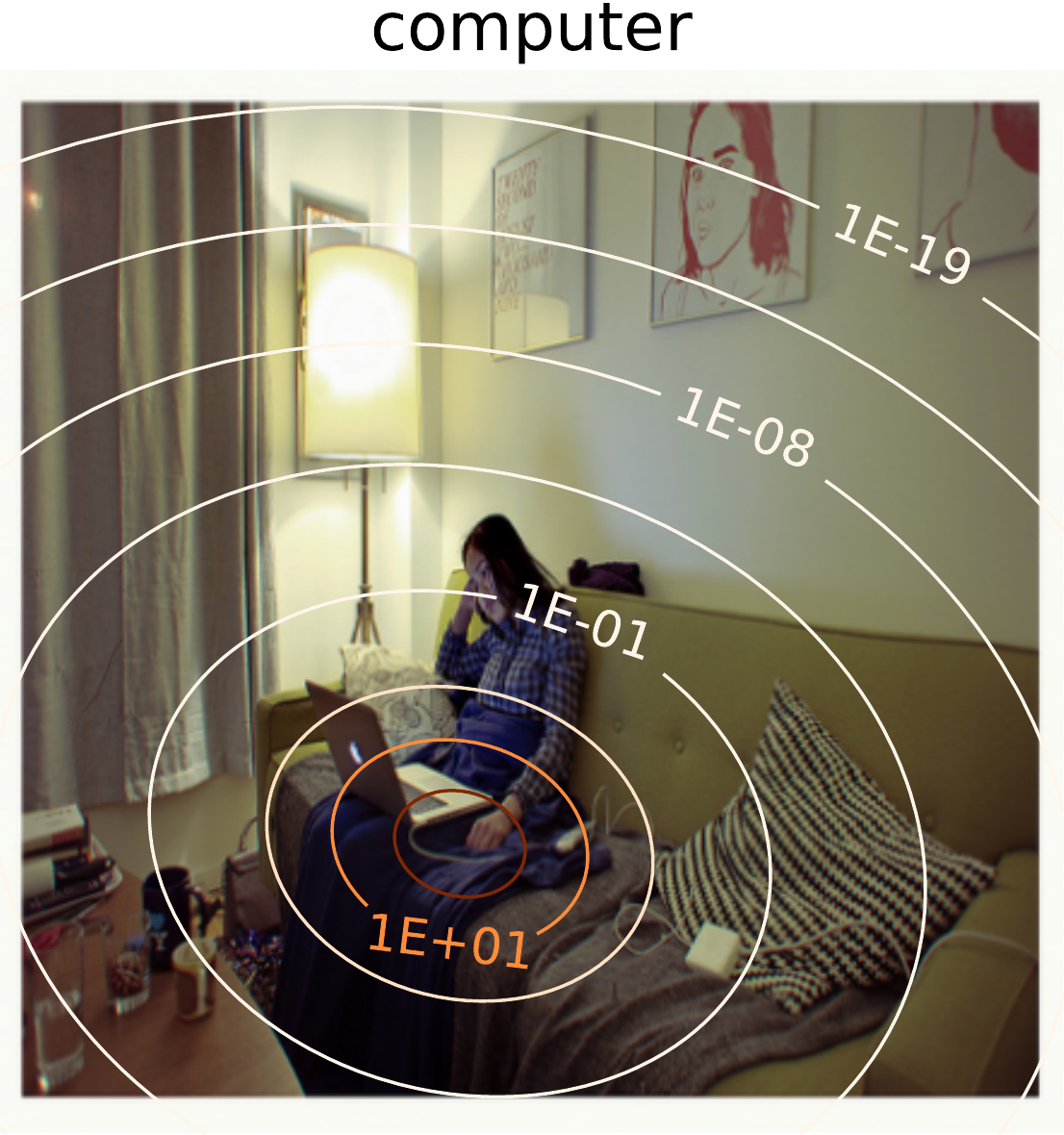}
\includegraphics[width=0.24\textwidth]{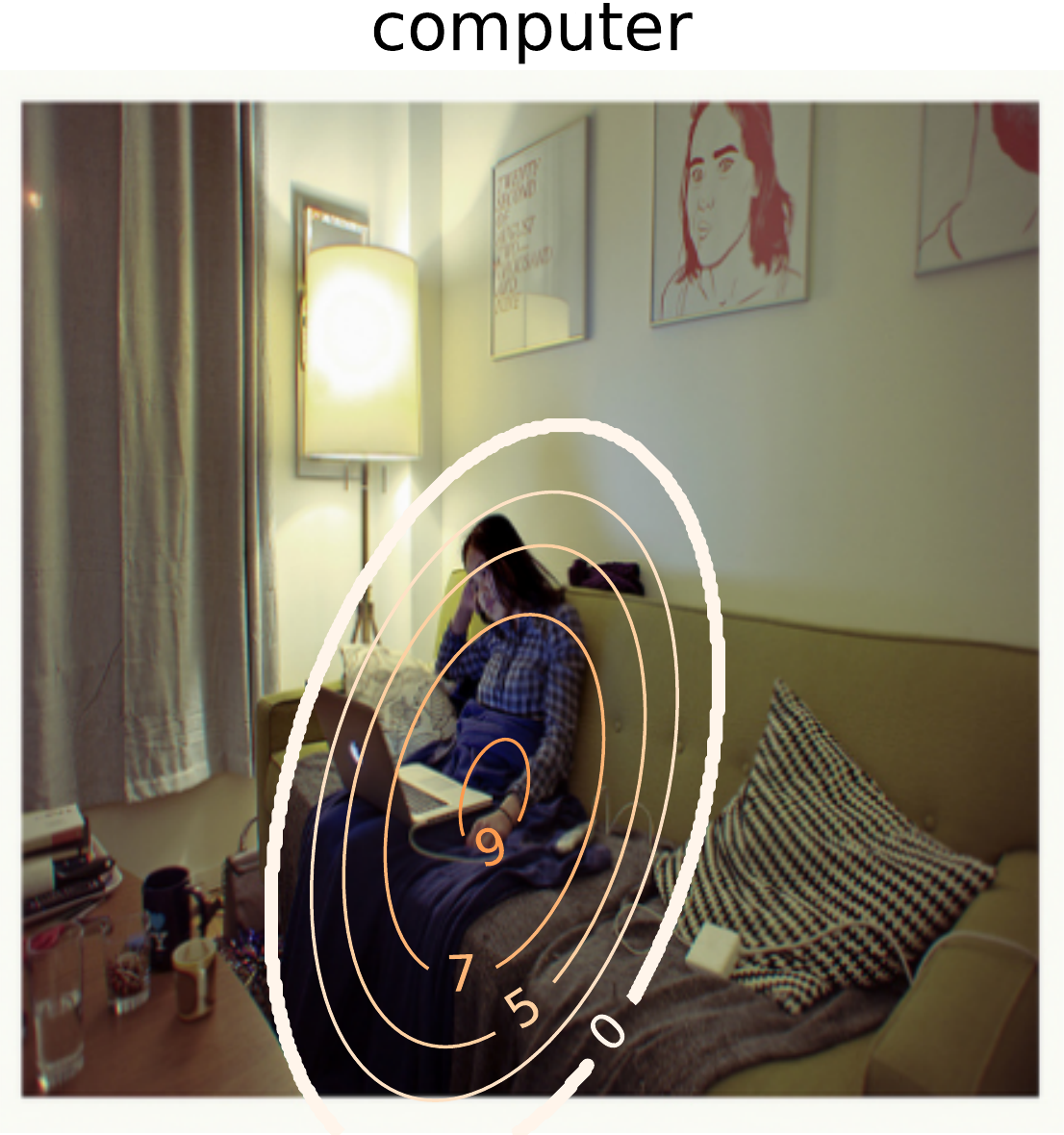}
\\
\includegraphics[width=0.24\textwidth]{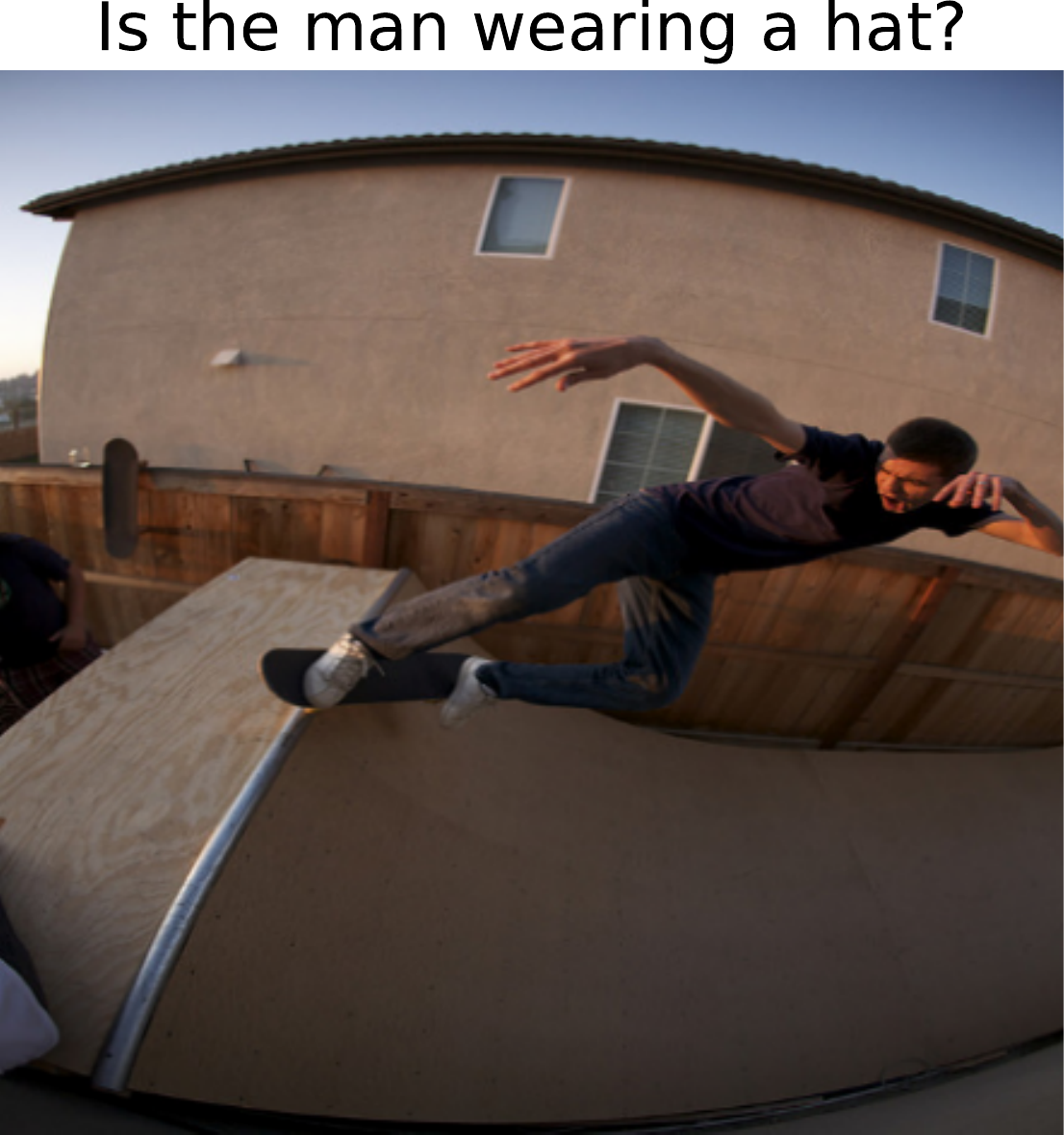}
\includegraphics[width=0.24\textwidth]{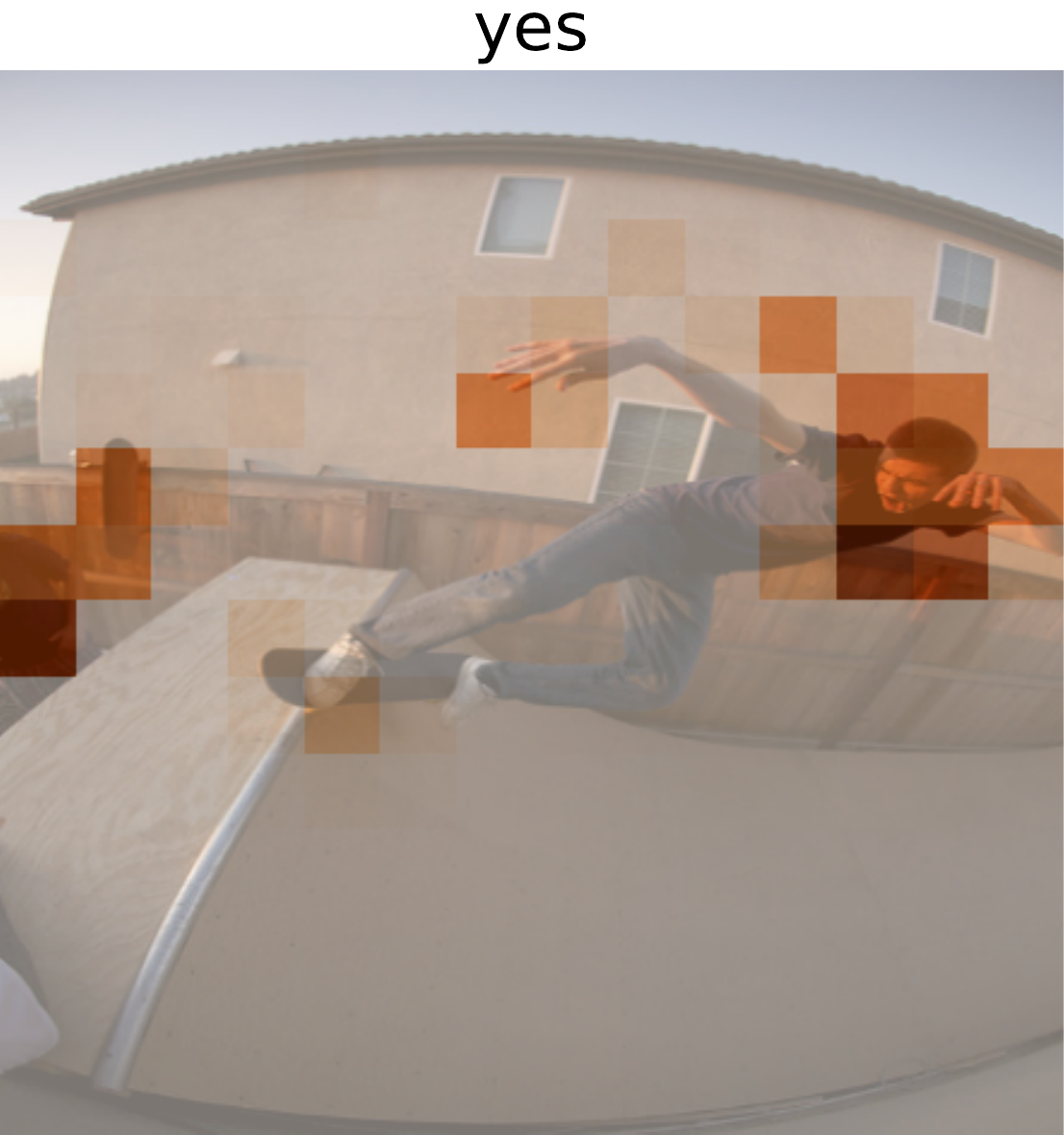}
\includegraphics[width=0.24\textwidth]{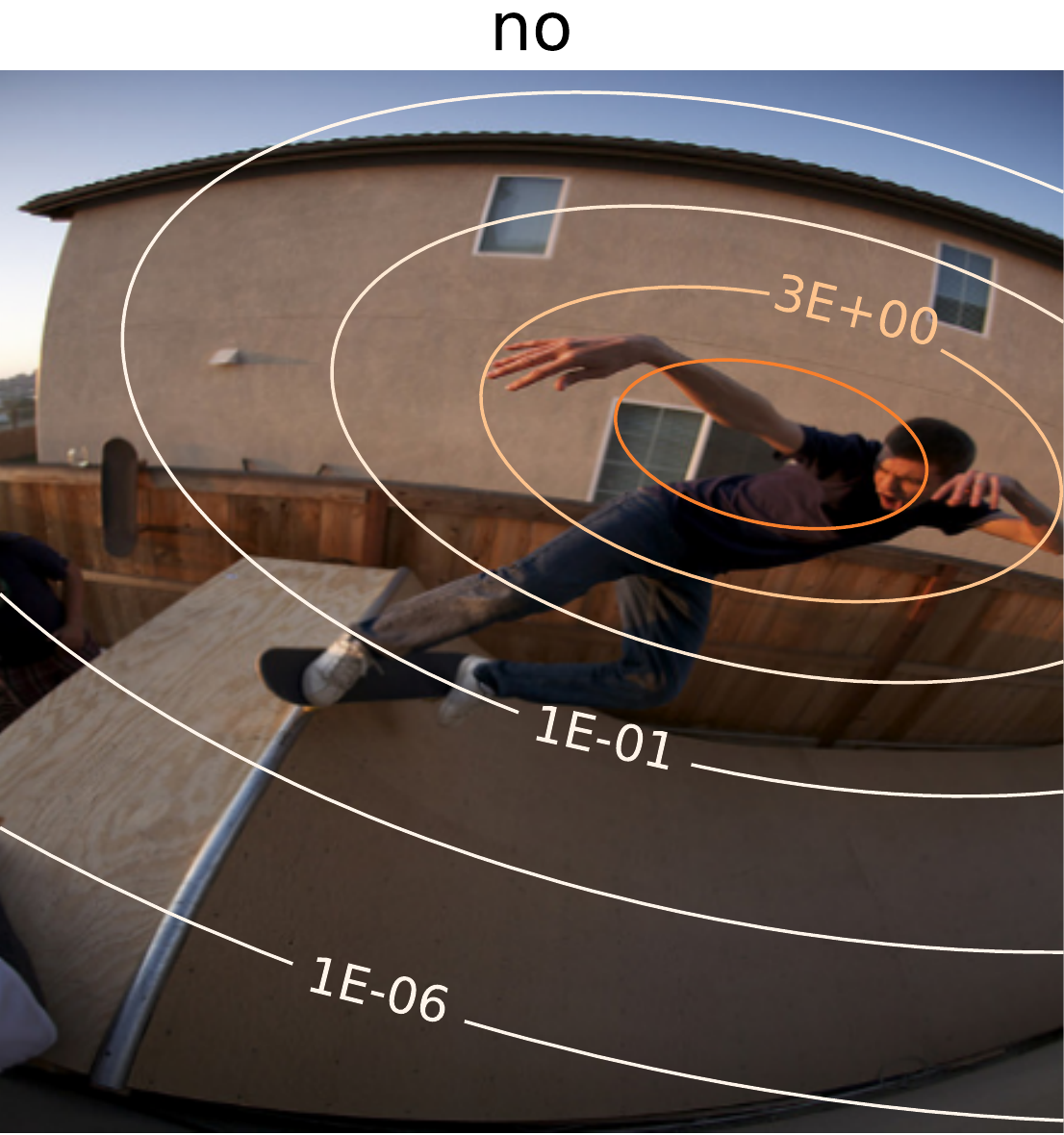}
\includegraphics[width=0.24\textwidth]{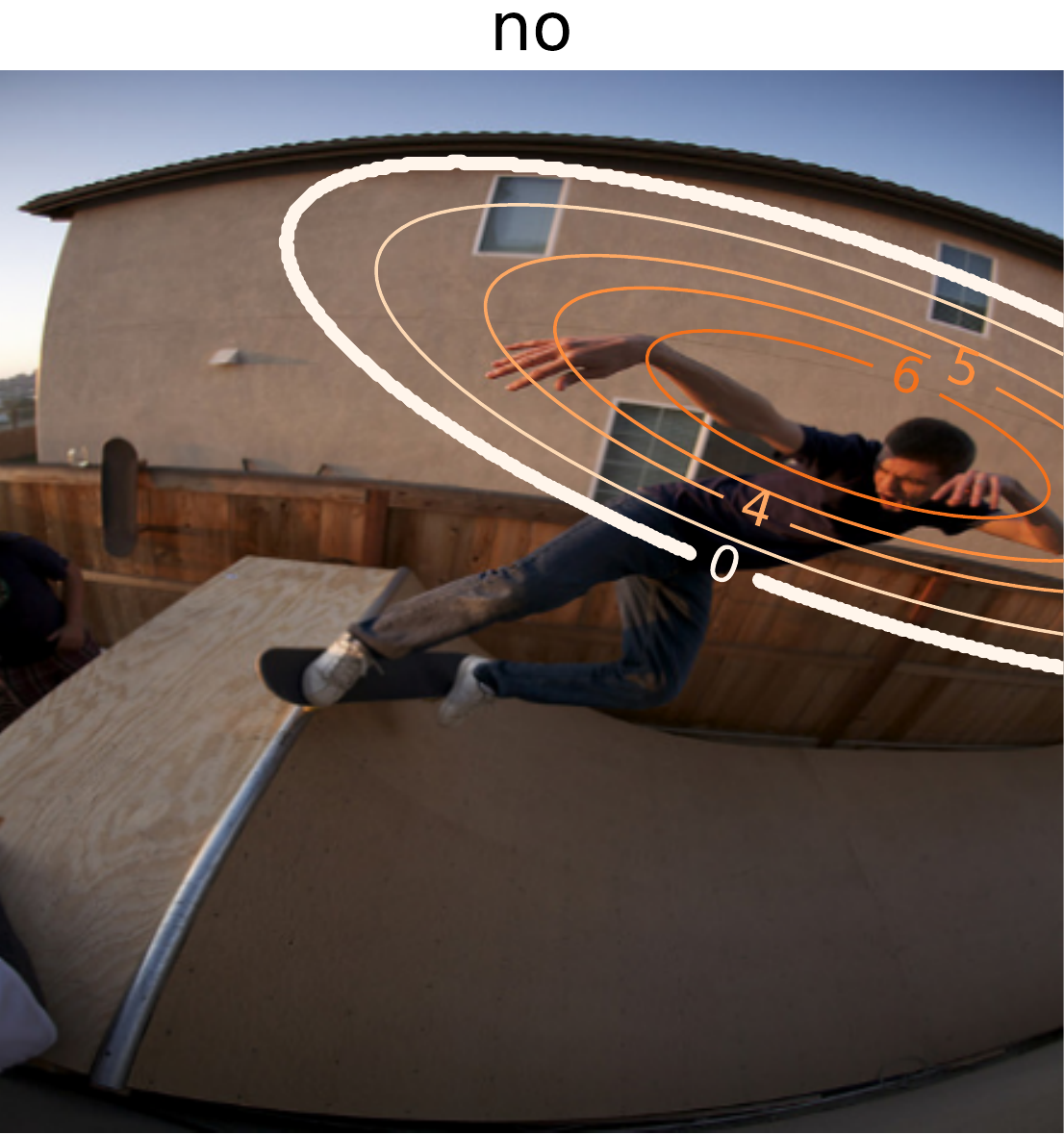}
\caption{\label{fig:examples_vqa}Attention maps for two examples in VQA-v2: the columns show  the original image, discrete attention, continuous softmax, and continuous sparsemax. The latter encloses all probability mass within the outer ellipse.}
\end{figure*}

\begin{table}[t]
    \caption{Accuracies of different models on the \textit{test-dev} and \textit{test-standard} splits of VQA-v2.}
    \label{table:results_vqa}
    \vspace{-.1cm}
    \begin{small}
    \begin{center}
\begin{tabular}{l@{\hspace{10pt}}c@{\hspace{10pt}}c@{\hspace{10pt}}c@{\hspace{10pt}}c@{\hspace{10pt}}c@{\hspace{10pt}}c@{\hspace{10pt}}c@{\hspace{10pt}}c}
        \toprule
        \sc Attention & \multicolumn{4}{c}{Test-Dev}  & \multicolumn{4}{c}{Test-Standard} \\
        {} & Yes/No & Number & Other & Overall & Yes/No & Number & Other & Overall \\
        \midrule
        Discrete softmax    		& 83.40 & 43.59	& 55.91 & 65.83 & 83.47 & 42.99 & 56.33 & 66.13    	\\
        \midrule
        2-d continuous softmax			& 83.40	& 44.80	& 55.88 & \textbf{65.96} & 83.79 & 44.33 & 56.04 & \textbf{66.27}    	\\
        2-d continuous sparsemax	 	& 83.10 & 44.12 & 55.95 & 65.79 & 83.38 & 43.91 & 56.14 & 66.10  	\\
        \bottomrule
    \end{tabular}
\end{center}
    \end{small}
\end{table}

\subsection{Heteroscedastic regression with Fenchel-Young losses}\label{sec:exp_fy_losses}

Regression is often tackled using a squared loss, which is equivalent to
assuming a one-dimensional normal distribution for the target variable.
In this experiment, we explore replacing this normal distribution
with a $\beta$-Gaussian, where not only the mean but also the variance
of the residuals is also allowed to depend on the features.
We analyze the \emph{Breast Cancer Mortality and Population} dataset from
\citet[][Problem 57]{rice}, accessed via \texttt{statsmodels} \citep{statsmodels}.
The data covers 301 counties in southern US.
The single input variable $x$ is the population of the county, and the target
variable $y$ is the breast cancer mortality rate.
As more populous counties display more variability,
a standard linear model fit on the full dataset shows strong signs of
\textbf{heteroscedasticity}
according to a Breusch-Pagan test ($LM=537.4, p<10^{-118}$).

\paragraph{Experimental setup.}

We leave out the $10\%$ most populous counties as a test set, and
fit a linear model with $\beta$-Gaussian data-dependent noise,
\begin{equation}
y \sim \mu_f(x) + \mathcal{N}_\beta(0, \sigma_f^2(x)), \qquad
\text{where}\qquad
\begin{array}{rl}
\mu_f(x) &\coloneqq w_{\mu} \cdot x + b_\mu, \\
\sigma^2_f(x) &\coloneqq (w_{\sigma} \cdot x + b_\sigma)^2\,.
\end{array}
\end{equation}
(Note that $\sigma_f^2$ is not linear in $x$.)
We first fit a baseline standard linear regression, \textit{i.e.},
$ y \sim \mu_f \cdot x + \mathcal{N} (0, 1)$,
and initialize $w_\mu$ and $b_\mu$ in all subsequent models with the baseline
values. We apply 1000 iterations of L-BFGS with a step size of $.01$ to
minimize the average cross-$\Omega$ loss $L^\times_{\Omega}$
against a target Dirac limit case $p=\delta_y$
(Definition~\ref{def:fyloss}, Proposition~\ref{prop:beta_gaussian_fy}), which in
the 1-d case simplifies to:
\begin{equation}
L^\times_{\Omega_\alpha}(\mu_f, \sigma_f^2, y) =
\frac{(\mu_f - y)^2}{2\sigma_f^2}
- \frac{R^2}{2(\sigma_f^2)^{\frac{\alpha-1}{\alpha+1}}}
\cdot \frac{\alpha-1}{3\alpha-1}
+ \frac{1}{\alpha(\alpha-1)}\,,
\end{equation}

\paragraph{Results.}

We report explained variance ($r^2$) in Table~\ref{tab:hetreg}.
Modeling $\sigma^2$ improves over the baseline, especially with $\alpha=2$.
The fit is illustrated in Figure~\ref{fig:hetreg} alongside the
Gaussian ($\alpha=1$) case. The results demonstrate that the $\beta$-Gaussian
family is useful in modeling, and that $L^\times_{\Omega}$ is an appropriate
generalization of cross-entropy.

\begin{table}
\caption{\label{tab:hetreg}Heteroscedastic regression test $r^2$: proportion of
variance explained by $\beta$-Gaussian regression models with learned variance.}
\centering
\small
\begin{tabular}{ccccc}
\toprule
\sc Baseline & $\alpha=1.0$ & $\alpha=4/3$ & $\alpha=1.5$ & $\alpha=2.0$ \\
\midrule
 0.56  & 0.67 & 0.68 & 0.69 & \textbf{0.72} \\
\bottomrule
\end{tabular}
\end{table}

\begin{figure*}[t]
\centering
\includegraphics[width=.99\textwidth]{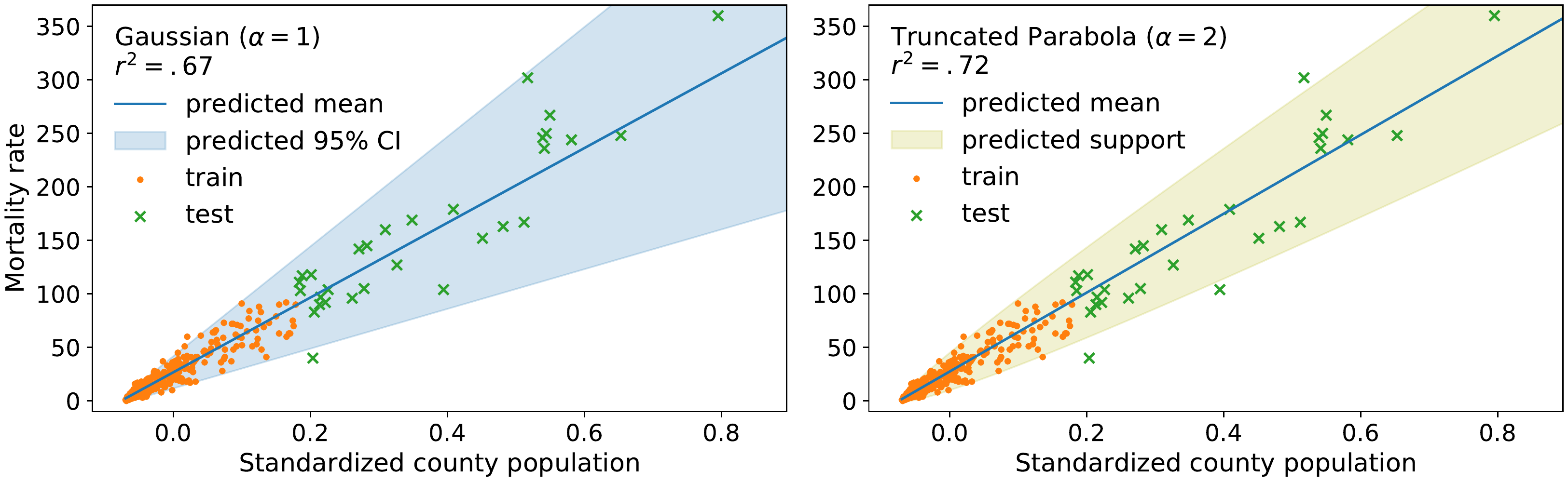}
\caption{\label{fig:hetreg}Heteroscedastic regression models with a $\beta$-Gaussian model. The
truncated parabola model achieves the best generalization out of the considered
models in terms of $r^2$. %
For $\alpha>1$, the bounded support can be computed using
Proposition~\ref{prop:beta_gauss}.
Note that, for $\alpha=2$, even though some points lie outside of the modeled support, the likelihood of the model is zero but the cross-$\Omega$ Fenchel-Young loss is finite and learns good regressors.
}
\end{figure*}

\section{Related Work}\label{sec:related}

\paragraph{Generalized exponential families and loss functions.}

\citet{grunwald_2004} introduced generalized exponential families as maximum
entropy distributions for generalized entropy functions.
Based upon these results, \citet{frongillo_2014} study generalized exponential
families (possibly in infinite spaces) from a convex duality perspective. Their
main result is a generalization of the well-known bijection between Bregman
divergences and regular exponential families.
\citet{amari_2012} study deformed exponential families, including their entropy
and canonical divergence. %
Fenchel-Young losses are closely related to proper scoring rules
\citep{gneiting_2007,reid_composite_binary,vernet_2016}.
Proper scoring rules can be seen as primal-space Bregman divergences, while
Fenchel-Young losses can be seen as mixed-space Bregman divergences
\citep{blondel2020learning}. \citet{mensch_2019} propose a Fenchel-Young loss in
the continuous setting. Their focus, however, is on a geometric notion of
entropy called ``Sinkhorn entropy''. \citet{blondel_2020_consistency} studies
the consistency of a subset of Fenchel-Young losses.
In this paper, we provide a throrough study of generalized continuous
distributions and losses from a convex duality perspective with a particular
focus on distributions with sparse support. In doing so, we unify many
continuous distributions and create new ones seamlessly. We also discuss their
Jacobian computation, enabling their use in a neural network trained by
backpropagation.

\cite{nock_2009} proposed a binary classification loss construction based on the
Legendre transformation but their construction precludes non-invertible
mappings.
\citet[Proposition 3]{duchi_2016} derived a multi-class loss which is
a special case of Fenchel-Young loss over the
probability simplex. \citet{nowak2020consistent} use
Fenchel-Young losses to construct a new loss with a max-min margin property.
This loss corresponds to choosing $\alpha \to \infty$ in the Tsallis negentropy.
Finally, \citet{bao2021fenchel} used Fenchel-Young losses to derive new losses  for class-posterior probability estimation with unbalanced classes.
All these works are limited to the finite output domains.

\insertion{
The regularized prediction map presented in \S\ref{sec:rpm} is connected to
proximal operators \citep{moreau1965proximite}.
Indeed, if $\Omega = \frac{1}{2} \|\cdot\|^2_2 + \Phi$,
then $\hat{p}_{\Omega}[f] = \text{prox}_\Phi$ \citep{blondel2020learning}.
At a high level, the Fenchel-Young loss in \S\ref{sec:fy_losses} resembles
formulations for structured prediction \citep{taskar2005structured} which
formulate learning as the problem of finding a saddle point involving a game
between the model parameters (corresponding to our $f_\theta$) and marginal
probabilities for structured outputs (corresponding to our $p$). The focus of
our paper, however, is on continuous domains rather than structured prediction.
}

\paragraph{Relation to the Tsallis maxent principle.}
Our paper unifies two lines of work: deformed exponential families from statistical physics \citep{Tsallis1988,naudts2009q,amari2011geometry}, and sparse alternatives to softmax recently proposed in the machine learning literature \citep{Martins2016ICML,peters2019sparse,blondel2020learning}, herein extended to continuous domains. This link may be fruitful for future research in both fields.
While most prior work is focused on heavy-tailed distributions ($\alpha < 1$), we focus instead on light-tailed, sparse distributions, the other side of the spectrum ($\alpha > 1$). See Appendix~\ref{sec:tsallis_maxent} for the relation to the Tsallis maxent principle.

\paragraph{Continuity in other architectures and dimensions.}
In our paper, we consider attention networks exhibiting temporal/spatial continuity in the input data, be it audio signals (1-d) or visual scenes (2-d). Recent works propose continuous-domain CNNs for 3-d structures like point clouds and molecules \citep{wang2018deep,schutt2017schnet}. The dynamics of continuous-time RNNs have been studied in \citep{funahashi1993approximation}, and similar ideas have been applied to irregularly sampled time series \citep{rubanova2019latent}.
Other recently proposed frameworks produce continuous variants in other dimensions, such as network depth \citep{chen2018neural}, or in the target domain for machine translation tasks \citep{kumar2018mises}.
Our continuous attention networks can be used in tandem with these frameworks.

\paragraph{Gaussian attention probabilities.}
\citet{cordonnier2020relationship} analyze the relationship between (discrete) attention and convolutional layers, and consider spherical Gaussian attention probabilities as relative positional encodings. By contrast, our approach removes the need for positional encodings: by converting the input to a function on a predefined continuous space, positions are encoded {\it implicitly}, not requiring explicit positional encoding.
Gaussian attention has also been hard-coded as input-agnostic self-attention layers in transformers for machine translation tasks by
\citet{you-etal-2020-hard}.
Finally, in their DRAW architecture for image generation, \citet[\S 3.1]{gregor2015draw} propose a selective attention component which is parametrized by a spherical Gaussian distribution.

\insertion{
\paragraph{Sparse latent variables.}

Related to the sparse attention models developed in \S\ref{sec:attention}, several works have presented models with sparse latent variables, mostly for the discrete (possibly structured) case, both in deterministic \citep{correia2020efficient,guerreiro2021spectra} and stochastic settings \citep{bastings2019interpretable,farinhas2022sparse}. Generalizing these constructions to continuous latent variables with sparse support (in the sense of Definition~\ref{def:sparse_function}) is an interesting direction for future work.
}

\section{Conclusions and Future Work}\label{sec:conclusion}

We extended $\Omega$-regularized prediction maps and Fenchel-Young losses to arbitrary measure spaces (\S\ref{sec:rpm}, \S\ref{sec:fy_losses}). A key result is that, for linearly parametrized families, Fenchel-Young loss minimization is equivalent to moment matching of the statistics, generalizing the concept of sufficient statistics from exponential families (Proposition~\ref{prop:fy_properties3}).
With Tsallis $\alpha$-entropies for $\alpha>1$, we obtain sparse families, whose members can have zero tails, such as triangular or truncated parabola/paraboloid distributions on continuous domains or sparse integer distributions in discrete but infinite domains, for $\alpha=2$ (\S\ref{sec:tsallis}, \S\ref{sec:sparsemax}). We provided a general characterization of the normalizing function $A_\alpha(f)$, its gradient and Hessian, and expressions for the Fenchel-Young loss for arbitrary $\alpha$ (Propositions~\ref{prop:solution_rpm_tsallis}, \ref{prop:gradient_A}, \ref{prop:entropy_normalizing}, and \ref{prop:gradient_hessian_fy}).
We then studied the particular case of $\beta$-Gaussian distributions, induced by Tsallis $\alpha$-entropies (with $\beta=2-\alpha$) and quadratic scoring functions, and we have shown that they are instances of elliptical distributions (\S\ref{sec:elliptical}), containing as particular cases the Gaussian and truncated paraboloid, as providing multivariate generalizations of distributions commonly used in kernel density estimation (Epanechnikov, biweight, triweight). We have shown that these distributions can be reparametrized by two independent random variables, a Beta distribution for the radius, and a uniform spherical distribution (Proposition~\ref{prop:beta_gauss}), and used this result to build an efficient sampler. We also characterized key properties of these distributions: their mean, variance, entropy, and a closed-form for the Fenchel-Young loss (Propositions~\ref{prop:mean_var_entropy_beta_gaussians}--\ref{prop:beta_gaussian_fy}). The combination of the sampler and estimation with Fenchel-Young loss minimization using these results is illustrated in Figure~\ref{fig:misfit}.
Finally, we have shown that by considering total variance and Sobolev regularizers $\Omega$, regularized prediction maps allow building continuous counterparts of the fusedmax transformation previously proposed in the discrete case (\S\ref{sec:fusedmax}).

In a nutshell, the theoretical contributions of our paper unify two lines of work: deformed exponential families from statistical physics \citep{Tsallis1988,naudts2009q,amari2011geometry}, and sparse alternatives to softmax recently proposed in the machine learning literature \citep{Martins2016ICML,fusedmax,peters2019sparse,blondel2020learning}.
We frame this unification in the scope of regularized prediction maps (a generalization of the variational free energy principle) and Fenchel-Young losses (a generalization of Kullback-Leibler divergences and Bregman divergences).
We believe this link may be fruitful for future research in both fields. While most prior work is focused on heavy-tailed distributions ($\alpha < 1$), we focus instead on light-tailed, sparse distributions, the other side of the spectrum ($\alpha > 1$).

We have also shown how $\Omega$-regularized predictions maps can be used in neural network models to construct \emph{continuous} attention mechanisms (\S\ref{sec:attention}),  generalizing finite attention \citep{bahdanau2014neural} to continuous input data, such as 1-d spatial or temporal signals or 2-d images (visual scenes). We derived their Jacobians in terms of generalized covariances (Proposition~\ref{prop:jacobian_entmax}), allowing for efficient forward and backward propagation.  Experiments for 1-d and 2-d cases were shown on attention-based audio classification and visual question answering (\S\ref{sec:experiments}). %

There are many avenues for future work. %
The sparse integer distributions presented in \S\ref{sec:sparsemax} open up interesting questions, such as the efficient computation of key quantities (mean, entropies, Fenchel-Young loss) and its applicability to problems that could benefit from distributions with finite but varying support (ranges).
Likewise, the $\beta$-Gaussian distributions presented in \S\ref{sec:elliptical} might be useful in embedding spaces, where objects (\textit{e.g.}, words) could be modeled as compact sets. %
Our results concerning $\Omega$-regularized prediction maps and Fenchel-Young losses provided in \S\ref{sec:rpm}--\ref{sec:fy_losses} are very general, and it is plausible that regularizers $\Omega$ other than Tsallis entropies, total variation or Sobolev regularizers might be useful.
While our paper focused on linearly parametrized energy functions, the non-linear case (\textit{e.g.}, where $f(t)$ is obtained from a neural network) deserves further study---in fact, several of our theoretical results can be easily extended to this case by replacing $\phi(t)$ by $\nabla_{\theta} f_\theta(t)$.
Regarding sparse continuous attention mechanisms, while our paper focused on unimodal distributions, there are applications in which  multiple attention modes are desirable. This can be done by considering mixtures of distributions,  multiple attention heads, or sequential attention steps. Initial work in that direction includes \citet{farinhas2021multimodal}.
Another direction concerns combining our continuous attention models with
other spatial/temporal continuous architectures for CNNs and RNNs \citep{wang2018deep,schutt2017schnet,funahashi1993approximation} or
with continuity in other dimensions, such as depth \citep{chen2018neural} or output space \citep{kumar2018mises}. 
Recent work using continuous attention mechanisms to model long-term ``sticky'' memories in transformer architectures has been done by \citet{martins2022infty}; some of the ideas above are applicable there, too.

\acks{We thank the action editor and the anonymous reviewers for their insightful comments and suggestions. This work was supported by the European Research Council  (ERC StG
DeepSPIN 758969) and by the Fundação para a Ciência e Tecnologia through project PTDC/CCI-INF/4703/2021 (PRELUNA) and contract UIDB/50008/2020 and in part by the Hybrid Intelligence Centre, a
10-year program funded by the Dutch Ministry of Education, Culture, and Science
through the Netherlands Organisation for Scientific Research
(\url{https://hybrid-intelligence-centre.nl}).}

\newpage

\appendix

\bigskip

\begin{center}
\LARGE{\bf Appendix}
\end{center}

\section{Proofs for regularized prediction maps}

\subsection{Equivariance of distributions}\label{sec:proof_equivariance_fy}

Let $S=\mathbb{R}^N$ and $\nu$ be the Lebesgue measure. We show that, if the regularizer $\Omega$ is separable ({\it i.e.} if it can be written as $\Omega(p) = \int_S \psi(p(t))$ for some function $\psi: \mathbb{R}_+ \rightarrow \mathbb{R}$), the following equivariance property holds:
\begin{equation}
\hat{p}_\Omega[\tilde{f}](t) = \hat{p}_\Omega[f](At + b),
\end{equation}
where $\tilde{f}(t) := f(At + b)$, for any matrix $A$ with determinant $\pm 1$ and any vector $b \in \mathbb{R}^N$.

By definition, we have
\begin{equation}
\hat{p}_{\Omega}[\tilde{f}]
= \argmax_{p \in \mathcal{M}_+^1(\mathbb{R}^N)} \mathbb{E}_{p}[\tilde{f}(t)] - \Omega(p)
= \argmax_{p \in \mathcal{M}_+^1(\mathbb{R}^N)} \int_S (p(t) \, f(At + b) - \psi(p(t))\, dt.
\end{equation}
Making a change of variables $s = At+b$, using the change of variables' formula (noting that $|\det(A)| = 1$), and defining $q(s) = p(A^{-1}(s-b))$ -- noting that $p \in \mathcal{M}_+^1(\mathbb{R}^N)$ iff $q \in \mathcal{M}_+^1(\mathbb{R}^N)$ -- we obtain:
\begin{eqnarray}
\hat{p}_{\Omega}[\tilde{f}](t)
&=& \left(\argmax_{q \in \mathcal{M}_+^1(\mathbb{R}^N)} \int_S (q(s) \, f(s) - \psi(q(s))\, ds \right) (s) %
= \left(\argmax_{q \in \mathcal{M}_+^1(\mathbb{R}^N)} \mathbb{E}_{q}[f(s)] - \Omega(q) \right) (s) \nonumber\\
&=& \hat{p}_{\Omega}[f](s),
\end{eqnarray}
which leads to the desired result.

\subsection{Differential Negentropy and Boltzmann-Gibbs distributions}\label{sec:diff_ent_exp_family}

We adapt a proof from \citet{cover2012elements}.
Let $\Omega$ be the Shannon negentropy, which is proper, lower semi-continuous, and strictly convex \citep[example~9.41]{Bauschke_Combettes2011}, and let  $$\mathrm{KL}(p \| q) := \int_S p(t) \log \frac{p(t)}{q(t)}$$ be the Kullback-Leibler divergence between distributions $p$ and $q$ (which is always non-negative and equals $0$ iff $p=q$).
Take $q(t) = \frac{\exp(f(t))}{\int_S \exp(f(t'))d\nu(t')} = \exp(f(t) - A(f))$ as in \eqref{eq:boltzmann}, where $A(f)$ is the log-partition function.

We have, for any $p \in \mathcal{M}_+^1(S)$:
\begin{eqnarray}
0 &\le& \mathrm{KL}(p \| q)
= \int_S p(t) \log \frac{p(t)}{q(t)}
= \Omega(p) - \int_S p(t) \log q(t)
= \Omega(p) - \int_S p(t) (f(t) - A(f)) \nonumber\\
&=& \Omega(p) - \mathbb{E}_p[f(t)]  + A(f).
\end{eqnarray}
Therefore, we have, for any  $p \in \mathcal{M}_+^1(S)$, that
\begin{equation}
\mathbb{E}_p[f(t)]  - \Omega(p) \le A(f),
\end{equation}
with equality if and only if $p=q$. Since the right hand side is constant with respect to $p$, we have that the posited $q$ must be the maximizer of \eqref{eq:reg_prediction}.

\section{Proofs for continuous Fenchel-Young losses}
\label{sec:proof_fy_properties}

\paragraph{Proof of Propositions~\ref{prop:fy_properties1}--\ref{prop:fy_properties3}}

The proof of Proposition~\ref{prop:fy_properties1} adapts that of \citet{blondel2020learning}
when  Fenchel duality is now taken in the infinite-dimensional set $\mathcal{F} \subseteq \mathbb{R}^S$, which endowed with the inner product $\langle f, g\rangle = \int_S f(t) g(t) d\nu(t)$ forms a Hilbert space \citep{Bauschke_Combettes2011}.
The non-negativity of $L_\Omega$ stems from the Fenchel-Young inequality in Hilbert spaces.
The loss is zero iff $(f_\theta, p)$ is a dual pair, \textit{i.e.}, if $p = \hat{p}_\Omega[f_\theta] = \nabla \Omega^*(f_\theta)$.

To prove Proposition~\ref{prop:fy_properties2}, note that the gradient of $L_\Omega$ is
\begin{equation*}
\nabla_\theta L_\Omega(f_\theta; p) = \int_S \frac{\partial L_\Omega(f_\theta; p)}{\partial f_\theta(t)}  \nabla_\theta f_\theta(t) d\nu(t) = \int_S (\hat{p}_\Omega[f_\theta](t) - p(t)) \nabla_\theta f_\theta(t),
\end{equation*}
where we used the fact that
$\frac{\partial L_\Omega(f_\theta; p)}{\partial f_\theta(t)} = [\nabla \Omega^*(f_\theta) - p](t)$.
This leads to the expression in \eqref{eq:expected_gradient_matching}.

The first point in Proposition~\ref{prop:fy_properties3} is a direct consequence of the last result.
The convexity of $L_\Omega$ with respect to $\theta$ stems from the fact that $L_\Omega(f_\theta, p)= \Omega(p) + \Omega^*(f_\theta) - \mathbb{E}_p[f_\theta(t)] $ is convex with respect to $f_\theta$ (since it is the sum of an affine function with $\Omega^*(f_\theta)$, which, being a Fenchel dual, is convex) and that $L_\Omega$, as a function of $\theta$, is a composition of the linear mapping $\theta \mapsto f_\theta(\cdot) = \theta^\top \phi(\cdot)$ with the said convex function, hence it is convex.
Finally, the last statement is an immediate consequence of the two previous claims: Since
$L_{\Omega}$ is convex, any stationary point is a global minimum,
and, from the first claim, any stationary point $\hat\theta$ must satisfy
$\mathbb{E}_{\hat{p}_{\Omega}[f_{\hat{\theta}}]}[\phi(t)] =
\mathbb{E}_{p}[\phi(t)]$.

\section{Proofs for Tsallis regularization and Deformed Exponential Families}
\label{sec:gini_ent_sparse_family}

\subsection{Shannon as a limit case of Tsallis when $\alpha\rightarrow 1$}

We show that $\lim_{\alpha \rightarrow 1} \Omega_{\alpha}(p) = \Omega_1(p)$ for any $p\in \mathcal{M}_+^1(S)$.
From \eqref{eq:tsallis}, it suffices to show that $\lim_{\beta\rightarrow 1} \log_\beta (u) = \log(u)$ for any $u \ge 0$.
Let $g(\beta) \coloneqq u^{1-\beta} - 1$, and
$h(\beta)\coloneqq 1 - \beta$. Observe that $$\lim_{\beta\rightarrow 1}\log_\beta (u) = \lim_{\beta\rightarrow 1} \frac{g(\beta)}{h(\beta)} =
\frac{g(1)}{h(1)}
= \frac{0}{0},$$
so we are in an indeterminate case.
We take the derivatives of $g$ and $h$:
\begin{equation}
g'(\beta) = \left(\exp (\log u^{1-\beta}) \right)'
= \exp (\log u^{1-\beta}) \cdot ((1-\beta) \log u)'
= -u^{1-\beta} \log u,
\end{equation}
and $h'(\beta) = -1$.
From l'H\^{o}pital's rule,
\begin{equation}
\lim_{\beta \rightarrow 1} \frac{g(\beta)}{h(\beta)}
=\lim_{\beta \rightarrow 1} \frac{g'(\beta)}{h'(\beta)}
= \log u.
\end{equation}

\subsection{Proof of Proposition~\ref{prop:solution_rpm_tsallis}}\label{proof_solution_rpm_tsallis}

The proof of Proposition~\ref{prop:solution_rpm_tsallis} is  similar to the one in \S\ref{sec:diff_ent_exp_family}, replacing the KL divergence by the Bregman divergence induced by $\Omega_\alpha$, and using an additional bound.
Let $$B_{\Omega_\alpha}(p, q) := \Omega_\alpha(p) - \Omega_\alpha(q) - \langle \nabla \Omega_\alpha(q), p-q \rangle$$ be the (functional) Bregman divergence between distributions $p$ and $q$ induced by $\Omega_\alpha$, and let $$q(t) = \exp_{2-\alpha}(f(t) - A_\alpha(f)) = [1 + (\alpha-1)(f(t) - A_\alpha(f))]_+^{\frac{1}{\alpha-1}}.$$
Note that, from \eqref{eq:tsallis}, $$\left(\nabla_q \Omega_{\alpha}(q)\right)(t) = \frac{q(t)^{\alpha-1}}{\alpha-1}.$$
From the non-negativity of the Bregman divergence \cite{bregman1967relaxation}, we have, for any $p \in \mathcal{M}_+^1(S)$:
\begin{eqnarray}
0 &\le^{(a)}& B_{\Omega_\alpha}(p, q) \nonumber\\
&=& \Omega_\alpha(p) - \Omega_\alpha(q) - \langle \nabla \Omega_\alpha(q), p-q \rangle \nonumber\\
&=& \Omega_\alpha(p) - \Omega_\alpha(q) - \int_S \frac{q(t)^{\alpha-1}}{\alpha-1} (p(t) - q(t)) \nonumber\\
&=& \Omega_\alpha(p) - \Omega_\alpha(q) - \underbrace{\mathbb{E}_p[[f(t) - A_\alpha(f) + (\alpha-1)^{-1}]_+]}_{\ge \mathbb{E}_p[f(t) - A_\alpha(f) + (\alpha-1)^{-1}]} + \frac{1}{\alpha-1}\int_S q(t)^{\alpha} \nonumber\\
&\le^{(b)}& \Omega_\alpha(p) - \Omega_\alpha(q) - \mathbb{E}_p[f(t) - A_\alpha(f) + (\alpha-1)^{-1}] + \frac{1}{\alpha-1}\int_S q(t)^{\alpha} \nonumber\\
&=& \Omega_\alpha(p) - \mathbb{E}_p[f(t)]  - \Omega_\alpha(q) + \underbrace{\frac{1}{\alpha-1}\left(\int_S q(t)^{\alpha} - 1\right)}_{=\alpha \Omega_\alpha(q)} +A_\alpha(f) \nonumber\\
&=& \Omega_\alpha(p) - \mathbb{E}_p[f(t)]  + (\alpha-1)\Omega_\alpha(q) + A_\alpha(f).
\end{eqnarray}

Therefore, we have, for any $p \in \mathcal{M}_+^1(S)$,
\begin{equation}\label{eq:variational_proof_tsallis}
\mathbb{E}_p[f(t)] - \Omega_\alpha(p) \le (\alpha-1)\Omega_\alpha(q) +A_\alpha(f),
\end{equation}
with equality iff $p = q$, which leads to zero Bregman divergence ({\it i.e.}, a tight inequality $(a)$) and to
$\mathbb{E}_p[[f(t) - A_\alpha(f) + (\alpha-1)^{-1}]_+] = \mathbb{E}_p[f(t) - A_\alpha(f) + (\alpha-1)^{-1}]$ ({\it i.e.}, a tight inequality $(b)$).

We can use the equality above to obtain an expression for the Fenchel conjugate $\Omega_\alpha^*(f) = \mathbb{E}_q[f(t)] - \Omega_\alpha(q)$ ({\it i.e.}, the value of the maximum in \eqref{eq:reg_prediction} and the right hand side in \eqref{eq:variational_proof_tsallis}):
\begin{equation}\label{eq:conjugate_expression}
\Omega_\alpha^*(f) = (\alpha-1)\Omega_{\alpha}(q) + A_\alpha(f).
\end{equation}

\subsection{Normalizing function $A_\alpha(f)$}\label{sec:A_alpha}

Let $p = \hat{p}_{\Omega_\alpha}[f]$.
The expression for $A_\alpha$ in Prop.~\ref{prop:solution_rpm_tsallis} is
obtained by inverting \eqref{eq:entmax}, yielding
$A_\alpha(f) = f(t) - \log_{2-\alpha}(p(t))$,  and integrating with respect to  $p(t)^{2-\alpha} d\nu(t)$, leading to:
\begin{eqnarray}
\int_S p(t)^{2-\alpha} A_\alpha(f) &=& \int_S p(t)^{2-\alpha} f(t) - \int_S p(t)^{2-\alpha} \log_{2-\alpha}(p(t))\nonumber\\
&=& \int_S p(t)^{2-\alpha} f(t) - \frac{\int_S  (p(t) - p(t)^{2-\alpha})}{\alpha-1} \nonumber\\
&=& \int_S p(t)^{2-\alpha} f(t) - \frac{1}{\alpha-1} + \frac{\int_S  p(t)^{2-\alpha}}{\alpha-1},
\end{eqnarray}
from which the desired expression follows.

\subsection{Relation to the Tsallis Maxent Principle}\label{sec:tsallis_maxent}

We discuss here the relation between the $(2-\alpha)$-exponential family of distributions as presented in Prop.~\ref{prop:solution_rpm_tsallis} and the distributions arising from the Tsallis maxent principle \citep{Tsallis1988}. We put in perspective the related work in statistical physics \citep{abe2003geometry,naudts2009q}, information geometry \citep{amari2011geometry,amari2016information}, and the discrete case presented in the machine learning literature \citep{blondel2020learning,peters2019sparse}.

We start by noting that our $\alpha$ parameter matches the $\alpha$ used in prior machine learning literature related to the ``$\alpha$-entmax transformation'' \citep{blondel2020learning,peters2019sparse}. In the definition of Tsallis entropies  \eqref{eq:tsallis}, our $\alpha$ corresponds to the entropic index $q$ defined by \citet{Tsallis1988}.
However, our $(2-\alpha)$-exponential families correspond to the $q$-exponential families as defined by \citet{naudts2009q}, and to the $t$-exponential families described by \citet{ding2010t} (which include the $t$-Student distribution). The family of Amari's $\alpha$-divergences relates to this $q$ as $\alpha=2q-1$ \citep[\S4.3]{amari2016information}.

These differences in notation have historical reasons, and they are explained by the different ways in which Tsallis entropies relate to $q$-exponential families.
In fact, the physics literature has defined $q$-exponential distributions in two distinct ways, as we next describe.

Note first that the $\Omega$-regularized prediction map in our  Def.~\ref{def:regularized_prediction} is a generalization of the free energy variational principle, if we see $-f_\theta(t) = -\theta^\top \phi(t)$ as an energy function and $\Omega$ the entropy scaled by a temperature.
Let $\Omega = \Omega_\alpha$ be the Tsallis $\alpha$-entropy.
An equivalent constrained version of this problem is the maximum entropy
\emph{(maxent)} principle \citep{jaynes1957information}:
\begin{equation}\label{eq:constrained_maxent}
\max_{p \in \mathcal{M}_+^1(S)}  -\Omega_\alpha(p), \hspace{0.5cm} \mathrm{s.t.}  \hspace{0.35cm} \mathbb{E}_{p}[\phi(t)] = b.
\end{equation}
The solution of this problem corresponds to a distribution in the $(2-\alpha)$-exponential family \eqref{eq:entmax}:
\begin{equation}\label{eq:sol_maxent}
    p^\star(t) = \exp_{2-\alpha}(\theta^\top \phi(t) - A_\alpha(\theta)),
\end{equation}
for some Lagrange multiplier $\theta$.

However, this construction differs from the one by \citet{Tsallis1988} and others, who use {\it escort distributions} \eqref{eq:escort} in the expectation constraints. Namely, instead of \eqref{eq:constrained_maxent}, they consider the problem:
\begin{equation}\label{eq:constrained_maxent_tsallis}
\max_{p \in \mathcal{M}_+^1(S)}  -\Omega_\alpha(p), \hspace{0.5cm} \mathrm{s.t.}  \hspace{0.35cm} \mathbb{E}_{\textcolor{blue}{\tilde{p}^{\alpha}}}[\phi(t)] = b.
\end{equation}
The solution of \eqref{eq:constrained_maxent_tsallis} is of the form
\begin{equation}\label{eq:sol_maxent_tsallis}
    p^\star(t) = B_{\alpha}(\theta)\exp_{\alpha}(\theta^\top (\phi(t) - b)),
\end{equation}
where $\theta$ is again a Lagrange multiplier. This is derived, for example, in \citep[Eq.~15]{abe2003geometry}.
There are two main differences between \eqref{eq:sol_maxent} and \eqref{eq:sol_maxent_tsallis}:
\begin{itemize}
    \item While \eqref{eq:sol_maxent} involves the $(2-\alpha)$-exponential, \eqref{eq:sol_maxent_tsallis} involves the $\alpha$-exponential.
    \item In \eqref{eq:sol_maxent}, the normalizing term $A_\alpha(\theta)$ is {\it inside} the $(2-\alpha)$-exponential. In \eqref{eq:sol_maxent_tsallis}, there is an normalizing factor $B_{\alpha}(\theta)$ {\it outside} the $\alpha$-exponential.
\end{itemize}
Naturally, when $\alpha=1$, these two problems become equivalent, since an additive term inside the exponential is equivalent to a multiplicative term outside. However, this does {\it not} happen with $\beta$-exponentials ($\exp_\beta(u+v) \ne \exp_\beta(u) \exp_\beta(v)$ in general, for $\beta \ne 1$), and therefore these two alternative paths lead to two different definitions of $q$-exponential families. Unfortunately, both have been considered in the physics literature, under the same name, and
this has been subject of debate. Quoting \citet[\S 1]{naudts2009q}:

\begin{quote}
{\it ``An important question is then whether in the modification the normalization should stand in front of the deformed exponential function, or whether it should be included as $\ln Z(\beta)$ inside. From the general formalism mentioned above it follows
that the latter is the right way to go.''}
\end{quote}

Throughout our paper, we use the definition of \citep{naudts2009q,amari2011geometry}, equivalent to the maxent problem \eqref{eq:constrained_maxent}.

\subsection{Proof of Proposition~\ref{prop:gradient_A}}\label{sec:proof_gradient_A}

We adapt the proof from \citet[Theorem 5]{amari2011geometry}.
Note first that, for $t \in \mathrm{supp}(p_\theta)$,
\begin{eqnarray}
\nabla_\theta p_\theta(t) &=& \nabla_\theta [(\alpha-1)(\theta^\top \phi(t) - A_\alpha(\theta))+1]^{1/(\alpha-1)} \nonumber\\
&=&  [(\alpha-1)(\theta^\top \phi(t) - A_\alpha(\theta))+1]^{(2-\alpha)/(\alpha-1)} (\phi(t) - \nabla_\theta A_\alpha(\theta)) \nonumber\\
&=& p_\theta(t)^{2-\alpha} (\phi(t) - \nabla_\theta A_\alpha(\theta)),
\end{eqnarray}
and
\begin{eqnarray}
\nabla^2_\theta p_\theta(t) &=&
\nabla_\theta p_\theta^{2-\alpha}(t) (\phi(t) - \nabla_\theta A_\alpha(\theta))^\top - p_\theta^{2-\alpha}(t) \nabla^2_\theta A_\alpha(\theta) \nonumber\\
&=& (2-\alpha) p_\theta^{1-\alpha}(t) \nabla_\theta  p_\theta(t) (\phi(t) - \nabla_\theta A_\alpha(\theta))^\top - p_\theta^{2-\alpha}(t) \nabla^2_\theta A_\alpha(\theta) \nonumber\\
&=& (2-\alpha)p_\theta(t)^{3-2\alpha}\bigl(\phi(t) - \nabla_\theta A_\alpha(\theta)\bigr) \bigl(\phi(t) - \nabla_\theta A_\alpha(\theta)\bigr)^\top \nonumber\\
&& - p_\theta(t)^{2-\alpha} \nabla^2_\theta A_\alpha(\theta).
\end{eqnarray}

Therefore we have:
\begin{equation}
0 = \nabla_\theta \underbrace{\int_S  p_\theta(t)}_{=1} = \int_S \nabla_\theta p_\theta(t) = \int_S p_\theta(t)^{2-\alpha} (\phi(t) - \nabla_\theta A_\alpha(\theta)),
\end{equation}
from which we obtain
\begin{equation}
    \nabla_\theta A_\alpha(\theta) = \frac{\int_S p_\theta(t)^{2-\alpha} \phi(t)}{\int_S p_\theta(t)^{2-\alpha}}.
\end{equation}

To prove that $A_\alpha(\theta)$ is convex, we will show that its Hessian is positive semidefinite. Note that
\begin{eqnarray}
0 &=& \nabla^2_\theta \underbrace{\int_S  p_\theta(t)}_{=1} = \int_S \nabla^2_\theta p_\theta(t) \nonumber\\
&=& \int_S (2-\alpha)p_\theta(t)^{3-2\alpha}\bigl(\phi(t) - \nabla_\theta A_\alpha(\theta)\bigr) \bigl(\phi(t) - \nabla_\theta A_\alpha(\theta)\bigr)^\top  - p_\theta(t)^{2-\alpha} \nabla^2_\theta A_\alpha(\theta)\nonumber\\
&=& (2-\alpha) \int_S p_\theta(t)^{3-2\alpha}\bigl(\phi(t) - \nabla_\theta A_\alpha(\theta)\bigr) \bigl(\phi(t) - \nabla_\theta A_\alpha(\theta)\bigr)^\top  \nonumber\\
&& - \nabla^2_\theta A_\alpha(\theta) \int_S p_\theta(t)^{2-\alpha},
\end{eqnarray}
hence, for $\alpha \le 2$,
\begin{equation}
    \nabla^2_\theta A_\alpha(\theta) = \frac{(2-\alpha) \int_S p_\theta(t)^{3-2\alpha}\overbrace{\bigl(\phi(t) - \nabla_\theta A_\alpha(\theta)\bigr) \bigl(\phi(t) - \nabla_\theta A_\alpha(\theta)\bigr)^\top}^{\succeq 0}}{\int_S p_\theta(t)^{2-\alpha}} \succeq 0,
\end{equation}
where we used the fact that $p_\theta(t) \ge 0$ for $t \in S$ and that integrals of positive semidefinite functions are positive semidefinite.

\subsection{Proof of Proposition~\ref{prop:entropy_normalizing}\label{sec:proof_entropy_normalizing}}

From Definition~\ref{def:tsallis}, we have
\begin{equation}
\Omega_\alpha(p_\theta) = \frac{1}{\alpha}\mathbb{E}_{p_\theta}[\log_{2-\alpha}(p(t))] %
= \frac{1}{\alpha}\mathbb{E}_{p_\theta}[\theta^\top \phi(t) - A_\alpha(\theta)]\nonumber\\
= \frac{1}{\alpha}\left(\theta^\top \mathbb{E}_{p_\theta}[\phi(t)] - A_\alpha(\theta)\right),
\end{equation}
from which \eqref{eq:entropy_normalizing} follows.
The expression \eqref{eq:entropy_normalizing_conjugate} was obtained in Appendix~\ref{proof_solution_rpm_tsallis} (see \eqref{eq:conjugate_expression}); the second equality is a simple consequence of \eqref{eq:entropy_normalizing}.
Finally, using the two former results, we have
\begin{eqnarray}
L_{\Omega_\alpha}(f_\theta, p) &=& \Omega_\alpha(p) + \Omega_\alpha^*(f_\theta) - \mathbb{E}_p[f_\theta(t)]\nonumber\\
&=& \Omega_\alpha(p) + (\alpha-1)\Omega_\alpha(\hat{p}_{\Omega_\alpha}[f_\theta]) + A_\alpha(\theta) - \mathbb{E}_p[\theta^\top \phi(t)]\nonumber\\
&=& \Omega_\alpha(p) - \Omega_\alpha(\hat{p}_{\Omega_\alpha}[f_\theta]) + \theta^\top \mu(\theta) - A_\alpha(\theta) + A_\alpha(\theta)  - \theta^\top \mathbb{E}_p[\phi(t)],
\end{eqnarray}
which leads to \eqref{eq:fy_linear_families}.

\subsection{Proof of Proposition~\ref{prop:gradient_hessian_fy}}\label{sec:proof_gradient_hessian_fy}

The expression for the gradient comes directly from Proposition~\ref{prop:fy_properties3}.
As for the Hessian:
\begin{eqnarray}
\nabla \nabla_\theta L_{\Omega_\alpha}(f_\theta, p) &=& \nabla_\theta \mu(\theta)^\top
= \nabla_\theta \mathbb{E}_{p_\theta}[\phi(t)]^\top
= \int_S \nabla_\theta p_\theta(t) \phi(t)^\top \nonumber\\
&=& \int_S p_\theta^{2-\alpha}(t) \nabla_\theta \log_{2-\alpha}(p_\theta(t)) \phi(t)^\top
= \int_S p_\theta^{2-\alpha}(t) \nabla_\theta (\theta^\top \phi(t) - A_\alpha(\theta)) \phi(t)^\top \nonumber\\
&=& \int_S p_\theta^{2-\alpha}(t) (\phi(t) - \nabla_\theta A_\alpha(\theta)) \phi(t)^\top.
\end{eqnarray}
Using the expression for $\nabla_\theta A_\alpha(\theta)$ from Proposition~\ref{prop:gradient_A} yields the desired result.

\section{Proofs for infinite sparsemax}\label{sec:normalization_constants}

\subsection{Truncated parabola}\label{sec:proof_truncated_parabola}

Let $p(t) = \left[-\tau - \frac{(t-\mu)^2}{2\sigma^2}\right]_+$ as in \eqref{eq:truncated_parabola}.
Let us determine the constant $\tau$ that ensures this distribution normalizes to 1. Note that $\tau$ does not depend on the location parameter $\mu$, hence we can assume $\mu=0$ without loss of generality. We must have
$\tau = -\frac{a^2}{2\sigma^2}$ and $1 = \int_{-a}^{a} \left(-\tau - \frac{x^2}{2\sigma^2}\right) = -2\tau a - \frac{a^3}{3\sigma^2} = \frac{2a^3}{3\sigma^2}$, hence
$a = \left(\frac{3}{2}\sigma^2\right)^{1/3}$, which finally gives:
\begin{equation}\label{eq:lambda_gaussian_proof}
\tau = -\frac{1}{2}\left(\frac{3}{2\sigma}\right)^{2/3}.
\end{equation}

The Gini negentropy of this distribution is
\begin{eqnarray}
\Omega_2(\hat{p}_{\Omega_2}[f]) &=& -\frac{1}{2} + \frac{1}{2}\int \hat{p}^2_{\Omega_2}[f](x)%
= -\frac{1}{2} + \frac{1}{2}\int_{-a}^a \left(-\lambda - \frac{x^2}{2\sigma^2}\right)^2 %
= -\frac{1}{2} - \lambda^2 a + \frac{\lambda a^3}{3\sigma^2} + \frac{a^5}{20\sigma^4}\nonumber\\
&=& -\frac{1}{2} + \frac{a^5}{4\sigma^4} - \frac{a^5}{6\sigma^4} + \frac{a^5}{20\sigma^4} %
= -\frac{1}{2} + \frac{2a^5}{15\sigma^4} %
= -\frac{1}{2} + \frac{1}{5}\left(\frac{3}{2\sigma}\right)^{2/3}.
\end{eqnarray}

\subsection{Multivariate truncated paraboloid}\label{sec:proof_truncated_paraboloid}

Let $p(t) = \left[-\tau - \frac{1}{2}(t-\mu)\Sigma^{-1}(t-\mu)\right]_+$ as in \eqref{eq:truncated_paraboloid}.
Let us determine the constant $\tau$ that ensures this distribution normalizes to 1, where we assume again $\mu=0$ without loss of generality. To obtain $\tau$, we start by invoking the formula for computing the volume of an ellipsoid defined
by the equation $x^\top \Sigma^{-1} x \le 1$:
\begin{equation}
V_{\mathrm{ell}}(\Sigma) = \frac{\pi^{n/2}}{\Gamma(n/2 + 1)} \mathrm{det}(\Sigma)^{1/2},
\end{equation}
where $\Gamma(t)$ is the Gamma function.
Since each slice of a paraboloid is an ellipsoid, we can apply Cavalieri's principle to obtain the volume of a paraboloid $y=\frac{1}{2} x^\top \Sigma^{-1} x$ of height $h = -\tau$ as follows:
\begin{eqnarray}
V_{\mathrm{par}}(h) &=& \int_{0}^{h} V_{\mathrm{ell}}(2 \Sigma y)dy
\,\,=\,\, \frac{(2\pi)^{n/2}\mathrm{det}(\Sigma)^{1/2}}{\Gamma(\frac{n}{2} + 1)}  \int_{0}^{h} y^{\frac{n}{2}}dy %
= \frac{(2\pi)^{n/2}\mathrm{det}(\Sigma)^{1/2}}{(\frac{n}{2} + 1)\Gamma(\frac{n}{2} + 1)}  h^{\frac{n}{2}+1} \nonumber\\
&=&
\frac{\sqrt{(2\pi)^{n}\mathrm{det}(\Sigma)}}{\Gamma(\frac{n}{2} + 2)}  h^{\frac{n}{2}+1}.
\end{eqnarray}
Equating the volume to 1, we obtain $\tau = -h$ as
$\tau = -\left(\frac{\Gamma(\frac{n}{2} + 2)}{\sqrt{(2\pi)^{n}\mathrm{det}(\Sigma)}}\right)^{\frac{2}{2+n}}$.

\subsection{Triangular}\label{sec:proof_triangular}

Let $p(t) = \left[-\tau - \frac{|t-\mu|}{b}\right]_+$ as in \eqref{eq:triangular}.
Let us determine the constant $\tau$ that ensures this distribution normalizes to 1.
Assuming again $\mu=0$ without loss of generality, we must have
$\tau = -\frac{a}{b}$ and $1 = \int_{-a}^{a} \left(-\tau - \frac{|x|}{b}\right) = -2\tau a - \frac{a^2}{b} = \frac{a^2}{b}$, hence
$a = \sqrt{b}$, which finally gives
$\tau = -b^{-1/2}$.

The negentropy of this distribution is
\begin{eqnarray}
\Omega_2(\hat{p}_{\Omega_2}[f]) &=& -\frac{1}{2} + \frac{1}{2}\int \hat{p}^2_{\Omega_2}[f](x)%
= -\frac{1}{2} + \frac{1}{2}\int_{-a}^a \left(-\lambda - \frac{|x|}{b}\right)^2 %
= -\frac{1}{2} + \frac{1}{2}\int_{-a}^a \left(\lambda^2 +\frac{2\lambda|x|}{b} + \frac{x^2}{b^2} \right)\nonumber\\
&=& -\frac{1}{2} + \lambda^2 a + \frac{\lambda a^2}{b} + \frac{\lambda a^3}{3b^2} %
= -\frac{1}{2} +  \frac{a^3}{b^2} - \frac{a^3}{b^2} + \frac{a^3}{3b^2} %
= -\frac{1}{2} +  \frac{1}{3\sqrt{b}}.
\end{eqnarray}

\subsection{Location-scale families}\label{sec:proof_location_scale}

We first show that $a$ is the solution of the equation $ag'(a) - g(a) + g(0) = \frac{1}{2}$.
From symmetry around $\mu$, we must have
\begin{eqnarray}
\frac{1}{2} = \int_{\mu}^{\mu + a\sigma} \left(\frac{1}{\sigma}g'(a) - \frac{1}{\sigma}g'\left( \frac{t-\mu}{\sigma}\right)\right) dt = \int_{0}^{a} \left(g'(a) - g'(s)\right)  ds = ag'(a) - g(a) + g(0),
\end{eqnarray}
where we made a variable substitution $s = (t-\mu)/\sigma$,
which proves the desired result.
Now we show that a solution always exists if $g$ is strongly convex, {\it i.e.}, if there is some $\gamma>0$ such that
$g(0) \ge g(s) - sg'(s) + \frac{\gamma}{2}s^2$ for any $s \ge 0$.
Let $F(s) := sg'(s) - g(s) + g(0)$. We want to show that the equation $F(a) = \frac{1}{2}$ has a solution. Since $g$ is continuously differentiable, $F$ is continuous.
From the strong convexity of $g$, we have that $F(s) \ge \frac{\gamma}{2}s^2$ for any $s \ge 0$, which implies that $\lim_{s\rightarrow +\infty} F(s) = +\infty$.
Therefore, since $F(0) = 0$, we have by the intermediate value theorem that there must be some $a$ such that $F(a) = \frac{1}{2}$.

\section{Proofs for $\beta$-Gaussian distributions}

\subsection{Proof of Proposition~\ref{prop:beta_gauss}}\label{sec:proof_beta_gauss}

First, we note that the standard parabola $f_0(z) = -\frac{1}{2}\|z\|^2$ indeed induces a spherical
distribution, since it has density
\begin{equation}\label{eq:standard_beta_gauss}
p_0(z) = \hat{p}_{\Omega_\alpha}[f_0](z) = {\left[(\alpha-1)\left(-\tau -
\frac{1}{2} \|z\|^2\right) \right]}^{\nicefrac{1}{\alpha-1}}_{+} = g(\|z\|^2) \,
\end{equation}
where $g(r^2) = [(\alpha-1)(-\tau - \nicefrac{r^2}{2}]_{+}^{\nicefrac{1}{\alpha-1}}$.
The density of $t = \mu + Az$, where $AA^\top=\tilde{\Sigma}$, is
\begin{eqnarray}
    p(t) &=& \left[(\alpha-1)\left(-\tau - \frac{1}{2}(t-\mu)^\top {\tilde{\Sigma}}^{-1} (t - \mu)\right)
\right]^{\tfrac{1}{\alpha-1}}_+ |\tilde{\Sigma}|^{-\nicefrac{1}{2}}\nonumber\\
    &=& \left[(\alpha-1)\left(-\tau |\tilde{\Sigma}|^{-\frac{\alpha-1}{2}} - \frac{1}{2}|\tilde{\Sigma}|^{-\frac{\alpha-1}{2}}(t-\mu)^\top \tilde{\Sigma}^{-1} (t - \mu)\right)
\right]^{\tfrac{1}{\alpha-1}}_+ \nonumber\\
    &=& \hat{p}_{\Omega_{\alpha}}[f](t),
\end{eqnarray}
with $f(t) = -\frac{1}{2}(t-\mu)^\top \Sigma^{-1}(t-\mu)$ and $\Sigma = |\tilde{\Sigma}|^{\frac{\alpha-1}{2}} \tilde{\Sigma}$.
The expression for $A$ is obtained by solving $\tilde{\Sigma} = AA^\top$ and $\Sigma = |\tilde{\Sigma}|^{\frac{\alpha-1}{2}}\tilde{\Sigma}$, which leads to
$\tilde\Sigma = |\Sigma|^{-\frac{1}{N + \frac{2}{\alpha-1}}} \Sigma$ and
$A = \tilde{\Sigma}^{\sfrac{1}{2}} = |\Sigma|^{-\frac{1}{2N+\frac{4}{\alpha-1}}} \Sigma^{\sfrac{1}{2}}$.
This allows us to focus our study on
the standard $\beta$-Gaussian from Equation~\eqref{eq:standard_beta_gauss}.
This is a spherical distribution, and thus has stochastic characterization $z =
ru$, for some radius random variable $r$.

First, we establish the support and normalizing constants. From
Equation~\eqref{eq:standard_beta_gauss}, $p_0(z) > 0$ iff $1/2 \|z\|^2 > \tau$.
The support is therefore the open sphere $\|z\| < R$, with radius
$R=(-2\tau)^\frac{1}{2}$.

Next, we characterize the density of the random variable $r$.
By \citep[Theorem~2.9]{fang}, the density of $r$ is
\begin{equation}
q(r) \,\,=\,\, \frac{2 \pi^{\nicefrac{N}{2}}}{\Gamma(\nicefrac{N}{2})} r^{N-1} g(r^2)
\,\,=\,\, \frac{2 \pi^{\nicefrac{N}{2}}}{\Gamma(\nicefrac{N}{2})} r^{N-1}
\left[(\alpha-1)\left( -\tau - \frac{r^2}{2}\right)\right]_{+}^{\nicefrac{1}{\alpha-1}}\,.
\end{equation}
Substituting $R$ for $\tau$ and rearranging, we have
\begin{equation}
\begin{aligned}
q(r) &= \frac{2 \pi^{\nicefrac{N}{2}}}{\Gamma(\nicefrac{N}{2})} r^{N-1}
\left(\frac{\alpha-1}{2}\right)^{\tfrac{1}{\alpha-1}}
\left[R^2 - r^2\right]_{+}^{\nicefrac{1}{\alpha-1}} \\
&= \frac{2 \pi^{\nicefrac{N}{2}}}{\Gamma(\nicefrac{N}{2})} r^{N-1}
\left(\frac{\alpha-1}{2}\right)^{\tfrac{1}{\alpha-1}}
R^{\tfrac{2}{\alpha-1}}\left[1 - (\nicefrac{r}{R})^2\right]_{+}^{\nicefrac{1}{\alpha-1}}\,,
\end{aligned}
\end{equation}
and notice that $q(r)>0$ iff $r \in [0, R)$,
thus, the radius has bounded support.
The CDF is
\begin{equation}
Q(\gamma) = \int_0^\gamma q(r) \mathrm{d}r = \frac{2 \pi^{\nicefrac{N}{2}}}{\Gamma(\nicefrac{N}{2})}
\left(\frac{\alpha-1}{2}\right)^{\tfrac{1}{\alpha-1}}
R^{\tfrac{2}{\alpha-1}}
\int_0^\gamma
r^{N-1}\left(1 - (\nicefrac{r}{R})^2\right)^{\nicefrac{1}{\alpha-1}}\mathrm{d}r\,.
\end{equation}
The integral satisfies
\begin{equation}
\begin{aligned}
\int_0^\gamma r^{N-1}\left(1 - (\nicefrac{r}{R})^2\right)^{\nicefrac{1}{\alpha-1}}
\mathrm{d}r
&=
\frac{R^2}{2}
\int_0^\gamma r^{N-2}\left(1 - (\nicefrac{r}{R})^2\right)^{\nicefrac{1}{\alpha-1}}
\frac{2r}{R^2}\mathrm{d}r \\
&=
\frac{R^N}{2}
\int_0^\gamma (\nicefrac{r}{R})^{N-2}\left(1 - (\nicefrac{r}{R})^2\right)^{\nicefrac{1}{\alpha-1}} \frac{2r}{R^2}
\mathrm{d}r \\
&=
\frac{R^N}{2}
\int_0^{\nicefrac{\gamma^2}{R^2}} u^{\tfrac{N}{2}-1}\left(1 -
u\right)^{\nicefrac{1}{\alpha-1}} \mathrm{d}u \\
&=
\frac{R^N}{2}
B\left(\tfrac{N}{2}, \tfrac{\alpha}{\alpha-1}\right)
I_{\tfrac{\gamma^2}{R^2}}\left(\tfrac{N}{2}, \tfrac{\alpha}{\alpha-1}\right)\,,
\end{aligned}
\end{equation}
where $B$ is the Beta function and $I_z$ is the incomplete regularized Beta
function, satisfying $I_1(\cdot, \cdot) = 1$.
In other words, we have $Q(\gamma) = c I_{\gamma^2/R^2}(N/2,
\alpha/(\alpha-1))$,
with $c$ not depending on $\gamma$.
All the mass must be contained within
radius $R$, \textit{i.e.}, $Q(R)=1$, thus $c=1$ and
\begin{equation}
Q(\gamma) = I_{\tfrac{\gamma^2}{R^2}}\left(\tfrac{N}{2},
\tfrac{\alpha}{\alpha-1}\right)\,.
\end{equation}
Since $I_{z}$ is the CDF of the Beta
distribution, we have
$\frac{r^2}{R^2} \sim \mathrm{Beta}\left(\frac{N}{2}, \frac{\alpha}{\alpha -
1}\right)$.
Solving for $R$ in $c=1$ gives the desired value.

To establish the relationship between $R$ and $\tau$ for a general
$\beta$-Gaussian $\mathcal{N}_\beta(t, \mu, \Sigma)$, write
\begin{equation}
f(t) = -\frac{1}{2} (t-\mu)^\top \Sigma^{-1} (t-\mu)
=  -\frac{1}{2} \|\Sigma\|^{-\frac{1}{N + \frac{2}{\alpha-1}}} (t-\mu)^\top \tilde{\Sigma}^{-1} (t-\mu)
=  -\frac{1}{2} \|\Sigma\|^{-\frac{1}{N+\frac{2}{\alpha-1}}} \|z\|^2 \,,
\end{equation}
therefore $\|z\| < R$ is equivalent to $f(t) > \tau=-\frac{R^2}{2}
\|\Sigma\|^{-\frac{1}{N+\frac{2}{\alpha-1}}}$.

\subsection{Fenchel-Young Loss for $\beta$-Gaussian Distributions: Proof of
Proposition~\ref{prop:beta_gaussian_fy}}\label{sec:proof_beta_gaussian_fy}

First, note that, up to a constant term which does not affect the Fenchel-Young loss, we can write $f_\theta(t) = -\frac{1}{2}(t-\mu_f)^\top \Sigma_f^{-1}(t-\mu_f) + \frac{1}{2}\mu_f^\top \Sigma_f^{-1} \mu_f = \theta^\top \phi(t)$, with
$\phi(t) = [t, \mathrm{vec}(tt^\top)]$ and $\theta = [\Sigma_f^{-1}\mu_f, -\frac{1}{2}\mathrm{vec}(\Sigma_f^{-1})]$.
Let $p_\theta \equiv \hat{p}_{\Omega_\alpha}[f_\theta]$.
From Prop.~\ref{prop:entropy_normalizing} we have
\begin{equation}\label{eq:fyloss_beta_gaussians_proof}
L_{\Omega_\alpha}(f_\theta, p) = \Omega_\alpha(p)  - \Omega_\alpha(p_\theta) - \theta^\top (\mathbb{E}_{p}[\phi(t)] - \mathbb{E}_{p_\theta}[\phi(t)]).
\end{equation}
From Prop.~\ref{prop:mean_var_entropy_beta_gaussians} we have
\begin{equation}
\mathbb{E}_{p}[\phi(t)] = [\mu, \mathrm{vec}(\mathrm{Var}(t) + \mu\mu^\top)]
= \left[\mu, \mathrm{vec}\left(\left(\frac{1}{\alpha} + (\alpha-1)\Omega_{\alpha}(p)\right)\Sigma + \mu\mu^\top\right)\right]
\end{equation}
and
\begin{equation}
\mathbb{E}_{p_\theta}[\phi(t)] = [\mu_f, \mathrm{vec}(\mathrm{Var}(t) + \mu_f\mu_f^\top)]
= \left[\mu_f, \mathrm{vec}\left(\left(\frac{1}{\alpha} + (\alpha-1)\Omega_{\alpha}(p_\theta)\right)\Sigma_f + \mu_f\mu_f^\top\right)\right].
\end{equation}
Plugging in Equation~\eqref{eq:fyloss_beta_gaussians_proof}, we get
\begin{eqnarray}\label{eq:fyloss_beta_gaussians_proof_02}
L_{\Omega_\alpha}(f_\theta, p) &=& \Omega_\alpha(p)  - \Omega_\alpha(p_\theta) - \mu_f^\top \Sigma_f^{-1}(\mu-\mu_f) + \frac{1}{2}\mathrm{vec}(\Sigma_f^{-1})^\top \cdot \nonumber\\
&&
\mathrm{vec}\left(\left(\frac{1}{\alpha} + (\alpha-1)\Omega_{\alpha}(p)\right)\Sigma - \left(\frac{1}{\alpha} + (\alpha-1)\Omega_{\alpha}(p_\theta)\right)\Sigma_f + \mu\mu^\top - \mu_f\mu_f^\top\right)\nonumber\\
&=& \Omega_\alpha(p)  - \Omega_\alpha(p_\theta) - \mu_f^\top \Sigma_f^{-1}(\mu-\mu_f) + \frac{1}{2}(\mu^\top \Sigma_f^{-1}\mu - \mu_f^\top \Sigma_f^{-1}\mu_f) +
\nonumber\\
&&
\frac{1}{2}\mathrm{vec}(\Sigma_f^{-1})^\top
\mathrm{vec}\left(\left(\frac{1}{\alpha} + (\alpha-1)\Omega_{\alpha}(p)\right)\Sigma - \left(\frac{1}{\alpha} + (\alpha-1)\Omega_{\alpha}(p_\theta)\right)\Sigma_f\right)\nonumber\\
&=& \Omega_\alpha(p)  - \Omega_\alpha(p_\theta) + \frac{1}{2}(\mu-\mu_f)^\top \Sigma_f^{-1}(\mu - \mu_f) +
\nonumber\\
&&
\frac{1}{2} \left(\frac{1}{\alpha} + (\alpha-1)\Omega_{\alpha}(p)\right) \mathrm{Tr}(\Sigma_f^{-1}\Sigma)
- \frac{N}{2} \left(\frac{1}{\alpha} + (\alpha-1)\Omega_{\alpha}(p_\theta)\right).
\end{eqnarray}
Using the expression for the entropy in Prop.~\ref{prop:mean_var_entropy_beta_gaussians}, we get
\begin{equation}
\Omega_\alpha(p)  - \Omega_\alpha(p_\theta) = \frac{R^2}{2\alpha + N(\alpha-1)}\left(|\Sigma|^{-\frac{1}{N + \frac{2}{\alpha-1}}} - |\Sigma_f|^{-\frac{1}{N + \frac{2}{\alpha-1}}}\right)
\end{equation}
and
\begin{equation}
\frac{1}{\alpha} + (\alpha-1)\Omega_\alpha(p) =  \frac{(\alpha-1)R^2}{2\alpha + N(\alpha-1)} |\Sigma|^{-\frac{1}{N + \frac{2}{\alpha-1}}}.
\end{equation}
Plugging this in \eqref{eq:fyloss_beta_gaussians_proof_02} leads to the  expression in Prop.~\ref{prop:beta_gaussian_fy}.

As for the cross-$\Omega$ loss $L_{\Omega_\alpha}^{\times}(f_\theta, p)$, we have from Definition~\ref{def:fyloss} and Prop.~\ref{prop:mean_var_entropy_beta_gaussians}:
\begin{eqnarray}
L_{\Omega_\alpha}^{\times}(f_\theta, p) &=& L_{\Omega_\alpha}(f_\theta, p) - \Omega_{\alpha}(p)\nonumber\\
&=& L_{\Omega_\alpha}(f_\theta, p) +\frac{1}{\alpha(\alpha-1)} - \frac{R^2 |\Sigma|^{-\frac{1}{N+\frac{2}{\alpha-1}}}}{2\alpha + N(\alpha-1)}\nonumber\\
&=& \frac{1}{2}(\mu-\mu_f)^\top \Sigma_f^{-1} (\mu-\mu_f) + \frac{1}{\alpha(\alpha-1)}
+
\frac{R^2}{2\alpha + N(\alpha-1)} \cdot \\
&& \cdot \left( |\Sigma|^{-\frac{1}{N + \frac{2}{\alpha-1}}} \left(\frac{\alpha-1}{2} \mathrm{Tr}(\Sigma_f^{-1} \Sigma) \right) - |\Sigma_f|^{-\frac{1}{N + \frac{2}{\alpha-1}}} \left( 1 + \frac{N(\alpha-1)}{2}\right)\right).
\end{eqnarray}

In the univariate case ($N=1$) this becomes:
\begin{eqnarray}
L_{\Omega_\alpha}^{\times}(f_\theta, p) &=& \frac{(\mu-\mu_f)^2}{2\sigma_f^2} + \frac{1}{\alpha(\alpha-1)}
+
\frac{R^2}{3\alpha - 1} \cdot \left( \frac{\alpha-1}{2} \sigma^{\frac{2(1-\alpha)}{1+\alpha}} \frac{\sigma^2}{\sigma_f^2}  - \frac{\alpha+1}{2} \sigma_f^{\frac{2(1-\alpha)}{1+\alpha}} \right)\nonumber\\
&=& \frac{(\mu-\mu_f)^2}{2\sigma_f^2} + \frac{1}{\alpha(\alpha-1)}
+
\frac{R^2}{3\alpha - 1} \cdot \left( \frac{\alpha-1}{2} \frac{\sigma^{\frac{2}{1+\alpha}}}{\sigma_f^2}  - \frac{\alpha+1}{2} \sigma_f^{\frac{2(1-\alpha)}{1+\alpha}} \right).
\end{eqnarray}

\paragraph{Geometry.}
In the case of 1-d Gaussian distributions, the KL divergence induces a
hyperbolic geometry on the $[\mu, \sigma]$ half-space, isomorphic to the
Poincar\'e half-space model: geodesics and interpolating points in this space
correspond to half-circles, \textit{e.g.}, the midpoint between two 1-d Gaussians has
larger standard deviation than each \cite[Remark 8.2]{cot}. The Fenchel-Young
loss between a $\beta$-Gaussian and a parabola $f$ reduces to the KL divergence
for $\alpha=1$, suggesting a similarly interesting induced geometry.
Considering the $[\mu, \sigma]$ space as a manifold,
its geometry is captured by the metric tensor, which in the Gaussian case
is $F_1 = \operatorname{diag}([\sigma^{-2}, 2\sigma^{-2}])$.
Taking a second-order Taylor expansion of the Fenchel-Young loss%
\footnote{Despite the asymmetry, the result is the same regardless which pair of
parameters are varied.}
yields the Riemannian metric tensor
$F_\alpha = \operatorname{diag}([\sigma^{-2}, \frac{4R^2(\alpha -
1)}{(\alpha+1)(3\alpha-1)} \sigma^{-\frac{4\alpha}{\alpha+1}}])$.
Figure \ref{fig:fy_geodesics} shows geodesics in this space for different values
of $\alpha$.

\begin{figure}
\centering
\includegraphics[width=.49\textwidth]{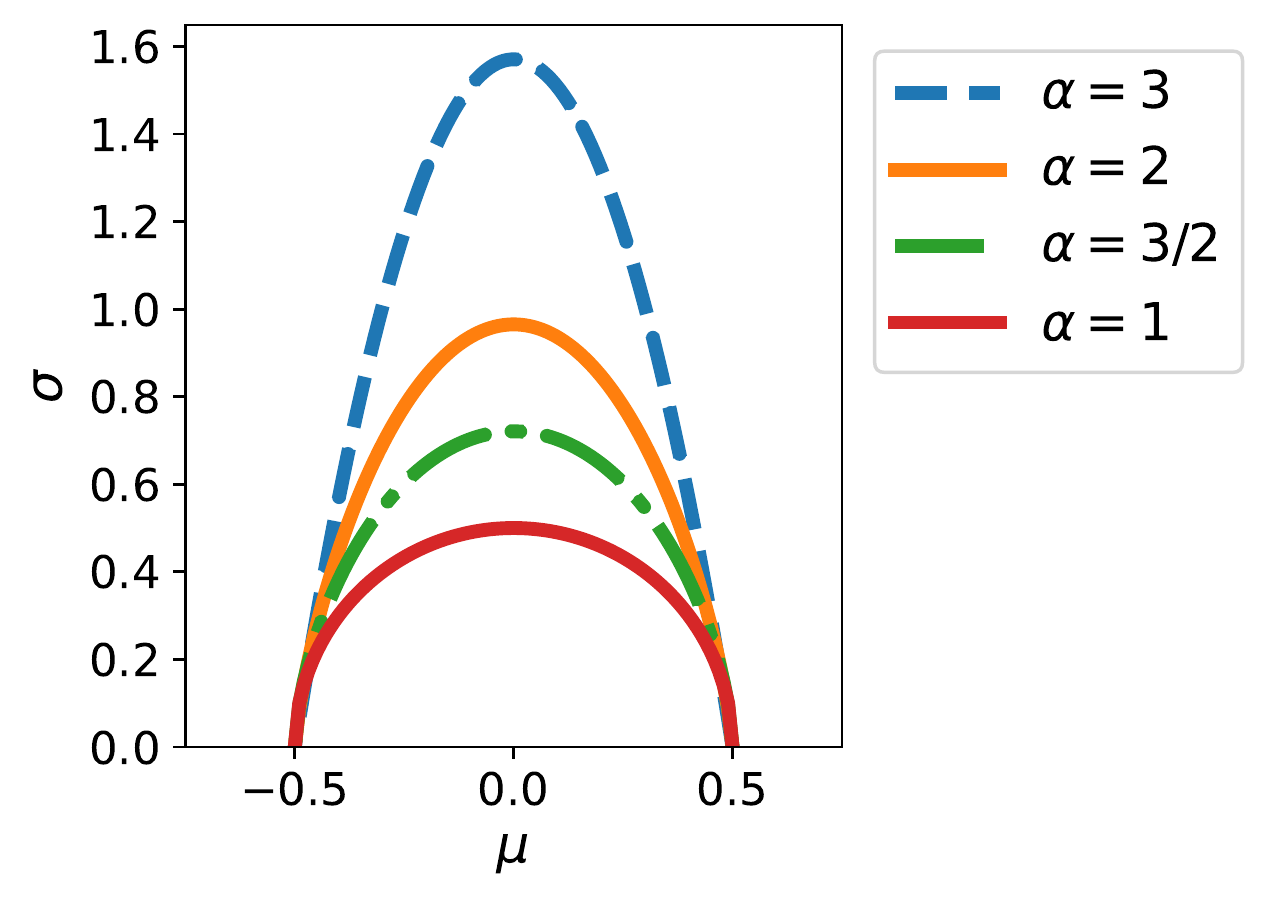}
\caption{\label{fig:fy_geodesics}Geodesics of the $\beta$-Gaussian Fenchel-Young loss
in the $[\mu, \sigma]$ half-plane between two Dirac limit
cases with means $\pm 0.5$. For $\alpha=1$, the FY loss (equivalent to the Kullback-Leibler
divergence) induces the Poincar\'e half-plane
geometry, and geodesics are half-circles.}
\end{figure}

\subsection{Proof of Proposition~\ref{prop:jacobian_entmax}}\label{sec:proof_jacobian_entmax}

We have
\begin{eqnarray}
\nabla_\theta \mathbb{E}_{p}[\psi_i(t)] &=& \nabla_\theta \int_S p_\theta(t) \psi_i(t)
= \int_S \nabla_\theta p_\theta(t) \psi_i(t) \nonumber\\
&=& \int_S p_\theta^{2-\alpha}(t) \nabla_\theta \log_{2-\alpha}(p_\theta(t)) \psi_i(t) \nonumber\\
&=& \int_S p_\theta^{2-\alpha}(t) \nabla_\theta (\theta^\top \phi(t) - A_\alpha(\theta)) \psi_i(t) \nonumber\\
&=& \int_S p_\theta^{2-\alpha}(t) (\phi(t) - \nabla_\theta A_\alpha(\theta)) \psi_i(t).
\end{eqnarray}
Using the expression for $\nabla_\theta A_\alpha(\theta)$ from Proposition~\ref{prop:gradient_A} yields the desired result.

\section{Proofs for continuous fusedmax}

\subsection{Proof of Proposition~\ref{prop:fusedmax_unimodal_symmetric}}\label{sec:proof_fusedmax_unimodal_symmetric}

We split this proof into two parts. First, we show that
$\hat{p}_{\Orof}(t) = \left[\hat{u}[f](t) - \tau\right]_+$ where $\hat{u}[f]$ is the solution of the
unconstrained ROF optimization:
\begin{equation}
\argmin_{u \in H^1} \int_S (f - u)^2 + \gamma \operatorname{TV}(u)\,.
\end{equation}
Then, we invoke the \emph{taut string} algorithm to solve the ROF optimization
for signals of the given form, yielding the desired result.

\begin{definition}[Total variation.]\label{def:total_variation}
The total variation of a function $f \in L^1(S)$ is defined as
\begin{equation}
\operatorname{TV}(u) = \sup \left\{
\int_S u(t) \xi'(t) : \xi \in C_0^1(S), \| \xi \| \leq 1
\right\}\,,
\end{equation}
where $C_0^1$ denotes the set of continuously differentiable functions with
compact support over $S$.
\end{definition}
Indeed, when $u'$ exists, this definition leads to $\operatorname{TV}(u) = \int_S |u'|$.

\paragraph{Decomposition of constrained ROF optimization.}

Let $L^2(S)$ denote the standard Hilbert space of Lebesgue-measurable,
square-integrable functions over an interval $S$,
and $L^2_+(S)$ the cone of non-negative functions.
We can identify densities in $\mathcal{M}_+^1(S)$ with probability density
functions in $L^2_+(S) \cap \{ p : \int_S p = 1 \}.$
We shall use $\langle \cdot, \cdot \rangle$ to denote the inner product in
$L^2(S)$, and $\| \cdot \|$ the corresponding norm.
\begin{proposition}\label{prop:fusedmax_decomposition}
Assume that $f$ is chosen such that the ROF objective is bounded, \textit{i.e.},
\begin{equation}
\inf_{u \in L^2}
\frac{1}{2} \| f - u \|^2 +
\gamma \operatorname{TV}(u)
< \infty\,,
\end{equation}
and let $\hat{u}[f]$ denote the maximizer above. Then,
\begin{equation}
\hat{p}_{\Orof}[f] =
\argmin_{p \in \mathcal{M}_+^1}
\frac{1}{2} \|f - p\|^2 + \gamma \operatorname{TV}(p)
\end{equation}
exists and is given by
\begin{equation}
\hat{p}_{\Orof}[f](t) = \left[\hat{u}[f] - \tau\right]_+,
\end{equation}
for some $\tau$ that can be found by solving
$\int_S \hat{p}_{\Orof}[f] = 1$.
\end{proposition}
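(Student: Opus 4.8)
The plan is to verify directly that $\hat p := [\hat u[f] - \tau]_+$ satisfies the first-order optimality conditions of the constrained problem for a suitable normalization constant $\tau$, where $\hat u[f]$ is the (unique, by strict convexity) unconstrained ROF minimizer guaranteed to exist by the boundedness hypothesis. Since $\Phi(p) := \frac{1}{2}\|f - p\|^2 + \gamma\operatorname{TV}(p)$ is l.s.c., coercive, and strictly convex on the convex feasible set $C := \{p \in L^2_+(S) : \int_S p = 1\}$, a unique minimizer exists, so it suffices to exhibit one feasible point meeting the optimality conditions. First I would record existence of $\tau$: the map $\tau \mapsto \int_S [\hat u[f] - \tau]_+$ is continuous and nonincreasing, tending to $0$ as $\tau \to +\infty$ and to $+\infty$ as $\tau \to -\infty$, so the intermediate value theorem supplies a $\tau$ with $\int_S \hat p = 1$, whence $\hat p \in C$ and $\hat p = \hat{p}_{\Orof}[f]$ is the candidate.

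Next I would exhibit an explicit element of the normal cone $N_C(\hat p)$, which lets me avoid any constraint qualification (I only need sufficiency of optimality for the convex pair $(\Phi, C)$). Set $r := [\tau - \hat u[f]]_+ \ge 0$, which is supported on $\{\hat p = 0\}$, so that $\hat u[f] - \tau = \hat p - r$ and $\langle r, \hat p\rangle = 0$. Define $n := \tau\mathbf 1 - r$, where $\mathbf 1$ is the constant function. For any $q \in C$ one computes $\langle n, q - \hat p\rangle = \tau(\int_S q - \int_S \hat p) - \langle r, q\rangle = -\langle r, q\rangle \le 0$ (using $\int_S q = \int_S \hat p = 1$ and $r, q \ge 0$), so $n \in N_C(\hat p)$. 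Using the unconstrained certificate $f - \hat u[f] = \gamma\xi_0$ with $\xi_0 \in \partial\operatorname{TV}(\hat u[f])$, I then find $f - \hat p - n = (f - \hat u[f]) + (\hat u[f] - \tau - \hat p) + r = \gamma\xi_0$. Consequently $-\big((\hat p - f) + \gamma\xi_0\big) = n \in N_C(\hat p)$, so that $0 \in \partial\Phi(\hat p) + N_C(\hat p)$ and $\hat p$ is the global minimizer, \emph{provided} $\xi_0 \in \partial\operatorname{TV}(\hat p)$. The whole argument thus reduces to this single membership.

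The crux, and the main obstacle, is the transfer lemma: if $\xi_0 \in \partial\operatorname{TV}(\hat u[f])$ then $\xi_0 \in \partial\operatorname{TV}([\hat u[f] - \tau]_+)$. I would split it into two invariances. The first is translation invariance, which is easy: since $\operatorname{TV}(u + \mathrm{const}) = \operatorname{TV}(u)$, the directional derivative of $\operatorname{TV}$ along $\mathbf 1$ vanishes, so every subgradient satisfies $\langle \xi, \mathbf 1\rangle = 0$, giving $\partial\operatorname{TV}(\hat u[f]) = \partial\operatorname{TV}(\hat u[f] - \tau)$. The second, harder step is truncation invariance. I would establish it through the dual characterization of $\partial\operatorname{TV}$ by a calibrating field: $\xi_0 = -z'$ for some $z$ with $\|z\|_\infty \le 1$ that agrees with the sign of the derivative measure $D(\hat u[f]-\tau)$ on its support. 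Because applying $[\cdot]_+$ only flattens the function on $\{\hat u[f]-\tau < 0\}$ and introduces no new jumps, the measure $D[\hat u[f]-\tau]_+$ is the restriction of $D(\hat u[f]-\tau)$ to $\{\hat u[f]-\tau > 0\}$, so the same $z$ remains admissible and calibrated for $[\hat u[f]-\tau]_+$, yielding $\xi_0 \in \partial\operatorname{TV}(\hat p)$. The delicate point is the behavior at the transition set $\{\hat u[f] = \tau\}$, which I would handle rigorously via the coarea decomposition $\operatorname{TV}(u) = \int \operatorname{Per}(\{u > s\})\,ds$: truncation at $0$ merely discards the sub-zero super-level sets while leaving the remaining ones — and hence the calibration — untouched. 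With the transfer lemma in place, the optimality certificate of the previous paragraph is complete, and convexity then identifies $\hat p = [\hat u[f] - \tau]_+$ as the unique solution, as claimed.
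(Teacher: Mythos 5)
Your argument is correct in substance, but it takes a genuinely different route from the paper's. The paper eliminates the normalization constraint by a Lagrange-multiplier/shift argument and then handles the non-negativity constraint inside Overgaard's taut-string duality framework: it replaces the $\ell_1$ term in the dual construction by the indicator of $L^2_+$, so the dual set becomes $\gamma K' + L^2_-$, derives the primal--dual optimality conditions $u^\star = [f-\gamma{\xi^\star}']_+$, and then shows that the \emph{dual} variable $\hat\xi$ certifying the unconstrained ROF solution also satisfies the constrained variational inequality (via the identity with the minimization of $\int_S H(f-\gamma\xi')$, $H(t)=\tfrac12[t]_+^2$). You instead build a direct \emph{primal} optimality certificate: an explicit normal-cone element $n=\tau\mathbf 1-r$ for the simplex constraint, plus a subgradient-transfer lemma stating that a TV-subgradient at $\hat u[f]$ remains one at $[\hat u[f]-\tau]_+$, proved by translation invariance together with a calibration/coarea argument (the calibrating field $z$ with $\int z\,dD\hat u=\operatorname{TV}(\hat u)$ forces $\int z\,dD\mathbf 1_{\{\hat u>s\}}=\operatorname{Per}(\{\hat u>s\})$ for a.e.\ $s$, and truncation merely discards the super-level sets below the threshold). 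The two proofs are morally parallel --- both show that the certificate of the unconstrained problem survives the constraints --- but yours works with $\partial\operatorname{TV}$ in the primal while the paper works with the dual field $\xi$; your treatment of the normalization constraint via an explicit normal-cone element is arguably cleaner than the paper's ``assume $\tau$ is fixed at its optimal value'' Lagrangian step, and you also supply the existence of $\tau$ by the intermediate value theorem, which the paper leaves implicit. The one caveat you share with the paper is that $\mathbf 1\notin L^2(S)$ when $S$ is unbounded (the paper's term $\tau\int_S p$ has the same defect), so both arguments implicitly require $S$ bounded or a limiting argument, as the paper later acknowledges by working on $[-B,B]$.
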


\begin{proof}
The proof proceeds in two parts. First, we eliminate the normalization
constraint by showing it can be absorbed into the function $f$. Then, we show
that the non-negativity constraint can be obtained via clipping.
We remark that in the discrete case this result is well-known \citep{yudecomp},
but the proof therein does not readily apply in the continuous case.

Using the method of Lagrange multipliers, we move the normalization constraint
into the objective,
yielding
\begin{align*}
\hat{p}_{\Orof}[f] =&
\argmin_{p \in L^2_+}
\frac{1}{2} \|f - p\|^2 + \gamma \operatorname{TV}(p) + \tau \int_S p \\
=&
\argmin_{p \in L^2_+}
\frac{1}{2} \| p \|^2 + \frac{1}{2} \| f \|^2 - \langle p, f - \tau \rangle + \gamma
\operatorname{TV}(p). \\
\intertext{Assuming $\tau$ fixed at its (unknown) optimal value, this is equivalent to}
=&\argmin_{p \in L^2_+}
\frac{1}{2} \| p - (f - \tau) \|^2 + \gamma \operatorname{TV}(p). \\
\intertext{Using the invariance of $TV$ to a constant, we then get}
=&\argmin_{p \in L^2_+}
\frac{1}{2} \| (p + \tau) - f \|^2 + \gamma \operatorname{TV}(p + \tau). \\
\end{align*}
Choosing $p = \hat{u}[f] - \tau$ would minimize the above objective, but
might not satisfy the non-negativity constraints. We next show that
$[\hat{u}[f] - \tau]_+$
is optimal for the constrained problem.
Without loss of generality, we may assume $\tau=0$, so it suffices to show that:
\begin{equation}
\argmin_{u \in L^2_+}
\frac{1}{2} \| u - f \|^2 + \gamma \operatorname{TV}(u) =
\left[\hat{u}[f]\right]_+\,.
\end{equation}
The rest of the proof closely follows \citep[Theorem 5]{overgaard2019taut},
replacing $\|u\|_1$ with $\iota_{L^2_+}(u)$, and thus replacing
the soft threshold map with the clipping map at zero, and the dual
set $B$, instead of the $L^\infty$ unit ball, is the polar cone
$(L^2_+)^\circ = \{ f \in L^2 : f(t) \leq 0 \} = L^2_-.$
Specifically, since
\[\iota_{L^2_+}(f) =
(\sigma_{L^2_+}^*)(f) =
\sigma_{(L^2_+)^\circ}(f) = \sup_{\eta \in L^2_-} \langle u, \eta \rangle,\]
we have
\[
E_\gamma(u) = \sup_{\xi \in K, \eta \in L^2_-}
\frac{1}{2} \| f - u \|^2 + \gamma \langle u, \xi' \rangle + \langle u, \eta
\rangle
= \sup_{\zeta \in C}
\frac{1}{2} \| f - u \|^2 + \langle u, \zeta \rangle
\,
\]
where
$L^2_-$ is a polar cone in $L^2(S)$ and thus closed and convex
\citep[Proposition 6.24]{Bauschke_Combettes2011},
$K$ is a set of test functions, closed and convex in $H^1(S)$ per \citep[Lemma 2]{overgaard2019taut}
implying $K' = \{\xi': \xi \in K\}$ is convex and closed in $L^2(S)$.
We define
$C = \gamma K' + L^2_-$, which has the same structure as in the proof of
\citep[Theorem 5]{overgaard2019taut}, so it is also a closed convex set.
Following \citep[Theorem 3]{overgaard2019taut} we have that
\begin{equation}
\min E_\gamma(u) = \max_{\zeta \in C} \|f \|^2 - \| f - \zeta \|^2\,
\end{equation}
with an optimal primal-dual pair satisfying
\begin{equation}
u^\star = f - \zeta^\star
\end{equation}
alongside the necessary and sufficient optimality condition
\begin{equation}
\langle f - \zeta^\star, \zeta - \zeta^\star \rangle \leq 0
~\text{for all}~\zeta \in C\,.
\end{equation}
Setting $\zeta = \gamma {\xi^\star}' - \eta$ we get the condition
\begin{equation}
\langle
f - \gamma {\xi^\star}' - \eta^\star, \eta - \eta^\star
\rangle \leq 0
~\text{for all}~\eta \in L^2_-\,,
\end{equation}
which implies by the projection theorem that
\begin{equation}
\eta^\star = [f - \gamma {\xi^\star}']_-
\end{equation}
and thus
\begin{equation}
u^\star
= f - \gamma {\xi^\star}' - \eta^\star
= f - \gamma {\xi^\star}' - [f - \gamma{\xi^\star}']_-
= [f - \gamma {\xi^\star}']_+\,.
\end{equation}
It remains to show that $\xi^\star$ can be taken to be the optimal dual variable
from the unconstrained model, \textit{i.e.}, that $\gamma{\xi^\star}' = f - \hat{u}[f]$,
then, it follows that $u^\star = \left[ \hat{u}[f] \right]_+$.
To show this, we set $\zeta = \gamma \xi - \eta^\star$, giving
\[
\langle f - \gamma{\xi^\star}' - \eta^\star, \xi' - {\xi^\star}' \rangle \leq 0
~\text{for all}~\xi \in K\,,
\]
and since $\eta^\star = [f - \gamma {\xi^\star}']_-$, $\xi^\star$ must satisfy
\begin{equation}\label{eq:rof_xi_condition}
\langle [f - \gamma {\xi^\star}']_+, \xi' - {\xi^\star}' \rangle \leq 0
~\text{for all}~\xi \in K\,,
\end{equation}
We define $H(t) = \frac{1}{2}[t]_+^2$, chosen such that $H'(t) = [t]_+$.
By \citep[Lemma 1]{overgaard2019taut},
we have that the ROF taut-string solution $\hat{\xi}[f]$ is also a solution to
\[ \inf_{\xi \in K} L_H(f - \gamma \xi')
\quad\text{where}\quad
L_H(W) = \int_S H(W')\,.
\]
But this problem has optimality condition
\[
\langle H'(f - \gamma {\xi^\star}'), \xi' - {\xi^\star}' \rangle \leq 0
\]
which is exactly Equation~\eqref{eq:rof_xi_condition}.
This shows that the choice $\xi^\star = \hat{\xi}$ and $\eta^\star = [f - \gamma
\hat{\xi}']_-$ satisfies the optimality conditions, so $[\hat{u}]_+$ is optimal
for the constrained problem.
\end{proof}

\paragraph{Application of the taut string algorithm.}

In this section, we prove Proposition~\ref{prop:fusedmax_unimodal_symmetric},
using Proposition~\ref{prop:fusedmax_decomposition} and the taut string
algorithm \citep{overgaard2019taut}.

Specifically, we assume $f$ is even and unimodal, strictly decreasing on $(0,
\infty)$, and show that
\begin{equation}
\hat{p}_{\Orof}[f](t) = [f_a(t) - \tau]_+,
\quad \text{where} \quad f_a(t) \coloneqq
\begin{cases}
f(a), & t \in (-a, a), \\
f(t), & \text{otherwise}.
\end{cases}
\end{equation}

We make the technical assumption that $S=[-B, B]$, to ensure that all
subproblems are computable and bounded. We shall see that the end result does
not depend on $B$ as long as $B$ is large enough, and therefore holds for
$S=(-\infty, \infty).$

We begin by computing the cumulative signal
\begin{equation}
F(x) = \int_{-B}^x f,
\end{equation}
which, from the monotonicity of $f$, is concave on $[-B, 0]$ and convex on $[0,
B]$.
We must compute the trajectory of a
\emph{taut string} between the ends of $F$ through a tube of radius $\gamma$,
\textit{i.e.},
\begin{equation}
\min_{W \in T_\gamma} J[W] \coloneqq \frac{1}{2} \int_{-B}^{B} (W'(x))^2
\mathrm{d}x.
\end{equation}
where $T_\gamma \coloneqq \big\{W \in H^1(S): W(-B)=F(-B), W(B)=F(B), F-\gamma
\leq W \leq F+\gamma\big\}.$
Then, by \citep[Theorem 1]{overgaard2019taut}, we have $\hat{u}[f] = W'$.

The functional $J[W]$ is equivalent to the arc length functional, so this
corresponds to finding the shortest path between the end points. The problem is
illustrated in Figure~\ref{fig:fused_cumulative}.

\begin{figure}\centering
\includegraphics[width=.48\textwidth]{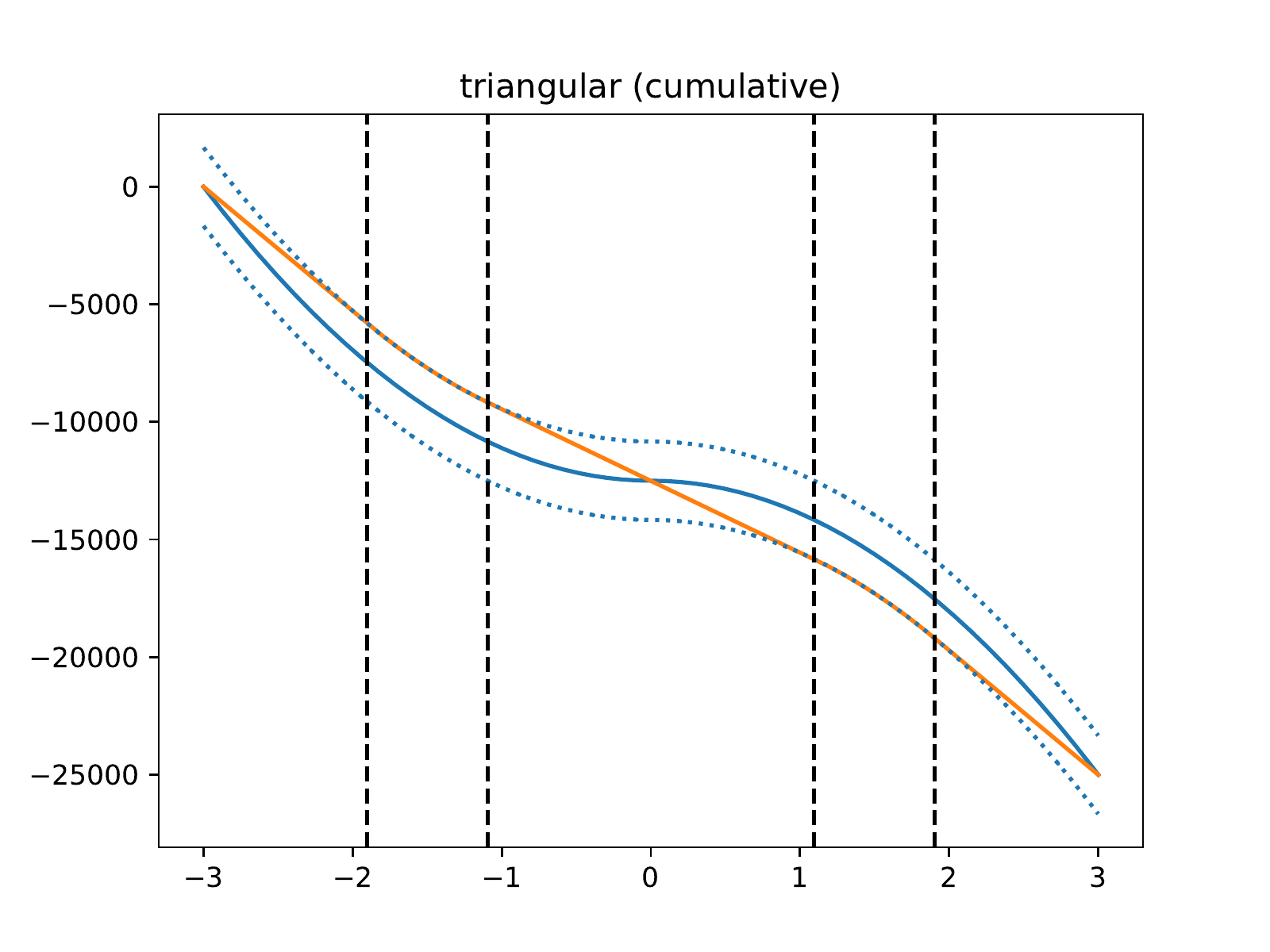}
\includegraphics[width=.48\textwidth]{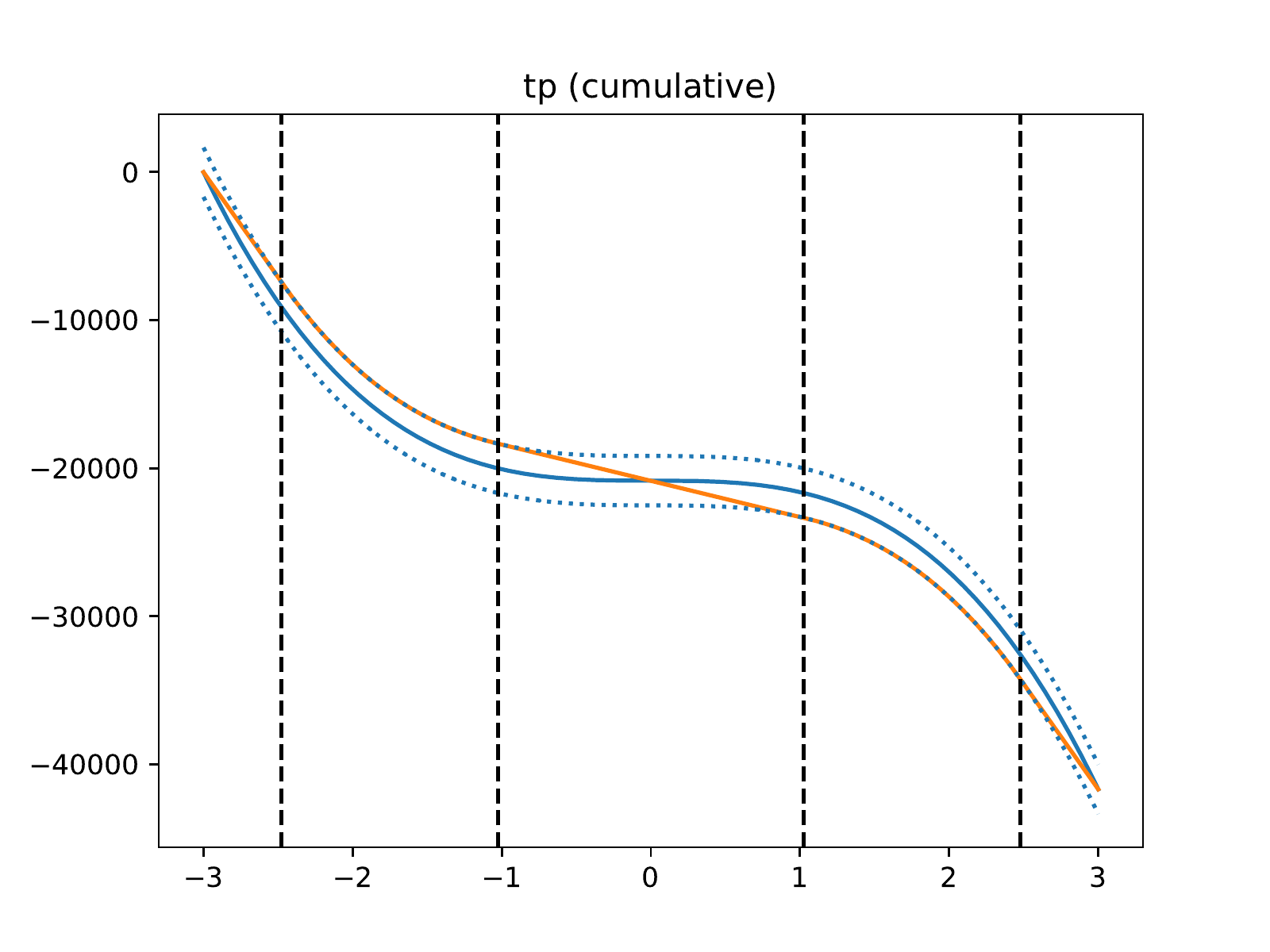}
\caption{\label{fig:fused_cumulative}Taut string interpretation. For unimodal
even potential, the solution is symmetric and the contact set has the form $(-B_0, -a) \cup (a, B_0)$. Left: $f(t) = -\nicefrac{|t|}{\sigma}$, right: $f(t) =
-\nicefrac{t^2}{2\sigma^2}$.}
\end{figure}

First, since $f$ is symmetric around the origin,
it is sufficient to consider the interval $[0, B]$. This will greatly
simplify the derivation.
Then, observe that on $[0, B]$, the ``top'' part of the tube is never an active
constraint. To show this, note that $F$ itself is feasible. It suffices to show
that any solution above $F$ has higher objective value.
Consider a perturbation $\xi \in H^1(S)$ such that $\xi(0)=\xi(B)=0$ and $\xi
\geq 0$. Calculate
\begin{equation}
\begin{aligned}
J[F+\xi] - J[F] &= \|f + \xi'\|^2 - \|f\|^2 \\
&= \|f\|^2 + \|\xi'\|^2 + 2 \DP{f}{\xi'} - \|f\|^2
&= \|\xi'\|^2 + 2\DP{f}{\xi'}.
\end{aligned}
\end{equation}
Using integration by parts, we have
\begin{equation}
\DP{f}{\xi'} = f\xi|_0^B - \DP{f'}{\xi} = \DP{-f'}{\xi}.
\end{equation}
Since $f$ is decreasing on $[0, B]$, $-f'\geq0$, therefore
\begin{equation}
J[F+\xi] - J[F] = \|\xi'\|^2 + 2 \DP{-f'}{\xi} \geq 0.
\end{equation}
We have thus shown we may ignore the top part of the tube, leaving
the simpler variational problem
\begin{equation}
\min_{W} J[W] \quad \text{s.t.}\quad W(0)=F(0), W(B)=F(B), ~\text{and}~ W \geq F-\gamma.
\end{equation}
To handle the inequality constraint, we introduce a slack function $Z$,
\begin{equation}
W = F - \gamma + \nicefrac{1}{2}Z^2.
\end{equation}
such that $W' = f + ZZ'$, and the Lagrangian can be written in terms of $Z$ and
$Z'$ as
\begin{equation}
\mathcal{L}(x, W, W') = \nicefrac{1}{2}(f + ZZ')^2.
\end{equation}
The solution must satisfy the Euler-Lagrange equations,
\begin{equation}
\frac{\mathrm{d}}{\mathrm{d}x}
\frac{\partial \mathcal{L}}{\partial \mathcal{Z'}}
-
\frac{\partial \mathcal{L}}{\partial \mathcal{Z}}
= 0.
\end{equation}
By the chain rule,
\begin{equation}
\begin{aligned}
\frac{\partial \mathcal{L}}{\partial \mathcal{Z'}}
&= Z (f + ZZ') = ZW',\\
\frac{\partial \mathcal{L}}{\partial \mathcal{Z}}
&= Z' (f + ZZ') = Z'W'.
\end{aligned}
\end{equation}
Then, using the product rule, we have
\begin{equation}
\frac{\mathrm{d}}{\mathrm{d}x} (ZW') - W'Z'
= ZW''
+ W'Z'  - W'Z'
= ZW'' \stackrel{!}{=}0.
\end{equation}
This means that for any $x \in [0, B]$, either $Z(x)=0$ (in which case $W = F -
\gamma$, so the path follows the path of the tube), or $Z(x)>0$ in which case
$W''(t)=0$ so the solution must be locally linear.

We may safely assume $\gamma>0$, otherwise, there is no ROF regularization and
the solution is $W=F$.
Therefore, in a small enough ball around the end points $F(0)$ and $F(B)$, the solution must be
locally linear. It remains to show that the set of points on which $Z=0$ is an
interval $(a, B_0)$. Assume there exist $c<d$ such that $Z(c)=Z(d)=0$,
but $Z(x) > 0$ for $c<x<d$. We must have $W(c)=F(c)-\gamma$ and
$W(d)=F(d)-\gamma$, but,
since $W''=0$ on $(c, d)$, $W$ must be a straight line in between, therefore
$W\big((1-\alpha)c + \alpha d)= (1-\alpha)F(c) + \alpha F(d) - \gamma$ for $\alpha \in
[0, 1]$. But since $f$ is decreasing, $F$ is concave thus
\begin{equation}
(1-\alpha)F(c) + \alpha F(d) -\gamma \leq
F\big((1-\alpha)c + \alpha d) - \gamma,
\end{equation}
therefore the choice of $W$ violates the tube constraints and is infeasible,
so the optimal $W$ must be stuck to the tube for a contiguous interval of the
form $(a,B_0)$. Taking $\hat{u}[f]=W'$ and extending by symmetry to $[-B, B]$
leads to the general form of the ROF transform of a denoised unimodal potential:
\begin{equation}\label{eq:rof_u_hat}
\hat{u}[f](t) =
\begin{cases}
f(B_0), & t \in (-B,  -B_0), \\
f(t), & t \in (-B_0, -a), \\
f(a), & t \in (-a, a), \\
f(t), & t \in (a, B_0), \\
f(B_0), & t \in (B_0,  B), \\
\end{cases}
\end{equation}
for some $a$ and $B_0$. To find these values, we turn to the ROF objective, which evaluates to
\begin{equation}
\begin{aligned}
V[\hat{u}] &= .5 \int_0^B (\hat{u} - f)^2 + \gamma \int_0^B |\hat{u}'| \\
     &= .5\Big(
\int_0^a (f(a) - f(t))^2
+\int_{B_0}^B (f(B_0) - f(t))^2
\Big)
- \gamma \big(f(B_0) - f(a)\big).
\end{aligned}
\end{equation}
Note that $V[\hat{u}]$ separates into two independent terms. To solve for
$a$, we evaluate
\begin{equation}
\frac{\partial}{\partial a} V[\hat{u}] =
f'(a) \big( a f(a) - \int_0^a f + \gamma \big) \stackrel{!}{=}0\,.
\end{equation}
Since $f$ is strictly decreasing, $f'(a) \neq 0$, leaving the identity
\begin{equation}\label{eq:implicit_a}
a f(a) - \int_0^a f + \gamma =0\,.
\end{equation}

\paragraph{Sparse projection.}
For the purposes of computing $\hat{p}_{\Orof}$ on $S=\mathbb{R}$, the specific
value of $B_0$ is not important. We next show that $B_0$ is increasing as a
function of $B$, therefore we may always set $B$ to a large enough finite value
to yield a sufficiently large $B_0$.
\begin{lemma}
As a function of $B$, $B_0$ is strictly increasing.\end{lemma}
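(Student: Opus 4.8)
The plan is to characterize $B_0$ through a first-order stationarity condition, analogous to the equation~\eqref{eq:implicit_a} that pins down $a$, and then to differentiate this condition implicitly with respect to $B$. Recall that $B_0$ is the right endpoint of the contact interval $(a, B_0)$ on which the taut string clings to the lower tube, and that the objective $V[\hat u]$ separates into an $a$-dependent piece and a $B_0$-dependent piece. Isolating the latter gives $V_{B_0}(B_0) = \tfrac12 \int_{B_0}^B (f(B_0) - f(t))^2\,dt - \gamma f(B_0)$, whose minimizer determines $B_0$ for each fixed $B$ and $\gamma$.

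First I would compute $\partial V_{B_0}/\partial B_0$. Using the Leibniz rule (the boundary term at $t = B_0$ vanishes since its integrand is $(f(B_0)-f(B_0))^2 = 0$) yields $\partial V_{B_0}/\partial B_0 = f'(B_0)\bigl(\int_{B_0}^B (f(B_0) - f(t))\,dt - \gamma\bigr)$. Because $f$ is strictly decreasing on $(0,\infty)$ we have $f'(B_0) \ne 0$, so the stationarity condition reduces to
\begin{equation}
G(B_0, B) := (B - B_0)\,f(B_0) - \int_{B_0}^B f(t)\,dt - \gamma = 0.
\end{equation}

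Next I would apply the implicit function theorem to $G$, which is $C^1$ in both arguments. The two partials are
\begin{equation}
\frac{\partial G}{\partial B} = f(B_0) - f(B), \qquad \frac{\partial G}{\partial B_0} = (B - B_0)\,f'(B_0),
\end{equation}
where the second follows because the $-f(B_0)$ arising from differentiating $(B-B_0)f(B_0)$ cancels the $+f(B_0)$ arising from differentiating $-\int_{B_0}^B f$. Since $B_0 < B$ and $f$ is strictly decreasing, $\partial G/\partial B = f(B_0) - f(B) > 0$ and $\partial G/\partial B_0 = (B - B_0)\,f'(B_0) < 0$. Hence
\begin{equation}
\frac{dB_0}{dB} = -\frac{\partial G/\partial B}{\partial G/\partial B_0} = -\frac{f(B_0) - f(B)}{(B - B_0)\,f'(B_0)} > 0,
\end{equation}
establishing that $B_0$ is strictly increasing in $B$.

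I expect the main obstacle to be purely bookkeeping: verifying that $B_0$ stays interior (so that $a < B_0 < B$ holds and the separated objective above is indeed the one to differentiate) and that $\partial G/\partial B_0 \ne 0$, which is exactly what licenses the implicit function theorem and is guaranteed by the strict monotonicity of $f$ together with $B_0 < B$. Everything else reduces to the two sign computations already displayed.
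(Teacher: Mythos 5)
Your proposal is correct and follows essentially the same route as the paper: the same implicit equation $(B-B_0)f(B_0)-\int_{B_0}^B f-\gamma=0$ obtained from stationarity of the $B_0$-dependent part of the ROF objective, the same two partial derivatives, and the same sign argument via the implicit function theorem. The only (harmless) difference is that you explicitly derive the stationarity condition from the separated objective and factor out $f'(B_0)$, whereas the paper states the condition directly.
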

\begin{proof}
As $f$ is strictly decreasing (non-constant), $f'<0$, thus the relationship
between $B_0$ and $B$ is given by $\frac{\partial V[\hat{u}]}{\partial B_0} =0$
as
\begin{equation}
M(B, B_0) = (B - B_0) f(B_0) - \int_{B_0}^B f - \gamma = 0.
\end{equation}
The partial derivatives with respect to each variable are
\begin{equation}
\begin{aligned}
\frac{\partial}{\partial B} M &= f(B_0) - f(B), \\
\frac{\partial}{\partial B_0} M &= Bf'(B_0) - f(B_0) - B_0f'(B_0) + f(B_0) =
(B-B_0)f'(B_0). \\
\end{aligned}
\end{equation}
The implicit function theorem applies, yielding
\begin{equation}
\frac{\partial B_0}{\partial B} =
-\left(\frac{\partial}{\partial B_0} M\right)^{-1}
\left(\frac{\partial}{\partial B} M\right) = \frac{f(B)-f(B_0)}{(B-B_0) f'(B_0)} > 0,
\end{equation}
where we used the monotonicity of $f$ and the fact that $B>B_0$.
\end{proof}

\paragraph{Putting things together: form of the fusedmax solution.}
The form of $\hat{u}$ was established in Equation~\eqref{eq:rof_u_hat}: it
matches the form of $f$ on $(a, B_0) \cup (-B_0, -a)$ and is constant everywhere
else. From Proposition~\ref{prop:fusedmax_decomposition}, we have that
\begin{equation}
\hat{p}_{\Omega_2}[f](t) = [\hat{u}[f](t) - \tau]_{+}\,,
\end{equation}
so $\hat{p}_{\Omega_2}$ corresponds to a shift of $\hat{u}$ followed by
a clipping to zero.
The lemma we just proved allows us to ignore $B$ and $B_0$ and solve fusedmax
directly for $S=\mathbb{R}$ for unimodal potentials, by choosing a large enough
(but still finite) $B$ for the inner ROF problem, so that $B_0$ lie outside of
the support.  For arbitrarily large $B$ and thus $B_0$,
$\hat{p}$ has support on an interval $[-b, b]$, where $b$ satisfies
$\hat{u}(b)=\tau$. We must now find $b$ such that $\int_S \hat{p} = 1$.
First, we see that we must have $b>a$, because otherwise $\hat{p} \equiv 0$
contradicting $\int_S \hat{p} =1$.
Thus, $a < b < B_0$, giving
\begin{equation}
\begin{aligned}
1 &\stackrel{!}{=} 2 \int_0^a f(a) + 2 \int_a^b f(t) - 2 \int_0^b \tau \\
  &= 2\left(af(a) - bf(b) + \int_a^b f(t)\right).
\end{aligned}
\end{equation}
From Equation~\eqref{eq:implicit_a}, we have $af(a) - \int_0^a f(t) = -\gamma$.
Subtracting from the above gives
\begin{equation}\label{eq:implicit_b}
\int_0^b f(t) - bf(b) = \nicefrac{1}{2} + \gamma.
\end{equation}
If we have access to $f$ and its antiderivative, we can therefore compute both
$a$ and $b$ from Equations~\eqref{eq:implicit_a} and \eqref{eq:implicit_b}
respectively. This completes the proof of the proposition.

\subsection{Sobolev regularization: smooth sparsemax.}\label{sec:sobolev_smooth_proof}

We recall the definition of the optimization problem to be solved,
\begin{equation}
\hat{p}_{\Omega_{2,2}}[f] \coloneqq \argmin_{p \in \mathcal{M}_+^1}
\frac{1}{2}\int_S \left(p(t) - f(t)\right)^2 + \frac{\gamma}{2} \int_S
\left(p'(t)\right)^2\,.
\end{equation}
This problem falls within the framework of calculus of variations.
We first remark that since $f$ is even, so is $p$: to see this, consider
$q(t) = p(-t)$ and observe that $J[p] = J[q]$. Since the solution is unique we
must have evenness in the optimum.
We can therefore restrict the optimization to
$(0, \infty)$, where $f$ is strictly decreasing and continuously
differentiable.

Rewriting the problem in more standard notation, we have
\[
\argmin_{p \in H^1(0, \infty)} \int_S F(t, p, p')
~\text{subject to}~ \int_S G(t, p, p') = 1,
~g(t, p, p') \geq 0\,,
\]
where $F(t, p, q) = \nicefrac{1}{2} (f - p)^2 + \nicefrac{\gamma}{2}~q^2,
G(t,p,q) = p,$ and $g(t, p, q) = p.$
To handle the equality constraint, we introduce the dual scalar
$\lambda$ for the equality constraint, leading to the lagrangian
\[ \mathcal{L}[p] = \int_S F + \tau G\,. \]
To handle the inequality constraint, we make the change of variable $p(t) = \frac{1}{2}z(t)^2.$
We have
\[F(t, p, p') = \nicefrac{1}{2}\left(p-f\right)^2 +
\nicefrac{\gamma}{2}\left(p'\right)^2
= \nicefrac{1}{2}\left(\frac{z^2}{2} - f\right)^2
+ \nicefrac{\gamma}{2}\left(zz'\right)^2
=\bar{F}(t, z, z')\,,
\]
where $\bar{F}(t, z, z') \coloneqq
\nicefrac{1}{2}\left(\frac{z^2}{2} - f\right)^2
+\nicefrac{\gamma}{2}\left(zr\right)^2,$ and similarly
\[G(t, p, p') = p = \frac{z^2}{2} = \bar{G}(t, z, z')\,\]
where $\bar{G}(t,z,r) = \frac{1}{2}z^2$.
Now, consider the functional in terms of $z$,
\[ \bar{\mathcal{L}}[z] = \int_S \bar{F} + \tau \bar{G}\,. \]
The associated Euler-Lagrange equation is
\[
\bar{F}_z - \frac{\mathrm{d}}{\mathrm{d}t} \bar{F}_r + \tau\bar{G}_z = 0\,.
\]
The partial derivatives of the functionals above are
\[
\bar{F}_z(t, z, z') = z(p-f) + \gamma z(z')^2,\qquad
\bar{F}_r(t, z, z') = \gamma z^2z',\qquad
\bar{G}_z(t, z, z') = z\,.
\]
Taking the total derivative of $\bar{F}_r$ we get
\[
\frac{\mathrm{d}}{\mathrm{d}t} (z^2z') = 2\gamma z(z')^2 + \gamma z^2z''\,.
\]
Substituting everything into the Euler-Lagrange equation, we get
\[
z\left(p - f - \gamma\big((z')^2 + zz''\big) + \tau\right) = 0\,.
\]
Remarking that $p'' = (z')^2 + zz''$, we rewrite in terms of $p$:
\[
z(p - \gamma p'' - f + \tau) = 0\,.
\]
Note that $z(t)=0$ implies $p(t)=0$.
Let $\bar{p}$ denote a solution of the differential equation
\begin{equation}
p - \gamma p'' = f - \tau\,.
\end{equation}
Then, our regularized prediction map is
\[
p(t) = \begin{cases}\bar{p}(t), & t \in \bar{S}, \\
                    0, &      t \in S \setminus \bar{S}.\end{cases}
\]
It remains to figure out $\bar{S}$ and a suitable $\bar{p}$.

\paragraph{Form of the support.}
We show that $\bar{S}$ takes the form $[0, b]$.
Surely we cannot have $b=0$, due to the constraint that $p$ must integrate to $1$.
We then show that for any $0<c_1<c_2$ with $p(c_1)=p(c_2)=0$,
we must have $p(t)=0$ for all $t \in (c_1, c_2)$.
To show this, we first argue that the optimal $p$ must be non-increasing on $(0,
\infty).$
Let $(d_1, d_2)$ be some interval on which $p$ is non-decreasing.
According to \citep[lemma 2]{monotone} (after flipping the constraint),
the minimizer of $\min \int_{d_1}^{d_2} (f - q)^2$ over the set of non-decreasing functions is the
(left-)derivative of the greatest convex minorant of $F(x) \coloneqq \int_{d_1}^t f(t)$.
But since $f$ is strictly decreasing, $F$ is concave, so its greatest convex
minorant is linear. Therefore, in terms of the L2 norm, no non-decreasing
function is a better approximator of a decreasing $f$ than a constant function.
Moreover, the constant function is also optimal in terms of $\Omega_{2,2}$.
Therefore, $p$ must be constant on any interval on which it is non-decreasing;
Since $p$ is continuous, it is non-increasing.
But the only non-increasing function on $(c_1, c_2)$ with $p(c_1)=p(c_2)=0$ must be
equal to $0$ on the entire interval.
Therefore, the support takes the form $[0, b]$.

\paragraph{Form of the function.}
The corresponding homogeneous differential equation,
$ p - \gamma p'' = 0, $
has characteristic polynomyal
$ 1 - \gamma r^2 = (1-r)(1+r)$, with roots $\pm \gamma^{-\nicefrac{1}{2}}$.
For brevity of notation let $\beta = \gamma^{-\nicefrac{1}{2}}$.
This
are $p_1 = e^{-\beta t}, p_2=e^{\beta t}$.
To find a particular solution for any $f$,
we apply the method of variation of parameters.
Rewrite the equation as $p'' - \beta^2p = g$, where $g = -\beta^2(f - \tau).$
The Wronskian is $W = p_1p_2' - p_1'p2 = 2\beta$. A particular solution is
\begin{equation}
\begin{aligned}
P &=
-p_1 \int \frac{gp_2}{2\beta}
&&
+p_2 \int \frac{gp_1}{2\beta}
\\
&=
\frac{\beta e^{-\beta t}}{2}\int (f - \tau)e^{\beta t}
&&
-\frac{\beta e^{\beta t}}{2} \int (f - \tau) e^{-\beta t}\\
&=
\frac{\beta e^{-\beta t}}{2}
\left(\int fe^{\beta t} - \frac{\tau}{\beta}e^{\beta t}\right)
&&
-\frac{\beta e^{\beta t}}{2}
\left(\int fe^{-\beta t} + \frac{\tau}{\beta}e^{-\beta t}\right) \\
&=
\frac{\beta e^{-\beta t}}{2} \int fe^{\beta t}
&&-\frac{\beta e^{\beta t}}{2} \int fe^{-\beta t}
-\tau\,.
\end{aligned}
\end{equation}
Solutions take the form $C_1 p_1 + C_2p_2 + P$, giving the general form
\[
\bar{p} =
e^{\beta t}\left(C_2 - \frac{\beta}{2}\int fe^{-\beta t}\right)
+
e^{-\beta t}\left(C_1 + \frac{\beta}{2}\int fe^{\beta t}\right)
-\tau\,.
\]
We now make use of the assumption that $f(-t)=f(t)$. Letting $F(t) = \frac{\beta \exp(\beta t)}{2} \int f(t) \exp(-\beta t)
\mathrm{d}t$, a change of variable yields
\[
\bar{p}(t) = C_2 \exp(\beta t) + C_1 \exp(-\beta t) - (F(t) + F(-t)) - \tau\,.
\]
Since by symmetry $\bar{p}(t)=\bar{p}(-t)$, we must have $C_2=C_1=C$ and thus
\[
\bar{p}(t) = C \cosh(\beta t) - (F(t) + F(-t)) - \tau\,.
\]

\section{Proofs for continuous attention with Gaussian RBFs}\label{sec:gaussian_basis}

We derive expressions for the evaluation and gradient computation of  continuous attention mechanisms where $\psi(t)$ are Gaussian radial basis functions and $f(t)$ is a quadratic function, both for the softmax ($\alpha=1$) and sparsemax ($\alpha=2$) cases.
For softmax, we show closed-form expressions for any number of dimensions (including the 1-d and 2-d cases).
For sparsemax, we derive closed-form expressions for the 1-d case, and we reduce the 2-d case to a univariate integral on an interval, easy to compute numerically.
More generally, we show how closed-form expressions can be obtained for the 1-d case when $\alpha$ is of the form $\alpha = \frac{n+1}{n}$ with $n \in \mathbb{N}$ (including $\alpha \in \{\sfrac{4}{3}, \sfrac{3}{2}, 2\}$ as particular cases, corresponding to triweight, biweight, and sparsemax).

This makes it possible to plug both continuous attention mechanisms in neural networks and learn them end-to-end with the gradient backpropagation algorithm.

\subsection{Continuous softmax ($\alpha=1$)}

We derive expressions for continuous softmax for multivariate Gaussians in $\mathbb{R}^D$.
This includes the 1-d and 2-d cases, where $D \in \{1,2\}$.

If $S=\mathbb{R}^D$, for $\phi(t)=[t,tt^\top]$, the distribution $p=\hat{p}_{\Omega_1}[f_\theta]$, with $f_\theta(t)=\theta^\top \phi(t)$, is a multivariate Gaussian where the mean $\mu$ and the covariance matrix $\Sigma$ are related to the canonical parameters as $\theta=[\Sigma^{-1}\mu,-\frac{1}{2}\Sigma^{-1}]$.

We derive closed form expressions for the attention mechanism output
$\rho_1(\theta)=\mathbb{E}_p[\psi(t)]$ in \eqref{eq:attention_expectation} and for its Jacobian $J_{\rho_1}(\theta)= \mathrm{cov}_{p,1}(\phi(t), \psi(t))$ in \eqref{eq:jacob},  when $\psi(t)$ are Gaussian RBFs, \textit{i.e.}, each $\psi_j$ is of the form $\psi_j(t)=\mathcal{N}(t;\mu_j, \Sigma_j)$.

\paragraph{Forward pass.}

Each coordinate of the attention mechanism output becomes the integral of a product of Gaussians,
\begin{equation}
    \mathbb{E}_p[\psi_j(t)]=\int_{\mathbb{R}^D}\mathcal{N}(t;\mu, \Sigma)\mathcal{N}(t;\mu_j, \Sigma_j).
\end{equation}
We use the fact that the product of two Gaussians is a scaled Gaussian,
$\mathcal{N}(t;\mu, \Sigma)\mathcal{N}(t;\mu_j, \Sigma_j)=\Tilde{s}\mathcal{N}(t;\Tilde{\mu}, \Tilde{\Sigma})$,
with
\begin{equation}
    \Tilde{s}=\mathcal{N}(\mu;\mu_j, \Sigma + \Sigma_j), \qquad \Tilde{\Sigma}=(\Sigma^{-1}+\Sigma_j^{-1})^{-1}, \qquad \Tilde{\mu}=\Tilde{\Sigma}(\Sigma^{-1}\mu+\Sigma_j^{-1}\mu_j).
\end{equation}
Therefore, the forward pass can be computed as:
\begin{equation}\label{eq:continuous_softmax_forward_pass}
\mathbb{E}_p[\psi_j(t)] %
=\Tilde{s}\int_{\mathbb{R}^D}\mathcal{N}(t;\Tilde{\mu}, \Tilde{\Sigma})=\Tilde{s} %
=\mathcal{N}(\mu;\mu_j, \Sigma + \Sigma_j).
\end{equation}

\paragraph{Backward pass.}
To compute the backward pass, we have that each row of the Jacobian $J_{\rho_1}(\theta)$ becomes a first or second moment under the resulting Gaussian:
\begin{equation}\label{eq:continuous_softmax_backward_pass_01}
\begin{split}
\mathrm{cov}_{p,1}(t, \psi_j(t)) & = \mathbb{E}_p[t\psi_j(t)]-\mathbb{E}_p[t]\mathbb{E}_p[\psi_j(t)] %
= \int_{\mathbb{R}^D}t\mathcal{N}(t;\mu, \Sigma)\mathcal{N}(t;\mu_j, \Sigma_j)-\Tilde{s}\mu\\
 & =\Tilde{s}\int_{\mathbb{R}^D}t\mathcal{N}(t;\Tilde{\mu}, \Tilde{\Sigma})-\Tilde{s}\mu %
\,\, =\,\, \Tilde{s}(\Tilde{\mu}-\mu),
\end{split}
\end{equation}
and, noting that $\Sigma=\mathbb{E}[(t-\mu)(t-\mu)^\top]=\mathbb{E}[tt^\top]-\mu\mu^\top$,
\begin{equation}\label{eq:continuous_softmax_backward_pass_02}
\begin{split}
\mathrm{cov}_{p,1}(tt^\top, \psi_j(t)) & = \mathbb{E}_p[tt^\top\psi_j(t)]-\mathbb{E}_p[tt^\top]\mathbb{E}_p[\psi_j(t)] \\
&=\int_{\mathbb{R}^D}tt^\top\mathcal{N}(t;\mu, \Sigma)\mathcal{N}(t;\mu_j, \Sigma_j)-\Tilde{s}(\Sigma+\mu\mu^\top)\\
 &=\Tilde{s}\int_{\mathbb{R}^D}tt^\top\mathcal{N}(t;\Tilde{\mu}, \Tilde{\Sigma})-\Tilde{s}(\Sigma+\mu\mu^\top) %
 =\Tilde{s}(\Tilde{\Sigma}+\Tilde{\mu}\Tilde{\mu}^\top)-\Tilde{s}(\Sigma+\mu\mu^\top)\\
 &=\Tilde{s}(\Tilde{\Sigma}+\Tilde{\mu}\Tilde{\mu}^\top-\Sigma-\mu\mu^\top).
\end{split}
\end{equation}

\subsection{Continuous sparsemax in 1-d ($\alpha=2$, $D=1$)}

With $\phi(t) = [t, t^2]$, the distribution $p = \hat{p}_{\Omega_2}[f_\theta]$, with $f_\theta(t) = \theta^\top \phi(t)$, becomes a truncated parabola where  $\mu$ and  $\sigma^2$ are related to the canonical parameters as above, \textit{i.e.}, $\theta = [\frac{\mu}{\sigma^2}, -\frac{1}{2\sigma^2}]$.
We  derive closed form expressions for the attention mechanism output $\rho_2(\theta) = \mathbb{E}_{p}[\psi(t)]$ in \eqref{eq:attention_expectation} and its Jacobian $J_{\rho_2}(\theta) = \frac{\partial \rho_2(\theta)}{\partial \theta} = \mathrm{cov}_{p, 2}(\phi(t), \psi(t))$ in \eqref{eq:jacob} when $\psi(t)$ and Gaussian RBFs, {\it i.e.}, each $\psi_j$ is of the form $\psi_j(t) = \mathcal{N}(t; \mu_j, \sigma_j^2)$.

\paragraph{Forward pass.}
Each coordinate of the attention mechanism output becomes:
\begin{eqnarray}\label{eq:expectation_continuous_sparsemax_rbf}
\mathbb{E}_{p}[\psi_j(t)] &=& \int_{\mu-a}^{\mu+a} \left(-\tau - \frac{(t-\mu)^2}{2\sigma^2}\right)  \mathcal{N}(t; \mu_j, \sigma_j^2)\nonumber\\
&=& \int_{\frac{\mu-\mu_j-a}{\sigma_j}}^{\frac{\mu-\mu_j+a}{\sigma_j}} \frac{1}{\sigma_j} \left(-\tau - \frac{(\sigma_j s + \mu_j - \mu)^2}{2\sigma^2}\right)  \mathcal{N}(s; 0, 1) \sigma_j ds,
\end{eqnarray}
where $a=(\frac{3}{2}\sigma^2)^{1/3}$ and $\tau=-\frac{a^2}{2\sigma^2} = -\frac{1}{2}(\frac{3}{2\sigma})^{2/3}$, as stated in \eqref{eq:lambda_gaussian_proof}, and we made the substitution
$s = \frac{t-\mu_j}{\sigma_j}$.
We use the fact that, for any $u, v \in \mathbb{R}$ such that $u \le v$:
\begin{eqnarray}\label{eq:erf_expr123}
\int_{u}^{v} \mathcal{N}(t; 0, 1) &=&  \frac{1}{2}\left( \mathrm{erf}\left(\frac{v}{\sqrt{2}}\right) - \mathrm{erf}\left(\frac{u}{\sqrt{2}}\right) \right),\nonumber\\
\int_{u}^{v} t\mathcal{N}(t; 0, 1) &=&  -\mathcal{N}(v; 0, 1) + \mathcal{N}(u; 0, 1),\nonumber\\
\int_{u}^{v} t^2\mathcal{N}(t; 0, 1) &=&
\frac{1}{2}\left( \mathrm{erf}\left(\frac{v}{\sqrt{2}}\right) - \mathrm{erf}\left(\frac{u}{\sqrt{2}}\right) \right) - v\mathcal{N}(v; 0, 1) + u\mathcal{N}(u; 0, 1),
\end{eqnarray}
from which the expectation \eqref{eq:expectation_continuous_sparsemax_rbf} can be computed directly.

\paragraph{Backward pass.}
Since $|\mathrm{supp}(p)| = 2a$, we have from \eqref{eq:beta_covariance} and \eqref{eq:erf_expr123} that each row of the Jacobian $J_{\rho_2}(\theta)$ becomes:
\begin{eqnarray}
\lefteqn{\mathrm{cov}_{p, 2}(t, \psi_j(t)) =} \nonumber\\
&& \int_{\mu-a}^{\mu+a} t\mathcal{N}(t; \mu_j, \sigma_j^2)
- \frac{1}{2a}\left(\int_{\mu-a}^{\mu+a} t\right)\left(\int_{\mu-a}^{\mu+a} \mathcal{N}(t; \mu_j, \sigma_j^2)\right)\nonumber\\
&=&
\int_{\frac{\mu-\mu_j-a}{\sigma_j}}^{\frac{\mu-\mu_j+a}{\sigma_j}} (\mu_j + \sigma_j s)\mathcal{N}(s; 0, 1)
- \underbrace{\frac{1}{2a}\left( \frac{(\mu+a)^2}{2} - \frac{(\mu-a)^2}{2} \right)}_{=\mu}
\left( \int_{\frac{\mu-\mu_j-a}{\sigma_j}}^{\frac{\mu-\mu_j+a}{\sigma_j}} \mathcal{N}(s; 0, 1) \right)\nonumber\\
&=&
(\mu_j-\mu)\int_{\frac{\mu-\mu_j-a}{\sigma_j}}^{\frac{\mu-\mu_j+a}{\sigma_j}} \mathcal{N}(s; 0, 1)
+ \sigma_j \int_{\frac{\mu-\mu_j-a}{\sigma_j}}^{\frac{\mu-\mu_j+a}{\sigma_j}} s\mathcal{N}(s; 0, 1)
\nonumber\\
&=&
\frac{\mu_j-\mu}{2}\left( \mathrm{erf}\left( \frac{\mu-\mu_j+a}{\sqrt{2}\sigma_j} \right) - \mathrm{erf}\left( \frac{\mu-\mu_j-a}{\sqrt{2}\sigma_j} \right)\right)\nonumber\\
&& - \sigma_j \left( \mathcal{N}\left(\frac{\mu-\mu_j+a}{\sigma_j}; 0, 1\right) - \mathcal{N}\left(\frac{\mu-\mu_j-a}{\sigma_j}; 0, 1\right) \right),
\end{eqnarray}
and
\begin{eqnarray}
\lefteqn{\mathrm{cov}_{p, 2}(t^2, \psi_j(t)) =} \nonumber\\
&& \int_{\mu-a}^{\mu+a} t^2\mathcal{N}(t; \mu_j, \sigma_j^2)
- \frac{1}{2a}\left(\int_{\mu-a}^{\mu+a} t^2\right)\left(\int_{\mu-a}^{\mu+a} \mathcal{N}(t; \mu_j, \sigma_j^2)\right)
\nonumber\\
&=&
\int_{\frac{\mu-\mu_j-a}{\sigma_j}}^{\frac{\mu-\mu_j+a}{\sigma_j}} (\mu_j + \sigma_j s)^2\mathcal{N}(s; 0, 1)
- \underbrace{\frac{1}{2a}\left( \frac{(\mu+a)^3}{3} - \frac{(\mu-a)^3}{3} \right)}_{=\frac{a^2}{3} + \mu^2}
\left( \int_{\frac{\mu-\mu_j-a}{\sigma_j}}^{\frac{\mu-\mu_j+a}{\sigma_j}} \mathcal{N}(s; 0, 1) \right)\nonumber\\
&=&
\left(\mu_j^2-\mu^2 -\frac{a^2}{3}\right)\int_{\frac{\mu-\mu_j-a}{\sigma_j}}^{\frac{\mu-\mu_j+a}{\sigma_j}} \mathcal{N}(s; 0, 1)
+ 2\mu_j\sigma_j \int_{\frac{\mu-\mu_j-a}{\sigma_j}}^{\frac{\mu-\mu_j+a}{\sigma_j}} s\mathcal{N}(s; 0, 1)
+ \sigma_j^2 \int_{\frac{\mu-\mu_j-a}{\sigma_j}}^{\frac{\mu-\mu_j+a}{\sigma_j}} s^2\mathcal{N}(s; 0, 1)
\nonumber\\
&=&
\left(\mu_j^2-\mu^2 +\sigma_j^2 -\frac{a^2}{3}\right)\left( \mathrm{erf}\left( \frac{\mu-\mu_j+a}{\sqrt{2}\sigma_j} \right) - \mathrm{erf}\left( \frac{\mu-\mu_j-a}{\sqrt{2}\sigma_j} \right)\right)\nonumber\\
&& -\sigma_j(\mu+\mu_j+a) \mathcal{N}\left(\frac{\mu-\mu_j+a}{\sigma_j}; 0, 1\right) + \sigma_j(\mu+\mu_j-a) \mathcal{N}\left(\frac{\mu-\mu_j-a}{\sigma_j}; 0, 1\right).
\end{eqnarray}

\subsection{Continuous entmax in 1-d ($\alpha=\frac{n+1}{n}$, $D=1$)}

The above procedure can be extended to the case where $\alpha=\frac{n+1}{n}$ with $n \in \mathbb{N}$, which includes the biweight ($\alpha=\sfrac{3}{2}$) and triweight ($\alpha=\sfrac{4}{3}$) as particular cases.

\paragraph{Forward pass.}
Each coordinate of the attention mechanism output becomes:
\begin{eqnarray}\label{eq:expectation_continuous_entmax_rbf}
\mathbb{E}_{p}[\psi_j(t)] &=& \int_{\mu-a}^{\mu+a} \left((\alpha-1)\left(-\tau - \frac{(t-\mu)^2}{2\sigma^2}\right)\right)^{\frac{1}{\alpha-1}}  \mathcal{N}(t; \mu_j, \sigma_j^2)\nonumber\\
&=& \int_{\mu-a}^{\mu+a} \left(\frac{1}{n}\left(-\tau - \frac{(t-\mu)^2}{2\sigma^2}\right)\right)^{n}  \mathcal{N}(t; \mu_j, \sigma_j^2),
\end{eqnarray}
where $\tau$ and $a$ can be computed via Proposition~\ref{prop:beta_gauss}.
With $n\in \mathbb{N}$, the integrand in \eqref{eq:expectation_continuous_entmax_rbf} becomes the product of a polynomial function of $t$ and a Gaussian, and the integral admits a closed form expression obtainable through the following formulas:
\begin{eqnarray}
\int t^{2k+1} \mathcal{N}(t; 0, 1) dt &=& -\mathcal{N}(t; 0, 1) \sum_{j=0}^{k} \frac{(2k)!!}{(2j)!!} t^{2j} + \mathrm{const.}\nonumber\\
\int t^{2k+2} \mathcal{N}(t; 0, 1) dt &=& -\mathcal{N}(t; 0, 1) \sum_{j=0}^{k} \frac{(2k+1)!!}{(2j+1)!!} t^{2j+1} + (2k+1)!! \Phi(t) + \mathrm{const.},
\end{eqnarray}
where $\Phi(t) = \frac{1}{2}\left(1+ \mathrm{erf}\left(\frac{t}{\sqrt{2}}\right) \right)$ is the cumulative standard normal distribution, and $n!!$ denotes the double factorial.

\paragraph{Backward pass.}
From \eqref{eq:beta_covariance} and the fact that, with $\beta = 2-\alpha = \frac{n-1}{n}$, we have
\begin{eqnarray}
\|p\|_{\beta}^\beta &=& \int_{\mu-a}^{\mu+a} \left((\alpha-1)\left(-\tau - \frac{(t-\mu)^2}{2\sigma^2}\right)\right)^{\frac{2-\alpha}{\alpha-1}}\nonumber\\
&=& \int_{\mu-a}^{\mu+a} \left((\alpha-1)\left(-\tau - \frac{(t-\mu)^2}{2\sigma^2}\right)\right)^{n-1},
\end{eqnarray}
and all the integrands necessary for the computation of $\mathrm{cov}_{p,\alpha}(t, \psi_j(t))$ and $\mathrm{cov}_{p,\alpha}(t^2, \psi_j(t))$ become either polynomial functions of $t$ (up to degree $2(n-1) + 2 = 2n$) or products of polynomial functions of $t$ and a Gaussian, hence admit closed-form expressions as above.
For the biweight density ($n=2$), we need polynomials up to degree $4$, and for the triweight ($n=3$), we need polynomials up to degree $6$.

\subsection{Continuous sparsemax in 2-d ($\alpha=2$, $D=2$)}

Let us now consider the case where $D=2$.
For $\phi(t)=[t,tt^\top]$, the distribution $p=\hat{p}_{\Omega_2}[f_\theta]$, with $f_\theta(t)=\theta^\top \phi(t)$, becomes a bivariate truncated paraboloid where $\mu$ and  $\Sigma$ are related to the canonical parameters as before, $\theta=[\Sigma^{-1}\mu,-\frac{1}{2}\Sigma^{-1}]$. We obtain expressions for the attention mechanism output $\rho_2(\theta)=\mathbb{E}_p[\psi(t)]$ and its Jacobian $J_{\rho_2}(\theta)= \mathrm{cov}_{p,2}(\phi(t), \psi(t))$ that include 1-d integrals (simple to integrate numerically), when $\psi(t)$ are Gaussian RBFs, {\it i.e.}, when each $\psi_j$ is of the form $\psi_j(t)=\mathcal{N}(t;\mu_j, \Sigma_j)$.

We start with the following lemma:

\smallskip

\begin{lemma}\label{lemma:affine_transform_gaussian}
Let $\mathcal{N}(t, \mu, \Sigma)$ be a $D$-dimensional multivariate Gaussian,
Let $A \in \mathbb{R}^{D \times R}$ be a full column rank matrix (with $R \le D$), and $b \in \mathrm{R}^D$.
Then we have $\mathcal{N}(Au + b; \mu, \Sigma) = \tilde{s} \mathcal{N}(u; \tilde{\mu}, \tilde{\Sigma})$ with:
\begin{eqnarray*}
\tilde{\Sigma} &=& (A^\top \Sigma^{-1} A)^{-1}, \quad
\tilde{\mu} = \tilde{\Sigma}A^\top \Sigma^{-1} (\mu - b)\\
\tilde{s} &=& (2\pi)^{\frac{R-D}{2}} \frac{|\tilde{\Sigma}|^{1/2}}{|\Sigma|^{1/2}} \exp\left(-\frac{1}{2}(\mu-b)^\top P(\mu-b)\right), \quad P = \Sigma^{-1} - \Sigma^{-1} A \tilde{\Sigma} A^\top\Sigma^{-1}.
\end{eqnarray*}
If $R=D$, then $A$ is invertible and the expressions above can be simplified to:
\begin{eqnarray*}
\tilde{\Sigma} = A^{-1} \Sigma A^{-\top}, \quad
\tilde{\mu} = A^{-1} (\mu - b), \quad
\tilde{s} = |A|^{-1}.
\end{eqnarray*}
\end{lemma}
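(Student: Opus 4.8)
The plan is to reduce the claim to a single completing-the-square computation inside the exponent of the Gaussian density: evaluating $\mathcal{N}(\,\cdot\,;\mu,\Sigma)$ at the affine image $Au+b$ yields a quadratic form in $u$, which up to a multiplicative constant is itself a Gaussian kernel in $u$. First I would write the density explicitly, abbreviating $m \coloneqq \mu - b$,
\[
\mathcal{N}(Au+b;\mu,\Sigma) = (2\pi)^{-D/2}|\Sigma|^{-1/2}\exp\Bigl(-\tfrac{1}{2}(Au - m)^\top \Sigma^{-1}(Au - m)\Bigr),
\]
and expand the quadratic form as $u^\top (A^\top\Sigma^{-1}A)u - 2u^\top A^\top \Sigma^{-1}m + m^\top\Sigma^{-1}m$. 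The key preliminary observation is that $A^\top\Sigma^{-1}A$ is invertible: since $\Sigma \succ 0$ and $A$ has full column rank, $A^\top\Sigma^{-1}A \succ 0$, so $\tilde{\Sigma} \coloneqq (A^\top\Sigma^{-1}A)^{-1}$ is well defined. Completing the square in $u$ with respect to the metric $\tilde{\Sigma}^{-1} = A^\top\Sigma^{-1}A$ then produces $(u - \tilde{\mu})^\top\tilde{\Sigma}^{-1}(u - \tilde{\mu})$ with $\tilde{\mu} = \tilde{\Sigma}A^\top\Sigma^{-1}m$, plus a $u$-independent correction $m^\top\Sigma^{-1}m - \tilde{\mu}^\top\tilde{\Sigma}^{-1}\tilde{\mu}$.

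Next I would simplify that correction. Substituting $\tilde{\mu}$ and using $\tilde{\Sigma}^{-1}\tilde{\Sigma} = \mathrm{Id}$ collapses $\tilde{\mu}^\top\tilde{\Sigma}^{-1}\tilde{\mu}$ to $m^\top\Sigma^{-1}A\tilde{\Sigma}A^\top\Sigma^{-1}m$, so the correction equals exactly $m^\top P m$ with $P = \Sigma^{-1} - \Sigma^{-1}A\tilde{\Sigma}A^\top\Sigma^{-1}$. At this point the exponent reads $-\tfrac{1}{2}(u-\tilde{\mu})^\top\tilde{\Sigma}^{-1}(u-\tilde{\mu}) - \tfrac{1}{2}m^\top P m$, and I would factor the $u$-independent part out of the exponential. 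Matching the remaining $u$-dependent kernel against the normalized $R$-dimensional density $\mathcal{N}(u;\tilde{\mu},\tilde{\Sigma}) = (2\pi)^{-R/2}|\tilde{\Sigma}|^{-1/2}\exp(-\tfrac{1}{2}(u-\tilde{\mu})^\top\tilde{\Sigma}^{-1}(u-\tilde{\mu}))$ forces the prefactor to absorb the difference between the $D$- and $R$-dimensional normalizers, giving
\[
\tilde{s} = (2\pi)^{(R-D)/2}\,\frac{|\tilde{\Sigma}|^{1/2}}{|\Sigma|^{1/2}}\,\exp\bigl(-\tfrac{1}{2}m^\top P m\bigr),
\]
which is the asserted expression with $m = \mu - b$.

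Finally, for the square case $R=D$, I would note that full column rank makes $A$ invertible, so $\tilde{\Sigma} = A^{-1}\Sigma A^{-\top}$ and $\tilde{\mu} = A^{-1}(\mu-b)$ follow by direct substitution. Moreover $P$ vanishes, since $\Sigma^{-1}A\tilde{\Sigma}A^\top\Sigma^{-1} = \Sigma^{-1}\Sigma\Sigma^{-1} = \Sigma^{-1}$, and the dimension factor $(2\pi)^{(R-D)/2}$ equals $1$; combined with $|\tilde{\Sigma}|^{1/2}/|\Sigma|^{1/2} = |A|^{-1}$ this yields $\tilde{s} = |A|^{-1}$. The only genuinely delicate points are bookkeeping: confirming invertibility of $A^\top\Sigma^{-1}A$ from the rank hypothesis, and correctly tracking the dimensional mismatch in the $(2\pi)$ and determinant factors when $R < D$. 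I expect this last step—the transition from a $D$-dimensional normalizer to an $R$-dimensional one—to be the main source of potential exponent or sign slips, but it is otherwise routine.
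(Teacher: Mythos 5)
Your proposal is correct and follows essentially the same route as the paper's (very terse) proof: write out the $D$-dimensional density at $Au+b$, complete the square in $u$ with respect to $\tilde{\Sigma}^{-1}=A^\top\Sigma^{-1}A$, and absorb the leftover $u$-independent factor into $\tilde{s}$. Your careful tracking of the normalizer exponent is in fact more reliable than the paper's one-line sketch, which writes $(2\pi)^{-R/2}$ where the $D$-dimensional density requires $(2\pi)^{-D/2}$; your version gets the $(2\pi)^{(R-D)/2}$ factor in $\tilde{s}$ for the right reason.
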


\begin{proof}
The result can be derived by writing $\mathcal{N}(Au+b; \mu, \Sigma) = (2\pi)^{-\frac{R}{2}} |\Sigma|^{-\frac{1}{2}} \exp(-\tfrac{1}{2}(Au+b-\mu)^\top\Sigma^{-1}(Au+b-\mu))$ and splitting the exponential of the sum as a product of exponentials.
\end{proof}

\paragraph{Forward pass.}
For the forward pass, we need to compute
\begin{equation}
\mathbb{E}_p[\psi_j(t)] = \iint_{\mathbb{R}^2} \left[-\tau - \frac{1}{2}(t-\mu)^\top\Sigma^{-1}(t-\mu)\right]_+ \mathcal{N}(t; \mu_j, \Sigma_j) dt,
\end{equation}
with
(from \eqref{eq:truncated_paraboloid})
$\tau=-\left(\frac{1}{\pi\sqrt{\det(\Sigma)}}\right)^{\frac{1}{2}}$.
Using Lemma~\ref{lemma:affine_transform_gaussian} and  the change of variable formula (which makes the determinants cancel), we can reparametrize $u = (-2\tau)^{-\frac{1}{2}} \Sigma^{-\frac{1}{2}} (t - \mu)$ and write this as an integral over the unit circle:
\begin{equation}
\mathbb{E}_p[\psi_j(t)] = \iint_{\|u\|\le 1}
-\tau (1-\|u\|^2) \mathcal{N}(u; \tilde{\mu}, \tilde{\Sigma}) du,
\end{equation}
with $\tilde{\mu} = (-2\tau)^{-\frac{1}{2}} \Sigma^{-\frac{1}{2}}(\mu_j - \mu)$,
$\tilde{\Sigma} = (-2\tau)^{-1} \Sigma^{-\frac{1}{2}} \Sigma_j \Sigma^{-\frac{1}{2}}$.
We now do a change to polar coordinates, $u = (r\cos\theta, r\sin\theta) = ar$, where $a = [\cos \theta, \sin \theta]^\top \in \mathbb{R}^{2\times 1}$. The integral becomes:
\begin{eqnarray}
\mathbb{E}_p[\psi_j(t)] &=& \int_{0}^{2\pi} \int_{0}^1
-\tau (1-r^2) \mathcal{N}(ar; \tilde{\mu}, \tilde{\Sigma}) r \, dr \, d\theta\nonumber\\
&=& \int_{0}^{2\pi} \int_{0}^1
-\tau r(1-r^2) \tilde{s} \mathcal{N}(r; r_0, \sigma^2) \, dr \, d\theta,
\end{eqnarray}
where in the second line we applied again Lemma~\ref{lemma:affine_transform_gaussian}, resulting in
\begin{eqnarray*}
\sigma^2(\theta) \equiv \sigma^2 &=& (a^\top \tilde{\Sigma}^{-1} a)^{-1}\\
r_0(\theta) \equiv r_0 &=& \sigma^2 a^\top \tilde{\Sigma}^{-1} \tilde{\mu}\\
\tilde{s}(\theta) \equiv \tilde{s} &=& \frac{1}{\sqrt{2\pi}}  \frac{\sigma}{|\tilde{\Sigma}|^{1/2}} \exp\left(-\frac{1}{2}\tilde{\mu}^\top P\tilde{\mu}\right), \quad P = \tilde{\Sigma}^{-1} - \sigma^2 \tilde{\Sigma}^{-1} a a^\top\tilde{\Sigma}^{-1}.
\end{eqnarray*}
Applying Fubini's theorem,
we fix $\theta$ and integrate with respect to $r$. We use the formulas \eqref{eq:erf_expr123} and the fact that, for any $u, v \in \mathbb{R}$ such that $u \le v$:
\begin{equation}\label{eq:erf_expr4}
\int_{u}^v t^3 \mathcal{N}(t; 0, 1) = -\mathcal{N}(v; 0, 1)(2+v^2) + \mathcal{N}(u; 0, 1)(2+u^2).
\end{equation}
We obtain a closed from expression for the inner integral:
\begin{eqnarray}
 F(\theta) &=& \int_{0}^1
r(1-r^2)  \mathcal{N}(r; r_0, \sigma^2) \, dr\nonumber\\
&=& (2\sigma^3 + r_0^2 \sigma + r_0\sigma) \mathcal{N}\left(\frac{1-r_0}{\sigma}; 0, 1\right)
- (2\sigma^3 + r_0^2 \sigma -\sigma) \mathcal{N}\left(-\frac{r_0}{\sigma}; 0, 1\right)\nonumber\\
&&
-\frac{r_0^3 + (3\sigma^2 - 1)r_0}{2} \left[ \mathrm{erf}\left(\frac{1-r_0}{\sqrt{2}\sigma}\right)- \mathrm{erf}\left(-\frac{r_0}{\sqrt{2}\sigma}\right)\right].
\end{eqnarray}
The desired integral can then be expressed in a single dimension as
\begin{eqnarray}
\mathbb{E}_p[\psi_j(t)] &=& -\tau \int_{0}^{2\pi} \tilde{s}(\theta) F(\theta),
\end{eqnarray}
which may be integrated numerically.

\paragraph{Backward pass.}

For the backward pass we need to solve
\begin{equation}
\label{eq:1-2_row_J}
\mathrm{cov}_{p, 2}(t, \psi_j(t)) =
\iint_{E} t\mathcal{N}(t; \mu_j, \Sigma_j)
- \frac{1}{|E|}\left(\iint_{E} t\right)\left(\iint_{E} \mathcal{N}(t; \mu_j, \Sigma_j)\right)
\end{equation}and
\begin{equation}
\label{eq:3-6_row_J}
\mathrm{cov}_{p, 2}(tt^\top, \psi_j(t)) =
\iint_{E} tt^\top\mathcal{N}(t; \mu_j, \Sigma_j)
- \frac{1}{|E|}\left(\iint_{E} tt^\top\right)\left(\iint_{E} \mathcal{N}(t; \mu_j, \Sigma_j)\right)
\end{equation}
where $E = \mathrm{supp}(p) = \{t \in \mathbb{R}^2 \mid \frac{1}{2}(t-\mu)^\top \Sigma^{-1} (t-\mu) \le -\tau\}$ denotes the support of the density $p$, a region bounded by an ellipse.
Note that these expressions include integrals of vector-valued functions and that \eqref{eq:1-2_row_J} and \eqref{eq:3-6_row_J} correspond to the first to second and the third to sixth row of the Jacobian, respectively. The integrals that do not include Gaussians have closed form expressions and can be computed as
\begin{equation}
    \frac{1}{|E|}\left(\iint_{E} t\right)=\mu \qquad \text{and} \qquad
    \frac{1}{|E|}\left(\iint_{E} tt^\top\right)=\mu\mu^\top+\frac{\Sigma}{|E|},
\end{equation}
where $|E|$ is the area of the region $E$ given by
    $|E|=\frac{\pi}{\sqrt{ \det \left( \frac{1}{-2\tau}\,\Sigma^{-1}\right)}}$.

All the other integrals are solved using the same affine transformation and change to polar coordinates as in the forward pass. Given this, $\tilde{\mu}$, $\tilde{\Sigma}$, $a$, $\sigma^2, r_0$ and $\tilde{s}$ are the same as before.
To solve \eqref{eq:1-2_row_J} we write
\begin{equation}
    \iint_{E} t\mathcal{N}(t; \mu_j, \Sigma_j) = \iint_{\|u\|\le 1} \left((-2\tau)^{\frac{1}{2}} \Sigma^{\frac{1}{2}}u+\mu\right) \mathcal{N}(u; \tilde{\mu}, \tilde{\Sigma}) du
\end{equation}
in polar coordinates,
\begin{equation}
    \int_{0}^{2\pi} \int_{0}^1
 r\left((-2\tau)^{\frac{1}{2}} \Sigma^{\frac{1}{2}}ar+\mu\right) \tilde{s} \, \mathcal{N}(r; r_0, \sigma^2) dr \, d\theta,
\end{equation}
which can be then expressed in a single dimension as
\begin{eqnarray}
\iint_{E} t\mathcal{N}(t; \mu_j, \Sigma_j) &=& \int_{0}^{2\pi} \tilde{s}(\theta) G(\theta)d\theta,
\end{eqnarray}
with
\begin{eqnarray}
 G(\theta) &=& \int_{0}^1
r\left((-2\tau)^{\frac{1}{2}} \Sigma^{\frac{1}{2}}ar+\mu\right)  \mathcal{N}(r; r_0, \sigma^2) \, dr \nonumber\\
&=& \int_{-\frac{r_0}{\sigma}}^{\frac{1-r_0}{\sigma}}
(s\sigma+r_0)\left((-2\tau)^{\frac{1}{2}} \Sigma^{\frac{1}{2}}a(s\sigma+r_0)+\mu\right)  \mathcal{N}(r; r_0, \sigma^2) \, ds \nonumber\\
&=&
\left((-2\tau)^{\frac{1}{2}} \Sigma^{\frac{1}{2}}a\sigma(r_0)+\mu\sigma\right)\mathcal{N}\left(-\frac{r_0}{\sigma};0,1\right) \nonumber\\
&&-\left((-2\tau)^{\frac{1}{2}} \Sigma^{\frac{1}{2}}a\sigma(1+r_0)+\mu\sigma\right) \mathcal{N}\left(\frac{1-r_0}{\sigma};0,1\right) \nonumber\\
&&+\frac{1}{2}\left((-2\tau)^{\frac{1}{2}} \Sigma^{\frac{1}{2}}a(\sigma^2+r_0^2)+\mu r_0\right)\left[ \mathrm{erf}\left(\frac{1-r_0}{\sqrt{2}\sigma}\right)- \mathrm{erf}\left(-\frac{r_0}{\sqrt{2}\sigma}\right)\right].
\end{eqnarray}
We do the same for
\begin{equation}
    \iint_{E} \mathcal{N}(t; \mu_j, \Sigma_j)=\iint_{\|u\|\le 1} \mathcal{N}(u; \tilde{\mu}, \tilde{\Sigma}) du =     \int_{0}^{2\pi} \int_{0}^1
 r\tilde{s} \, \mathcal{N}(r; r_0, \sigma^2) dr \, d\theta,
\end{equation}
which can then be expressed in a single dimension as
\begin{eqnarray}
\iint_{E} \mathcal{N}(t; \mu_j, \Sigma_j) &=& \int_{0}^{2\pi} \tilde{s}(\theta) H(\theta)d\theta,
\end{eqnarray}with
\begin{eqnarray}
 H(\theta) &=& \int_{0}^1
r\mathcal{N}(r; r_0, \sigma^2) \, dr \nonumber = \int_{-\frac{r_0}{\sigma}}^{\frac{1-r_0}{\sigma}}
(s\sigma+r_0)  \mathcal{N}(r; r_0, \sigma^2) \, ds \\
&=&
\sigma\left[\mathcal{N}\left(-\frac{r_0}{\sigma};0,1\right)
-\mathcal{N}\left(\frac{1-r_0}{\sigma};0,1\right)\right] \nonumber +\frac{r_0}{2}\left[ \mathrm{erf}\left(\frac{1-r_0}{\sqrt{2}\sigma}\right)- \mathrm{erf}\left(-\frac{r_0}{\sqrt{2}\sigma}\right)\right].
\end{eqnarray}
Finally, to solve \eqref{eq:3-6_row_J} we simplify the integral
\begin{eqnarray}
    \iint_{E} tt^\top\mathcal{N}(t; \mu_j, \Sigma_j)&=&\iint_{\|u\|\le 1} \left((-2\tau)^{\frac{1}{2}} \Sigma^{\frac{1}{2}}u+\mu\right)\left((-2\tau)^{\frac{1}{2}} \Sigma^{\frac{1}{2}}u+\mu\right)^\top \mathcal{N}(u; \tilde{\mu}, \tilde{\Sigma}) du \nonumber \\
    &=&\int_{0}^{2\pi} \int_{0}^1
 r(r^2A+rB+C)\tilde{s} \, \mathcal{N}(r; r_0, \sigma^2) dr \, d\theta
\end{eqnarray}
with
\begin{equation}
    A=(-2\tau)\Sigma^{\frac{1}{2}}aa^\top(\Sigma^{\frac{1}{2}})^\top, \qquad
    B=(-2\tau)^{\frac{1}{2}}\left( \Sigma^{\frac{1}{2}}a\mu^\top+\mu a^\top (\Sigma^{\frac{1}{2}})^\top \right), \qquad
    C=\mu\mu^\top.
\end{equation}
The integral can then be expressed in a single dimension as
\begin{eqnarray}
\iint_{E} tt^\top \mathcal{N}(t; \mu_j, \Sigma_j) &=& \int_{0}^{2\pi} \tilde{s}(\theta) M(\theta)d\theta,
\end{eqnarray}
with
\begin{eqnarray}
M(\theta) &=& \int_{0}^1
(r^3A+r^2B+rC)\, \mathcal{N}(r; r_0, \sigma^2) dr \nonumber\\
&=& \int_{-\frac{r_0}{\sigma}}^{\frac{1-r_0}{\sigma}}
(s^3\tilde{A}+s^2\tilde{B}+s\,\tilde{C}+\tilde{D})  \mathcal{N}(s; 0, 1) \, ds \nonumber\\
&=& \left[\left(2+\left(-\frac{r_0}{\sigma}\right)^2\right)\tilde{A}-\frac{r_0}{\sigma}\tilde{B}+\tilde{C}\right]\mathcal{N}\left(-\frac{r_0}{\sigma};0,1\right)\nonumber\\
&&-\left[\left(2+\left(\frac{1-r_0}{\sigma}\right)^2\right)\tilde{A}+\frac{1-r_0}{\sigma}\tilde{B}+\tilde{C}\right]\mathcal{N}\left(\frac{1-r_0}{\sigma};0,1\right) \nonumber\\
&&+\frac{1}{2}\left(\tilde{B}+\tilde{D}\right)\left[ \mathrm{erf}\left(\frac{1-r_0}{\sqrt{2}\sigma}\right)- \mathrm{erf}\left(-\frac{r_0}{\sqrt{2}\sigma}\right)\right]
\end{eqnarray}
where
\begin{equation}
    \tilde{A}=\sigma^3A, \qquad
    \tilde{B}=\sigma^2(3r_0\,A+B), \qquad
    \tilde{C}=\sigma(3r_0^2\,A+2r_0\,B+C), \qquad
    \tilde{D}=r_0^3\,A+r_0^2\,B+r_0\,C.
\end{equation}

\section{Experimental details}\label{sec:model_hyperparams}

\subsection{Audio classification}

We used the UrbanSound8k dataset \citep{Salamon:UrbanSound:ACMMM:14}, which contains 8732 labeled sound excerpts ($\leq 4s$) from 10 urban classes.
We set the sampling rate to 16kHz for all audios.
The audios were transformed into a sequence of vectors by using short-time Fourier transform (STFT) with 400 points, a window size of 25ms, and a hop size of 10ms.
After this transformation, we extract 80 Mel-frequency filter banks.
We used SpeechBrain \citep{speechbrain} to implement the input pipeline and the model, following the standard recipe for UrbanSound8k.\footnote{\url{https://github.com/speechbrain/speechbrain/tree/develop/recipes/UrbanSound8k}}
Our model consists of a convolutional 1-d layer followed by an attention mechanism and an output layer.
Table~\ref{tab:table_all_hyperparams_audio} shows the hyperparameters used for all audio classification experiments.

\begin{table}[t]
    \caption{Hyperparmeters for audio classification.}
    \label{tab:table_all_hyperparams_audio}
    \begin{small}
    \begin{center}
    \begin{tabular}{llllll}
        \toprule
        \sc Hyperparameter & \sc Value  \\
        \midrule
        Batch size                  & 16    \\
        Number of epochs            & 20     \\
        Optimizer                   & Adam      \\
        $\ell_2$ regularization     & 0.000002     \\
        Learning rate               & 0.001     \\
        Conv. filters                   & 128   \\
        Conv. kernel size               & 5     \\
        Conv. activation                & ReLU     \\
        Conv. dropout                   & 0.15     \\
        Max-pooling size                & 3     \\
        Gaussian RBFs (\S\ref{subsec:continuous_attention})       & $128 \ll L$ with $\mu$ linearly spaced in $[0,1]$ and $\Sigma = [0.1, 0.5]$ \\
        Ridge penalty $\lambda$         & 0.1     \\
        Discrete attention              & \citep{bahdanau2014neural}     \\
        \bottomrule
    \end{tabular}
    \end{center}
    \end{small}
    \vskip -0.1in
\end{table}

\subsection{Visual question answering}

We used the VQA-v2 dataset \citep{Goyal2019} with the standard splits (443K, 214K, and 453K question-image pairs for train/dev/test, the latter subdivided into  test-dev, test-standard, test-challenge and test-reserve). We adapted the implementation of \cite{Yu2019},%
\footnote{\url{https://github.com/MILVLG/mcan-vqa}} %
consisting of a Modular Co-Attention Network (MCAN). Our architecture is the same as \cite{Yu2019} except that we represent the image input with grid features generated by a ResNet \citep{He2016} pretrained on ImageNet \citep{Russakovsky2015}, instead of bounding-box features \citep{Anderson2018}.
The images are resized to $448 \times 448$ before going through the ResNet that outputs a feature map of size $14 \times 14 \times 2048$. To represent the input question words we use 300-dimensional GloVe word embeddings \citep{pennington2014glove}, yielding a question feature matrix representation. Table~\ref{tab:table_hyperparams_VQA} shows the hyperparameters used for all the VQA experiments presented.

All the models we experimented with use the same features and were trained only on the train set without data augmentation.

\paragraph{Examples.}
Figure~\ref{fig:examples_vqa_skate} illustrates the difficulties that continuous attention models may face when trying to focus on objects that are too far from each other or that seem to have different relative importance to answer the question. Intuitively, in VQA, this becomes a problem when counting objects in those conditions. On the other side, in counting questions that require the understanding of a contiguous region of the image only, continuous attention may perform better (see Figure~\ref{fig:examples_vqa_2birds}).
Figure~\ref{fig:examples_vqa_soccer} shows another example where continuous attention focus on the right region of the image and answers the question correctly. For this case, discrete attention is more diffuse than its continuous counterpart: %
it attends to two different regions in the image, leading to incorrect answers.

\begin{table}[t]
    \caption{Hyperparmeters for VQA.}
    \label{tab:table_hyperparams_VQA}
    \begin{small}
    \begin{center}
    \begin{tabular}{llllll}
        \toprule
        \sc Hyperparameter & \sc Value  \\
        \midrule
        Batch size                  & 64    \\
        Word embeddings size        & 300     \\
        Input image features size   & 2048 \\
        Input question features size & 512 \\
        Fused multimodal features size & 1024 \\
        Multi-head attention hidden size        & 512 \\
        Number of MCA layers        & 6 \\
        Number of attention heads   & 8 \\
        Dropout rate                & 0.1 \\
        MLP size in flatten layers  & 512 \\
        Optimizer                   & Adam \\
        Base learning rate at epoch $t$ starting from 1 & $\mathrm{min}(2.5 t \cdot 10^{-5}, 1\cdot 10^{-4})$\\
        Learning rate decay ratio at epoch $t\in \{10,12\}$  & 0.2 \\
        Number of epochs            & 13 \\

        \bottomrule
    \end{tabular}
    \end{center}
    \end{small}
\end{table}

\begin{figure*}[t]
\centering
\includegraphics[width=0.24\textwidth]{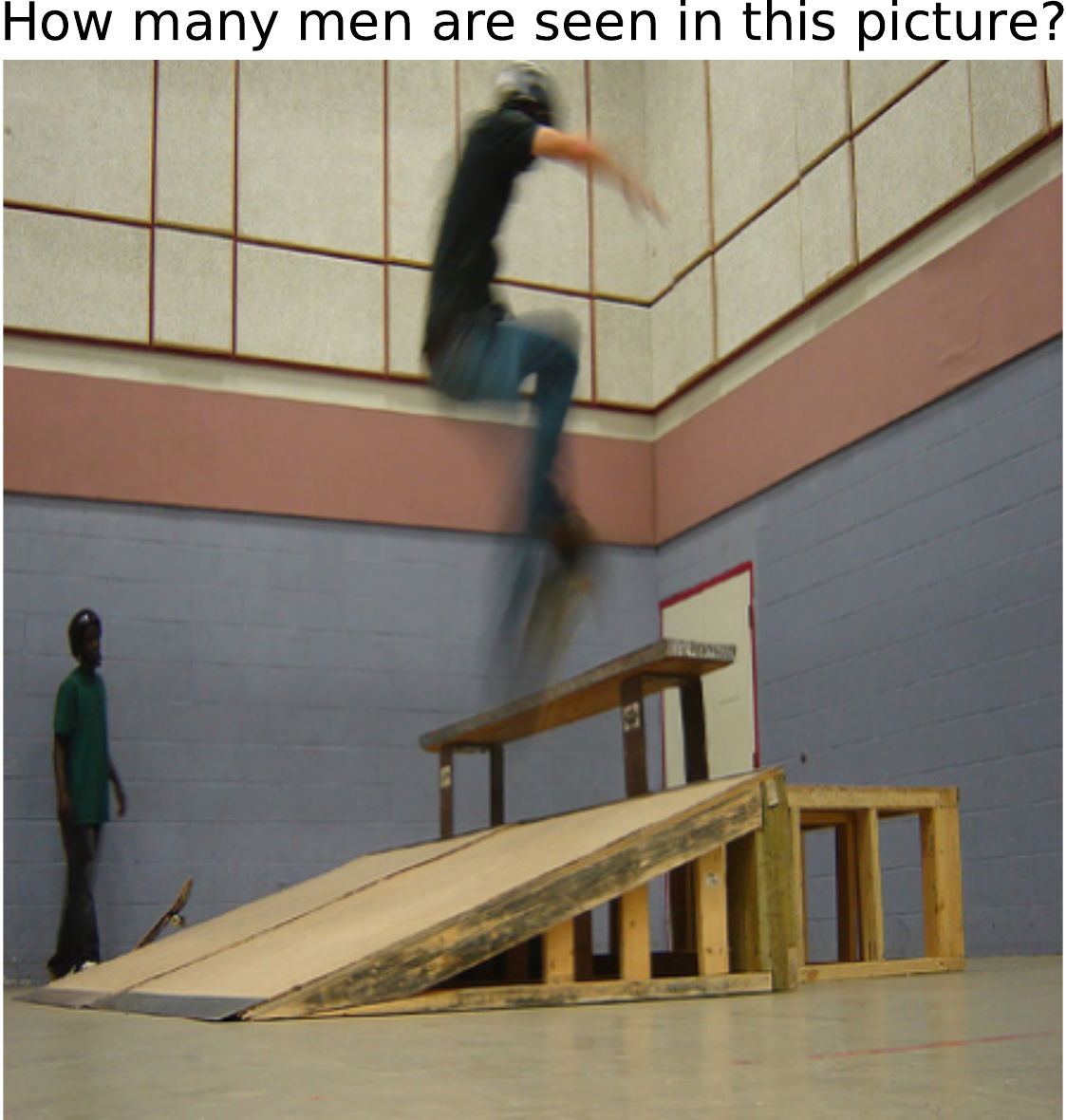}
\includegraphics[width=0.24\textwidth]{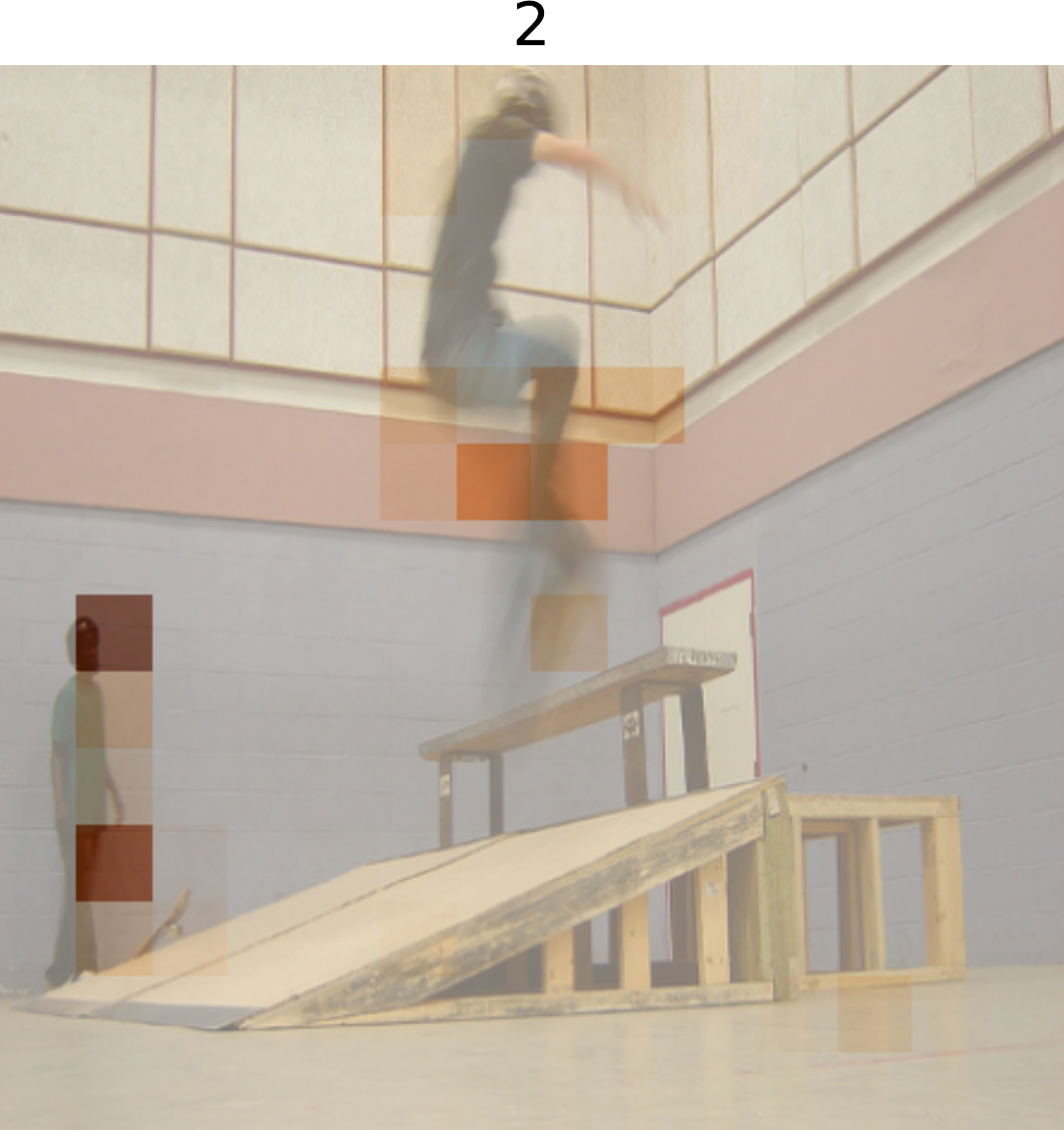}
\includegraphics[width=0.24\textwidth]{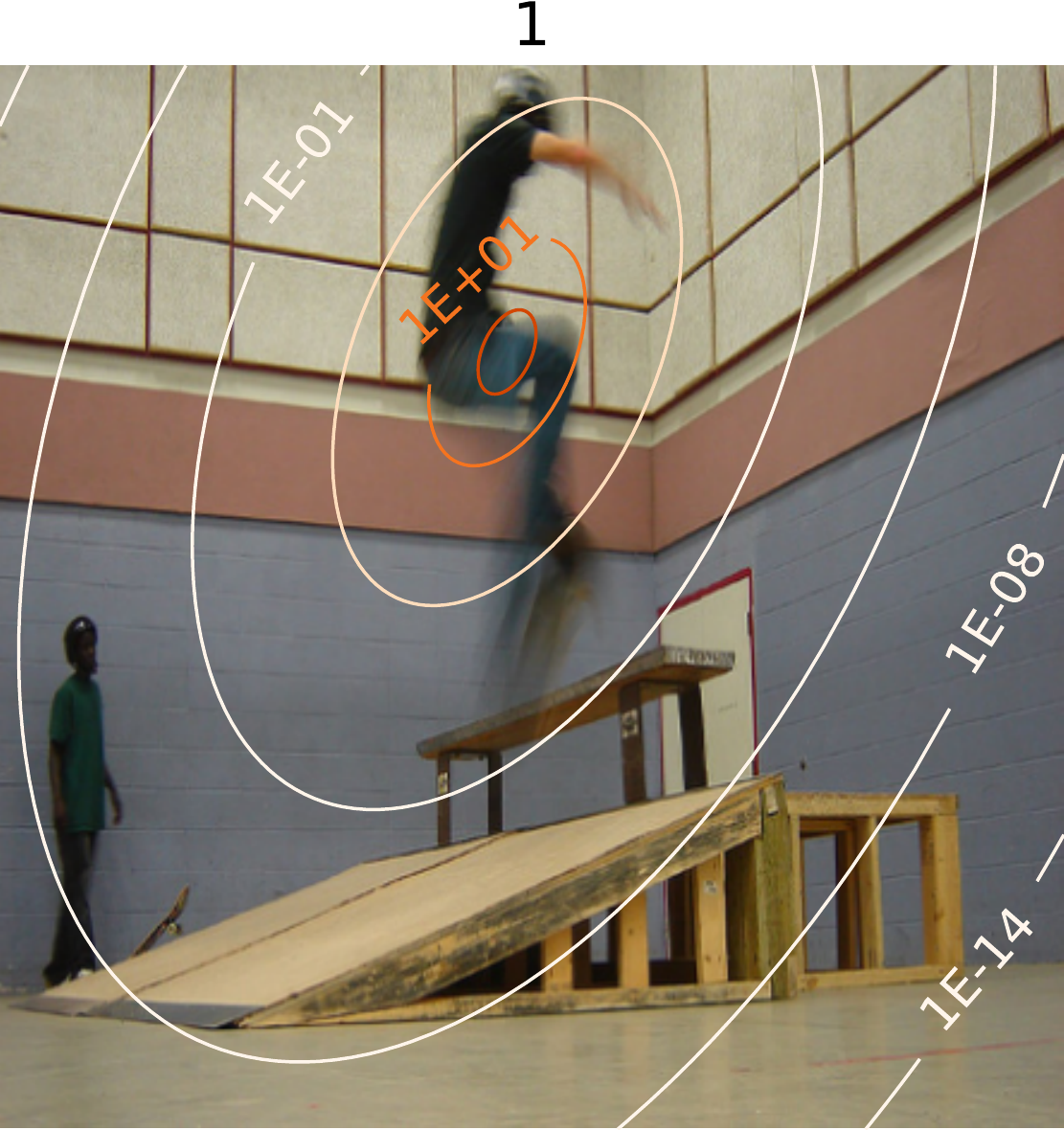}
\includegraphics[width=0.24\textwidth]{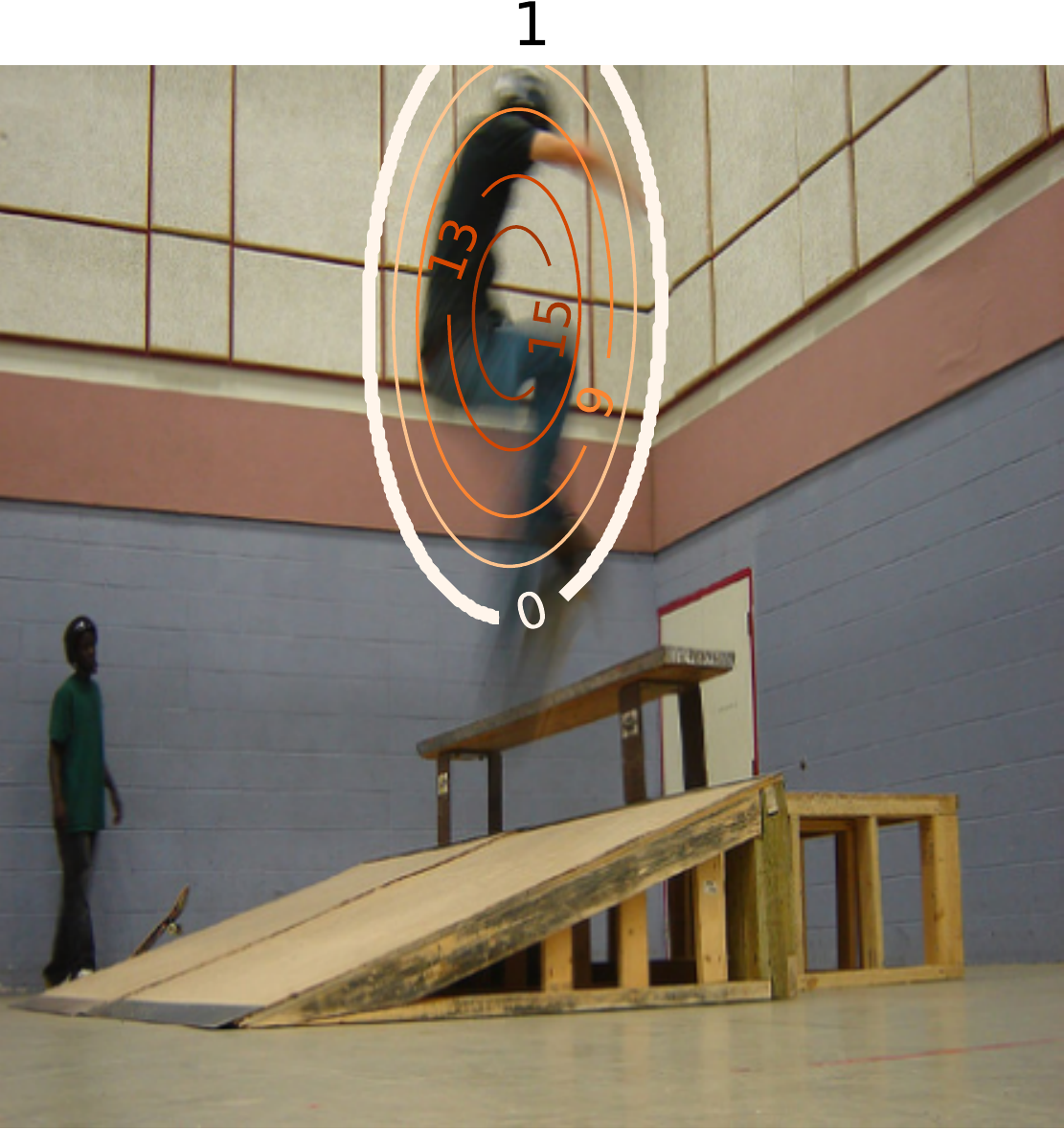}
\caption{\label{fig:examples_vqa_skate}Attention maps for an example in VQA-v2: original image, discrete attention, continuous softmax, and continuous sparsemax.}
\end{figure*}

\begin{figure*}[t]
\centering
\includegraphics[width=0.24\textwidth]{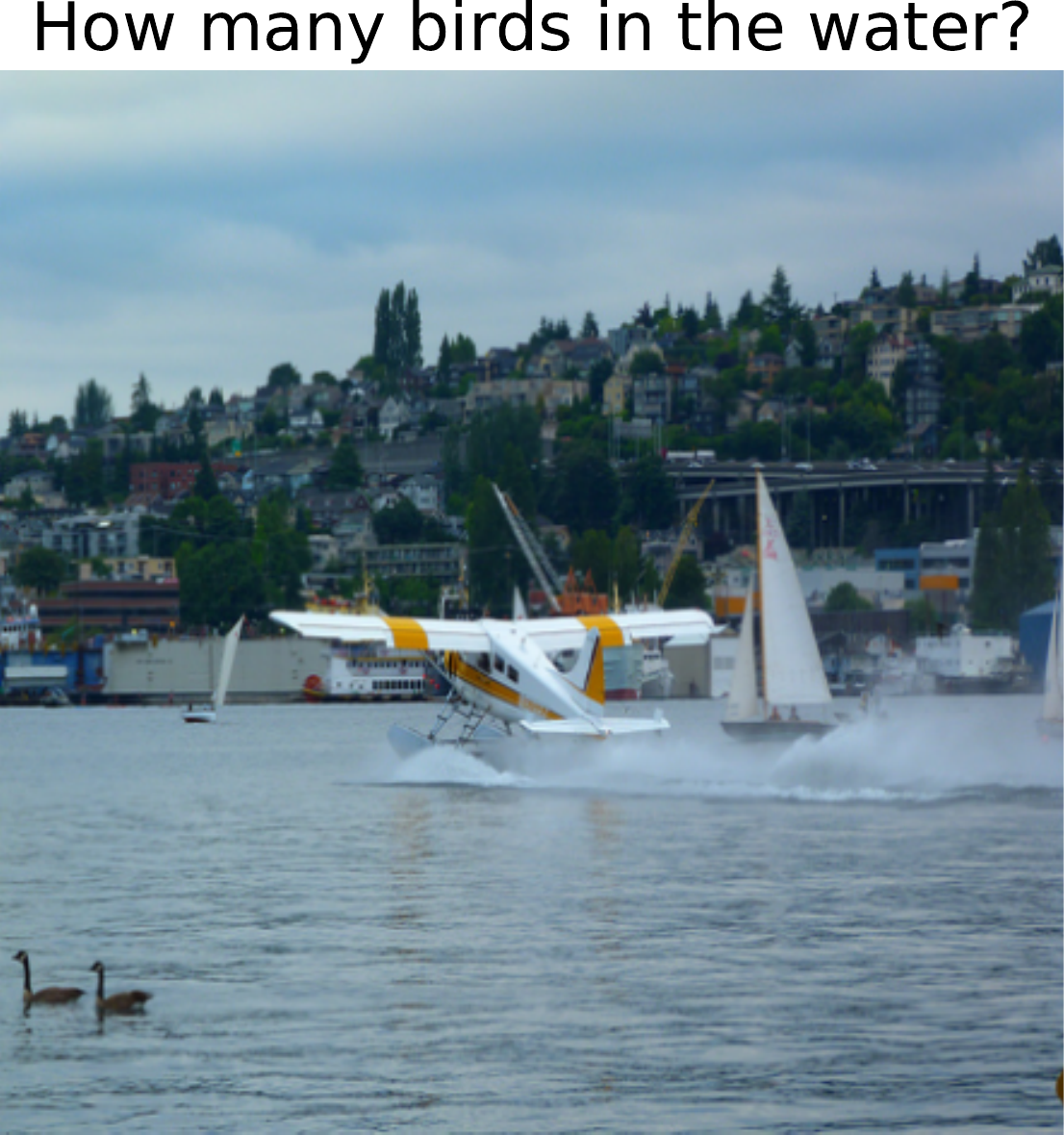}
\includegraphics[width=0.24\textwidth]{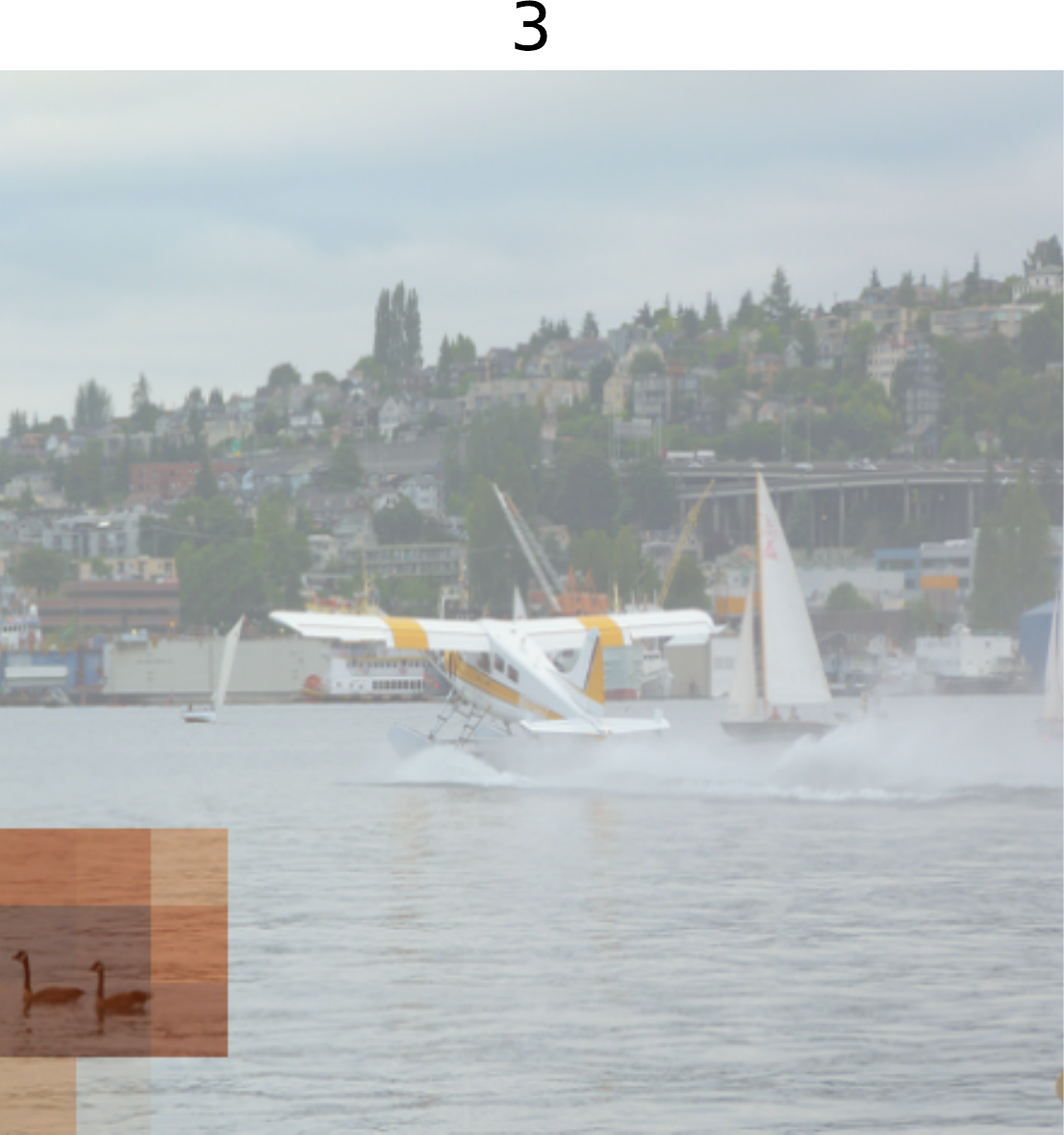}
\includegraphics[width=0.24\textwidth]{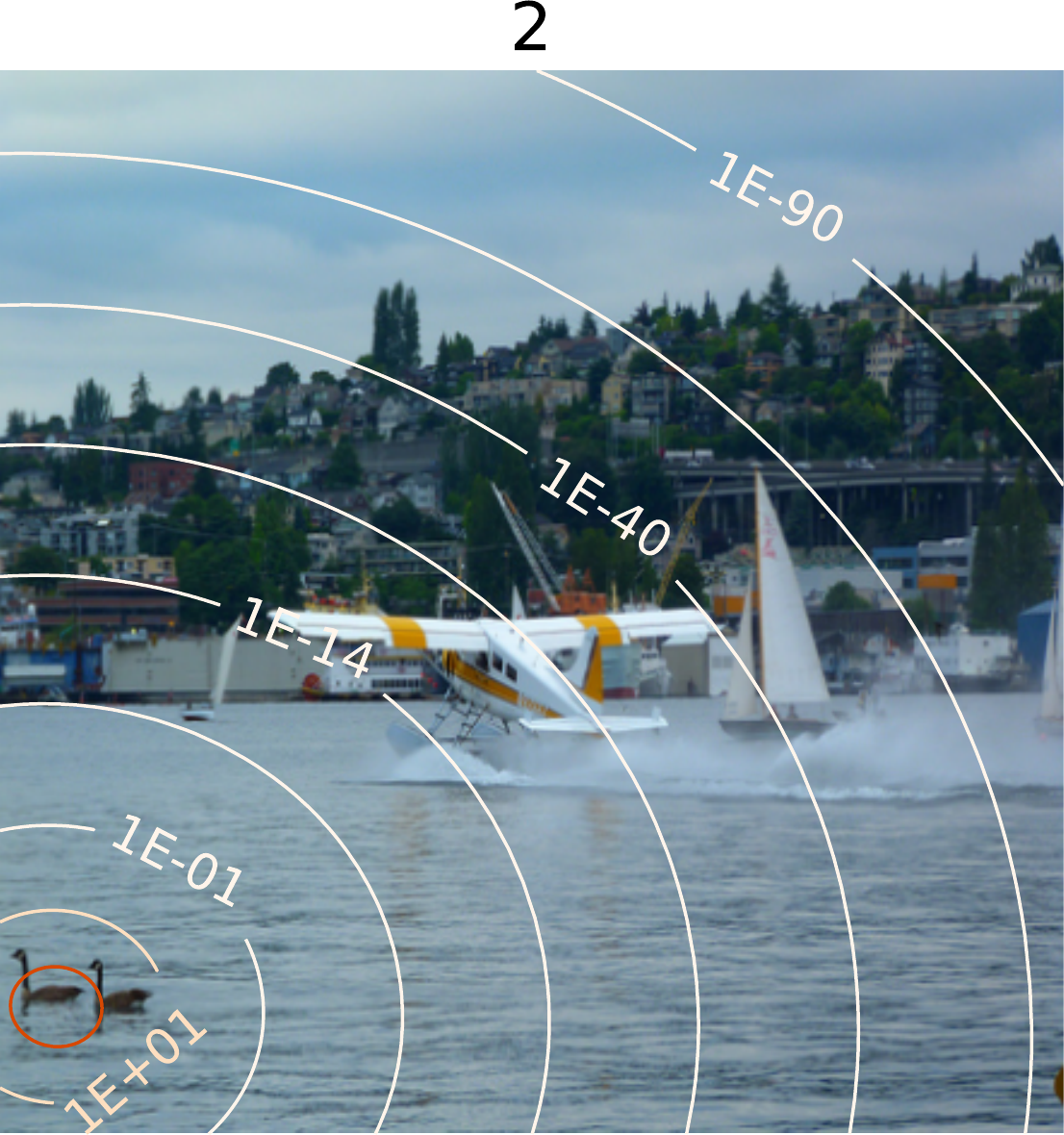}
\includegraphics[width=0.24\textwidth]{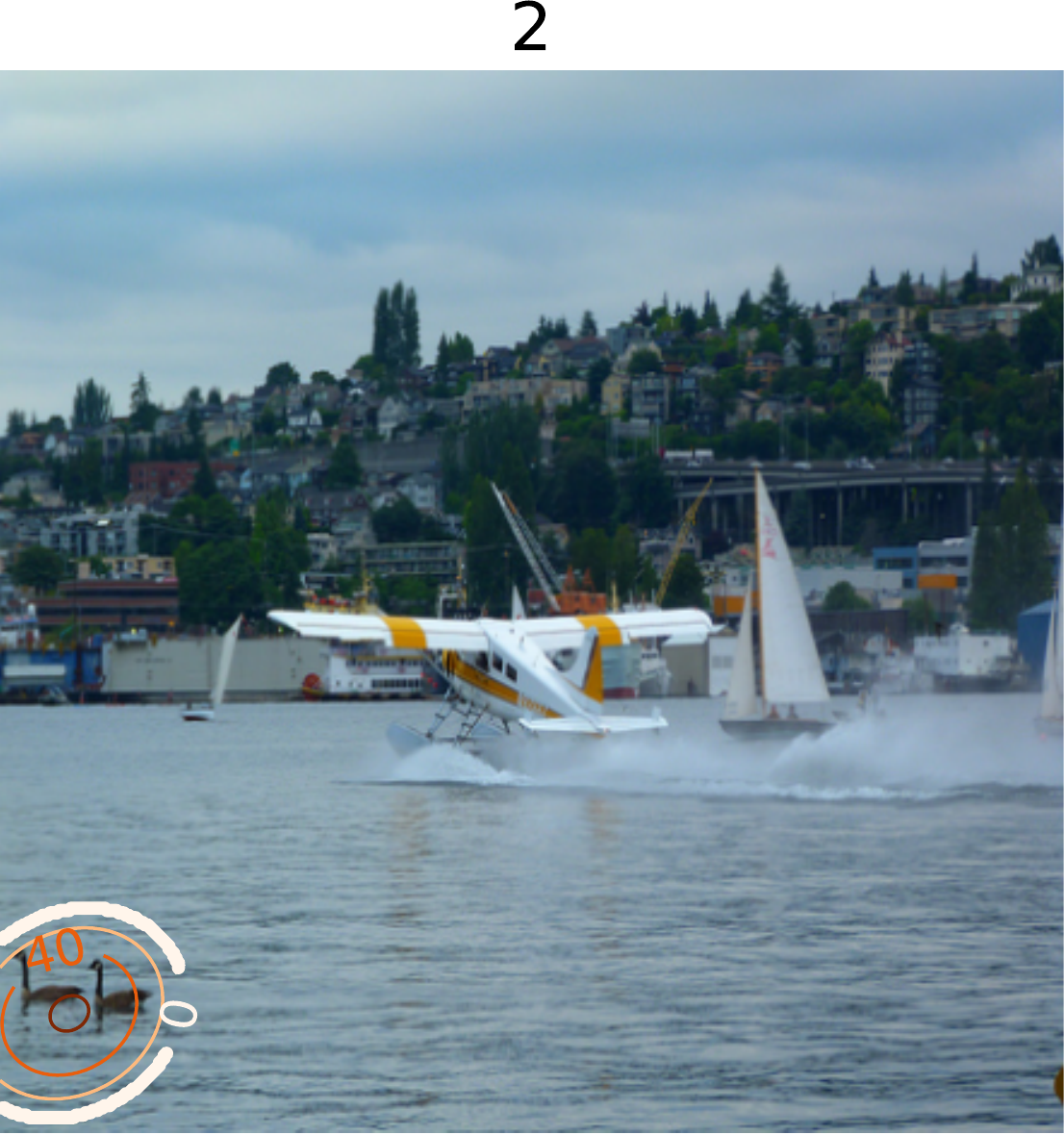}
\caption{\label{fig:examples_vqa_2birds}Attention maps for an example in VQA-v2: original image, discrete attention, continuous softmax, and continuous sparsemax.}
\end{figure*}

\begin{figure*}[t]
\centering
\includegraphics[width=0.24\textwidth]{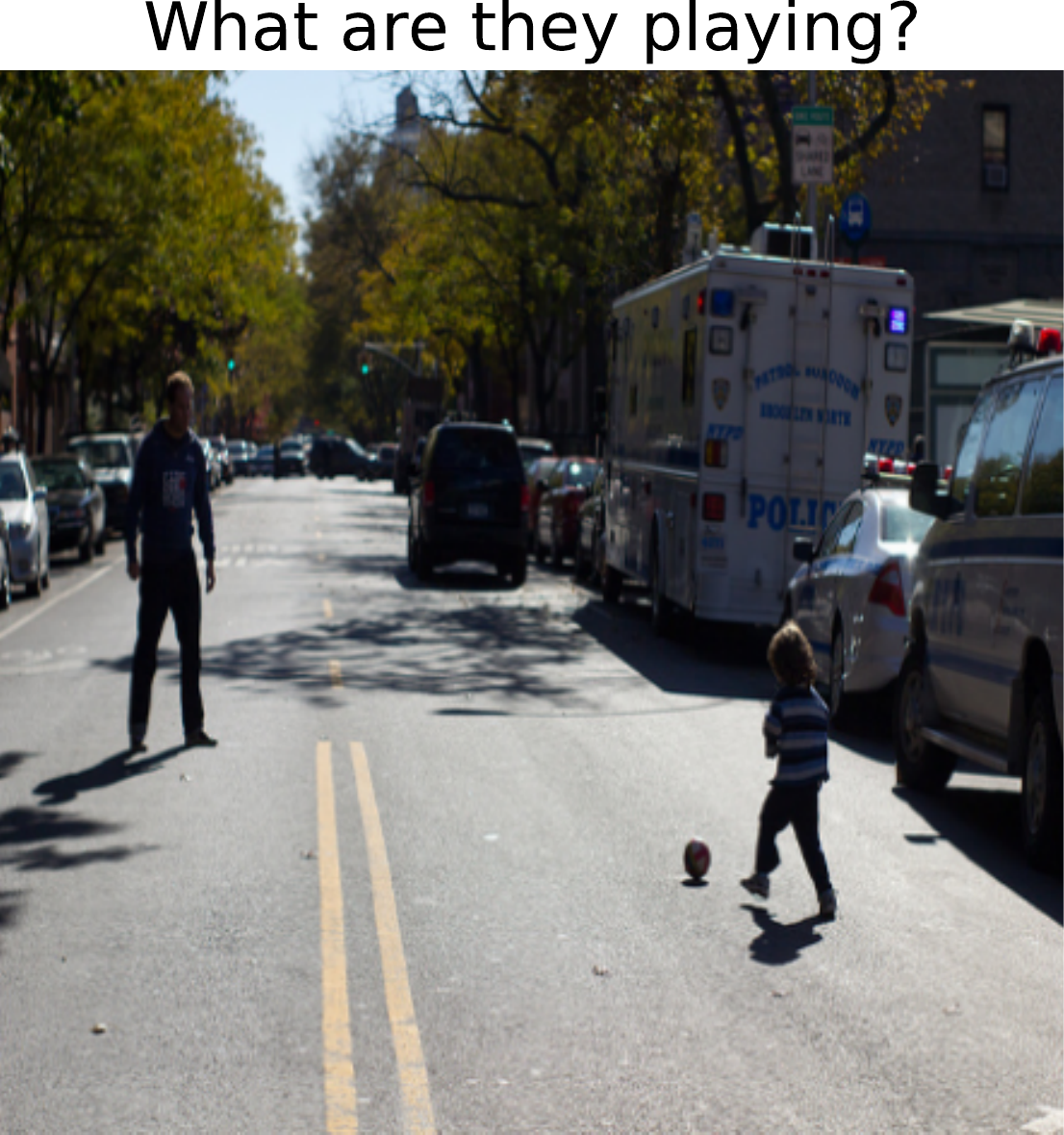}
\includegraphics[width=0.24\textwidth]{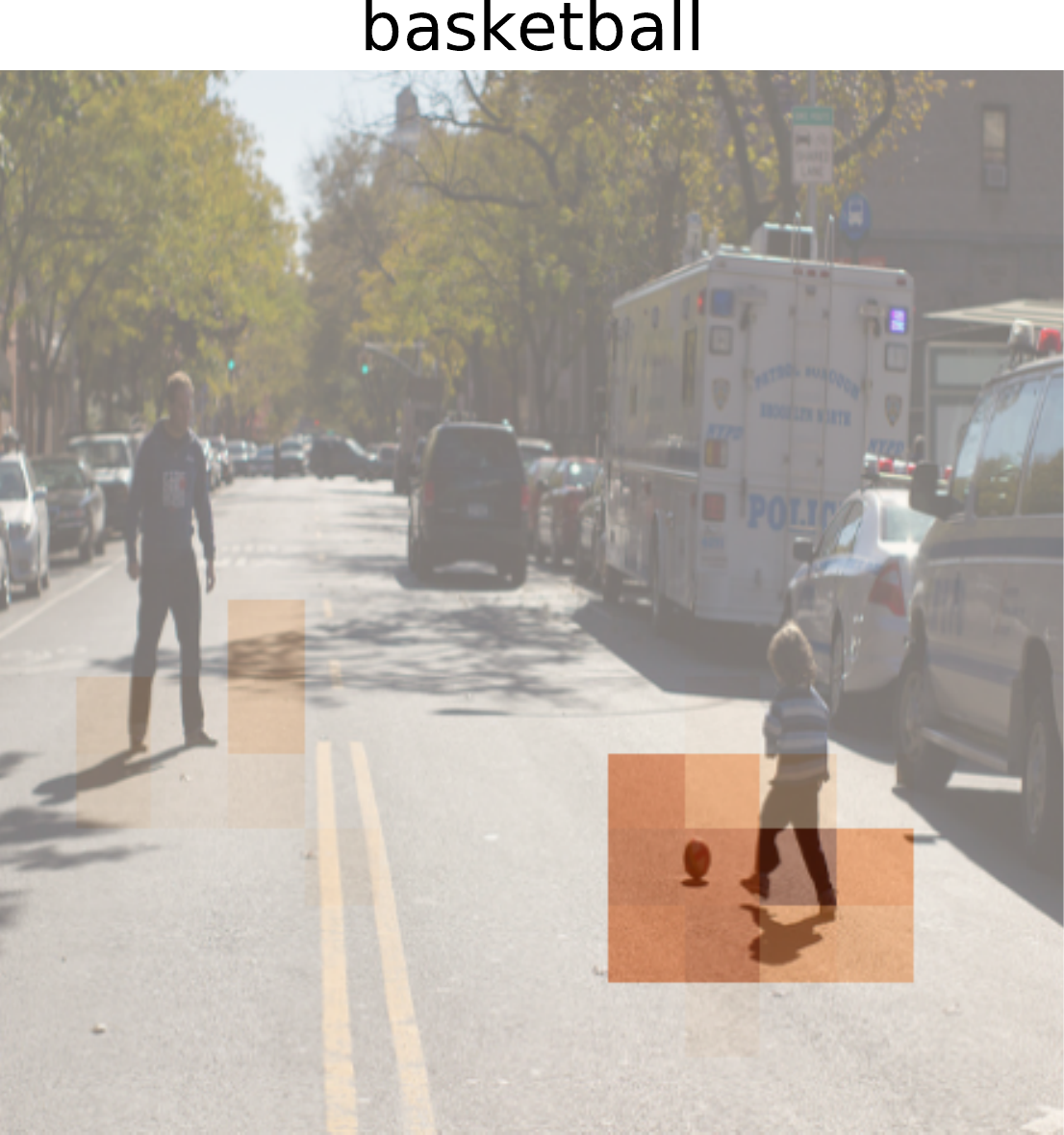}
\includegraphics[width=0.24\textwidth]{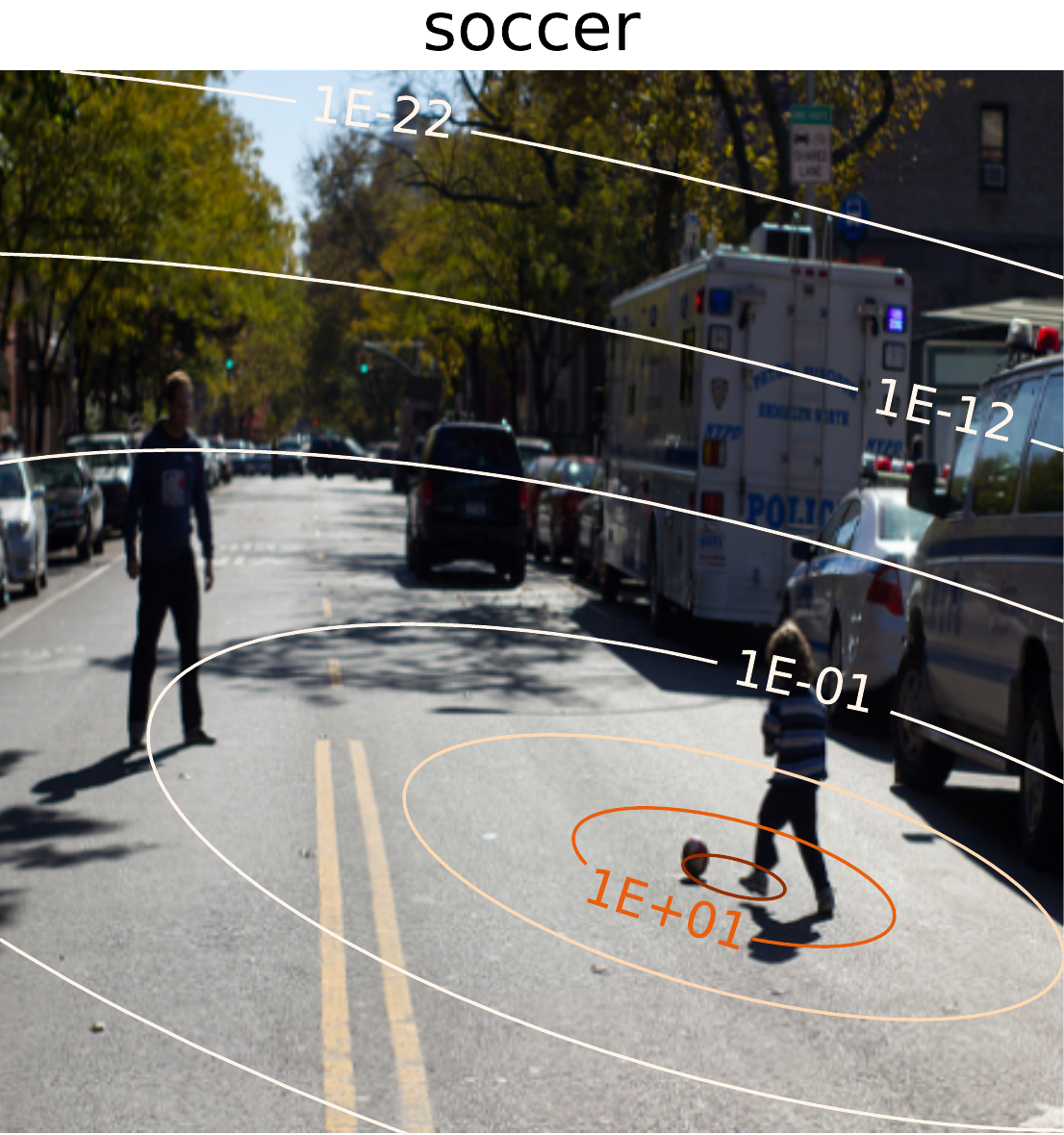}
\includegraphics[width=0.24\textwidth]{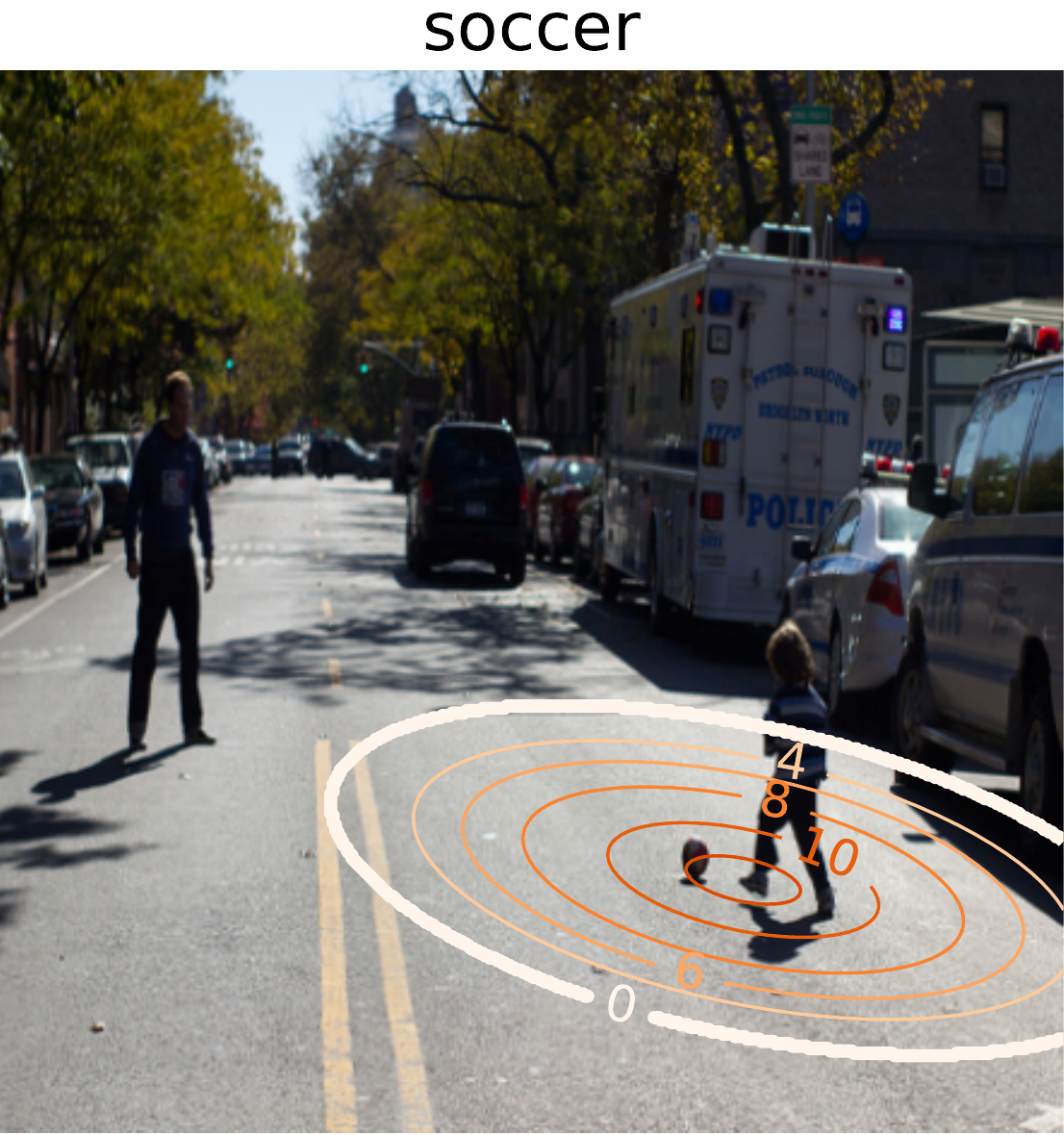}
\caption{\label{fig:examples_vqa_soccer}Attention maps for an example in VQA-v2: original image, discrete attention, continuous softmax, and continuous sparsemax.}
\end{figure*}

\bibliography{jmlr2020}

\end{document}